\begin{document}

\RUNAUTHOR{Hssaine, Hu, and Pike-Burke}

\RUNTITLE{Learning Fair Points-Based Rewards Programs}

\TITLE{Learning Fair And Effective\\ Points-Based Rewards Programs}

\ARTICLEAUTHORS{%

\AUTHOR{Chamsi Hssaine}
\AFF{Department of Data Sciences and Operations,
University of Southern California, Marshall School of Business\\ \EMAIL{hssaine@usc.edu}}
\AUTHOR{Yichun Hu}
\AFF{Johnson Graduate School of Management, Cornell University\\
\EMAIL{yh767@cornell.edu}}

\AUTHOR{Ciara Pike-Burke}
\AFF{Department of Mathematics,
Imperial College London\\
\EMAIL{c.pike-burke@imperial.ac.uk}}
} 

\ABSTRACT{%
Points-based rewards programs are a prevalent way to incentivize customer loyalty; in these programs, customers who make repeated purchases from a seller accumulate points, working toward eventual redemption of a free reward. While these programs can generate high revenue for the seller when implemented correctly, they have recently come under scrutiny due to accusations of unfair practices in their implementation. Motivated by these concerns, we study the problem of fairly designing points-based rewards programs, with a focus on two obstacles that put fairness at odds with their effectiveness. First, due to customer heterogeneity, the seller should set different redemption thresholds for different customers to generate high revenue. Second, the relationship between customer behavior and the number of accumulated points is typically unknown; this requires experimentation which may unfairly devalue customers' previously earned points.
We first show that an individually fair rewards program that uses the same redemption threshold for all customers suffers a loss in revenue of at most a factor of $1+\ln 2$, compared to the optimal personalized strategy that differentiates between customers. We then tackle the problem of designing temporally fair learning algorithms in the presence of demand uncertainty. Toward this goal, we design a learning algorithm that limits the risk of point devaluation due to experimentation by only changing the redemption threshold $O(\log T)$ times, over a horizon of length $T$. This algorithm achieves the optimal (up to polylogarithmic factors) $\widetilde{O}(\sqrt{T})$ regret in expectation. We then modify this algorithm to only ever decrease redemption thresholds, leading to improved fairness at a cost of only a constant factor in regret. Extensive numerical experiments show the limited value of personalization in average-case settings, in addition to demonstrating the strong practical performance of our proposed learning algorithms.
}%




\KEYWORDS{revenue management, rewards programs, fairness, online learning} 

\maketitle


\section{Introduction}\label{sec:intro}

Loyalty programs have long been a way for companies to increase their revenues, beginning with the introduction of grocery store trading stamps in the late 1800s \citep{greenstamps}. Since then, they have exploded in popularity, with over 90\% of companies maintaining some sort of loyalty program in 2016, and the average customer enrolled in over 15 loyalty programs today  \citep{vox}. One prominent form of loyalty program is the {\it points-based rewards} program. In a points-based rewards program, customers accumulate points every time they make a purchase; once the balance of points accumulated exceeds a certain redemption threshold, the customer is able to redeem her points for a reward (typically a free item or a discount on the next purchase). 
Prominent examples of points-based rewards programs include those offered by airlines, hotels, and casinos (\citet{kopalle2012joint}), and those offered by the food service industry \citep{starbucks,mcdonalds,wendys,tacobell}, typically maintained through mobile applications.  

Due to their preponderance in practice, the impact of points-based programs on customer behavior has been a topic of extensive study in the marketing literature. In particular, a substantial amount of empirical work has found evidence of a behavioral phenomenon known as the {\it points pressure effect}, which describes the idea that points-based programs give customers a goal to work toward (i.e., accumulating enough points to obtain a reward). This goal incentivizes them to purchase more frequently than they would without the prospect of obtaining a reward; moreover, the rate at which they make purchases only increases the closer they are to achieving this goal \citep{kivetz2006goal,hartmann2008frequency,kopalle2012joint}.\footnote{The points pressure phenomenon is related to the goal gradient effect in psychology, the classic finding that animals expend more effort as they approach a reward \citep{kivetz2006goal}. This was first observed in rats searching for food in a maze \citep{hull1934rat}.} This points pressure effect engenders the following trade-off for companies (henceforth referred to as {\it decision-makers}, or {\it sellers}): while rewards programs generate additional revenue due to the increase in purchase frequency as customers approach the redemption threshold, this increase in revenue is immediately followed by a revenue loss from having to give out a reward. 
The decision-maker must therefore trade off between setting a lower redemption threshold, which would increase purchase probabilities but result in rewards being offered more frequently, and setting a higher threshold, which would reduce the regularity of rewards but result in a decrease in customers' purchase probabilities.
The success of any rewards program hinges on the ability to set thresholds that optimally trade off between these incentives. 

In practice, there are two major obstacles to the task of optimally setting redemption thresholds: (i) there is significant variability in the points pressure effect across customers \citep{kopalle2012joint}, and (ii) the relationship between the number of points a customer has in stock and the probability with which they make a purchase is typically unknown. This paper is concerned with the design of effective learning algorithms for the problem of optimal goal-setting in points-based rewards programs, with a special focus on the {\it fairness} aspect of these programs. In particular, since many of these programs are implemented over long periods of time (as opposed to being offered as short-term promotions), our work posits that fairness becomes a first-order consideration for decision-makers on two fronts. The first challenge of customer heterogeneity introduces an {\it individual fairness} consideration: exploiting heterogeneity in customer behavior may lead to unfair outcomes (e.g., higher redemption goals being set for frequent customers), an effect which is exacerbated since customers are exposed to these differences over long periods of time. With respect to the second challenge, there exists a {\it temporal fairness} consideration: the stability of learning algorithms becomes extremely important in these settings, since customers' purchase decisions in these programs directly depend on the goal that has been set for them. As a result, changes in redemption thresholds (and in particular, {\it increases} in thresholds), are likely to be viewed as particularly unfair by customers. This claim is well-supported by a number of recent real-world instances wherein companies faced significant backlash after increasing the number of points required for redemption, effectively devaluing customers' hard-earned points. Prominent names associated with these scandals (which were often followed by swift reversals) include: Best Buy, Starbucks and Dunkin' Donuts \citep{cnn}, Chipotle \citep{chipotle}, Chick-Fil-A \citep{chickfila}, Microsoft \citep{microsoft}, and Tesco \citep{tesco}. Such concerns recently reached the highest levels of government, with the United States Department of Transportation (USDOT) launching an investigation into the four largest U.S. airlines' rewards programs. In the announcement of the investigation, the USDOT noted potential unfair practices in the way these companies set point values, highlighting in particular the devaluation of previously earned points \citep{usdot}.

Thus motivated, our work asks the following research questions:

\smallskip

\begin{center}
{\em What is the impact of individual and temporal fairness constraints on the design of points-based rewards programs? How should we design stable, devaluation-free learning algorithms for this problem?} 
\end{center}

\smallskip

Toward answering these questions, we consider a model in which a seller repeatedly offers a product at a fixed price to a finite population of heterogeneous customers. We conceptualize many of the points-based rewards programs referred to above via the classical ``Buy $N$, Get One Free'' (\bogo) program. Under this program, customers accumulate one point for each purchase that is made, and may redeem the item for free after they have made their $N$-th purchase. 
These \bogo\ programs are popular in practice due in large part to their simplicity, which has additionally made them prime candidates for tractable analysis in the operations literature \citep{liu2021analysis}.
Real-world examples of rewards given out within the context of \bogo\ programs include free hotel nights \citep{kopalle2012joint}, golf rounds \citep{hartmann2008frequency}, coffee \citep{kivetz2006goal}, and grocery items \citep{lal2003impact} (see \citet{liu2021analysis} for an excellent set of examples). Seminal work by \citet{kopalle2003economic} also noted that frequent-flyer programs can be conceptualized as ``Fly $N$ times, Get $(N+1)$-st flight free.'' 

In our model, customers are partitioned into $K$ observable types (e.g., according to characteristics such as age and gender), and make their purchase or redemption decisions in each period according to an unknown, type-specific purchase probability. In line with the points pressure phenomenon, we assume the purchase probability is non-increasing in the number of points remaining until redemption. Importantly, these purchase probabilities are unknown, so the decision-maker must experiment with various redemption thresholds over a finite horizon of $T$ time periods. Our goal is to design an individually and temporally fair learning algorithm that incurs low regret relative to a clairvoyant policy that in each period selects the threshold maximizing the long-run average revenue.

\subsection{Main Contributions}

\subsubsection*{On the price of individual fairness in complete-information settings.} Our first contribution relates to an important design question for a decision-maker seeking to implement a \bogo\ program: {\it To personalize or not to personalize?} More concretely, should a seller attempt to exploit customer heterogeneity by setting different redemption thresholds for different types of customer? In settings where the seller can discriminate between types (e.g., when types correspond to separate, tiered membership statuses), it is easy to argue that the answer is a resounding ``yes,'' from a revenue perspective. However, in many practical settings (e.g., when types are defined according to protected characteristics such as race and gender), such differentiation is likely to be perceived as unfair by customers, potentially also running into ethical and legal issues. Therefore, in order to decide whether or not personalization is a risk worth taking, the seller must be able to quantify the revenue loss associated with a \emph{fair} rewards program, which sets the same redemption threshold for all customers. Thus motivated, we consider the {\it price of fairness} of \bogo\ programs, defined as the ratio between the optimal {personalized} program, which may set a different redemption threshold for each customer type, and the optimal {non-personalized} program, which is constrained to set the same redemption threshold across all customer types (\Cref{def:pof}). 

Given the limited assumptions imposed on the relationship between points to redemption and purchase probabilities, one may a priori expect that there exist instances where the price of fairness is arbitrarily large. This could occur for instance if implementing a ``Buy One, Get One Free'' program is optimal for one type of customer, whereas for another type of customer, it is optimal to not implement any rewards program. Moreover, previous work studying the impact of fairness constraints on incentives for retention has found that the price of fairness can be unbounded \citep{freund2024fair}. However, in our first main contribution, we provide a uniform upper bound on the price of fairness, across all possible instances: {\it the long-run average revenue of the optimal personalized \bogo\ program is no more than $1+\ln 2 \approx 1.69$ times that of the optimal non-personalized program} (\Cref{thm:price-of-fairness}). {We complement this theoretical finding with extensive numerical experiments that show that the price of fairness may be much lower than this worst-case upper bound in {average-case} settings. These results yield the important managerial insight that a seller can not extract an arbitrary amount of revenue from heterogeneity in these settings.

\subsubsection*{Temporal fairness in learning.} Having established a small price of fairness in complete-information settings, we turn to the question of designing {\it temporally fair} algorithms in the learning setting, where the dependence of customers' purchase probabilities on the number of points to redemption is unknown. In line with much of the literature on demand learning \citep{filippi2010parametric,broder2012dynamic,ban2021personalized,bastani2021mostly}, we assume that customers' type-specific purchase probabilities follow a Generalized Linear Model (GLM) with unknown parameters. Following the previous discussion, we seek to find a single redemption threshold that maximizes the long-run average revenue across all customers.

As a building block towards the design of a temporally fair learning algorithm that never devalues customers' points, we first consider the task of {\it stable} learning, i.e., learning under a limited number of threshold changes. We propose a greedy epoch-based algorithm, Stable-Greedy (\Cref{alg:greedy}), for this task. This algorithm partitions the horizon into epochs of geometrically increasing length. At the beginning of each epoch, given observations of customers' purchase decisions at their respective point balances, it computes the Maximum Likelihood Estimate (MLE) of the unknown GLM parameters, and solves for the revenue-maximizing threshold, given the MLE. To allow for the possibility that not offering a rewards program is optimal, we also compare the (known) revenue without a rewards program to this estimated revenue, terminating the rewards program if this difference exceeds an epoch-specific confidence parameter. Our algorithm achieves the desideratum of stability by only modifying the threshold $O(\log T)$ times throughout the horizon. This is achieved while only incurring $\widetilde{O}(\sqrt{MT})$ regret in expectation, for a fixed population of size $M$ (\Cref{thm:main-thm}). 
We show this is optimal up to polylogarithmic factors by proving a matching lower bound of $\Omega(\sqrt{MT})$ on the regret of any (potentially non-temporally fair) policy (\Cref{thm: lower bound}).

Despite its strong guarantees, the possibility remains that Stable-Greedy may devalue customers' points by increasing the threshold, albeit infrequently. To address this undesirable characteristic, we propose a devaluation-free modification (Fair-Greedy, \Cref{alg:non-increasing}). While this algorithm is still stable in that it proceeds in epochs, instead of choosing the greedy threshold at the beginning of each epoch, it chooses the largest threshold within a consideration set of thresholds. Thresholds are included in this consideration set if and only if their estimated revenue under the MLE is close enough to that of the optimal greedy solution. Importantly, the consideration sets are nested, which guarantees that the sequence of thresholds is non-increasing (i.e., devaluation-free). Leveraging our previous regret analysis, we show that this algorithm incurs only a factor of 2 loss relative to the regret bound of Stable-Greedy in the worst case, and is therefore also order optimal (\Cref{thm:extension}). In synthetic experiments, we observe the strong performance of both Stable-Greedy and Fair-Greedy, in addition to numerically demonstrating the trade-off between revenue and devaluation-free learning. Furthermore, we empirically show that both algorithms are robust to misspecification of the GLM.

From a technical perspective, our work uncovers the interesting fact that optimal learning algorithms do not need to explicitly explore in our setting. This lies in stark contrast to the extensively studied problem of pricing under demand uncertainty, for which the suboptimality of greedy algorithms is well-known in non-contextual settings \citep{broder2012dynamic, keskin2014dynamic, den2014simultaneously}. Work on pricing in {\it contextual} settings has however shown that greedy algorithms may be optimal, under certain regularity conditions on the exogenous distribution from which contexts are drawn \citep{qiang2016dynamic, javanmard2019dynamic}. In contrast to this latter set of results, we require no additional assumptions on customers' purchase probabilities to show the optimality of Stable-Greedy. Rather, our results follow from the fact that customers running through multiple redemption cycles throughout a single epoch induces a form of ``natural exploration.'' This phenomenon guarantees sufficient variability in the points to redemption that the resulting MLE is a high-quality estimate of the unknown parameters. The technical crux of our work lies in demonstrating this fact, which relies on deriving a lower bound on the minimum eigenvalue of the empirical Fisher information matrix (henceforth referred to as the design matrix) of each epoch. 
Proving this requires a careful analysis that considers a Markov chain representation of a customer's points to redemption and derives a new Chernoff-type bound for the concentration of samples from this Markov chain. This differs from the analysis in related problems, where the assumption of i.i.d. contexts significantly simplifies the concentration results.

\subsubsection*{Paper organization.} We review the related literature in the rest of this section. We present the seller's long-run average revenue maximization problem under complete information in \Cref{sec:preliminaries}, and derive a bound on the price of individual fairness in \bogo\ programs in \Cref{sec:complete-info}. The seller's learning problem under incomplete information is described in \Cref{sec:learning}. Our two main algorithmic contributions are presented in Sections \ref{sec:ub} and \ref{sec:extension}. We test their performance in computational experiments in \Cref{sec:numerics}. Conclusions are finally provided in \Cref{sec:conclusion}.

\subsection{Related Literature}\label{ssec:related-lit}

Our work contributes to the extensive literature on points-based rewards programs, studied from various perspectives in the operations, marketing, and economics literatures. We detail the most closely related works below.

\subsubsection*{Frequency rewards programs: Empirical work.} There is extensive empirical work on the impact of frequency rewards programs on customers' purchasing behavior (see \citet{dorotic2012loyalty} and \citet{chen2021three} for exhaustive overviews). For instance, early work by \citet{dreze1998exploiting} analyzed data from a rewards program offering a \$10 gift card for every \$100 spend on baby purchases, and found that the sale of baby products increased by 25\% overall as a result of this program. Significant increases in purchase frequency as a result of such spending-based programs have been identified within the context of grocery and convenience stores \citep{lal2003impact, lewis2004influence, taylor2005current, liu2007long}, with the strongest effect found on infrequent shoppers. The impact of points pressure has also been studied within the context of points-based rewards programs more specifically. \citet{kivetz2006goal} studied a caf\'e rewards program and observed that customers purchase coffee more frequently the closer they are to earning a free coffee.  They also found the points pressure effect within an employment context, where internet users who rate songs in exchange for reward certificates rate more songs as they approach their goal. Using data from a ``Buy 10, Get One Free'' program offered by a golf course, \citet{hartmann2008frequency} found that customers' switching costs --- costs incurred by purchasing from a firm other than the one with whom they are accumulating points --- monotonically increase as customers earn additional credits toward a reward; these switching costs return to their initial level immediately after the reward is cashed in. This phenomenon is precisely the type of points pressure captured by our model. \citet{kopalle2012joint}  similarly observe the points pressure effect in a major hotel chain’s rewards program, highlighting substantial variation in how customers value a free hotel stay. The findings of \cite{kivetz2006goal,hartmann2008frequency,kopalle2012joint} are the basis for the behavioral model we consider here.

\subsubsection*{Frequency rewards programs: Analytical work.} {To the best of our knowledge, our work is the first to study the task of learning points-based rewards programs.} The analytical study of frequency rewards programs in complete-information settings, however, has a long history in economics and marketing. 
Early work studied the mechanisms underlying the profitability of these rewards programs, in the hopes of providing theoretical explanations for the empirical findings described above (see, e.g., \citet{klemperer1987markets, kim2001reward, kopalle2003economic,kim2004managing,singh2008research} for seminal works). Using stylized game-theoretic models of duopolistic competition, these papers analytically show the effectiveness of points-based programs due to the switching costs that arise when customers accumulate rewards.

More recently, a growing line of work has focused on various operational aspects of the design of rewards programs in a monopoly, similarly under the assumption that the underlying behavioral model is known. For instance, \citet{sun2019model} investigates the economic rationale behind finite expiration terms, for a rewards program in which a reward is always available, but can only be redeemed through future purchases. This work was later extended to the study of rewards programs in two-sided markets \citep{lyu2024customer}. Similar to our setting, this work explicitly models customer heterogeneity (i.e., low versus high valuation customers, frequent and infrequent customers). However, their model does not incorporate reward accumulation and redemption thresholds, an important feature of many points-based programs. \citet{chun2020loyalty} study the problem of a firm optimally setting points' value within the context of liability management. In the model they consider, the customer population is modeled in aggregate, with the total existing point balance evolving as a random, exogenous quantity in each period. Our work, on the other hand, is specifically interested in learning the impact of points pressure on customer decision-making, which requires us to model customers at the micro-level and keep track of the way in which they accumulate rewards. \citet{chung2022dynamic} study another important operational aspect of rewards programs, namely their impact on dynamic pricing decisions when a firm has a limited inventory of products. In our model, the seller has an infinite inventory of products, a reasonable assumption for, e.g., dining and grocery settings; additionally, we explicitly model reward accumulation. 

The ``Buy $N$, Get One Free'' program that we study follows that of \citet{liu2021analysis}, who analytically show how points pressure arises in a complete-information setting. As in our model, they assume that a product's price is fixed over time, and aim to find a fixed redemption threshold to maximize the long-run average revenue. In their model, the seller interacts with a single customer with a known valuation and stochastic outside option. The customer is assumed to be forward-looking, and strategically decides to purchase, redeem, or opt out in each period in order to maximize her total discounted utility. The authors derive conditions under which a \bogo\ program can improve firm profitability, in addition to showing when a customer's willingness to make a purchase increases with her inventory of points. The major modeling difference that we have relative to \citet{liu2021analysis} is their assumption that customers are forward-looking. While such a model is useful from the perspective of {\it explaining} how points pressure may arise in complete-information settings, we instead are interested in deriving {\it prescriptive} solutions for the task of learning optimal redemption thresholds in the presence of demand uncertainty. From this perspective, we assume that customers are non-strategic, with their purchase decisions governed by an exogenously given purchase probability that depends on the number of points to redemption. This assumption is also made in the vast majority of work on pricing under demand uncertainty (see discussion below). This parsimonious model allows us to capture the most salient feature of customer behavior induced by points-based rewards programs --- that of points pressure --- all the while being flexible enough to allow for customer heterogeneity and to model reward accumulation at the micro-level. We also note that, contrary to this and many of the works discussed above, our work does not consider the joint optimization of prices and redemption thresholds. This design decision is due to the fact that enrollment is not automatic in most points-based rewards programs. As a result, jointly optimizing over prices and rewards would require also modeling unenrolled customers. While this is an interesting future direction, we view this as less fundamental to the learning challenge on which our work is focused.

Finally, we briefly mention recent work on the design of {\it tier}-based loyalty programs (e.g., \citet{chun2019strategic}), wherein customer behavior is motivated by access to tiers providing additional benefits. This line of work is tangential to the main focus of this paper, which specifically studies {\it rewards}-based loyalty programs. However, the intersection of fairness and learning for tier-based programs is an interesting topic for future work.

\subsubsection*{Fairness and long-term impacts.} {Our work relates to the literature on fairness in operations.} {With respect to our focus on {\it individual fairness} in loyalty programs, most closely related is recent work on the impact of individual fairness constraints on ``surprise and delight'' incentives for retention \citep{freund2024fair}. Contrary to our work, the price of fairness may be unbounded in the non-Markovian setting they consider.} {Also closely related is the literature on fairness considerations in pricing problems.} \citet{kallus2021fairness} explore the relationship between fairness, welfare, and equity considerations in personalized pricing. \citet{elmachtoub2021value} study the value of personalized pricing over single-price strategies, providing bounds on the ratio of the profits under the two strategies. Similar to our focus on inter-group fairness, \citet{cohen2022price} study the impact of imposing fairness constraints on pricing in the presence of customer heterogeneity, when customer types are observable. \citet{chen2021fairness}, \citet{cohen2021dynamic}, \citet{xu2023doubly}, and \citet{chen2023utility} extend this to the problem of dynamic pricing under demand uncertainty. \citet{yang2023regulating} examine the impact of fairness constraints on competitive pricing in a duopoly. Finally, we briefly mention work on online resource allocation that characterizes the impact of individual fairness (also referred to as {\it envy-freeness} constraints) on metrics such as efficiency \citep{sinclair2022sequential,banerjee2023online} and revenue \citep{jaillet2024grace}.

The task of designing {\it temporally fair} learning algorithms lies under the very broad umbrella of learning under limited adaptivity. 
In the learning literature, such adaptivity constraints typically appear in the form of switching costs; see e.g. \citet{cesa2013online,dekel2014bandits}. For pricing specifically, existing works have modeled limited adaptivity by assuming that the seller can make only finitely many price changes (e.g., \citet{broder2011online}, \citet{cheung2017dynamic}, \citet{chen2020data}, \citet{perakis2024dynamic}), or by assuming that the seller has implemented price protection guarantees \citep{feng2024temporal}. The techniques developed in these latter works do not apply to our setting, since the type of limited adaptivity we are interested in is (i) one-directional, and (ii) enforced via a strict constraint, as opposed to softly discouraged by imposing switching costs on the seller, as many of the aforementioned works do.  

Finally, the general intersection of learning and long-term customer engagement has attracted increasing attention in recent years. \citet{bastani2022learning} study the problem of personalizing product recommendations in the presence of disengagement. \citet{sumida2023optimizing} propose  a learning algorithm for repeated assortment optimization when customers' purchase probabilities depend on their past purchase history. Taking the reverse perspective as ours, \citet{lugosi2023bandit} consider a multi-armed bandit problem from the customer's point of view, where the customer learns her own preferences for different arms (i.e., sellers), all the while also obtaining an additional payoff, a ``fidelity reward,'' depending on how loyal the customer has been to that arm in the past.

\subsubsection*{Parametric models of pricing under demand uncertainty.} We conclude the section with a discussion of the methodological connections between our work and the abundant line of work on learning optimal pricing strategies under demand uncertainty, more specifically when demand follows an unknown parametric model (commonly linear or generalized linear demand). We highlight the most closely related works below, and refer the reader to \citet{den2015dynamic} for a survey of existing work.

Early work by \citet{broder2012dynamic} studied a model in which a seller prices a product over a sequence of $T$ customers who make Bernoulli purchasing decisions given the offered price. They design an epoch-based policy that consecutively explores and exploits, using the MLE from past observations. Under the assumption that there exists a known set of exploration prices for which the minimum eigenvalue of the design matrix is lower bounded by a constant, they show that such a policy achieves the optimal $O(\sqrt{T})$ regret relative to a clairvoyant policy that has access to the unknown parameters. Guaranteeing a lower bound on the minimum eigenvalue of the design matrix turns out to be generally necessary for policies to learn the revenue-maximizing price. When the seller does not have access to such a set of ``good'' exploration prices, \citet{keskin2014dynamic} and \citet{den2014simultaneously} both highlight that greedy MLE-based policies may suffer from the phenomenon of {\it incomplete learning}. They however show that injecting exploration carefully into these policies in a way that guarantees a constant lower bound on this minimum eigenvalue achieves the optimal regret guarantee. 

In contrast to the vanilla pricing settings considered in these prior works, \citet{qiang2016dynamic} later showed that when the seller has additional feature information in each period (i.e., the seller is in a {\it contextual} setting), greedy personalized pricing policies may no longer be suboptimal. Specifically, under the assumption that the covariance matrix of demand covariates (typically assumed to be drawn i.i.d. in each period) is positive definite, a greedy iterated least squares policy achieves the optimal $O(\log T)$ regret guarantee under linear demand. \citet{javanmard2019dynamic} similarly assume positive definiteness of this covariance matrix, and show that an epoch-based greedy MLE policy achieves $O(\log T)$ regret in settings where demand depends only on a sparse set of features.  \citet{bastani2021mostly} also find that greedy is optimal in a contextual bandits problem, under an assumption termed {\it covariate diversity} that imposes a type of positive definiteness constraints on the contexts. In all of these papers, the authors show that positive definiteness implies that a constant lower bound on the minimum eigenvalue of the design matrix holds under greedy policies. At a high level, this assumption ensures enough ``natural exploration'' to guarantee a fast learning rate of model parameters.  One of our main contributions, which demonstrates that a greedy policy is optimal for the problem of learning the optimal redemption threshold, is in line with these latter findings. While we are not in the contextual setting, the points to redemption in each period can be considered as a covariate in an online regression problem, meaning that we can leverage recent results on generalized linear contextual bandits \citep{li2017provably} in the proof of our main technical result. 
We highlight however that our work differs from the above series of papers in two crucial ways. First, in the pricing setting, demand is i.i.d. across time; this models, for instance, a decision-maker selling the product to a different customer in each period. In our setting, we assume that a fixed population of customers {\it repeatedly} interacts with the system throughout the horizon; our modeling of individual reward accumulation through time induces {\it Markov-modulated}, as opposed to i.i.d., demand as a result. Additionally, our results do not require assumptions on the positive definiteness of the covariance matrix of the ``contexts'' in our setting. Rather, we obtain a lower bound on the minimum eigenvalue of the design matrix via a careful analysis of the variance of the underlying Markov chain, a feature that is not present in the pricing literature. 
Finally, we note that despite the connections to the contextual pricing setting, we show via an $\Omega(\sqrt{T})$ lower bound on the regret that our setting is fundamentally more difficult than contextual pricing, where $O(\log T)$ regret is achievable. 

\section{Preliminaries}\label{sec:preliminaries}

{In this section we present our model for the ``Buy $\threshold$, Get One'' program under repeated interactions, and define the long-term optimization problem faced by the seller under the assumption that she has complete information on all model primitives.
We defer the specification of the learning setting to \Cref{sec:learning}. 
A discussion of our modeling assumptions is provided at the end of this section.}

\subsubsection*{Technical notation.} We use the notation $\mathbb{N}^+$ to denote the set of strictly positive integers. For any $T \in \mathbb{N}^+$, we let $[T] = \{1,2,\ldots,T\}$. We moreover denote the positive part function by $(\cdot)^+ = \max\{\cdot,0\}$, and let \mbox{$a \wedge b = \min\{a, b\}$}. For any $\mu \in [0,1]$, we let $\bern(\mu)$ denote a random variable drawn from a Bernoulli distribution with mean $\mu$. {Finally, $\|\cdot\|$ is used to denote the $\ell_2$-norm of a given vector.}

\subsubsection*{The ``Buy \threshold, Get One Free'' program.} We consider a multiperiod problem where a seller (also referred to as a decision-maker) offering a single product or service {repeatedly} interacts with {a fixed population of customers} {over an infinite horizon}. Let $\population$ denote the fixed population of customers, which has size ${M} = \abs{\population}$. In each period, the seller offers the product to {each} customer at a fixed price;\footnote{The fixed price assumption is motivated by the practical reality that sellers typically implement reward programs well after an original pricing decision is made.} we assume that the marginal cost of producing the product is zero. {The seller may choose to implement} a ``Buy $\threshold$, Get One Free'' (\bogo) program, wherein {each} customer is eligible to receive a  free product after making $\threshold$ purchases.  We henceforth refer to $\threshold$ as the {\it redemption threshold} or {\it goal}, which will be chosen from a set of feasible thresholds $\{1,2,\ldots,\maxthreshold\}$, where $\maxthreshold$ is a finite positive integer. We use the notational convention that $N = +\infty$ if the seller does not implement a \bogo\ program, and refer to this as the \emph{no-loyalty} option. 

{We model the implementation of the \bogo\ program as in \citet{liu2021analysis}.} Consider a fixed threshold $\threshold$ and a customer $j \in {\mathcal{M}}$. At the beginning of each period $t \in \mathbb{N}^+$, customer $j$ has current point balance (or point {\it stock}) denoted by $\randomstock_{jt} \in [N]\cup \{0\}$. If the customer has not yet reached the redemption threshold (i.e., $\randomstock_{jt} < \threshold$), she makes a random decision as to whether or not to purchase the product. {The randomness in this decision may, for instance, be due to variability in the customer's valuation for the product, or in competitive outside options that are outside of the seller's control.} The customer earns one point if she makes a purchase, and zero points otherwise. {Once her point balance reaches the threshold (i.e., $\randomstock_{jt} = \threshold$), she may either redeem all $\threshold$ points for a free product, or choose an outside option. We assume the customer cannot make a cash purchase once the redemption threshold is met.} We use the variable $\randompurchase_{jt} \in \{0,1\}$ to represent the random purchase decision, or redemption decision, where applicable, and assume it is made independently across customers. Once the redemption threshold is met, if the customer chooses to redeem her points for the product, her point balance resets to zero at the start of the next period; otherwise, it remains the same. Once $S_{jt}$ resets to zero, the sequence of interactions repeats. We refer to the process of purchase decisions until eventual redemption as a {\it redemption cycle}.

\subsubsection*{Behavioral model.} Customers are partitioned into $K \in \mathbb{N}^+$ observable types which determine the probability with which customers purchase and redeem the product.{\footnote{The assumption that a customer's type is observable is standard in the literature (see, e.g., \citet{cohen2022price},\citet{chen2021fairness},\citet{cohen2021dynamic},\citet{xu2023doubly},\citet{freund2024fair}). For instance, a type may be determined by a customer's gender, age, or baseline purchase probability.}} For $k \in [K]$, we let $\rho_k$ be the fraction of type-$k$ customers in the population, with $\rho_{\min} = \min_{k\in[K]}\rho_k$. For a customer $j \in {\mathcal{M}}$, we use $k(j)$ to denote their type. 

To model the points pressure effect,
we define the {\it points to redemption} as the number of purchases remaining until the customer attains the redemption threshold, and denote this by $\tau_{jt} = N-\randomstock_{jt}$ for customer $j \in {\mathcal{M}}$ and period $t \in \mathbb{N}^+$.
We assume that the customer's random purchase (resp., redemption) probability is a function of $\tau_{jt}$.\footnote{We omit the dependence of the purchase probability on the price of the product, given that it is fixed.} 
Formally, let $\purchaseprob_{k}: \{0,1,\ldots,\threshold\}\mapsto[0,1]$ be the purchase probability function, such that for any number of points to redemption $\tau$, $\purchaseprob_k(\tau)$ is the probability with which a type-$k$ customer obtains the product (i.e., purchases if $\tau > 0$, or redeems if $\tau = 0$) the product, i.e. \mbox{$\purchaseprob_k(\tau) = \mathbb{P}(X_{jt}=1|\tau_{jt}=\tau, k(j)=k)$}. {In other words, given $\tau_{jt} = \tau$ and $k(j) = k$, $X_{jt}$ is drawn independently from a Bernoulli distribution with parameter $\purchaseprob_k(\tau)$.} In line with the {empirical literature \citep{kivetz2006goal,hartmann2008frequency,kopalle2012joint}}, we assume $\purchaseprob_k(\cdot)$ is non-increasing in $\tau$, for all $k \in [K]$. That is, customers are more likely to purchase a product as they approach the redemption threshold. Finally, in the absence of the \bogo\ program, customers make a random purchase decision in each period, which we denote by $\noloyaltyprob_k$, for $k \in [K]$. To model the idea that customers are likely to ignore extremely large redemption thresholds, again in line with the literature on points pressure, we assume that  \mbox{$\lim_{\tau\to\infty} \purchaseprob_k(\tau) = \noloyaltyprob_k$}. \Cref{ex:purchase-prob-example} shows common instantiations of the purchase probability $\purchaseprob_k(\cdot)$ satisfying our mild structural assumptions.
\begin{example}\label{ex:purchase-prob-example} Consider the following special cases of $\purchaseprob_k(\cdot)$.
\begin{itemize}[leftmargin=0.75cm, labelsep=0.25cm]
\item \textbf{No points pressure:}  $\purchaseprob_k(\tau) = \noloyaltyprob_k$. {This models a setting where the \bogo\ program has no impact on the customer's purchase decision.}
\item \textbf{Linear points pressure:} $\purchaseprob_k(\tau) = \noloyaltyprob_k + (\alpha_k-\beta_k\tau)^+$ for some $\alpha_k, \beta_k > 0$. 
\item \textbf{Exponential points pressure:} $\purchaseprob_k(\tau) = \noloyaltyprob_k +e^{\alpha_k-\beta_k\tau}$ for some $\alpha_k, \beta_k > 0$.
\item \textbf{Logit points pressure:} $\purchaseprob_k(\tau) = \noloyaltyprob_k+\frac{e^{\alpha_k-\beta_k\tau}}{1+e^{\alpha_k-\beta_k\tau}}$ for some $\alpha_k,\beta_k > 0$.
\item \textbf{Generalized linear points pressure:} $\purchaseprob_k(\tau) = \mu_k(\alpha_k-\beta_k\tau)$, for some $\alpha_k,\beta_k > 0$ and increasing link function $\mu_k(\cdot)$, with $\lim_{\tau\to\infty}\mu_k(\alpha_k-\beta_k\tau) = \noloyaltyprob_k$. We will assume such a generalized linear model (GLM) for the learning setting (see \Cref{sec:learning}).
\end{itemize}
\end{example}

\subsubsection*{Objective.} In the complete-information setting, the goal of the seller is to design a \bogo\ program by selecting thresholds that maximize her long-run average expected revenue per customer (including potentially not offering a \bogo\ program at all). 

Toward understanding the value of personalization in these programs, we will consider both type-specific and type-agnostic thresholds. Let $\mathbf{N} = (N_1,\ldots,N_K)$ be the vector of redemption thresholds set for each type.  Since the price of the product is fixed, maximizing long-run average revenue is equivalent to maximizing the long-run average purchase probability,\footnote{In the remainder of the paper, we abuse terminology and frequently refer to the long-run average purchase probability as the long-run average revenue.} given by:
\begin{align}\label{eq:rev-def}
R(\mathbf{N}) = \lim_{T\to\infty}\frac{1}{{M}T}\sum_{j=1}^{{M}}\sum_{t=1}^{T} \phi_{k(j)}(\tau_{jt})\mathds{1}\left\{\tau_{jt} > 0\right\},
\end{align}
for an arbitrary initial number of points to redemption $\tau_{j1}$, with $\tau_{j,t+1} = (\tau_{jt}-X_{jt})\mod(N_{k(j)}+1)$ for all $j \in \population$, $t \in \mathbb{N}^+$.
In \Cref{prop:closed-form-lr-avg-per-cust} we establish that this limit indeed exists and is unique. 

\Cref{eq:rev-def} highlights the key trade-off in designing a \bogo\ program: when the redemption threshold is small, the points pressure effect kicks in early, resulting in customers purchasing the product with higher likelihood throughout the redemption cycle. This, however, comes at the cost of customers being able to redeem more frequently, resulting in a loss in revenue. Conversely, if the threshold is large, more purchases are required for redemption, but the likelihood that a customer purchases remains low for longer, as the customer needs to acquire more points for the points pressure effect to kick in significantly. The absence of the \bogo\ program pushes this effect to the limit, with customers always purchasing the product with the same (potentially low) probability, but the seller never giving up any revenue by giving out the item for free. We formalize this key trade-off by providing closed-form expressions for (i) the long-run average purchase probability, and (ii) the long-run average fraction of time a customer spends with a specific point balance, for each customer $j \in \population$.
\begin{proposition}\label{prop:closed-form-lr-avg-per-cust}
Given redemption threshold $\threshold$, for any initial number of points to redemption \mbox{$(\tau_{j1}, j \in \population)$}, the long-run average purchase probability for each customer $j \in \population$ is given by:
\begin{align}\label{eq:closed-form-lr-avg-per-cust}
\lim_{T\to\infty}\frac1T\sum_{t=1}^T\phi_{k(j)}(\tau_{jt})\mathds{1}\{\tau_{jt} > 0\} = \frac{\threshold}{\sum_{\tau=0}^N\frac{1}{\phi_{k(j)}(\tau)}}.
\end{align}
Moreover, the long-run average fraction of time customer $j$ has $\tau \in \{0,1,\ldots,N\}$ points remaining until redemption is given by:
\begin{align}\label{eq:stat-dist-per-type}
\statprob{k(j)}{\tau}{N} :=\lim_{T\to\infty}\frac1T\sum_{t=1}^T\mathds{1}\{\tau_{jt} = \tau\} = \frac{1}{\sum_{\tau'=0}^N\frac{\phi_{k(j)}(\tau)}{\phi_{k(j)}(\tau')}}.
\end{align}
\end{proposition}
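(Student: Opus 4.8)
\emph{Proof proposal.} The plan is to recognize that, for a fixed customer $j$ of type $k = k(j)$ and a fixed threshold $\threshold$, the points-to-redemption process $(\tau_{jt})_{t \ge 1}$ is a time-homogeneous Markov chain on the finite state space $\{0,1,\ldots,N\}$. Indeed, the update rule $\tau_{j,t+1} = (\tau_{jt}-X_{jt})\bmod(N+1)$ together with the fact that $X_{jt}\sim\bern(\phi_k(\tau_{jt}))$ depends on the past only through the current state $\tau_{jt}$ yields explicit transition probabilities: from any state $\tau\in\{1,\ldots,N\}$ the chain moves to $\tau-1$ with probability $\phi_k(\tau)$ and stays at $\tau$ with probability $1-\phi_k(\tau)$, while from state $0$ it jumps to $N$ with probability $\phi_k(0)$ and stays at $0$ with probability $1-\phi_k(0)$. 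Since $\phi_k(\tau)>0$ for every $\tau$ (implicit in the statement, where we divide by $\phi_k(\tau)$), every state reaches every other by traversing the cycle $N\to N-1\to\cdots\to 0\to N$, so the chain is irreducible; being finite, it is positive recurrent and admits a unique stationary distribution $\pi=(\pi_0,\ldots,\pi_N)$.

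First I would compute $\pi$ by exploiting the cyclic ``constant-flux'' structure of the chain. Writing the balance equation at each state and cancelling the self-loop terms yields, for consecutive states, the identity $\pi_\tau\phi_k(\tau)=\pi_{\tau+1}\phi_k(\tau+1)$ in the interior and $\pi_N\phi_k(N)=\pi_0\phi_k(0)$ at the wrap-around, so that $\pi_\tau\phi_k(\tau)\equiv c$ is constant in $\tau$. Hence $\pi_\tau = c/\phi_k(\tau)$, and normalization $\sum_{\tau=0}^N\pi_\tau = 1$ forces $c = \big(\sum_{\tau=0}^N 1/\phi_k(\tau)\big)^{-1}$. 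Substituting back gives exactly $\pi_\tau = 1/\sum_{\tau'=0}^N \frac{\phi_k(\tau)}{\phi_k(\tau')}$, which is the claimed expression \eqref{eq:stat-dist-per-type} for $\statprob{k(j)}{\tau}{N}$.

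Next I would invoke the ergodic theorem for finite irreducible Markov chains, which guarantees that for any bounded $g$ the time average $\frac1T\sum_{t=1}^T g(\tau_{jt})$ converges almost surely (and in expectation), irrespective of the initial state $\tau_{j1}$, to the stationary expectation $\sum_{\tau=0}^N g(\tau)\pi_\tau$; note that aperiodicity is not needed for this Ces\`aro-type convergence. Taking $g(\tau)=\mathds{1}\{\tau=\tau_0\}$ recovers the fraction-of-time statement, and taking $g(\tau)=\phi_k(\tau)\mathds{1}\{\tau>0\}$ gives $\sum_{\tau=1}^N\phi_k(\tau)\pi_\tau = \sum_{\tau=1}^N\phi_k(\tau)\cdot\frac{c}{\phi_k(\tau)} = Nc = N/\sum_{\tau'=0}^N 1/\phi_k(\tau')$, establishing \eqref{eq:closed-form-lr-avg-per-cust}. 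Since customers evolve independently and the population is finite, the per-customer limits combine to give existence and uniqueness of the limit defining $R(\mathbf{N})$ in \eqref{eq:rev-def}.

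I expect the only delicate point to be the clean identification of the stationary distribution: the balance equations must be handled simultaneously at the wrap-around transition $0\to N$ and at the interior states, and the key realization is that this particular chain is a unidirectional cycle with self-loops, for which every cut carries the same probability flux $c=\pi_\tau\phi_k(\tau)$. Once this constant-flux observation is made, both formulas follow immediately, and the remaining ingredients (irreducibility, applicability of the ergodic theorem, and independence across customers) are routine.
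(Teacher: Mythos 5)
Your proposal is correct and follows essentially the same route as the paper: identify $(\tau_{jt})$ as a finite irreducible Markov chain on $\{0,\ldots,N\}$, solve the balance equations to get $p_\tau \propto 1/\phi_{k(j)}(\tau)$ (your constant-flux phrasing is just a cleaner bookkeeping of the same computation), and read off both formulas from the stationary distribution. If anything, you are slightly more careful than the paper, which passes from stationary distribution to time averages without explicitly citing the ergodic theorem for Ces\`aro limits under arbitrary initial states.
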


We defer the proof of \Cref{prop:closed-form-lr-avg-per-cust} to Appendix \ref{apx:closed-form-lr-avg-per-cust}. For ease of notation, we let $R_k(N) = \frac{N}{\sum_{\tau=0}^N\frac{1}{\phi_k(\tau)}}$ be the long-run average purchase probability for a type-$k$ customer. Applying the dominated convergence theorem to \Cref{eq:closed-form-lr-avg-per-cust} in \Cref{prop:closed-form-lr-avg-per-cust}, the long-run average revenue across the entire population of customers given a vector of thresholds $\mathbf{N}$ is then given by 
\begin{align}
R(\mathbf{N}) = \sum_{k\in[K]}{\rho_k}{R_k(N_k)}.
\end{align}

\Cref{prop:closed-form-lr-avg-per-cust} formalizes the key trade-off between maximizing the purchase probabilities and minimizing the number of free products described above. In particular, notice that the denominator in the right-hand side of \Cref{eq:closed-form-lr-avg-per-cust} represents the expected time to complete a redemption cycle, since the time to move from $\tau$ to $\tau-1$ for any customer $j$ is a Geometric random variable parametrized by success probability $\phi_{k(j)}(\tau)$. Since the number of purchases per redemption cycle is necessarily $N$, \Cref{eq:closed-form-lr-avg-per-cust} can therefore be interpreted as the average number of purchases per period in a redemption cycle. So, while a larger threshold is beneficial from a revenue perspective, as this would reduce the number of times the seller needs to give out the product for free, this effect is dampened by how long it takes for the customer to complete a redemption cycle, since a higher threshold also reduces the purchase probability early on in the cycle.

Finally, as a corollary of \Cref{prop:closed-form-lr-avg-per-cust}, we recover that, as the redemption threshold grows large, the seller's long-run average revenue converges to her revenue without a loyalty program. We defer the proof of \Cref{prop:special-case-noloyalty} to Appendix \ref{apx:no-loyalty-cvg}.
\begin{corollary}\label{prop:special-case-noloyalty}
$\lim_{\threshold\to\infty}R_k(\threshold) = \bar{\phi}_k$.
\end{corollary}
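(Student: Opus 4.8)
The plan is to work directly from the closed-form expression $R_k(\threshold) = \threshold \big/ \sum_{\tau=0}^{\threshold} 1/\phi_k(\tau)$ established in \Cref{prop:closed-form-lr-avg-per-cust}, and to recognize the denominator, once divided by $\threshold$, as a Cesàro average of the reciprocal purchase probabilities. Since $\phi_k(\cdot)$ is non-increasing with $\lim_{\tau\to\infty}\phi_k(\tau) = \bar\phi_k$, the sequence $\tau \mapsto 1/\phi_k(\tau)$ is non-decreasing and converges to $1/\bar\phi_k$; the classical fact that the arithmetic means of a convergent sequence converge to the same limit then does essentially all of the work.

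Concretely, I would first rewrite $R_k(\threshold) = \big( \tfrac{1}{\threshold}\sum_{\tau=0}^{\threshold} 1/\phi_k(\tau)\big)^{-1}$. Writing $c_\tau := 1/\phi_k(\tau)$, the Cesàro mean theorem applied to the convergent sequence $c_\tau \to 1/\bar\phi_k$ gives $\tfrac{1}{\threshold+1}\sum_{\tau=0}^{\threshold} c_\tau \to 1/\bar\phi_k$. Multiplying by $(\threshold+1)/\threshold \to 1$ yields $\tfrac{1}{\threshold}\sum_{\tau=0}^{\threshold} c_\tau \to 1/\bar\phi_k$, and taking reciprocals gives $R_k(\threshold) \to \bar\phi_k$, as desired. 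Equivalently, one may invoke the Stolz--Cesàro theorem with numerator $\sum_{\tau=0}^{\threshold} c_\tau$ and (strictly increasing, unbounded) denominator $\threshold$, whose successive-difference ratio is exactly $c_\threshold = 1/\phi_k(\threshold) \to 1/\bar\phi_k$.

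The only place requiring a bit of care is the boundary behaviour of $\bar\phi_k$. When $\bar\phi_k \in (0,1]$ the argument above is immediate. When $\bar\phi_k = 0$ the sequence satisfies $c_\tau \to +\infty$, so its averages also diverge to $+\infty$ and $R_k(\threshold) = \big(\tfrac1\threshold\sum_{\tau=0}^\threshold c_\tau\big)^{-1} \to 0 = \bar\phi_k$, which I would record separately under the convention $1/0 = +\infty$. I would also note at the outset that finiteness of each $R_k(\threshold)$ relies on $\phi_k(\tau) > 0$ for all finite $\tau$, which holds under the standing model assumptions, so the partial sums are always well-defined. I do not anticipate a genuine obstacle here: monotonicity of $\phi_k$ is not even essential (only the existence of the limit $\bar\phi_k$ is), and the entire claim reduces to a one-line Cesàro argument.
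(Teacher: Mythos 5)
Your proof is correct and takes essentially the same route as the paper: the paper's own argument is a one-line application of the Stolz--Ces\`aro theorem to $R_k(\threshold) = \threshold \big/ \sum_{\tau=0}^{\threshold} 1/\phi_k(\tau)$, which is exactly the variant you mention (and is equivalent to your Ces\`aro-mean argument after taking reciprocals). Your extra care about the case $\bar{\phi}_k = 0$ is a harmless refinement; the paper's formulation handles it automatically since Stolz--Ces\`aro applied directly to this ratio covers a zero limit.
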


{
\subsubsection*{Discussion of modeling assumptions.} We conclude this section by discussing our main modeling assumptions. Chief among these is the fact that we consider an {\it exogenous} model of customer behavior, as opposed to assuming that customers strategically make their purchase and redemption decisions in each period to maximize their long-run average utility. As noted in \Cref{ssec:related-lit}, this modeling decision is in line with the literature on pricing under demand uncertainty \citep{den2015dynamic} and long-term impacts more generally \citep{bastani2022learning,sumida2023optimizing,hamilton2023churning,kanoria2024managing,freund2024fair}; it is moreover motivated by models that are used for learning customer preferences in practice. Our behavioral model parsimoniously captures one of the most salient features of customer behavior induced by points-based rewards: that of points pressure, which increases as the customer approaches the reward, and returns back to its initial level after redemption \citep{hartmann2008frequency}. This points pressure may arise due to some underlying switching costs that are a non-decreasing function of the number of points to redemption, as posited in early analytical works \citep{klemperer1987markets}; our model is general enough to include this possibility.

In line with the exogeneity assumption, we do not model strategic consumer stockpiling, wherein customers purchase more than one product in anticipation of potential redemption threshold increases. This assumption is also made in \citet{liu2021analysis}, who cite settings in which the product or service cannot be inventoried, as is the case for many of the \bogo\ programs mentioned in \Cref{sec:intro}.

Also similar to \citet{liu2021analysis},  we assume that once a customer has attained the redemption threshold, she cannot continue to accumulate points by purchasing the product with cash. From a practical perspective, such a design decision is easily implementable by the seller. From a theoretical perspective, under a fixed redemption threshold and price, customers have no incentive to delay redemption. 
In the learning setting, while the redemption threshold may vary, this variation is unpredictable from the customer's perspective, making it unlikely that the customer would time redemption in anticipation of such events. {With that said, our analysis is easily amenable to models in which customers probabilistically decide between making a cash purchase or redeeming after attaining the redemption threshold, since such a change would simply require analyzing a different, possibly infinite-state, Markov chain. While our exact bounds may change (in particular, the uniform upper bound on the price of fairness we derive in \Cref{thm:price-of-fairness}), we conjecture that our main insights do not. Namely, even under such generalizations we expect that (i) there exists a uniform upper bound on the  price of fairness, and (ii) temporal fairness comes essentially for free in learning settings.}

Finally, we emphasize that the goal of this work is not to capture all existing points-based rewards programs, or all aspects of customer behavior in response to such programs.  The goal of this work is to provide a first study of the problem of fairly and effectively learning points-based rewards programs, given the well-documented phenomenon of points pressure. For this, we focus on  the simple and popular \bogo\ program. As discussed above, we conjecture that our main insights and the effectiveness of the types of algorithms we propose are invariant to added model complexity, and defer a discussion of interesting modeling extensions to \Cref{sec:conclusion}.
}
\section{On The Limited Value of Personalization}\label{sec:complete-info}

Motivated by real-world concerns surrounding the fairness of loyalty programs, as discussed in \Cref{sec:intro}, in this section we study the value of personalization in \bogo\ programs, formalized via the {\it price of fairness}. 

We introduce some additional notation in order to define this concept. Let $\mathcal{R}^{\text{pers}}$ and $\mathcal{R}^{\text{non-pers}}$ respectively denote the optimal revenues under personalized and non-personalized thresholds, i.e.,
\[\mathcal{R}^{\text{pers}} = \sum_{k\in[K]}\rho_k\cdot\left[ \max_{N_k\in [\maxthreshold]\cup\{+\infty\}} R_k(N_k) \right] \qquad, \qquad \mathcal{R}^{\text{non-pers}} = \max_{N\in[\maxthreshold]\cup\{+\infty\}} \sum_{k\in[K]}\rho_k R_k(N).\]

We formally define the price of fairness below.

\smallskip

\begin{definition}[Price of Fairness (PoF)]\label{def:pof}
Given any \bogo\ instance, the price of fairness is the ratio of the optimal personalized revenue to the optimal non-personalized revenue. Formally:
\begin{align}
\text{PoF} = \frac{\mathcal{R}^{\text{pers}}}{\mathcal{R}^{\text{non-pers}}}.
\end{align}
\end{definition}

\smallskip

A priori, one might expect the price of fairness to in general be quite large, given that we impose very few structural assumptions on the relationship between the number of points to redemption $\tau$ and the purchase probability $\phi_k(\cdot)$, for any type $k$. For instance, consider a setting with two types, both equally likely: (i) a frequent customer, who has a very high baseline purchase probability under the no-loyalty option, and (ii) an infrequent customer, who has very low baseline purchase probability, but purchases extremely frequently for any finite redemption goal. Intuitively, simultaneously optimizing for these two conflicting preferences should result in significant revenue loss, since under the no-loyalty option, the seller misses out on revenue from the infrequent customer; however, for a finite redemption goal, the seller gives out free items to the frequent customer, when this customer would have bought them anyway. One would moreover expect this loss to grow with the number of types, as the seller needs to reconcile increasingly conflicting preferences. 

In our main result for this section, however, we show that there is a limit to the gains that a seller can extract from personalization. In particular, {\it the optimal personalized threshold guarantees no more than 1.7 times the optimal non-personalized threshold, independent of all model primitives.} We formally state this in \Cref{thm:price-of-fairness} below, deferring the proof of the result to Appendix \ref{apx:price-of-fairness}.
\begin{theorem}\label{thm:price-of-fairness}
For any instance of the \bogo\ problem,
\begin{align}\label{eq:pof-bound}
\text{PoF} \leq K-(K-1)2^{-1/(K-1)} \leq 1+\ln 2.
\end{align}
Moreover, the first bound is tight for $K = 2$, i.e., $\text{PoF} = 3/2$. 
\end{theorem}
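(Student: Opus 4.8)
The plan is to reduce the theorem to a clean finite-dimensional inequality by combining a sharp structural lower bound on $R_k(N)$ with the observation that $\mathcal{R}^{\text{non-pers}}$ can be lower-bounded by evaluating the non-personalized revenue only at the type-optimal thresholds. First I would record the facts that drive everything. Writing $b_\tau := 1/\phi_k(\tau)$, which is non-decreasing since $\phi_k$ is non-increasing, and $H_k(N):=\sum_{\tau=0}^N 1/\phi_k(\tau)$, we have $R_k(N)=N/H_k(N)$ and $R_k^* := \max_N R_k(N) = N_k^*/H_k(N_k^*)$, where $N_k^*$ is the type-$k$ optimal threshold.

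The technical crux is the following sharp lemma: for every type $k$ and every $N\le N_k^*$,
\[
R_k(N) \;\ge\; \frac{N}{N+1}\cdot\frac{N_k^*+1}{N_k^*}\,R_k^*.
\]
This follows because monotonicity of $b_\tau$ gives $b_\tau \ge H_k(N)/(N+1)$ for each $\tau>N$, so $H_k(N_k^*) = H_k(N)+\sum_{\tau=N+1}^{N_k^*} b_\tau \ge H_k(N)\,\tfrac{N_k^*+1}{N+1}$; combining this with $R_k(N)=N/H_k(N)$ and $H_k(N_k^*)=N_k^*/R_k^*$ gives the claim. The no-loyalty case $N_k^*=\infty$ (where $R_k^*=\bar\phi_k$) is handled by the same argument using $b_\tau\le 1/\bar\phi_k$, and the resulting factor is exactly tight for no-points-pressure types (constant $\phi_k$).

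With the lemma in hand, sort the types so that $N_1^*\le\cdots\le N_K^*$ and set $\gamma_k := N_k^*/(N_k^*+1)\in[1/2,1]$, $a_k:=\rho_k R_k^*$, and $c_k:=a_k/\gamma_k$. Evaluating the non-personalized revenue at $N=N_\ell^*$ and keeping only the types $k\ge\ell$ (for which the lemma applies and gives $R_k(N_\ell^*)\ge(\gamma_\ell/\gamma_k)R_k^*$) yields $\mathcal{R}^{\text{non-pers}}\ge \gamma_\ell\sum_{k\ge\ell}c_k$; maximizing over $\ell$ gives $\mathcal{R}^{\text{non-pers}}\ge\max_\ell \gamma_\ell C_\ell$ with $C_\ell:=\sum_{k\ge\ell}c_k$. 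Since $\mathcal{R}^{\text{pers}}=\sum_k a_k=\sum_k\gamma_k c_k$, it suffices to bound $\sum_k\gamma_k c_k \big/ \max_\ell\gamma_\ell C_\ell$ uniformly. Normalizing $\max_\ell\gamma_\ell C_\ell=1$ (so $\gamma_\ell C_\ell\le1$ for all $\ell$), an Abel rearrangement gives $\sum_k\gamma_k c_k=\gamma_1 C_1+\sum_{k\ge2}(\gamma_k-\gamma_{k-1})C_k\le 1+\sum_{k\ge2}\bigl(1-\gamma_{k-1}/\gamma_k\bigr)=K-\sum_{k=2}^K \gamma_{k-1}/\gamma_k$, using $C_k\le1/\gamma_k$ and $\gamma_k\ge\gamma_{k-1}$. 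Finally, since $\gamma_1\ge1/2$ and $\gamma_K\le1$, AM-GM on the telescoping product gives $\sum_{k=2}^K \gamma_{k-1}/\gamma_k \ge (K-1)(\gamma_1/\gamma_K)^{1/(K-1)}\ge(K-1)2^{-1/(K-1)}$, which delivers $\text{PoF}\le K-(K-1)2^{-1/(K-1)}$.

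The second inequality $K-(K-1)2^{-1/(K-1)}\le 1+\ln 2$ I would obtain by writing the bound as $1+(K-1)\bigl(1-2^{-1/(K-1)}\bigr)$ and checking that $m\mapsto m(1-2^{-1/m})$ is increasing in $m$ with limit $\ln 2$ (e.g. by showing $x\mapsto(1-2^{-x})/x$ is decreasing). For tightness at $K=2$ I would exhibit the explicit instance with $\gamma_1=1/2$ (a ``Buy $1$, Get One Free''-optimal type with $R_1^*=1/2$ and vanishing $\bar\phi_1$) and $\gamma_2=1$ (a no-points-pressure type for which the no-loyalty option is optimal, $R_2^*=\bar\phi_2$) and equal population masses, which makes every inequality above an equality and gives $\text{PoF}=3/2$. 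I expect the main obstacle to be the sharp lemma: the naive bound $R_k(N)\ge(N/N_k^*)R_k^*$ is too lossy exactly when $N_k^*=\infty$, and it is the corrected factor $\tfrac{N}{N+1}\cdot\tfrac{N_k^*+1}{N_k^*}$ — tight for no-pressure types — that makes the AM-GM reduction yield the constant $1+\ln 2$ rather than a vacuous $O(K)$ bound; a secondary point to verify is that restricting to the thresholds $\{N_\ell^*\}$ and discarding over-shooting types ($k<\ell$) costs nothing in the worst case, which the $K=2$ construction confirms.
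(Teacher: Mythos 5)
Your proposal is correct, and its skeleton coincides with the paper's: both lower-bound $\mathcal{R}^{\text{non-pers}}$ by evaluating the common threshold at each type-optimal $N_\ell^*$, discard the ``overshooting'' types with smaller optima, and invoke the same structural inequality (your lemma $R_k(N)\ge\tfrac{N}{N+1}\cdot\tfrac{N_k^*+1}{N_k^*}R_k^*$ is exactly the paper's bound $\sum_{\tau=0}^{N_j^*}1/\phi_j(\tau)\big/\sum_{\tau=0}^{N_k^*}1/\phi_j(\tau)\ge\tfrac{N_j^*+1}{N_k^*+1}$, which you derive via the averaging bound $b_\tau\ge H_k(N)/(N+1)$ rather than via the endpoint values $\phi_j(N_k^*),\phi_j(N_k^*+1)$); both routes then reduce the problem to bounding $K-\sum_{k=2}^K\gamma_{k-1}/\gamma_k$, and both use the same tight instance at $K=2$ (your equality conditions force exactly the paper's choice $\bar\phi_1=0$, $\bar\phi_2=1$, $\rho_1=\rho_2=1/2$). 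Where you genuinely diverge is in how the two auxiliary optimizations are solved. The paper's \Cref{lem:bounding-max} establishes the reduction by an equalization argument --- an induction constructing the worst-case revenue vector at which all $K$ candidate lower bounds coincide --- whereas you get the same conclusion in a few lines by normalizing $\max_\ell\gamma_\ell C_\ell=1$ and applying Abel summation with the constraints $C_k\le 1/\gamma_k$ and $\gamma_k\ge\gamma_{k-1}$. Likewise, the paper's \Cref{lem:lb-telescoping-prod} minimizes $\sum_k a_k/a_{k+1}$ by first-order conditions (showing the minimizer has equal ratios, then telescoping), while you obtain $(K-1)(\gamma_1/\gamma_K)^{1/(K-1)}\ge(K-1)2^{-1/(K-1)}$ in one step from AM--GM on the telescoping product. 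Your versions are shorter and make the mechanism more transparent (the Abel step exhibits directly which constraints bind, and AM--GM replaces the calculus); the paper's versions have the mild advantage of exhibiting the extremal configuration explicitly, which foreshadows the tight instance. The remaining pieces --- the $1+\ln 2$ bound via $e^x\ge 1+x$ (equivalently your monotonicity of $m(1-2^{-1/m})$) and the treatment of $N_k^*=+\infty$ via $\gamma_k=1$ --- match the paper.
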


The upper bound $K-(K-1)2^{-1/(K-1)}$ derived in \Cref{thm:price-of-fairness} is concave and increasing in $K$. The fact that it is increasing reflects the intuition that, as the population becomes more heterogeneous, not personalizing results in more loss in revenue; however, these marginal gains steeply decrease as the number of types grows large. We highlight that this bound is a {\it worst-case} bound, over the set of all possible problem instances. In fact, the instance constructed to show this bound is tight for $K=2$ is precisely the instance described above, in which the decision-maker must simultaneously optimize over both frequent, reward-insensitive and infrequent, reward-sensitive customers.
In \Cref{sec:numerics} we numerically show that the price of fairness is on average much lower, for a wide set of randomly generated problem instances. {We moreover investigate the dependence of the price of fairness on the heterogeneity in the population, as measured by the number of types $K$ and the imbalance across types.}

With this result in hand, in the remainder of the paper we restrict our attention to the problem of learning the optimal {\it non-personalized} threshold that achieves $\mathcal{R}^{\text{non-pers}}$. (We note however that the learning algorithms we develop can easily be applied to each of the $K$ individual types, and immediately inherit the regret guarantees we derive, up to constant factors.) Hence, throughout the remainder of the paper we abuse notation and denote the long-run average revenue across the population of customers given a single threshold $N$ by $R(N) = \sum_{k\in[K]}\rho_k R_k(N)$.

\section{The Learning Setting}\label{sec:learning}

Having analyzed the price of fairness in the complete-information setting, we now turn to the incomplete-information setting, where the seller seeks to learn an optimal redemption threshold without prior knowledge of the relationship between customers' purchase probabilities and the points remaining to redemption. We devote this section to a complete description of the learning setting and a derivation of a lower bound on the regret of any learning algorithm. Our algorithmic contributions are deferred to \Cref{sec:ub,sec:extension}.

\subsection{Setup}

We consider a finite horizon of $T$ periods over which the seller seeks to learn an optimal redemption threshold.
For each type $k \in [K]$, let $\mathcal{M}_k$ be the collection of all type-$k$ customers (recall, a customer's type is observable in our setting), with $\abs{\mathcal{M}_k} = \rho_k M$. {For simplicity we assume that $\abs{\mathcal{M}_k}$ is integral, for all $k \in [K]$.} 

At the beginning of each period $t\in[T]$, the seller sets a common redemption threshold for all {$M$} customers, or decides to pause the rewards program (i.e., she sets the redemption threshold to $+\infty$ and does not allow for redemption or point accumulation). If a redemption threshold is set, given the number of points remaining to redemption $\tau_j$, each customer $j \in \mathcal{M}$ independently makes a purchase or redemption decision according to $\phi_{k(j)}(\tau_j)$, which is unknown to the seller. If the seller pauses the rewards program in that period, the customer makes a purchase with probability $\bar{\phi}_{k(j)}$, which we assume is known.\footnote{The assumption that $\bar{\phi}_k$ is known follows from the fact that the price is fixed in our model. For instance, the seller may have experimented with prices extensively in the absence of a rewards program, and thus already have a high-quality estimate of the relationship between price and purchase probability.}

 \subsubsection*{Behavioral model.} We assume that the purchase probability of each type-$k$ customer follows a generalized linear model, i.e.,
$
    \phi_k(\tau) = \mu_k(\beta_{k,1} + \beta_{k,2} \tau) \ \forall \ \tau \in \{0,\ldots,\maxthreshold\},
$
where $N_{\max}$ is assumed to be known, and $\beta_{k,1} \in \mathbb{R}, \beta_{k,2} \in \mathbb{R}^-$ are parameters that are unknown to the seller. The function $\mu_k: \mathbb{R} \to [0,1]$ is a {known, strictly increasing link function such that $\lim_{x\to-\infty}\mu_k(x) = \bar{\phi}_k$.}\footnote{That $\mu_k$ is strictly increasing and $\beta_{k,2} \leq 0$ together imply that $\mu_k(\beta_{k,1}+\beta_{k,2}\tau)$ is decreasing in $\tau$, as required by our assumptions on $\phi_k$.} We assume that $\beta_{k,1}, \beta_{k,2}$ respectively take on values over known, compact subsets of $\mathbb{R}$ and $\mathbb{R}^-$, and let $\Theta_k$ denote the set of admissible parameters $\beta_k = (\beta_{k,1},\beta_{k,2})$. Let $\beta = (\beta_{k}; k \in [K])$. Finally, we impose the following standard regularity conditions on $\mu_k$.
\begin{assumption}[Basic assumptions]\label{asp:glm} For all $k \in [K]$, $\mu_k(\cdot)$ satisfies the following conditions:
\begin{enumerate}[label=\emph{(\alph*)}, labelsep=1em, left=0.25in]
    \item\label{asp:bound} \textbf{Boundedness:} There exist known constants $\mu_{\min}, \mu_{\max} \in (0,1]^2$ such that \[\mu_{\min} \le \mu_k(\beta_{k,1} + \beta_{k,2}\tau) \le \mu_{\max} \quad \forall \ \tau \in \{0,\ldots,N_{\max}\}, \ \beta_k \in \Theta_k \]
    \item\label{asp:lip} \textbf{Lipschitz continuity:} $\mu_k(\cdot)$ is $L_{\mu}$-Lipschitz, for some known constant $L_{\mu} > 0$.
    \item\label{asp:diff} \textbf{Twice differentiability:} $\mu_k(\cdot)$ is twice differentiable with respect to $\tau$. Moreover, there exist known constants $\kappa > 0$ and $G_{\mu} > 0$ such that
    \begin{align*}
    \kappa \leq \inf_{\substack{\norm{\beta'_k-\beta_k}\le {1}/{\sqrt{1+N_{\max}^2}}}} \dot{\mu}_k \prns{\beta'_{k,1}  +\beta'_{k,2} \tau} \quad \forall \ \tau \in \{0,\ldots,N_{\max}\} 
    \end{align*}
    and 
    \begin{align*} \abs{\ddot\mu_k(\beta_{k,1}+\beta_{k,2}\tau)} \leq G_{\mu} \quad \forall \ \tau \in \{0,\ldots,N_{\max}\}, \ \beta_k \in \Theta_k,
    \end{align*}
    where $\dot\mu_k(x)$ and $\ddot\mu_k(x)$ respectively denote the first and second derivatives with respect to $x$.
\end{enumerate}
\end{assumption}

As noted above, the conditions stated in \Cref{asp:glm} are commonly made in the literature on learning parametric choice models (see, e.g., \citet{broder2012dynamic}, \citet{li2017provably}). \Cref{asp:glm} (a) is trivially satisfied by taking $\mu_{\min} = \min_{k \in [K]} \bar{\phi}_k > 0$; moreover, $\phi_k(\tau) \leq 1$ for all \mbox{$k \in [K]$}, by definition. \Cref{asp:glm} (b) states that the purchase probability does not vary too much if the number of points to redemption varies by a small amount. \Cref{asp:glm} (c) imposes a smoothness condition on $\mu$. These assumptions can easily be shown to hold for the linear, convex, and logit points pressure functions presented in \Cref{ex:purchase-prob-example}, under the assumption that $\phi_k(\tau) \in (0,1)$ for all $\tau \leq N_{\max}$, $k\in[K]$.

\subsubsection*{Policies and regret metric.} A policy $\policy$ is a mapping from the history of redemption thresholds and customers' purchase and redemption decisions, to a redemption threshold for the current period. Let $\Pi$ denote the set of all such policies. Given policy $\pi$, for $t \in [T]$ we let $N_t^\pi$ be the redemption threshold chosen at the beginning of period $t$, with $N_t^\pi = +\infty$ denoting the decision to pause the rewards program in period $t$. Leveraging the same notation as in \Cref{sec:preliminaries}, for each customer $j \in \mathcal{M}$, we let $S_{jt}^\pi$ be their point balance at the beginning of period $t$ under policy $\pi$, with {$\tau_{jt}^{\pi} = (N_t^\pi - S_{jt}^\pi)^+$} the corresponding points to redemption{\footnote{{Note that the positive part is only needed if the algorithm decreases the threshold in such a way that $N_t^\pi < S_{jt}^{\pi}$ at the beginning of period $t$.}}} and $X_{jt}^\pi$ the purchase or redemption decision made by the customer. Without loss of generality, we assume that all customers begin with a point balance of zero (i.e., $S_{j1}^\pi = 0$ for all $j \in \mathcal{M}$). Finally, we use the notation $\mathbb{E}_{\pi}[\cdot]$ to denote the expectation of a random variable with respect to the randomness induced by $\pi$. 

Our main performance metric will be a policy's cumulative regret relative to a clairvoyant decision-maker who has knowledge of the true parameters $\beta$ governing customer behavior. To formally define this metric, recall that $\mathcal{R}^{\text{non-pers}} = \max_{N\in[\maxthreshold]\cup\{+\infty\}} \sum_{k\in[K]}\rho_k R_k(N)$ denotes the optimal long-run average revenue under complete information. In the remainder of the paper we define \mbox{$N^* \in \arg\max_{N \in [N_{\max}]} \sum_{k\in[K]}\rho_kR_k(N)$} to be an optimal redemption threshold, breaking ties arbitrarily.

\smallskip

\begin{definition}[Regret]\label{def:regret}
Given a sample of customers $\mathcal{M}$ with purchase and redemption decisions governed by $\beta$, the \emph{regret} of policy $\pi \in \Pi$ is defined as:
\begin{align*}
& \regret(\policy, M, T)
= MT\mathcal{R}^{\text{non-pers}}- \sum_{t\in[T]} M R(N_t^\pi).
\end{align*}
\end{definition}

\smallskip

The notion of regret defined above can be thought of as a type of {\it counterfactual} regret. In particular, recall that, for any given period $t \in [T]$, $ R(N_t^\pi)$ is the long-run average revenue collected by the decision-maker per customer, if she had set $N_t^\pi$ for all $M$ customers in perpetuity (i.e., her {\it counterfactual} revenue). Therefore, the per-customer regret in period $t$, $\mathcal{R}^{\text{non-pers}}-R(N_t^\pi)$, quantifies the long-run average cost incurred by setting a sub-optimal threshold in a given period. Note that this regret metric differs from the one commonly used for the problem of pricing under demand uncertainty, where policies are evaluated according to the expected revenue collected throughout the horizon \citep{den2015dynamic}. We formally define the analogous notion of regret in our setting, which we refer to as the {\it observable regret}, below. 

\smallskip 

\begin{definition}[Observable Regret]\label{def:actual-regret}
Given a sample of customers $\mathcal{M}$ with purchase and redemption decisions governed by $\beta$, the \emph{observable regret} of policy $\pi \in \Pi$ is defined as:
\begin{align}\label{eq:actual-regret}
\text{Obs-Regret}(\pi,M,T) = MT\mathcal{R}^{\text{non-pers}}- \sum_{t\in[T]}\sum_{k\in[K]}\sum_{j\in\mathcal{M}_k} \phi_k(\tau_{jt}^\pi)\mathds{1}\{\tau_{jt}^\pi > 0\}.
\end{align}
\end{definition}

\smallskip

We argue that counterfactual regret is a more reasonable metric than observable regret in our setting. One reason for this is that the optimal long-run average revenue per customer $\mathcal{R}^{\text{non-pers}}$, which is what the decision-maker truly cares about, need not be an upper bound on the expected revenue collected throughout a finite horizon. To see this, consider an instance where the seller interacts with a single customer. In this case, it is easy to construct instances for which a policy that sets the redemption thresholds such that the customer is always exactly one point away from redemption can far outperform the benchmark $T\mathcal{R}^{\text{non-pers}}$. While such a policy may generate high revenues throughout the horizon, it fails to achieve our goal of learning the optimal redemption threshold.

Moreover, even when $MT\mathcal{R}^{\text{non-pers}}$ is a valid upper bound on the seller's expected revenue, we claim that the observable regret remains an unfair metric against which to evaluate policies. This is due to the fact that, for any policy $\pi$, there exists some unavoidable finite-time convergence error relative to the long-run average revenue. To make this more concrete, suppose the decision-maker knew $\beta$. In this case, she would be able to compute the optimal redemption threshold exactly, and set this threshold in each period (or not offer a loyalty program at all). However, the decision-maker would still incur strictly non-zero observable regret, simply because $T$ is finite. This example highlights the key issue with the observable regret metric: it confounds the loss due to incomplete information about customers' redemption preferences with the loss due to the finite-time convergence error of the underlying Markov chain. This latter source of loss, which we refer to as the {\it mixing loss},  is uncorrelated with the quality of a learning algorithm. We formally define the mixing loss below.

\smallskip 

\begin{definition}[Mixing Loss]\label{def:mixing-loss}
Given a sample of customers $\mathcal{M}$ with purchase and redemption decisions governed by $\beta$, the \emph{mixing loss} of policy $\pi$ is given by:
\begin{align}
\mixingloss = \sum_{t\in[T]}\sum_{k\in[K]}\bigg[\rho_k M R_k(N_t^\pi) - \sum_{j\in\mathcal{M}_k} \phi_k(\tau_{jt}^\pi)\mathds{1}\{\tau_{jt}^\pi > 0\}\bigg].
\end{align}
\end{definition}

\smallskip 

While the mixing loss will not be our performance metric, it remains of independent interest, as it quantifies at a high level the ``closeness'' of the system to stationarity. A small mixing loss (in the absolute sense), reflects a system that is reflective of the steady-state system over which the decision-maker optimizes. In fact, for the policies analyzed in \Cref{sec:ub,sec:extension} we will additionally show that the corresponding mixing loss is vanishing with respect to $T$. Noting that the observable regret is the sum of these two terms, our results immediately imply bounds on our policies' observable regret.

\subsubsection*{Additional notation.} In the remainder of the paper we use Big O notation to denote the scaling with respect to $T$. Moreover, $\widetilde{O}(\cdot)$ is used to indicate the presence of polylogarithmic factors with respect to $T$.

\subsection{Regret Lower Bound}\label{ssec:lb}

With our main performance metric in hand, one of our goals will be to design learning algorithms that achieve low regret in expectation, where regret is defined as in \Cref{def:regret}. Prior to designing such policies, it is natural to characterize the complexity of the problem by providing a  lower bound on the regret the decision-maker can hope to achieve, as we scale $M$ and $T$. \Cref{thm: lower bound} provides such a lower bound.

\begin{theorem} \label{thm: lower bound}
    For any policy $\policy$, there exists an instance such that
    \begin{align*}
        \expect_\policy \left[\text{Regret}(\pi, M, T)\right] \ge \frac{\exp(-1/2)}{160(1+\sqrt{2})}\sqrt{MT} .
    \end{align*}
\end{theorem}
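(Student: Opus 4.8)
The plan is to prove this $\Omega(\sqrt{MT})$ lower bound via a two-point (Le Cam-type) argument combined with the Bretagnolle–Huber inequality; the $\exp(-1/2)$ in the target constant is a clear signature of this route, in which the total Kullback–Leibler information between the two hypotheses is calibrated to $1/2$. First I would reduce to the simplest hard instance: take $K=1$ (a homogeneous population, $\rho_1=1$, so that $M$ enters only through the number of i.i.d.\ customer trajectories) and, crucially, set $\maxthreshold = 1$. With $\maxthreshold=1$ the feasible action set is \emph{exactly} $\{1,+\infty\}$, so the learning problem collapses to a genuine two-action problem with no approximation needed. I would then exhibit two admissible parameter vectors $\beta^{(0)},\beta^{(1)}\in\Theta_1$, differing by a perturbation of magnitude $\delta$ (to be optimized), straddling the revenue crossover $R_1(1)=\bar{\phi}_1$, so that the threshold $N=1$ is strictly optimal under $\beta^{(0)}$ and the no-loyalty option $N=+\infty$ is strictly optimal under $\beta^{(1)}$.

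The two things to control are the suboptimality gap and the statistical distinguishability. On the reward side, I would use the closed form $R_1(N)=N/\sum_{\tau=0}^N \phi_1(\tau)^{-1}$ from \Cref{prop:closed-form-lr-avg-per-cust}: since $R_1(\infty)=\bar\phi_1$ is fixed and \emph{known} (it equals $\mu_1(-\infty)$, independent of $\beta_{1,1},\beta_{1,2}$), while $R_1(1)$ moves linearly in the perturbation, the revenue gap from playing the wrong action is $\Delta=\Theta(\delta)$ in each instance — linear, not quadratic, precisely because one arm's value is held constant at the crossover. On the information side, I would bound $\mathrm{KL}(P_0\,\|\,P_1)$, where $P_i$ is the law of the entire observation history (all $M$ customers, all $T$ periods) under $\beta^{(i)}$ for a fixed policy $\pi$. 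Although each customer's points-to-redemption evolves as a Markov chain, the chain rule for KL reduces this to a sum of conditional per-observation terms $\mathrm{KL}(\bern(\phi^{(0)}(\tau_{jt}^\pi))\,\|\,\bern(\phi^{(1)}(\tau_{jt}^\pi)))$, and by \Cref{asp:glm}(a)–(b) each term is bounded by $c\,\delta^2$ \emph{uniformly in the visited state} $\tau_{jt}^\pi$ (with $c$ depending on $L_\mu,\mu_{\min},\mu_{\max},\maxthreshold$). Hence $\mathrm{KL}(P_0\,\|\,P_1)\le c\,MT\,\delta^2$. This uniform-in-state bound is exactly why the Markov structure, which obstructs the \emph{upper} bound, causes no difficulty here.

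With these ingredients I would apply Bretagnolle–Huber to the event $A=\{\sum_{t}\mathds{1}[N_t^\pi=1]\ge T/2\}$: for any policy, $P_0(A^c)+P_1(A)\ge \tfrac12\exp(-\mathrm{KL}(P_0\,\|\,P_1))$. On $A^c$ the policy plays the suboptimal action for at least $T/2$ periods under $\beta^{(0)}$ (regret $\ge M\Delta T/2$), and symmetrically on $A$ under $\beta^{(1)}$, so $\max_i \expect_\policy[\regret(\pi,M,T)]\ge \tfrac{M\Delta T}{8}\exp(-c\,MT\,\delta^2)$. Choosing $\delta$ so that $c\,MT\,\delta^2=1/2$, i.e.\ $\delta\asymp 1/\sqrt{MT}$, forces $\Delta\asymp 1/\sqrt{MT}$ and yields a bound of the form $\tfrac{\exp(-1/2)}{\mathrm{const}}\sqrt{MT}$ after tracking constants. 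I expect the main obstacle to be the instance construction rather than the information-theoretic machinery, which is standard: one must produce explicit admissible GLM parameters (and verify monotonicity of $\phi_1$ and the boundedness/Lipschitz constants) for which the optimizer provably switches across the crossover under an $O(\delta)$ perturbation and the induced gap is $\Omega(\delta)$ with a sharp, computable constant — it is this careful calibration of the revenue expansion around $R_1(1)=\bar\phi_1$, together with the KL constant $c$ and the $1/8$ from Bretagnolle–Huber, that I expect to crystallize into the explicit $\tfrac{\exp(-1/2)}{160(1+\sqrt{2})}$ factor.
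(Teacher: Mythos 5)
Your proposal is correct, and it rests on the same information-theoretic skeleton as the paper's proof: two instances whose GLM parameters differ by $\Theta(1/\sqrt{MT})$, a chain-rule decomposition of the KL divergence between trajectory laws (in which the policy's threshold choices and the deterministic point-balance transitions cancel, leaving only per-observation Bernoulli terms), the Bretagnolle--Huber inequality applied to the event that threshold $1$ is played at least $T/2$ times, Markov's inequality to convert misidentification probability into regret, and a calibration making the total KL equal to $1/2$. Where you genuinely depart from the paper is in the construction. The paper takes $\maxthreshold=2$ and confuses two instances whose optimal thresholds are $N^*=1$ and $N^*=2$, taking care (\Cref{lem:rev-diff}) that no-loyalty is \emph{strictly suboptimal under both}, and it controls each per-observation Bernoulli KL by a reverse Pinsker inequality. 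You instead take $\maxthreshold=1$ and place the confusion at the crossover $R(1)=\bar\phi$ between the single loyalty threshold and the known no-loyalty revenue. Your version buys a cleaner two-action problem: one arm's value is pinned exactly at the benchmark, so the gap is transparently linear in the perturbation, and the no-loyalty arm is completely uninformative (its observations are $\bern(\bar\phi)$ under both instances), which can only help since the total KL remains at most $c\,MT\,\delta^2$ over the at most $MT$ informative observations. The construction is feasible: with a link whose range satisfies $\bar\phi<\mu_{\max}/2$ one has $R(1)=\phi(0)\phi(1)/(\phi(0)+\phi(1))$ able to sit on either side of $\bar\phi$, and perturbing the intercept moves $R(1)$ at rate at least $\kappa$ while $R(+\infty)$ stays fixed. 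Two bookkeeping caveats: (i) your per-observation KL bound needs the Bernoulli means bounded away from $1$, so choose instances with $\mu_{\max}<1$ (the paper's reverse Pinsker step needs the analogous condition that the reference parameter is at most $1/2$); and (ii) the explicit constant $\exp(-1/2)/\bigl(160(1+\sqrt{2})\bigr)$ is an artifact of the paper's specific instances (its gap bound $\Delta/10$ and KL constant $2(1+\sqrt{2})$), so your construction would produce its own constant, which you would have to track and verify dominates the stated one, or else restate the theorem with your constant. Neither caveat is a conceptual gap.
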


We defer the proof of \Cref{thm: lower bound} to Appendix \ref{apx:lb}. To prove the lower bound, we construct two instances, each with $K = 1$ and $N_{\max} = 2$. In the first instance, the optimal action is to set a redemption threshold of $N^* = 1$, whereas in the second the optimal action sets a redemption threshold of $N^* = 2$. The instances are constructed such that the true GLM parameters are within $\Theta(1/\sqrt{MT})$ of each other, making them difficult enough to identify while inducing large enough regret if they are not identified correctly.  We show that such a construction ensures that, in the worst case, any policy chooses the incorrect threshold with constant probability, thereby incurring an $\Omega(1/\sqrt{MT})$ revenue loss per period, per customer. This results in a lower bound of $\Omega(\sqrt{MT})$ regret.

\section{A First Step: Learning Under Limited Adaptivity}\label{sec:ub}

\Cref{thm:price-of-fairness} established the important insight that a seller cannot make arbitrary gains by implementing discriminatory points-based rewards programs. While this type of fairness consideration can be viewed as a sort of {\it long-term, individual fairness} constraint, in incomplete-information settings, 
there also exist {\it short-term, temporal fairness} considerations that may arise if the redemption threshold is changed too frequently (and in particular, increased) during the learning process.
As a result, we augment the goal of designing policies that achieve $\widetilde{O}(\sqrt{MT})$ regret by also requiring them to (i) infrequently change the redemption threshold, thereby allowing customers to complete multiple redemption cycles under the same threshold, and (ii) only ever {\it decrease} the redemption threshold, when it does change. In this section we take a first step toward addressing this two-fold objective by designing a ``stable'' learning algorithm with infrequent threshold changes. In \Cref{sec:extension} we use this algorithm and its analysis as a building block for a temporally fair algorithm that never devalues customers' points via threshold increases. 

\subsection{Algorithm Description}

In our first algorithmic contribution, we propose a greedy epoch-based algorithm, similar to the one proposed by \citet{javanmard2019dynamic}. Specifically, our algorithm, which we call ``Stable-Greedy,'' takes as input a set of epochs of geometrically increasing length. At the beginning of each epoch $h$, our algorithm computes the Maximum Likelihood Estimate (MLE) of the true parameters $\beta$ using the history of purchase and redemption decisions in the previous epoch.\footnote{Under \Cref{asp:glm}, the log-likelihood is strictly concave \citep{filippi2010parametric}. Therefore, the MLE is unique and can be efficiently computed.} 
 Given the MLE, denoted by $\hat{\beta}^{(h)}$, it then computes the redemption threshold that maximizes the long-run average revenue, assuming that $\hat{\beta}^{(h)}$ is the true parameter. Abusing notation, we use $N_h$ to denote this greedy threshold. In order to account for the possibility that the optimal action is to not offer a rewards program altogether, our algorithm compares the revenue without a rewards program to the estimated optimal revenue under $\hat{\beta}^{(h)}$. (Recall, we assume that the revenue without a rewards program is known.) If the former revenue exceeds the latter by some epoch-specific confidence threshold $\Delta_h$, we terminate the rewards program until the end of the horizon; otherwise, we set the redemption threshold to be $N_h$ throughout the entire epoch. We provide a formal description of Stable-Greedy in \Cref{alg:greedy}. Given a predetermined epoch schedule, we let $H(t)$ be the epoch that time $t$ is in. For $h\in [H(T)]$, $\tcal_h$ denotes the set of periods contained in epoch $h$, with $T_h = |\tcal_h|$. As in the proof of \Cref{thm: lower bound}, for clarity of exposition we abuse notation and let $R(N;\beta')$ be the long-run average revenue under redemption threshold $N$, given that the true parameter is $\beta'$. Note that, by definition, $R(N) = R(N;\beta)$.

Note that this algorithm achieves the ``first-step'' desideratum of limited adaptivity, as it fixes the redemption threshold for increasingly long epochs, thereby allowing customers to complete multiple redemption cycles before a change in goal. Moreover, in our numerical experiments we will see that the greediness of our algorithm allows for faster convergence to a fixed threshold, with significantly fewer than $H(T)$ changes in practice.

\begin{algorithm}[t]
\caption{Stable-Greedy}\label{alg:greedy}
\begin{algorithmic}[1]
\STATE \textbf{Input:} Initial redemption goal $N_1$, epoch schedule $\tcal_h, h \in [H(T)]$, epoch-specific termination thresholds $\Delta_h, h \in [H(T)]$
\FOR{$t \in \tcal_1$}
\STATE Set redemption goal $N_t^\pi = N_1$.
\FOR{$j \in [M]$}
\STATE Observe purchase decision $X_{jt}$ and points until redemption $\tau_{jt}$.
\ENDFOR
\ENDFOR
\FOR{$h\in \{2, 3, \ldots, H(T)\}$}
\FOR{$k\in[K]$}
\STATE Compute the maximum likelihood estimate of $\beta_k$ using samples collected from all type-$k$ customers in epoch $h-1$. That is, solve:
\begin{align}\label{eq:mle}
\hat{\beta}_k^{(h)} = \arg\max_{\beta_k \in \Theta_k} \mathcal{L}_{k}^{(h-1)}(\beta_k),
\end{align}
where 
\[\mathcal{L}_{k}^{(h-1)}(\beta_k) = \sum_{t \in \mathcal{T}_{h-1}} \sum_{j \in \mathcal{M}_k}\mathds{1}\{X_{jt}=1\}\log\left(\mu_k(\beta_{k,1}+\beta_{k,2}\tau_{jt})\right)+\mathds{1}\{X_{jt}=0\}\log\left(1-\mu_k(\beta_{k,1}+\beta_{k,2}\tau_{jt})\right).\]
\ENDFOR
\STATE Given $\hat{\beta}^{(h)} = (\hat{\beta}_1^{(h)},\ldots,\hat{\beta}_K^{(h)})$, compute an optimal redemption goal for epoch $h$: \[\threshold_h \in \arg\max_{N\in [N_{\max}]} R(N; \hat{\beta}^{(h)}).\] 
\STATE If $R(+\infty) > R(N_h; \hat{\beta}^{(h)}) + \Delta_h$, terminate, setting $N_{h'} = +\infty$ for all $h' \geq h$.
\FOR{$t\in \mathcal{T}_h$}
\STATE Set redemption goal $N_t^\pi = N_h$.
\FOR{$j \in [M]$}
\STATE Observe purchase decision $X_{jt}$ and points until redemption $\tau_{jt}$.
\ENDFOR
\ENDFOR
\ENDFOR
\end{algorithmic}
\end{algorithm}

\subsection{Regret Guarantees}

Before stating our main results, we introduce some additional notation. For any $N \in [N_{\max}]$, \mbox{$k \in [K]$}, consider the Markov chain representing the points to redemption of a type-$k$ customer, given redemption threshold $N$. Recall from \Cref{prop:closed-form-lr-avg-per-cust} that $\statprob{k}{\tau}{N}$ is used to denote the steady-state probability that this Markov chain is in state $\tau$. For $t \in \mathbb{N}^+$, we use $P_k^t(\tau_0,\cdot;N)$ to denote the $t$-step transition probability of this Markov chain, given initial number of points to redemption $\tau_0$. We moreover let \mbox{$d_k(t; N) = \max_{\tau_0 \in \{0,\ldots,N\}}\|P_k^t(\tau_0,\cdot;N)-\statprob{k}{\tau}{N}\|_{\text{TV}}$} be the Markov chain's $t$-step total variation (TV) distance from stationarity. Finally, we define $t_{mix,k}(N) = \inf\left\{t \in \mathbb{N}^+ \mid d_k(t; N) \leq 1/4 \right\}$ to be the mixing time of this Markov chain, with $t_{mix} = \max_{k \in [K], N \in [N_{\max}]} t_{mix,k}(N)$.

\Cref{thm:main-thm} bounds the regret of Stable-Greedy for an epoch schedule of geometrically increasing length. In order to highlight the dependence of our algorithm's guarantees on the most salient quantities we defer explicit definitions of constants to the proof of the theorem (see \Cref{ssec:proof-of-main-thm}). Note that \Cref{thm:main-thm} provides a high probability bound on the regret; we will later see how this implies a bound on the expected regret (\Cref{cor:exp-regret}) which matches the lower bound in \Cref{thm: lower bound}.
\begin{theorem}[Stable-Greedy Regret]\label{thm:main-thm}
    Fix $\delta \in (0,1)$, and {let $\tmixhat$ be any known upper bound on $t_{mix}$.} There exist known positive constants $C_1,\ldots,C_5$ such that, under
    the following epoch schedule\footnote{We assume $T_1$ is integral for simplicity. All results go through by rounding up to the nearest integer.} and termination thresholds:
    \begin{align}
    &T_1 = \max\braces{\frac{C_1}{1-2^{-1/\tmixhat}},\frac{C_2+C_3\log(1/\delta)}{M},\frac{C_4{\tmixhat} \log(1/\delta)}{M}} \qquad T_h = 2^{h-1} T_1 \  \forall \ h\in[H(T)], \label{eq: epoch-main} \\
    &\Delta_h = C_{5}\sqrt{\frac{\log(1/\delta)}{MT_{h-1}}} \quad \forall \ h \in \{2,\ldots,H(T)\},\label{eq: threshold-main}
    \end{align}
    with probability at least $1-{7K}H(T)\delta$, \Cref{alg:greedy} guarantees{, for all $N_1 \in [N_{\max}]$}: 
    \begin{align*}
    \regret(\policy, M, T) \leq MT_1\mu_{\max} + \frac{{12\mu_{\max}^3L_\mu}\sqrt{{3(1+N_{\max}^2)}}}{\mu_{\min}^3\kappa} \left(\sum_{k\in[K]}\sqrt{\rho_k}\right)\left(\sum_{h=2}^{H(T)}\sqrt{T_h}\right)\sqrt{M \log(1/\delta)}
    \end{align*}
\end{theorem}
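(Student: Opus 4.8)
The plan is to bound the regret epoch by epoch, reduce the per-epoch regret to the $\ell_2$ estimation error of the MLE, and then control that error through a lower bound on the minimum eigenvalue of the empirical design matrix. Writing $\regret(\policy,M,T) = \sum_{h=1}^{H(T)} M T_h\prns{\mathcal{R}^{\text{non-pers}} - R(N_h)}$, the initial epoch is bounded trivially: since the per-period, per-customer revenue is at most $\mu_{\max}$ and no learning has occurred, epoch $1$ contributes at most $MT_1\mu_{\max}$, which is the first term of the stated bound. For each $h \ge 2$, the threshold $N_h$ is greedy-optimal for $\hat\beta^{(h)}$, so the sandwich $R(N^*;\beta) - R(N_h;\beta) \le 2\max_{N}\abs{R(N;\beta) - R(N;\hat\beta^{(h)})}$ holds, with the termination rule handling the $N=+\infty$ branch (see below). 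Using the closed form for $R_k$ from \Cref{prop:closed-form-lr-avg-per-cust} together with \Cref{asp:glm}, the map $\beta\mapsto R(N;\beta)=\sum_k\rho_k R_k(N;\beta_k)$ is Lipschitz with constant of order $\mu_{\max}^3 L_\mu\sqrt{1+N_{\max}^2}/\mu_{\min}^3$ (the $\mu_{\min}^{-3}$ arising from differentiating $1/\phi_k$ inside the revenue, and $\sqrt{1+N_{\max}^2}$ bounding the gradient norm of $\beta_{k,1}+\beta_{k,2}\tau$). This reduces the per-epoch regret to $\lesssim M T_h \sum_k \rho_k \norm{\hat\beta_k^{(h)} - \beta_k}$.

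Next I would bound the MLE error $\norm{\hat\beta_k^{(h)} - \beta_k}$. Treating $z_{jt}=(1,\tau_{jt})^\top$ as the covariate of an online GLM regression, the strict concavity of the log-likelihood under \Cref{asp:glm} and the standard self-normalized GLM concentration of \citet{li2017provably} give, with high probability, $\norm{\hat\beta_k^{(h)} - \beta_k} \lesssim \tfrac{\sqrt{\log(1/\delta)}}{\kappa\,\sqrt{\lambda_{\min}(V_k^{(h-1)})}}$, where $V_k^{(h-1)}=\sum_{t\in\mathcal{T}_{h-1}}\sum_{j\in\mathcal{M}_k}z_{jt}z_{jt}^\top$ is the design matrix of epoch $h-1$. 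It therefore remains to lower-bound $\lambda_{\min}(V_k^{(h-1)})$.

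This minimum-eigenvalue bound is the technical crux and the main obstacle. The key idea is that within a long epoch each type-$k$ customer runs through many redemption cycles, so $\tau_{jt}$ visits every state $0,1,\dots,N$ and the covariates are non-degenerate; this is the ``natural exploration'' that obviates explicit exploration. I would compare $V_k^{(h-1)}$ to its stationary counterpart $\rho_k M T_{h-1}\,\Sigma_k$ with $\Sigma_k=\mathbb{E}_{\tau\sim \statprob{k}{\cdot}{N}}[zz^\top]$; since $\tau$ has strictly positive stationary variance (every state has positive probability because $\phi_k\in(\mu_{\min},1)$), one has $\lambda_{\min}(\Sigma_k)\ge\lambda_0>0$ uniformly in $N$. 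The difficulty is that the summands $z_{jt}z_{jt}^\top$ are generated by the points-to-redemption \emph{Markov chain} rather than i.i.d. draws, so a standard matrix Chernoff bound does not apply. The obstacle is thus to establish a Chernoff-type concentration inequality for the time-averaged state occupancy of this chain, showing that the empirical frequency of each $\tau$ lies within a constant factor of $\statprob{k}{\tau}{N}$ with high probability once $T_{h-1}$ is large relative to $t_{mix}$; this is exactly what the schedule condition $T_1\ge C_1/(1-2^{-1/\tmixhat})$ (i.e.\ $T_1\gtrsim\tmixhat$) guarantees. A further subtlety is that $V_k^{(h-1)}$ depends on the \emph{random} threshold set during epoch $h-1$, so this bound must be made uniform over $N\in[N_{\max}]$. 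Given all this, $\lambda_{\min}(V_k^{(h-1)})\ge\tfrac12\rho_k M T_{h-1}\lambda_0$, whence $\norm{\hat\beta_k^{(h)}-\beta_k}\lesssim\tfrac{1}{\kappa}\sqrt{\tfrac{\log(1/\delta)}{\rho_k M T_{h-1}\lambda_0}}$.

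Finally I would assemble the pieces. Substituting this estimation-error bound into the per-epoch regret and using the geometric schedule $T_h=2T_{h-1}$ (so $T_h/\sqrt{T_{h-1}}=\sqrt{2}\sqrt{T_h}$) gives a per-epoch contribution of order $\tfrac{\mu_{\max}^3 L_\mu\sqrt{1+N_{\max}^2}}{\mu_{\min}^3\kappa}\sqrt{MT_h\log(1/\delta)}\sum_k\sqrt{\rho_k}$; the termination threshold $\Delta_h=C_5\sqrt{\log(1/\delta)/(MT_{h-1})}$ is calibrated to precisely this order, so the $N=+\infty$ branch (triggered when no-loyalty beats every finite threshold under $\hat\beta^{(h)}$) contributes the same, ensuring a uniform bound whether or not the program terminates. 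Summing over $h=2,\dots,H(T)$ yields the factor $\prns{\sum_k\sqrt{\rho_k}}\prns{\sum_{h=2}^{H(T)}\sqrt{T_h}}\sqrt{M\log(1/\delta)}$, and a union bound over the high-probability events (at most $7$ per type per epoch) produces the $1-7KH(T)\delta$ confidence level. Tracking the constants through the Lipschitz bound, the GLM concentration, and the eigenvalue argument gives the explicit prefactor $12\mu_{\max}^3 L_\mu\sqrt{3(1+N_{\max}^2)}/(\mu_{\min}^3\kappa)$.
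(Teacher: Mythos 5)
Your proposal is correct and follows essentially the same route as the paper's proof: the same epoch decomposition with the trivial $MT_1\mu_{\max}$ bound for the first epoch, the same greedy sandwich reducing per-epoch regret to $2\max_N\abs{R(N;\beta)-R(N;\hat\beta^{(h)})}$, the same reduction via the closed-form revenue and \citet{li2017provably} to a lower bound on $\lambda_{\min}(V_k^{(h-1)})$, and the same resolution of that bound through the stationary variance of the points-to-redemption Markov chain combined with a mixing-time-dependent Chernoff-type concentration argument (the paper uses \citet{paulin2015concentration} per customer plus Hoeffding across customers, conditioning on the random threshold $N_h$). The only differences are bookkeeping-level: you track $\norm{\hat\beta_k^{(h)}-\beta_k}$ rather than the linear-predictor error, and you attribute the $\mu_{\max}^3/\mu_{\min}^3$ factor to the Lipschitz constant when in the paper it arises from combining the $\mu_{\max}^2/\mu_{\min}^2$ Lipschitz bound with the eigenvalue constant $C_\lambda$.
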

\smallskip 

Observe that the above construction requires an upper bound on the worst-case mixing time $t_{mix}$. This quantity, however, is a priori unknown to the decision-maker, given its dependence on the purchase and redemption probabilities that she seeks to learn. \Cref{prop:tmix-ub} provides a constant upper bound on $t_{mix}$.

{
\begin{proposition}\label{prop:tmix-ub}
$t_{mix} \leq  \frac{(N_{\max}+1)^2}{2(1-\mu_{\max})\mu_{\min}}.$
\end{proposition}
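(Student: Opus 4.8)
The plan is to bound $t_{mix,k}(N)$ uniformly over $k \in [K]$ and $N \in [N_{\max}]$ by a coupling argument, and then take the maximum. The first step is to make the transition structure explicit: writing $n = N+1$, the points-to-redemption process is a lazy, unidirectional walk on the cycle $\ZZ/n\ZZ = \{0,1,\dots,N\}$ that, from state $\tau$, moves to $(\tau-1)\bmod n$ with probability $\phi_k(\tau)$ and holds otherwise; its stationary law is the one identified in \Cref{prop:closed-form-lr-avg-per-cust}. By the boundedness condition in \Cref{asp:glm}, the move probability obeys $\phi_k(\tau) \ge \mu_{\min}$ and the holding probability obeys $1-\phi_k(\tau) \ge 1-\mu_{\max}$ at every state, uniformly in $k$ and $\tau$. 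I would then invoke the standard coupling inequality $d_k(t;N) \le \max_{x,y} \PP_{x,y}(X_t \neq Y_t)$ together with Markov's inequality, which reduces the proposition to showing that two coupled copies coalesce in expected time at most $\tfrac{(N+1)^2}{8(1-\mu_{\max})\mu_{\min}}$: if $\EE[T_{\mathrm{couple}}] \le B$ then $d_k(4B;N) \le 1/4$, so $t_{mix,k}(N) \le 4B$, and maximizing over $k$ and $N \le N_{\max}$ (all constants being uniform) yields the claim.

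For the coupling I would run the two copies with \emph{independent} move-decisions and track the cyclic gap $D_t = (X_t - Y_t)\bmod n$, so that coalescence is the event $D_t = 0$. In one step $D_t$ decreases by one with probability $\alpha = \phi_k(X_t)(1-\phi_k(Y_t))$ (only the $X$-chain moves), increases by one with probability $\gamma = \phi_k(Y_t)(1-\phi_k(X_t))$ (only the $Y$-chain moves), and is otherwise unchanged. Crucially, both $\alpha$ and $\gamma$ are at least $\mu_{\min}(1-\mu_{\max})$, so at every step the gap makes a genuine $\pm 1$ move with total probability at least $2\mu_{\min}(1-\mu_{\max})$; this is exactly the ``phase-smearing'' variance that drives mixing of this non-reversible chain (note that the hitting time of a fixed state is only $O(N/\mu_{\min})$, so reversible-type bounds do not suffice).

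To convert this into a coalescence bound I would use the Lyapunov function $f(d) = d(n-d)$, which vanishes precisely at the coalescence state, is at most $(n/2)^2 = (N+1)^2/4$, and has discrete Laplacian $f(d+1)+f(d-1)-2f(d) = -2$. The target is the supermartingale inequality $\EE[f(D_{t+1}) - f(D_t)\mid \mathcal F_t] \le -2\mu_{\min}(1-\mu_{\max})$; optional stopping then gives $\EE[T_{\mathrm{couple}}] \le \tfrac{(N+1)^2/4}{2\mu_{\min}(1-\mu_{\max})} = \tfrac{(N+1)^2}{8\mu_{\min}(1-\mu_{\max})}$, which is exactly the bound $B$ above and produces the stated constant with no slack. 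In the idealized driftless regime (e.g.\ $\phi_k$ constant) the increment of $f$ equals $-2\cdot\tfrac{\alpha+\gamma}{2} \le -2\mu_{\min}(1-\mu_{\max})$ immediately, confirming that the constant is the right target.

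The main obstacle is that $D_t$ is \emph{not} an autonomous Markov chain: its increment probabilities $\alpha,\gamma$ depend on the full configuration $(X_t,Y_t)$, not only on $d = D_t$. A direct computation gives $\EE[\Delta f \mid X_t,Y_t] = -(\alpha+\gamma) + (n-2d)\,(\phi_k(Y_t)-\phi_k(X_t))$, where the leading term is at most $-2\mu_{\min}(1-\mu_{\max})$ as desired, but the second, drift-induced term is of order $n$ and its sign is governed by the ordering of $X_t$ and $Y_t$ (via monotonicity of $\phi_k$), \emph{not} by the value of $d$ alone; indeed the same $d$ can arise from configurations with opposite drift, so the drift term can be positive. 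Resolving this is the crux: I expect the fix to be either a potential defined directly on the pair $(X_t,Y_t)$ rather than on $d$, or a case analysis on the relative position of the two chains that exploits the monotonicity of $\phi_k$ to show the drift contribution is never harmful once progress is measured toward the correct coalescence boundary. Once the drift is controlled so that the clean rate $2\mu_{\min}(1-\mu_{\max})$ survives, the remaining steps—optional stopping, the Markov/coupling conversion, and the maximization over $k$ and $N$—are routine.
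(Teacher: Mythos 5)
Your setup coincides with the paper's proof step for step: the same independent coupling of two copies of the points-to-redemption chain, the same reduction $t_{mix,k}(N)\le 4\max_{x,y}\EE_{x,y}[T_{\mathrm{couple}}]$ (Corollary 5.5 in \citet{levin2017markov}), the same gap walk on $\{0,\dots,N+1\}$ absorbed at both endpoints, and even the same quadratic potential --- your $f(d)=d(n-d)$ with $n=N+1$ is exactly the Gambler's-ruin solution the paper works with. The problem is that your argument stops precisely at the step that carries all of the difficulty. The supermartingale inequality $\EE[f(D_{t+1})-f(D_t)\mid\mathcal{F}_t]\le -2\mu_{\min}(1-\mu_{\max})$, which everything else hangs on, is one you yourself compute to be false in general: the conditional increment is $-(\alpha+\gamma)+(n-2d)\bigl(\phi_k(Y_t)-\phi_k(X_t)\bigr)$, and the second term has magnitude up to $n(\mu_{\max}-\mu_{\min})$ with a sign governed by the configuration rather than by $d$ (for instance, two nearby chains whose leader is closer to redemption are actively pushed apart, since the leader has the larger purchase probability). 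You flag this as ``the crux,'' sketch two possible repairs (a potential on the pair, or a case analysis via monotonicity of $\phi_k$), and carry out neither. As it stands, the proposal is a correct reduction plus an unresolved obstacle, not a proof.

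The paper closes this hole by a different device: it never estimates the drift of the true gap process at all. Instead it introduces an auxiliary walk $\widetilde D$ that moves $+1$ and $-1$ each with probability exactly $(1-\mu_{\max})\mu_{\min}$ and holds otherwise --- i.e., it replaces both one-step move probabilities $\alpha,\gamma$ by their common lower bound --- and argues that the absorption time of the true gap process is dominated by that of $\widetilde D$, the ``laziest'' walk consistent with those bounds. Because $\widetilde D$ is an autonomous, symmetric Markov chain, its expected absorption time solves the Gambler's-ruin recurrence exactly, giving $\frac{d(N-d+1)}{2(1-\mu_{\max})\mu_{\min}}\le\frac{(N+1)^2}{8(1-\mu_{\max})\mu_{\min}}$, and the factor of $4$ from the coupling inequality yields the proposition. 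So the missing idea in your write-up is a stochastic comparison that removes the drift term altogether, rather than a sharper Lyapunov function that beats it. (Your drift concern is, in fact, exactly the subtlety that the paper's one-line domination claim absorbs --- asymmetric move probabilities larger than $(1-\mu_{\max})\mu_{\min}$ do not \emph{automatically} imply faster two-sided absorption --- but in the paper's proof that comparison, not a drift estimate, is the load-bearing step; to complete your argument you would need to supply it or an equivalent.)
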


We defer the formal proof of \Cref{prop:tmix-ub} to Appendix \ref{apx:tmix-ub}. Note that, for any given threshold $N$, the Markov chain representing the number of points to redemption is a lazy, state-dependent directed random walk on an $(N+1)$-cycle. Despite the added complexity of state-dependency, \Cref{prop:tmix-ub} recovers the fact that, for lazy undirected random walks on a cycle, the mixing time has quadratic dependence on the number of nodes in the cycle \citep{levin2017markov}. We prove this upper bound via coupling, reducing the problem to that of analyzing the absorption time of a more tractable Gambler's Ruin problem.
}

\smallskip 

Leveraging the fact that $t_{mix}$ is upper bounded by a constant, \Cref{thm:main-thm} implies that our algorithm achieves the lower bound derived in \Cref{thm: lower bound}. In particular, fixing $M$ and letting $\delta = O(1/\sqrt{T})$, we have $T_1 = {O}(\log T/M)$ and $H(T) = O\left(\log(MT)\right)$. Applying this to \Cref{thm:main-thm}, we obtain the following bound on the expected regret of Stable-Greedy.

\begin{corollary}\label{cor:exp-regret}
   Fix $M$, and let $\delta = O(1/\sqrt{T})$. Under the epoch schedule and termination thresholds specified in \Cref{thm:main-thm}, \Cref{alg:greedy} guarantees:
   \[\mathbb{E}_{\pi}\left[\regret(\pi,M,T)\right] = \widetilde{O}(\sqrt{MT}+M/\sqrt{T}).\] 
\end{corollary}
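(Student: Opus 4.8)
The plan is to convert the high-probability regret bound of \Cref{thm:main-thm} into a bound in expectation via a standard good-event/bad-event decomposition, and then to substitute the prescribed epoch schedule and confidence level $\delta = O(1/\sqrt{T})$ and simplify the resulting sums. Let $\mathcal{E}$ denote the event, of probability at least $1-7KH(T)\delta$, on which the conclusion of \Cref{thm:main-thm} holds. Writing
\[
\mathbb{E}_\pi[\regret(\pi,M,T)] = \mathbb{E}_\pi[\regret(\pi,M,T)\mathds{1}\{\mathcal{E}\}] + \mathbb{E}_\pi[\regret(\pi,M,T)\mathds{1}\{\mathcal{E}^c\}],
\]
I would bound the first term by the deterministic right-hand side of \Cref{thm:main-thm}, and the second by $MT\mu_{\max}\cdot\Pr(\mathcal{E}^c)$, using the trivial bound $\regret(\pi,M,T)\le MT\mathcal{R}^{\text{non-pers}}\le MT\mu_{\max}$. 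The inequality $\mathcal{R}^{\text{non-pers}}\le\mu_{\max}$ follows since each per-period, per-customer regret is at most $\mathcal{R}^{\text{non-pers}}$ (as $R(N)\ge 0$), and $R_k(N) = N/\sum_{\tau=0}^N 1/\phi_k(\tau)\le\mu_{\max}$ by \Cref{asp:glm}\ref{asp:bound}.

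Next I would discharge the quantities in the schedule. By \Cref{prop:tmix-ub}, $t_{mix}$ — and hence the admissible bound $\tmixhat$ — is a constant independent of $M$ and $T$, so $C_1/(1-2^{-1/\tmixhat})$ is $O(1)$ and $T_1 = O\!\left(1 + \log(1/\delta)/M\right) = O(1 + \log T / M)$, giving $MT_1\mu_{\max} = O(M + \log T)$ for the first term. For the dominant summation term I would evaluate the geometric quantities: since $T_h = 2^{h-1}T_1$ and $\sum_{h=1}^{H(T)}T_h = T$, we get $2^{H(T)} = \Theta(T/T_1)$, hence $H(T) = O(\log(MT))$, while
\[
\sum_{h=2}^{H(T)}\sqrt{T_h} = \sqrt{T_1}\sum_{h=2}^{H(T)}2^{(h-1)/2} = O\!\left(\sqrt{T_1}\cdot 2^{H(T)/2}\right) = O(\sqrt{T}),
\]
because the ratio-$\sqrt{2}$ series is dominated by its last term. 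Bounding $\sum_{k}\sqrt{\rho_k}\le\sqrt{K}$ by Cauchy--Schwarz and $\log(1/\delta) = O(\log T)$, the summation term collapses to $\widetilde{O}(\sqrt{MT})$.

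Finally, I would handle the bad-event term. We have $\Pr(\mathcal{E}^c)\le 7KH(T)\delta = \widetilde{O}(\delta)$, so $MT\mu_{\max}\Pr(\mathcal{E}^c) = \widetilde{O}(MT\delta)$; taking $\delta$ at the small end of the permitted range (e.g.\ $\delta = \Theta(T^{-3/2})$, which is still $O(1/\sqrt{T})$ and still yields $\log(1/\delta) = O(\log T)$, so the good-event estimates above are unaffected) drives this contribution down to the stated lower-order term $\widetilde{O}(M/\sqrt{T})$. Collecting the three pieces gives
\[
\mathbb{E}_\pi[\regret(\pi,M,T)] = \widetilde{O}(\sqrt{MT}) + O(M + \log T) + \widetilde{O}(M/\sqrt{T}) = \widetilde{O}(\sqrt{MT} + M/\sqrt{T}),
\]
where I absorb $M+\log T$ into $\widetilde{O}(\sqrt{MT})$ for fixed $M$ (more generally whenever $M\le T$). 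The only genuinely delicate point is this last accounting step: because the failure-probability contribution is scaled by the crude $MT$ regret ceiling, one must verify that the choice of $\delta$ keeps it lower-order relative to $\sqrt{MT}$; everything else is a routine substitution into \Cref{thm:main-thm} together with the geometric-series estimate $\sum_h\sqrt{T_h}=O(\sqrt{T})$ and the constant mixing-time bound of \Cref{prop:tmix-ub}.
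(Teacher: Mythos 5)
Your proposal is correct and follows essentially the same route the paper intends: convert the high-probability bound of \Cref{thm:main-thm} to expectation via a good-event/bad-event split, bound the bad event by the trivial $MT\mu_{\max}$ ceiling, and substitute $T_1 = O(1+\log T/M)$, $\sum_h \sqrt{T_h}=O(\sqrt{T})$, and $H(T)=O(\log (MT))$. Your observation that one should take $\delta$ at the small end of the range (e.g.\ $\delta=\Theta(T^{-3/2})$) is exactly the accounting that produces the paper's $\widetilde{O}(M/\sqrt{T})$ bad-event term, a detail the paper leaves implicit.
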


In Appendix \ref{apx:mixing-loss} we additionally establish a high-probability bound of $\widetilde{O}(M+\sqrt{MT})$ on our algorithm's mixing loss (see \Cref{def:mixing-loss}). Putting this bound together with the high-probability bound on our algorithm's regret, we obtain a high-probability bound of $\widetilde{O}({M}+\sqrt{MT})$ on our algorithm's observable regret (see \Cref{def:actual-regret}).  The bound on our algorithm's mixing loss follows from the geometrically increasing construction of the epoch schedule, which allows the system to approach stationarity as the epoch length grows. Such a result can be thought of as a Chernoff-type bound for the Markov chains induced by our algorithm, which naturally has a dependence on the chains' respective mixing times. {We moreover note that the linear dependence on $M$ here is to be expected, given that we are union bounding the distance to stationarity for $M$ independent Markov chains.}

Having established the optimality of Stable-Greedy, we now discuss our algorithm's dependence on two salient quantities: the size of the sampled population $M$, and the worst-case mixing time $t_{mix}$. In the construction given in \Cref{eq: threshold-main}, for fixed $T$ and $\delta$, the termination threshold $\Delta_h$ is decreasing in $MT_{h-1}$, the effective sample size in the previous epoch. This reflects the fact that, for a larger number of observations, the algorithm has more confidence in the MLE $\hat{\beta}^{(h)}$, and therefore does not need to be as conservative with respect to terminating the rewards program for that epoch and all remaining epochs. 
The number of customers, $M$, also impacts the epoch lengths.
Intuitively, a larger value of $M$ implies that the decision-maker has more data in each period. As a result, $T_1$, and subsequently all epoch lengths, are non-increasing in $M$, reflecting the value of information sharing across customers. In terms of the regret guarantee, we moreover recover the positive effect of pooling on learning algorithms, as \Cref{cor:exp-regret} implies an expected regret per customer of $\widetilde{O}\left(\sqrt{T/M}\right)$ over the entire horizon, which decreases as the population increases.  

Notice finally the linear dependence of the epoch schedule on the worst-case mixing time $t_{mix}$. The reason for this dependence will become clear in the proof of \Cref{thm:main-thm}. At a high level, this dependence arises from the fact that, in order for the algorithm's greedy decisions to converge to the optimal redemption threshold, there must be sufficient variability in the observed points to redemption for the MLE $\hat{\beta}^{(h)}$ to be close to the true parameter $\beta$. We bound this variability by analyzing the variance of the steady-state distribution of each type-$k$ customer's Markov chain. The tightness of this approximation, however, relies on the system being close to stationarity, hence the dependence on $t_{mix}$.

\subsection{Proof of \Cref{thm:main-thm}}\label{ssec:proof-of-main-thm}

Before proving the theorem, we provide explicit instantiations of $C_1,\ldots,C_5$ for the construction of the epoch schedule and termination thresholds. Let $\sigma = \frac12$, $C_0 =  \frac{512 G_\mu^2 \sigma^2(1+\threshold_{\max}^2)}{\kappa^4}$ and {$C_{\lambda} = \frac{\mu_{\min}^2}{12\mu_{\max}^2}$}. Then, $C_1,\ldots,C_5$ are defined as follows: 
\begin{align*}
{C_1 = \frac{48}{C_{\lambda}}, \quad C_2 = \frac{8C_0}{\rho_{\min}C_{\lambda}}, \quad C_3 = \frac{2C_0}{\rho_{\min}C_{\lambda}}, \quad C_4 = \frac{810N_{\max}^4}{\rho_{\min}C_{\lambda}^2}, \quad  C_5 = \sum_{k\in[K]}\frac{3\mu_{\max}^2L_\mu\sigma}{\mu_{\min}^2\kappa}\sqrt{\frac{2\rho_k(1+N_{\max}^2)}{C_{\lambda}}}}.
\end{align*}

These constants give rise to the following schedule and termination thresholds:
\begin{align}
    &T_1 = \max\braces{\frac{{48}}{(1-2^{-1/\tmixhat})C_\lambda},\frac{2C_0 (4+\log(1/\delta))}{\rho_{\min} MC_\lambda }, \frac{810N_{\max}^4 {\tmixhat} \log(1/\delta)}{\rho_{\min} M C_\lambda^2}}, \quad T_h = 2^{h-1}T_1 \ \forall \ h \in [H(T)]  \label{eq: epoch}\\
    &\Delta_h = \sum_{k\in[K]}\frac{3\mu_{\max}^2L_\mu\sigma}{\mu_{\min}^2\kappa}\sqrt{\frac{2\rho_k\log(1/\delta)(1+N_{\max}^2)}{C_{\lambda}MT_{h-1}}}, \quad \forall \ h \in \{2,\ldots,H(T)\}. \label{eq: termination thresholds}
\end{align}

For ease of notation, we define $\alpha = 2^{-1/\tmixhat}$, and omit the dependence of all quantities on $\pi$ throughout the proofs of all remaining results.

\smallskip 

\begin{proof}{Proof of Theorem~\ref{thm:main-thm}.}
We partition the proof into two cases, depending on whether or not the no-loyalty option is optimal. Recall that in Algorithm~\ref{alg:greedy} the no-loyalty option is selected for all epochs after which the termination condition is satisfied. Let $h_{\infty} = \inf\{h \geq 2: R(+\infty) > R(N_h;\hat{\beta}^{(h)})+\Delta_h\}$ be the epoch in which the termination condition is satisfied, where we use the convention that $h_{\infty} = H(T)+1$ if  \mbox{$R(+\infty) \leq R(N_h;\hat{\beta}^{(h)})+\Delta_h$} for all $h \in \{2,\ldots,H(T)\}$.

\medskip

\noindent\textbf{Case 1: $R(+\infty) < R(N^*)$.} We first bound our algorithm's regret as a function of the loss incurred from greedily selecting the threshold in each epoch with respect to the estimated parameters $\hat{\beta}^{(h)}$, as opposed to the true (unknown) parameters $\beta$. Since $N_t^\pi = N_h$ for all $t \in \mathcal{T}_h$, we have:
\begin{align}
 \regret(\policy, M, T)
&= MTR(N^*)- \sum_{h\in[H(T)]}MT_hR(N_h) \notag \\
&\leq MT_1\mu_{\max} + \sum_{h=2}^{H(T)}MT_h\left(R(N^*)-R(N_h)\right) \notag \\
&= MT_1\mu_{\max} + \sum_{h=2}^{h_{\infty}-1}MT_h\left(R(N^*;\beta)-R(N_h;\hat{\beta}^{(h)})+R(N_h;\hat{\beta}^{(h)})-R(N_h;\beta)\right) \notag \\
&\quad + \sum_{h=h_{\infty}}^{H(T)}MT_h\left(R(N^*)- R(+\infty)\right) \notag \\
&\leq MT_1\mu_{\max} + \sum_{h=2}^{h_{\infty}-1}MT_h\left(R(N^*;\beta)-R(N^*;\hat{\beta}^{(h)})+R(N_h;\hat{\beta}^{(h)})-R(N_h;\beta)\right) \label{eq:because-of-greedy} \\
&\quad + \sum_{h=h_{\infty}}^{H(T)}MT_h\left(R(N^*)-R(+\infty)\right) \notag \\
&\leq MT_1\mu_{\max} + 2\sum_{h=2}^{h_{\infty}-1}MT_h\max_{N\in[N_{\max}]}\abs{R(N;\beta)-R(N;\hat{\beta}^{(h)})} \notag \\
&\qquad+ \sum_{h=h_{\infty}}^{H(T)}MT_h\left(R(N^*)-R(+\infty)\right), \label{eq:new-first-step}
\end{align}
where the first inequality uses the trivial bound $R(N) \leq \mu_{\max}$, for all $N\in [N_{\max}]$, and the next equality uses the fact that once the termination condition is satisfied, the average revenue is $R(+\infty)$. Moreover, \Cref{eq:because-of-greedy} follows from the fact that, for $h \leq h_{\infty}-1$, $N_h$ is chosen greedily with respect to $\hat{\beta}_h$, therefore \mbox{$R(N_h;\hat{\beta}_h) \geq R(N^*;\hat{\beta}_h)$}.

\Cref{eq:new-first-step} shows the two sources of loss accumulated by \Cref{alg:greedy}: (i) the loss incurred from optimizing according to $\hat{\beta}^{(h)}$ instead of $\beta$, and (ii) the loss incurred from incorrect early termination. The bulk of the proof lies in bounding the first source of loss. In particular, \Cref{lem:bounded-reward-diff} below establishes that, for all $h$, with sufficiently high probability this loss is upper bounded by $\Delta_h$. The vanishing construction of $\Delta_h$ will then guarantee that our algorithm does not lose too much from mis-estimation in each period.
\begin{lemma}\label{lem:bounded-reward-diff}
Fix $h \in \{2,\ldots,h_{\infty}{\wedge H(T)}\}$. Under the epoch schedule given in \Cref{eq: epoch}, with probability at least $1-7K\delta$,
\begin{align}\label{eq:good-event}
\max_{N\in[N_{\max}]}\abs{R(N;\beta)-R(N;\hat{\beta}^{(h)})} \leq \Delta_h.
\end{align}
\end{lemma}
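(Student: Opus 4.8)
The plan is to control the revenue-estimation error $\max_{N}\abs{R(N;\beta)-R(N;\hat{\beta}^{(h)})}$ by first reducing it to an error bound on the estimated GLM parameters $\hat{\beta}^{(h)}$, and then establishing that the latter error is small with high probability. For the reduction, I would exploit the closed-form expression for $R_k(N)$ from \Cref{prop:closed-form-lr-avg-per-cust}, namely $R_k(N) = N/\sum_{\tau=0}^N \frac{1}{\phi_k(\tau)}$, where $\phi_k(\tau) = \mu_k(\beta_{k,1}+\beta_{k,2}\tau)$. Since $R(N;\cdot) = \sum_k \rho_k R_k(N;\cdot)$ is a smooth function of the underlying purchase probabilities, and the $\phi_k(\tau)$ are bounded away from $0$ by $\mu_{\min}$ (\Cref{asp:glm}(a)) and $L_\mu$-Lipschitz in their linear argument (\Cref{asp:glm}(b)), I would show that $\abs{R(N;\beta)-R(N;\hat{\beta}^{(h)})}$ is bounded by a constant (depending on $\mu_{\min}$, $\mu_{\max}$, $L_\mu$, and $N_{\max}$) times a weighted norm of $\hat\beta^{(h)}-\beta$. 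The relevant weighting is $\norm{\hat\beta_k^{(h)}-\beta_k}\sqrt{1+N_{\max}^2}$, since the argument perturbation at points-to-redemption $\tau$ is $(\hat\beta_{k,1}-\beta_{k,1}) + (\hat\beta_{k,2}-\beta_{k,2})\tau$, whose magnitude is controlled by Cauchy--Schwarz with $\abs{\tau}\le N_{\max}$. The constants $\mu_{\max}^3/\mu_{\min}^3$ and $\sqrt{1+N_{\max}^2}$ appearing in the final regret bound of \Cref{thm:main-thm} strongly suggest this is the intended dependence, and I would aim to match them.

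The main work, and the technical crux highlighted in the introduction, is the second step: showing that with probability at least $1-7K\delta$ the MLE error $\norm{\hat\beta_k^{(h)}-\beta_k}$ is of order $\sqrt{\log(1/\delta)/(MT_{h-1})}$ for each type $k$. This is where I would invoke standard GLM/contextual-bandit concentration machinery (in the style of \citet{li2017provably}): conditional on a lower bound $\lambda_{\min}$ on the minimum eigenvalue of the epoch-$(h-1)$ design matrix $\sum_{t\in\Tcal_{h-1}}\sum_{j\in\Mcal_k}(1,\tau_{jt})^\top(1,\tau_{jt})$, the MLE concentration gives $\norm{\hat\beta_k^{(h)}-\beta_k}\lesssim \frac{1}{\kappa}\sqrt{\log(1/\delta)/\lambda_{\min}}$. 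The $7K$ in the failure probability is the union of the per-type MLE-concentration event with the eigenvalue (design-matrix) event across all $K$ types, so I would budget a fixed number of failure events per type and union-bound.

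The genuine obstacle — and the step I expect to take the most care — is establishing the high-probability lower bound on $\lambda_{\min}$ of the design matrix. Unlike the i.i.d.-context pricing literature, here the points-to-redemption sequence $\{\tau_{jt}\}_{t\in\Tcal_{h-1}}$ for each customer is generated by a Markov chain (the state-dependent lazy random walk on the $(N+1)$-cycle), so the design matrix is a sum of Markov-modulated outer products rather than i.i.d. terms. The plan is to (i) show that the population (steady-state) version of the per-customer Fisher information is positive definite with a minimum eigenvalue bounded below by a constant $C_\lambda$ governed by the variance of the stationary distribution $p_k(\cdot;N)$ of $\tau$ under the chain, and (ii) argue that the empirical design matrix concentrates around its stationary expectation via a Chernoff-type bound for Markov chains. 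This is precisely where the mixing-time dependence enters: I would need the epoch length $T_{h-1}$ to exceed a multiple of $\hat t_{mix}$ (which the schedule in \Cref{eq: epoch} enforces through the $810 N_{\max}^4 \hat t_{mix}\log(1/\delta)/(\rho_{\min} M C_\lambda^2)$ term in $T_1$) so that the chain is close enough to stationarity for the empirical second moment of $\tau$ to inherit the positive variance of the stationary law. Concretely, I would lower-bound the stationary variance of $\tau$ (it is strictly positive whenever $N\ge 1$, since the chain visits at least two states), translate this into a lower bound $\lambda_{\min}\gtrsim C_\lambda\, \rho_k M T_{h-1}$ on the minimum eigenvalue, and then feed this back into the MLE concentration bound. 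Chaining the two steps — parameter error $\lesssim \sqrt{\log(1/\delta)/(C_\lambda\,\rho_k M T_{h-1})}$ followed by the Lipschitz revenue-sensitivity bound — yields exactly the $\Delta_h$ of \Cref{eq: termination thresholds}, completing the lemma. I anticipate that carefully quantifying the interplay between the mixing time, the stationary variance lower bound, and the eigenvalue concentration (ensuring no constant blows up when $N_{\max}$ is large, hence the $N_{\max}^4$ factor) will be where most of the effort lies.
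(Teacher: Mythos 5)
Your proposal follows essentially the same route as the paper's proof: the Lipschitz reduction of the revenue gap to the cumulative parameter-estimation error via the closed form of $R_k$ (the paper's \Cref{lem:rev-diff-to-est-error}), GLM--MLE concentration in the style of \citet{li2017provably} conditional on a design-matrix eigenvalue bound (\Cref{lem:connecting-to-min-eval}), a high-probability lower bound $\lambda_{\min}(V_k^{(h)}) \gtrsim C_\lambda \rho_k M T_h$ obtained by combining a Markov-chain Chernoff-type concentration bound with a lower bound on the stationary variance of $\tau$ (\Cref{lem:eval-lb}), and a union bound over the $K$ types yielding the $1-7K\delta$ guarantee. The only caution is that your parenthetical justification that the stationary variance ``is strictly positive whenever $N\ge 1$'' must be upgraded to a uniform quantitative bound over all admissible parameters and thresholds --- the paper proves $\mathbb{E}_k[\tau^2 \mid N]-\left(\mathbb{E}_k[\tau \mid N]\right)^2 \ge \tfrac{\mu_{\min}^2}{12\mu_{\max}^2}N(N+2)$ in \Cref{lem:sample-variance-bound} --- exactly as the rest of your plan anticipates.
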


\smallskip

\Cref{lem:bounded-reward-diff} is the main driver of our algorithm's regret guarantee, in addition to being our main technical contribution. We defer its proof to \Cref{ssec:bounded-reward-diff-proof}, and proceed to use this fact to show our algorithm's regret bound. We define the following ``good event'':
\[\mathcal{E} = \bigg\{\max_{N\in[N_{\max}]}\abs{R(N;\beta)-R(N;\hat{\beta}^{(h)})} \leq \Delta_h \ \forall \ h \leq h_{\infty}\bigg\},\]
which holds with probability at least $1-7K\delta H(T)$, by \Cref{lem:bounded-reward-diff}. Then, by \Cref{eq:new-first-step}, under event $\mathcal{E}$, we have:
\begin{align}\label{eq:delta-intuition}
\regret(\policy,M,T) \leq MT_1\mu_{\max}+2\sum_{h=2}^{h_{\infty}-1}MT_h\Delta_h+\sum_{h=h_{\infty}}^{H(T)}MT_h\left(R(N^*)-R(+\infty)\right).
\end{align}

Suppose the termination condition was satisfied despite the fact that it is optimal to choose a loyalty option, i.e., \mbox{$h_{\infty} \leq H(T)$}. Under $\mathcal{E}$, \mbox{$R(N^*) \leq R(N^*;\hat{\beta}^{(h_{\infty})})+\Delta_h$}. Moreover, since the termination condition was satisfied at $h_{\infty}$, \mbox{$R(+\infty) > R(N^*;\hat{\beta}^{(h_{\infty})})+\Delta_h$}. Putting these two inequalities together, it must be that $R(+\infty) > R(N^*)$, a contradiction. We conclude that the termination condition is never satisfied under $\mathcal{E}$, implying that:
\begin{align*}
\regret(\policy,M,T) \leq MT_1\mu_{\max}+2\sum_{h=2}^{H(T)}MT_h\Delta_h.
\end{align*}

\medskip

\noindent\textbf{Case 2: $R(+\infty) \geq R(N^*)$.} 
{In this case, we have: 
\begin{align*}
 \regret(\policy, M, T)
&= MTR(+\infty)- \sum_{h\in[H(T)]}MT_hR(N_h)\\
&\leq MT_1\mu_{\max} + \sum_{h=2}^{H(T)}MT_h\left(R(+\infty)-R(N_h)\right)\\
&=MT_1\mu_{\max} + \sum_{h=2}^{H(T)}MT_h\left(R(+\infty)-R(N_h;\hat{\beta}^{(h)})+R(N_h;\hat{\beta}^{(h)})-R(N_h;\beta)\right).
\end{align*}
Note that our algorithm only incurs regret for $h < h_{\infty}$, where $R(+\infty) \leq R(N_h;\hat{\beta}^{(h)})+\Delta_h$ by construction. Using this condition above, we obtain:
\begin{align*}
 \regret(\policy, M, T)&\leq MT_1\mu_{\max} + \sum_{h=2}^{h_{\infty}-1}MT_h\left(\Delta_h + R(N_h;\hat{\beta}^{(h)})-R(N_h;\beta)\right)\\
 &\leq MT_1\mu_{\max} + 2\sum_{h=2}^{h_{\infty}-1}MT_h\Delta_h,
\end{align*}
by \Cref{lem:bounded-reward-diff}.\footnote{Note that \Cref{lem:bounded-reward-diff} holds whether or not $R(+\infty) < R(N^*)$.}
}

Therefore, in both cases we have:
\begin{align*}
\regret(\pi,M,T) &\leq MT_1\mu_{\max} + 2\sum_{h=2}^{H(T)}MT_h\Delta_h \\
&=MT_1\mu_{\max} + 2\sum_{h=2}^{H(T)}MT_h\left(\sum_{k\in[K]}\frac{3\mu_{\max}^2L_\mu\sigma}{\mu_{\min}^2\kappa}\sqrt{\frac{2\rho_k\log(1/\delta)(1+N_{\max}^2)}{C_{\lambda}MT_{h-1}}}\right)\\
&\leq MT_1\mu_{\max} + \frac{12\mu_{\max}^2L_\mu\sigma}{\mu_{\min}^2\kappa}\cdot \sqrt{\frac{\log(1/\delta)(1+N_{\max}^2)}{C_{\lambda}}}\cdot\sum_{h=2}^{H(T)}\sum_{k\in[K]}\sqrt{{\rho_kMT_h}}\\
&=MT_1\mu_{\max} + \frac{{12\mu_{\max}^3L_\mu}\sqrt{{3\log(1/\delta)(1+N_{\max}^2)}}}{\mu_{\min}^3\kappa} \cdot\sum_{h=2}^{H(T)}\sum_{k\in[K]}\sqrt{{\rho_kMT_h}},
\end{align*}
where the second inequality uses the fact that $T_{h-1}\geq T_h/2$ for all $h$, and the final equality plugs in the definition of $C_{\lambda}=\frac{\mu_{\min}^2}{12\mu_{\max}^2}$ and $\sigma = 1/2$.
\hfill\Halmos\end{proof}

\subsubsection{Proof of \Cref{lem:bounded-reward-diff}.}\label{ssec:bounded-reward-diff-proof}

In this section we prove \Cref{lem:bounded-reward-diff}, the driver of all of our results.
\begin{proof}{Proof.}
Fix $h \leq h_{\infty}\wedge H(T)$. \Cref{lem:rev-diff-to-est-error} first establishes that the loss incurred from optimizing with respect to the incorrect parameters can be written as a function of the MLE estimation error.

\begin{lemma}\label{lem:rev-diff-to-est-error}
For all $k \in [K]$,
\begin{align}\label{eq:per-type-rew-to-est-error}
        \abs{R_k(\threshold;\hat{\beta}_k^{(h)}) - R_k(\threshold;\beta_k)}  
 \le  \frac{\mu_{\max}^2 L_{\mu}}{\mu_{\min}^2 (N+1)} \sum_{\tau = 0}^N\abs{ (\hat{\beta}_{k,1}^{(h)}-\beta_{k,1}) + (\hat{\beta}_{k,2}^{(h)}-\beta_{k,2})\tau}. 
\end{align}
\end{lemma}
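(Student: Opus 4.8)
The plan is to work directly from the closed-form expression in \Cref{prop:closed-form-lr-avg-per-cust}, namely $R_k(N;\beta') = N / \sum_{\tau=0}^N \mu_k(\beta'_{k,1}+\beta'_{k,2}\tau)^{-1}$, and to reduce the revenue gap to a sum of per-coordinate link-function gaps. First I would write the difference of the two reciprocal sums using the elementary identity $\frac1A - \frac1B = \frac{B-A}{AB}$, with $A = \sum_{\tau=0}^N \mu_k(\hat{\beta}_{k,1}^{(h)}+\hat{\beta}_{k,2}^{(h)}\tau)^{-1}$ and $B = \sum_{\tau=0}^N \mu_k(\beta_{k,1}+\beta_{k,2}\tau)^{-1}$, giving
\[
\abs{R_k(N;\hat{\beta}_k^{(h)}) - R_k(N;\beta_k)} = N\cdot\frac{\abs{B-A}}{AB}.
\]

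Next I would lower bound the denominator. By the boundedness assumption (\Cref{asp:glm}\ref{asp:bound}) each reciprocal term satisfies $\mu_k(\cdot)^{-1} \ge \mu_{\max}^{-1}$, so that $A, B \ge (N+1)/\mu_{\max}$ and hence $AB \ge (N+1)^2/\mu_{\max}^2$. For the numerator I would apply the triangle inequality to pull the absolute value inside the sum, $\abs{B-A} \le \sum_{\tau=0}^N \abs{\mu_k(\beta_{k,1}+\beta_{k,2}\tau)^{-1} - \mu_k(\hat{\beta}_{k,1}^{(h)}+\hat{\beta}_{k,2}^{(h)}\tau)^{-1}}$. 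The key per-term estimate is the standard bound $\abs{\mu(x)^{-1}-\mu(y)^{-1}} = \abs{\mu(y)-\mu(x)}/(\mu(x)\mu(y)) \le L_\mu\abs{x-y}/\mu_{\min}^2$, which combines the $L_\mu$-Lipschitz continuity of $\mu_k$ (\Cref{asp:glm}\ref{asp:lip}) with the lower bound $\mu_k \ge \mu_{\min}$ (\Cref{asp:glm}\ref{asp:bound}). Substituting $x = \beta_{k,1}+\beta_{k,2}\tau$ and $y = \hat{\beta}_{k,1}^{(h)}+\hat{\beta}_{k,2}^{(h)}\tau$ turns $\abs{x-y}$ into exactly $\abs{(\hat{\beta}_{k,1}^{(h)}-\beta_{k,1}) + (\hat{\beta}_{k,2}^{(h)}-\beta_{k,2})\tau}$, the summand appearing on the right-hand side of the claim. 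Assembling these three bounds and loosening $N/(N+1)^2 \le 1/(N+1)$ yields the stated inequality.

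There is no genuine obstacle here; the lemma is a deterministic sensitivity estimate, and the only care required is bookkeeping — using the upper bound $\mu_{\max}$ (not $\mu_{\min}$) when lower bounding the denominator product $AB$, and the lower bound $\mu_{\min}$ when controlling the per-term reciprocal difference. The one mildly non-obvious simplification is recognizing that the factor $N$ in the numerator against $(N+1)^2$ in the denominator can be relaxed to $1/(N+1)$, which is what produces the clean $(N+1)$ in the denominator of the final bound. The substantive difficulty of the overall argument lives entirely in the subsequent step of \Cref{lem:bounded-reward-diff} — controlling the MLE estimation error, and hence $\sum_{\tau}\abs{(\hat{\beta}_{k,1}^{(h)}-\beta_{k,1})+(\hat{\beta}_{k,2}^{(h)}-\beta_{k,2})\tau}$, via the minimum-eigenvalue analysis of the design matrix — but that is outside the scope of this lemma, which merely converts a revenue gap into an estimation-error gap.
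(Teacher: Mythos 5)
Your proof is correct and follows essentially the same route as the paper's: combine the two reciprocal sums into a single fraction, lower bound the denominator product by $(N+1)^2/\mu_{\max}^2$, bound each per-term reciprocal difference via Lipschitz continuity and $\mu_k \ge \mu_{\min}$, and relax $N/(N+1)^2 \le 1/(N+1)$. If anything, your ordering (triangle inequality before bounding the per-term denominators) is slightly more careful than the paper's write-up, which keeps the absolute value outside the sum while bounding term-wise denominators; the end result is identical.
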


\Cref{lem:rev-diff-to-est-error} leverages the closed-form expression of $R_k(\threshold;\beta_k)$ derived in \Cref{prop:closed-form-lr-avg-per-cust} and Lipschitz continuity of $\mu$. We defer its proof of Appendix \ref{apx:rev-diff-to-est-error}.

To bound this cumulative estimation error (i.e., the right-hand side of \Cref{eq:per-type-rew-to-est-error}), we introduce some additional notation. For any epoch $h < h_{\infty}$, consider the type-$k$ samples observed in epoch $h$, and let \mbox{$V_k^{(h)} = \begin{pmatrix}
        \rho_k M T_h & \sum_{j \in \mathcal{M}_k}\sum_{t\in\tcal_h} \tau_{jt}\\
        \sum_{j \in \mathcal{M}_k}\sum_{t\in\tcal_h} \tau_{jt} \, & \sum_{j \in \mathcal{M}_k}\sum_{t\in\tcal_h} \tau_{jt}^2
    \end{pmatrix}$} be the associated design matrix. The following lemma leverages recent results from the generalized linear contextual bandits literature \citep{li2017provably} by interpreting the points to redemption $\tau_{jt}$ as ``contexts,'' and establishing that bounding our algorithm's estimation error reduces to lower bounding the minimum eigenvalue of the design matrix, denoted by $\lambda_{\min}(V_k^{(h)})$.

\begin{lemma}\label{lem:connecting-to-min-eval}
Fix $\delta>0$, $h \in [h_{\infty}-1]$, and $k\in[K]$. 
If
\begin{align}\label{eq:eval-lb}
    \lambda_{\min}(V_k^{(h)}) \ge C_0\prns{4 + \log \frac{1}{\delta}},
\end{align}
then, with probability at least $1-3\delta$, the maximum likelihood estimator satisfies:
\begin{align}\label{eq:est-error-bound}
    \abs{(\hat{\beta}_{k, 1}^{(h+1)}-\beta_{k,1}) + (\hat{\beta}_{k, 2}^{(h+1)}-\beta_{k,2})\tau} \le  {\frac{3\sigma}{\kappa}}\sqrt{ \frac{\log(1/\delta) (1+\tau^2)}{\lambda_{\min}(V_k^{(h)})}} \quad {\forall \ \tau \leq N_{\max}}.
\end{align}
\end{lemma}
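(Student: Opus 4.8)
The plan is to treat this as a standard generalized linear model (GLM) estimation problem with two-dimensional ``contexts'' $z_{jt} = (1,\tau_{jt})\trn$, where the response $X_{jt}$ is a Bernoulli draw with mean $\mu_k(\ang{z_{jt},\beta_k}) = \mu_k(\beta_{k,1}+\beta_{k,2}\tau_{jt})$, and $V_k^{(h)} = \sum_{j\in\Mcal_k}\sum_{t\in\tcal_h} z_{jt}z_{jt}\trn$ is exactly the design matrix. Noting that the left-hand side of \Cref{eq:est-error-bound} equals $\abs{\ang{z,\hat\beta_k^{(h+1)}-\beta_k}}$ for $z=(1,\tau)\trn$, the first step is to split the prediction error via Cauchy--Schwarz:
\[
\abs{\ang{z,\hat\beta_k^{(h+1)}-\beta_k}} \;\le\; \norm{z}_{(V_k^{(h)})^{-1}}\cdot \norm{\hat\beta_k^{(h+1)}-\beta_k}_{V_k^{(h)}},
\]
and then to bound each factor separately.

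The first factor is immediate from the eigenvalue hypothesis: since $V_k^{(h)}\succeq \lambda_{\min}(V_k^{(h)})I$, we have $\norm{z}_{(V_k^{(h)})^{-1}}^2 \le \norm{z}^2/\lambda_{\min}(V_k^{(h)}) = (1+\tau^2)/\lambda_{\min}(V_k^{(h)})$. The second factor carries the work: I would invoke the GLM confidence-ellipsoid bound of \citet{li2017provably} to show that, on an event of probability at least $1-3\delta$,
\[
\norm{\hat\beta_k^{(h+1)}-\beta_k}_{V_k^{(h)}} \le \frac{3\sigma}{\kappa}\sqrt{\log(1/\delta)},
\]
which when combined with the first factor yields exactly \Cref{eq:est-error-bound}. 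This bound rests on two ingredients. First, the score noise $\sum_{j\in\Mcal_k}\sum_{t\in\tcal_h}\bigl(X_{jt}-\mu_k(\ang{z_{jt},\beta_k})\bigr)z_{jt}$ is a sum of martingale differences: conditioning on the history up to and including the realized points-to-redemption $\tau_{jt}$, the variable $X_{jt}$ is an unbiased $\bern(\mu_k(\ang{z_{jt},\beta_k}))$ draw, so a self-normalized concentration inequality applies, and boundedness of $X_{jt}$ in $[0,1]$ lets us take the sub-Gaussian parameter $\sigma = \tfrac12$. Second, a mean-value expansion of the score equation relates this noise to $V_k^{(h)}(\hat\beta_k^{(h+1)}-\beta_k)$ up to the lower-curvature constant $\kappa$ of \Cref{asp:glm}; the eigenvalue condition $\lambda_{\min}(V_k^{(h)})\ge C_0(4+\log(1/\delta))$, with $C_0 = \tfrac{512\, G_\mu^2\sigma^2(1+\maxthreshold^2)}{\kappa^4}$, is precisely what forces the MLE into a neighborhood of $\beta_k$ on which the second-order remainder --- controlled by the curvature bound $G_\mu$ on $\ddot\mu_k$ together with $\norm{z_{jt}}^2\le 1+\maxthreshold^2$ --- is dominated, so that the effective Hessian is at least $\kappa V_k^{(h)}$. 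The factor $3\delta$ arises from union-bounding the constituent concentration events of this argument.

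The main obstacle here is bookkeeping rather than a new idea. Since both the prediction-error control and the eigenvalue-threshold condition are borrowed from \citet{li2017provably}, the substantive checks are: (i) verifying that the Markov-modulated generation of the contexts $\tau_{jt}$ does not break the martingale-difference structure needed for the self-normalized bound --- it does not, because the relevant filtration reveals $\tau_{jt}$ before $X_{jt}$ is drawn, so the noise stays conditionally mean-zero regardless of how $\tau_{jt}$ was produced; and (ii) matching the regularity constants of \Cref{asp:glm} (the bounds $\mu_{\min},\mu_{\max}$, the Lipschitz constant $L_\mu$, the lower curvature $\kappa$, and the upper curvature $G_\mu$) to the hypotheses of their theorem, so that the threshold $C_0$ and the prefactor $3\sigma/\kappa$ emerge exactly as stated. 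I emphasize that this lemma takes the eigenvalue lower bound as a \emph{hypothesis}; the genuinely hard task of establishing $\lambda_{\min}(V_k^{(h)})\ge C_0(4+\log(1/\delta))$ with high probability --- which requires the Markov-chain variance analysis flagged in the introduction --- is handled separately, so here the Markov structure enters only through the benign observation in (i).
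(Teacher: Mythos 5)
Your proposal is correct and takes essentially the same route as the paper's proof: both reduce the lemma to Theorem 1 of \citet{li2017provably}, with the Bernoulli noise giving sub-Gaussian parameter $\sigma = 1/2$, the feature-norm factor $1+N_{\max}^2$ absorbed into $C_0$ (the paper handles this by explicitly rescaling the features $(1,\tau_{jt})$ into the unit ball and the parameters by $\sqrt{1+N_{\max}^2}$, which also matches the radius $1/\sqrt{1+N_{\max}^2}$ in the definition of $\kappa$), and the concluding step $\sqrt{(1,\tau)\,(V_k^{(h)})^{-1}(1,\tau)^\top} \le \sqrt{(1+\tau^2)/\lambda_{\min}(V_k^{(h)})}$. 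The only differences are cosmetic: your Cauchy--Schwarz split through the ellipsoid norm $\|\hat\beta_k^{(h+1)}-\beta_k\|_{V_k^{(h)}}$ is the same mechanism underlying the pointwise form of Li et al.'s theorem that the paper invokes directly, and your explicit remark that the Markov-generated contexts preserve the conditionally mean-zero (martingale) structure of the score noise is a point the paper's proof passes over silently when applying a theorem stated for independent samples.
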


\smallskip
We defer the proof of \Cref{lem:connecting-to-min-eval} to Appendix \ref{apx:connecting-to-min-eval}.
Applying \Cref{lem:connecting-to-min-eval} to \Cref{eq:per-type-rew-to-est-error} when \Cref{eq:eval-lb} holds, we obtain that, with probability at least $1-3\delta$,
\begin{align}\label{eq:est-error-to-min-eval}
        \abs{R_k(\threshold;\hat{\beta}_k^{(h)}) - R_k(\threshold;\beta_k)}  
 \le  \frac{\mu_{\max}^2 L_{\mu}}{\mu_{\min}^2}\cdot{\frac{3\sigma}{\kappa}}\cdot\sqrt{\frac{\log(1/\delta)(1+N_{\max}^2)}{\lambda_{\min}(V_k^{(h-1)})}}, 
\end{align}
where we have additionally used the trivial upper bound $\tau \leq N_{\max}$.

{
\Cref{lem:eval-lb} below establishes that $\lambda_{\min}(V_k^{(h)})$ grows linearly in the type-$k$ sample size of epoch $h$, $\rho_k M T_h$, with probability that is (i) exponentially {\it decreasing} in this sample size, and (ii) exponentially {\it increasing} in the maximum mixing time of the underlying Markov chain $t_{mix}$. By construction of our epoch schedule, we will then use this linear lower bound to show that \Cref{eq:eval-lb} holds with high probability, as $\rho_k MT_h$ grows large. 
}

\begin{lemma}\label{lem:eval-lb}
Fix $h \in [h_{\infty}-1]$, $k\in [K]$. Under the epoch schedule described in \Cref{eq: epoch}: 
\begin{align}\label{eq:final-eval-lb}
\PP\prns{\lambda_{\min}(V_k^{(h)}) \le  \frac{C_\lambda \rho_k M T_h}{2} } \le 4\exp\prns{-\frac{\rho_k M  T_hC_{\lambda}^2}{810  N_{\max}^4 t_{mix}}}.
\end{align}
Moreover,
\begin{align}
 \frac{C_\lambda \rho_k M T_h}{2} \ge C_0\prns{4 + \log \frac{1}{\delta}}.
\end{align}
\end{lemma}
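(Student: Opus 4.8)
The plan is to prove the two displayed statements separately: the second ("moreover") inequality is pure bookkeeping from the epoch schedule, while the first is the genuine content and decomposes into a deterministic eigenvalue reduction followed by a Markov-chain concentration argument. Throughout write $n = \rho_k M T_h$ for the type-$k$ sample count in epoch $h$, view each sample as a context vector $(1,\tau_{jt})\trn$, and let $\hat\Sigma = V_k^{(h)}/n$ be the empirical second-moment matrix, whose entries are $1$, $\widehat{\EE}[\tau]$, and $\widehat{\EE}[\tau^2]$. Introduce the stationary second-moment matrix $\Sigma_k$ with entries $1$, $\EE_{p_k}[\tau]$, $\EE_{p_k}[\tau^2]$, where $\statprob{k}{\tau}{N_h}$ is the stationary distribution from \Cref{prop:closed-form-lr-avg-per-cust}. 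Since $\lambda_{\min}(V_k^{(h)}) = n\,\lambda_{\min}(\hat\Sigma)$, it suffices to show $\lambda_{\min}(\hat\Sigma) \ge C_\lambda/2$ off a small-probability event.

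The first step is a clean deterministic lower bound $\lambda_{\min}(\Sigma_k) \ge C_\lambda$. For a $2\times 2$ PSD matrix one has $\lambda_{\min}(\Sigma_k) \ge \det\Sigma_k / \mathrm{tr}\,\Sigma_k = \mathrm{Var}_{p_k}(\tau)/(1+\EE_{p_k}[\tau^2])$. I would lower bound the stationary variance using the explicit form $\statprob{k}{\tau}{N_h} = (1/\phi_k(\tau))/\sum_{\tau'}(1/\phi_k(\tau'))$ together with $\mu_{\min}\le\phi_k\le\mu_{\max}$ (\Cref{asp:bound}), which gives the per-state bound $\statprob{k}{\tau}{N_h} \ge \mu_{\min}/(\mu_{\max}(N_h+1))$. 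Writing $\mathrm{Var}_{p_k}(\tau) = \tfrac12\sum_{\tau,\tau'}p_k(\tau)p_k(\tau')(\tau-\tau')^2$ and evaluating the closed form $\sum_{\tau,\tau'=0}^{N_h}(\tau-\tau')^2 = (N_h+1)^2 N_h(N_h+2)/6$ yields $\mathrm{Var}_{p_k}(\tau)\ge \tfrac{\mu_{\min}^2}{12\mu_{\max}^2}N_h(N_h+2)$; combined with $\EE_{p_k}[\tau^2]\le N_h^2$ and the elementary fact $N_h(N_h+2)/(1+N_h^2)\ge 1$, this gives exactly $\lambda_{\min}(\Sigma_k)\ge \mu_{\min}^2/(12\mu_{\max}^2)=C_\lambda$, independent of $N_h$ and $N_{\max}$.

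The second step reduces the claim to concentration. By Weyl's inequality $\lambda_{\min}(\hat\Sigma)\ge \lambda_{\min}(\Sigma_k) - \norm{\hat\Sigma-\Sigma_k}_{\mathrm{op}} \ge C_\lambda - \norm{\hat\Sigma-\Sigma_k}_{\mathrm{op}}$, so it suffices to prove $\norm{\hat\Sigma-\Sigma_k}_{\mathrm{op}}\le C_\lambda/2$ with the stated probability. The difference is symmetric with vanishing $(1,1)$ entry, so its operator norm is controlled by the two deviations $|\widehat{\EE}[\tau]-\EE_{p_k}[\tau]|$ and $|\widehat{\EE}[\tau^2]-\EE_{p_k}[\tau^2]|$, and bounding each by $C_\lambda/4$ is enough. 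Each is the normalized deviation of a bounded function $f(\tau)\in\{\tau,\tau^2\}$ (so $\norm{f}_\infty\le N_{\max}^2$) of the points-to-redemption Markov chain from its stationary mean. Here I would invoke a Chernoff-type bound for this chain that combines (i) independence across the $\rho_k M$ customers with (ii) the mixing of each customer's trajectory at rate governed by $t_{mix}$; with deviation level $\epsilon = C_\lambda/4$ this produces a tail of order $\exp(-c\,n\,\epsilon^2/(N_{\max}^4 t_{mix}))$, and a union bound over the two moments and two tail directions yields the factor $4$ and the exponent $-\rho_k M T_h C_\lambda^2/(810 N_{\max}^4 t_{mix})$. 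The $N_{\max}^4$ is precisely $\norm{\tau^2}_\infty^2$, and the $C_\lambda^2$ is $\epsilon^2$.

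The main obstacle is exactly this Markov-chain concentration, since within a customer's trajectory the $\tau_{jt}$ are correlated and, because the threshold changes at the epoch boundary, the chain does not start from stationarity. I would address this with a dedicated block decomposition—partitioning each trajectory into blocks of length on the order of $t_{mix}$, controlling the near-independence between blocks and the bias from the non-stationary start, then tensorizing across the independent customers—which is where the $1/t_{mix}$ factor in the exponent comes from. Finally, the "moreover" inequality follows by substitution: since $T_h\ge T_1$, $\rho_k\ge\rho_{\min}$, and $T_1\ge \tfrac{2C_0(4+\log(1/\delta))}{\rho_{\min}M C_\lambda}$ by \Cref{eq: epoch}, we get $\tfrac{C_\lambda\rho_k M T_h}{2}\ge \tfrac{C_\lambda\rho_{\min}M T_1}{2}\ge C_0(4+\log(1/\delta))$, which is what lets us invoke the eigenvalue condition \eqref{eq:eval-lb} in \Cref{lem:connecting-to-min-eval}.
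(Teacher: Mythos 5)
Your deterministic ingredients are correct and coincide with the paper's: the stationary-variance lower bound $\exptimeredeemsq{k}{\threshold} - \prns{\exptimeredeem{k}{\threshold}}^2 \ge \tfrac{\mu_{\min}^2}{12\mu_{\max}^2}N(N+2)$ is exactly the paper's \Cref{lem:sample-variance-bound}, and your ``moreover'' computation is the same as the paper's. The genuine gap is in the reduction via Weyl's inequality. That reduction requires $\norm{\hat\Sigma-\Sigma_k}_{\mathrm{op}}\le C_\lambda/2$, and in particular $\abs{\widehat{\EE}[\tau^2]-\EE_{p_k}[\tau^2]}\le C_\lambda/4$, i.e., closeness of the empirical second moment to its \emph{stationary} value at an absolute constant scale. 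But the chains do not start from stationarity at the epoch boundary, so this quantity carries a deterministic initialization bias: writing $\alpha = 2^{-1/\tmixhat}$, conditionally on $N_h$ the gap $\abs{\EE\bracks{\widehat{\EE}[\tau^2]}-\EE_{p_k}[\tau^2]}$ is only controlled at order $N_h^2/\prns{(1-\alpha)T_h}$ (this is precisely the $\tfrac{4N_h^2}{1-\alpha}\rho_k M$ term appearing in \Cref{prop:yichun-bounds}). Under the schedule \Cref{eq: epoch}, the only term of $T_1$ that does not vanish as $M$ grows is $\tfrac{48}{(1-\alpha)C_\lambda}$, which carries no $N_{\max}^2$ factor; plugging it in, the bias bound is of order $C_\lambda N_h^2/12$, which exceeds your budget $C_\lambda/4$ already for $N_h \ge 2$. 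No block decomposition or Chernoff-type bound can repair this, because it is a bias of the centering, not a fluctuation: your claimed tail bound around the stationary mean at level $\epsilon = C_\lambda/4$ is false for large $M$ and moderate $N_{\max}$.

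The paper's proof is structured precisely to sidestep this. Instead of Weyl, it lower bounds $\lambda_{\min}(V_k^{(h)})$ by a determinant-over-trace bound applied to the \emph{empirical} matrix, $\lambda_{\min}(V_k^{(h)})\ \ge\ \rho_k M T_h\cdot\tfrac{\widehat{\EE}[\tau^2]-(\widehat{\EE}[\tau])^2}{1+N_h^2}$, so the quantity that must stay bounded away from zero is the empirical \emph{variance}, whose slack is the stationary variance $\ge C_\lambda N_h(N_h+2)$, i.e., of order $C_\lambda N_h^2$ rather than $C_\lambda$. Measured against this slack, the same $O\prns{N_h^2/((1-\alpha)T_h)}$ bias contributes only $O\prns{1/((1-\alpha)T_h)}$ after the division by $1+N_h^2$, and is absorbed using nothing more than $T_h \ge \tfrac{48}{(1-\alpha)C_\lambda}$; the fluctuation part is then handled exactly by the Markov-chain concentration you describe (the paper's \Cref{prop:yichun-bounds}), yielding the factor $4$ and the constant $810$. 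So the fix is local: keep your stationary-variance bound and your concentration plan, but replace the comparison of $\lambda_{\min}(\hat\Sigma)$ with $\lambda_{\min}(\Sigma_k)$ by a comparison of the empirical variance with the stationary variance, normalized by $1+N_h^2$.
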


\smallskip

{We briefly discuss the significance of \Cref{lem:eval-lb}. As noted in \Cref{sec:intro}, the requirement that $\lambda_{\min}(V_k^{(h)})$ is lower bounded for the MLE to have low estimation error is well-known in the literature}.
In settings such as pricing or contextual bandits, achieving a linear growth in the minimum eigenvalue of the design matrix (with high probability) typically requires either exogenous assumptions on the distribution from which contexts are independently drawn, or an algorithm that actively explores to generate sufficient diversity in the observed contexts.
In contrast to these settings, however, in our case, the ``contexts'' $\tau_{jt}$ are {\it endogenous} to the threshold chosen by our policy within a given epoch, since they are induced by the Markov chain governing the remaining points to redemption.

Such a departure from the standard literature necessitates a different approach in proving \Cref{lem:eval-lb}, making it the primary technical contribution of this subsection. At a high level, its proof establishes that $\lambda_{\min}(V_k^{(h)})$ grows linearly in the sample variance of the observations collected during epoch $h$, which, by our choice of parameters, grows linearly in $\rho_k M T_h$, with high probability. Deriving a lower bound for this sample variance is the key departure from existing work. In particular, we show that for each type $k \in [K]$, the natural variability of the $\rho_kM$ Markov chains that run throughout the epoch guarantees the required lower bound on the sample variance. One can then think of the Markov chain as providing ``natural exploration'' for our algorithm. From a technical perspective, bounding this sample variance, which otherwise would follow from a simple application of Hoeffding's inequality in the i.i.d. setting \citep{blm}, requires us to derive an explicit Chernoff-type bound for the Markov chain of each type-$k$ customer.  We defer a formal proof of the lemma to Appendix \ref{apx:eval-lb}. 

The following lemma results from applying \Cref{lem:eval-lb} to \Cref{eq:est-error-to-min-eval}, and applying a union bound. We defer its algebraic proof of Appendix \ref{apx:getting-there}.

\begin{lemma}\label{cor:getting-there}
For all $N \leq N_{\max}$, with probability at least $1-3\delta K - 4 \sum_{k\in[K]}\exp\prns{-\frac{\rho_k M  T_1C_{\lambda}^2}{810  N_{\max}^4 \tmixhat}}$,
\begin{align*}
\abs{R(N;\beta)-R(N;\hat{\beta}^{(h)})} &\leq \sum_{k\in[K]}\frac{\mu_{\max}^2 L_{\mu}}{\mu_{\min}^2}\cdot{\frac{3\sigma}{\kappa}}\cdot\sqrt{\frac{2\log(1/\delta)(1+N_{\max}^2)\rho_k}{C_{\lambda}MT_{h-1}}} =: \Delta_h.
\end{align*}
\end{lemma}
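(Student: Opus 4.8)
The plan is to stitch together two already-established per-type results: the estimation-error bound \Cref{eq:est-error-to-min-eval}, which controls $\abs{R_k(N;\hat{\beta}_k^{(h)}) - R_k(N;\beta_k)}$ in terms of $\lambda_{\min}(V_k^{(h-1)})$, and the high-probability eigenvalue lower bound of \Cref{lem:eval-lb}; one then aggregates across types via $R(N;\beta)=\sum_{k\in[K]}\rho_k R_k(N;\beta_k)$ and takes a union bound. Because every genuinely delicate estimate has already been proved, this step is essentially algebraic bookkeeping, and the only points demanding care are (i) verifying that the eigenvalue precondition \Cref{eq:eval-lb} of \Cref{lem:connecting-to-min-eval} is actually met, and (ii) tracking the failure probabilities across two events and $K$ types.

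First I would fix $h\in\{2,\ldots,H(T)\}$ and $k\in[K]$ and define the ``good eigenvalue event'' $A_k=\braces{\lambda_{\min}(V_k^{(h-1)})\ge C_\lambda \rho_k M T_{h-1}/2}$. By \Cref{lem:eval-lb} applied to epoch $h-1$, $A_k$ holds with probability at least $1-4\exp\prns{-\rho_k M T_{h-1}C_\lambda^2/(810 N_{\max}^4 t_{mix})}$; moreover, the second (deterministic) claim of \Cref{lem:eval-lb}, which follows from the construction of $T_1$ in \Cref{eq: epoch}, guarantees $C_\lambda \rho_k M T_{h-1}/2 \ge C_0(4+\log(1/\delta))$, so on $A_k$ the precondition \Cref{eq:eval-lb} of \Cref{lem:connecting-to-min-eval} holds. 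Conditioned on $A_k$, \Cref{lem:connecting-to-min-eval} yields the estimation-error bound \Cref{eq:est-error-bound} with probability at least $1-3\delta$, which together with \Cref{lem:rev-diff-to-est-error} gives \Cref{eq:est-error-to-min-eval}; substituting $\lambda_{\min}(V_k^{(h-1)})\ge C_\lambda \rho_k M T_{h-1}/2$ into its right-hand side yields the per-type bound $\abs{R_k(N;\hat{\beta}_k^{(h)})-R_k(N;\beta_k)}\le \frac{\mu_{\max}^2 L_\mu}{\mu_{\min}^2}\cdot\frac{3\sigma}{\kappa}\cdot\sqrt{\frac{2\log(1/\delta)(1+N_{\max}^2)}{C_\lambda \rho_k M T_{h-1}}}$.

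Finally I would aggregate and union bound. The triangle inequality gives $\abs{R(N;\beta)-R(N;\hat{\beta}^{(h)})}\le \sum_{k\in[K]}\rho_k\abs{R_k(N;\beta_k)-R_k(N;\hat{\beta}_k^{(h)})}$, and plugging in the per-type bound collapses the factor $\rho_k$ against $1/\sqrt{\rho_k}$ to leave $\sqrt{\rho_k}$ inside each root, recovering exactly the $\Delta_h$ of \Cref{eq: termination thresholds}. A union bound over the $K$ types contributes at most $3\delta K$ from the estimation-error failures; for the eigenvalue failures, I would use $T_{h-1}\ge T_1$ and $t_{mix}\le \tmixhat$ to bound each $\exp\prns{-\rho_k M T_{h-1}C_\lambda^2/(810 N_{\max}^4 t_{mix})}$ by $\exp\prns{-\rho_k M T_1 C_\lambda^2/(810 N_{\max}^4 \tmixhat)}$, so these contribute at most $4\sum_{k\in[K]}\exp\prns{-\rho_k M T_1 C_\lambda^2/(810 N_{\max}^4 \tmixhat)}$, giving the stated overall probability. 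The only real obstacle is confirming the precondition \Cref{eq:eval-lb} on $A_k$; the rest is routine substitution, since \Cref{lem:connecting-to-min-eval,lem:eval-lb} do all the heavy lifting.
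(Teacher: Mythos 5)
Your proposal is correct and follows essentially the same route as the paper's proof: combine the per-type estimation bound (\Cref{lem:rev-diff-to-est-error} with \Cref{lem:connecting-to-min-eval}) with the eigenvalue lower bound of \Cref{lem:eval-lb}, aggregate over types so that $\rho_k/\sqrt{\rho_k}=\sqrt{\rho_k}$, and union bound the failure probabilities using $T_{h-1}\ge T_1$ and $t_{mix}\le \tmixhat$. Your explicit verification of the precondition \Cref{eq:eval-lb} via the second claim of \Cref{lem:eval-lb}, and your use of $T_{h-1}$ (rather than the paper's $T_h$) in the eigenvalue failure probability, are if anything slightly more careful bookkeeping of the same argument.
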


The result then follows by using the fact that $T_1 \geq \frac{810N_{\max}^4\tmixhat\log(1/\delta)}{\rho_kMC_\lambda^2}$ by construction, which gives that the required bound holds with probability at least $1-3\delta K -4 \delta K = 1-7\delta K$.
\hfill\Halmos
\end{proof}

\section{A Temporally Fair Algorithm}\label{sec:extension}

Our results in \Cref{sec:ub} established that stable, exploration-free algorithms are able to effectively learn optimal \bogo\ programs. Still, the Stable-Greedy policy has no guardrails surrounding how the thresholds change from epoch to epoch. In particular, especially early on in the horizon, it may be the case that the chosen threshold steeply increases from one epoch to the next, given the instability of the maximum likelihood estimates in short epochs. {Indeed, we will see in our numerical experiments that such a phenomenon occurs frequently.} In this section, we show that a simple modification to Stable-Greedy satisfies the desideratum of never devaluing customers' points by increasing the threshold, all the while only losing a constant factor of two in its regret guarantee.

Our proposed algorithm, which we call Fair-Greedy, is a semi-greedy elimination-style algorithm. Similar to \Cref{alg:greedy}, it proceeds in epochs of geometrically increasing length, computing the MLE $\hat{\beta}^{(h)}_k$ for each type $k \in [K]$, in each epoch $h \in \{2,\ldots,H(T)\}$. However, rather than greedily choosing the threshold for that epoch with respect to the estimated revenue under $\hat{\beta}^{(h)}$, it cautiously chooses the largest threshold within an epoch-specific {\it consideration set} of thresholds. These epoch-specific consideration sets are iteratively defined: each corresponds to the set of all thresholds in the previous consideration set that guarantee an estimated revenue that is within $2\gaph$ of the greedy revenue in the last consideration set, for some appropriately defined $\gaph$. The nestedness of the consideration sets throughout the horizon guarantees that our algorithm's choice of thresholds is non-increasing. We formally present the algorithm in \Cref{alg:non-increasing}.

\begin{algorithm}[t]
\caption{Fair-Greedy}\label{alg:non-increasing}
\begin{algorithmic}[1]
\STATE \textbf{Input:} Initial redemption goal $N_1 = N_{\max}$, initial consideration set $\mathcal{N}_1 = [N_{\max}]$, epoch schedule $\tcal_h, h \in [H(T)]$, epoch-specific termination thresholds $\Delta_h, h \in [H(T)]$
\FOR{$t \in \tcal_1$}
\STATE Set redemption goal $N_t^\pi = N_1$.
\FOR{$j \in [M]$}
\STATE Observe purchase decision $X_{jt}$ and points until redemption $\tau_{jt}$.
\ENDFOR
\ENDFOR
\FOR{$h\in \{2, 3, \ldots, H(T)\}$}
\FOR{$k\in[K]$}
\STATE Compute the type-$k$ MLE $\hat{\beta}^{(h)}_k$ using samples collected from all type-$k$ customers in epoch $h-1$ (see \Cref{eq:mle}).
\ENDFOR
\STATE Given $\hat{\beta}^{(h)} = (\hat{\beta}_1^{(h)},\ldots,\hat{\beta}_K^{(h)})$, compute the epoch-$h$ consideration set $\mathcal{N}_h$:
\begin{align}\label{eq:consideration-set}
 \mathcal{N}_h = \braces{N\in \mathcal{N}_{h-1}:  R(N; \hat{\beta}^{(h)}) \ge \max_{N\in \mathcal{N}_{h-1}} R(N; \hat{\beta}^{(h)}) - 2\gaph}.   
\end{align}
\STATE Let $N_h = \max_{N\in\mathcal{N}_h} N$.
\STATE If $R(+\infty) > R(N_h; \hat{\beta}^{(h)}) + 3\Delta_h$, terminate, setting $N_{h'} = +\infty$ for all $h' \geq h$.
\FOR{$t\in \mathcal{T}_h$}
\STATE Set redemption goal $N_t^\pi = N_h$.
\FOR{$j \in [M]$}
\STATE Observe purchase decision $X_{jt}$ and points until redemption $\tau_{jt}$.
\ENDFOR
\ENDFOR
\ENDFOR
\end{algorithmic}
\end{algorithm}

Notice that \Cref{alg:non-increasing} is cautious on two fronts. First, it sets the largest threshold within the consideration set $\mathcal{N}_h$ in each epoch, as opposed to the optimal threshold amongst {\it all} possible thresholds. In addition to this, it is cautious with respect to the termination condition. Indeed, it requires $R(+\infty)$ to exceed the largest threshold in the consideration set by $3\Delta_h$, as opposed to $\Delta_h$, as in \Cref{alg:greedy}. This additional source of cautiousness protects against any additional sub-optimality caused by not choosing the greedy threshold, and ensures that when we terminate we can be confident the no-loyalty scheme is optimal. \Cref{thm:extension} shows that, despite these two potentially sub-optimal changes, this practical modification only results in a factor of two loss relative to the greedy algorithm. As a result, we retain the optimal expected regret bound of $\widetilde{O}(\sqrt{MT})$ for $\delta = O(1/\sqrt{T})$, implying that temporal fairness comes essentially for free in our setting.\footnote{We omit an analysis of the mixing loss of \Cref{alg:non-increasing}, as it is identical to that of \Cref{alg:greedy}.}

\begin{theorem}[Fair-Greedy Regret]\label{thm:extension}
    Fix $\delta \in (0,1)$. For the same epoch schedule and termination thresholds specified in \Cref{thm:main-thm},
    with probability at least $1-{7K}H(T)\delta$, \Cref{alg:greedy} guarantees: 
    \begin{align*}
    \regret(\policy, M, T) \leq MT_1\mu_{\max} + \frac{{24\mu_{\max}^3L_\mu}\sqrt{{3(1+N_{\max}^2)}}}{\mu_{\min}^3\kappa} \left(\sum_{k\in[K]}\sqrt{\rho_k}\right)\left(\sum_{h=2}^{H(T)}\sqrt{T_h}\right)\sqrt{M \log(1/\delta)}.
    \end{align*}
\end{theorem}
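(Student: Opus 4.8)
The plan is to recycle the entire estimation-error apparatus from the proof of \Cref{thm:main-thm} and replace only the argument that converts a good parameter estimate into a per-epoch revenue guarantee. The crucial observation is that \Cref{lem:bounded-reward-diff} continues to hold verbatim for Fair-Greedy: its proof relies only on the facts that (i) a \emph{single fixed} threshold is used throughout each epoch, so that the induced points-to-redemption form one Markov chain per type and the design-matrix eigenvalue lower bound applies uniformly over the (worst-case) threshold choice, and (ii) the epoch schedule is the one in \Cref{eq: epoch}. Since Fair-Greedy also fixes one threshold $N_h$ per epoch and uses the same schedule, these facts are insensitive to \emph{how} $N_h$ is selected. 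Consequently the good event
\[\mathcal{E} = \bigg\{\max_{N\in[N_{\max}]}\abs{R(N;\beta)-R(N;\hat{\beta}^{(h)})} \leq \Delta_h \ \forall \ h \leq h_{\infty}\bigg\}\]
still holds with probability at least $1-7KH(T)\delta$, and I would condition on $\mathcal{E}$ throughout.

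The one genuinely new ingredient is a structural invariant: under $\mathcal{E}$, the optimal threshold is never eliminated, i.e. $N^* \in \mathcal{N}_h$ for every epoch $h$. I would prove this by induction. The base case is immediate since $N^* \in [N_{\max}] = \mathcal{N}_1$. For the inductive step, assuming $N^* \in \mathcal{N}_{h-1}$ and writing $\tilde N \in \arg\max_{N\in\mathcal{N}_{h-1}} R(N;\hat{\beta}^{(h)})$, two applications of $\mathcal{E}$ together with the optimality of $N^*$ under $\beta$ give
\[R(N^*;\hat{\beta}^{(h)}) \geq R(N^*;\beta) - \Delta_h \geq R(\tilde N;\beta) - \Delta_h \geq R(\tilde N;\hat{\beta}^{(h)}) - 2\Delta_h,\]
which is exactly the membership criterion in \Cref{eq:consideration-set}; hence $N^* \in \mathcal{N}_h$. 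This invariant is precisely what the $2\Delta_h$ slack in the definition of the consideration set is designed to buy.

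Given the invariant, I would bound the per-epoch regret by $4\Delta_h$ (versus $2\Delta_h$ for Stable-Greedy), in both optimality regimes. When a loyalty option is optimal ($R(+\infty) < R(N^*)$): since $N_h \in \mathcal{N}_h$ and $N^* \in \mathcal{N}_{h-1}$, chaining the consideration-set guarantee $R(N_h;\hat{\beta}^{(h)}) \geq R(N^*;\hat{\beta}^{(h)}) - 2\Delta_h$ with $\mathcal{E}$ twice yields $R(N_h;\beta) \geq R(N^*;\beta) - 4\Delta_h$. The same chain gives $R(N_h;\hat{\beta}^{(h)}) \geq R(N^*;\beta) - 3\Delta_h$, so a triggered termination would force $R(+\infty) > R(N_h;\hat{\beta}^{(h)}) + 3\Delta_h \geq R(N^*;\beta)$, contradicting $R(+\infty) < R(N^*)$; thus termination never fires and the $4\Delta_h$ bound applies for all $h$. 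When no-loyalty is optimal ($R(+\infty) \geq R(N^*)$), regret accrues only while the termination test fails, i.e. while $R(+\infty) \leq R(N_h;\hat{\beta}^{(h)}) + 3\Delta_h$; combined with $R(N_h;\hat{\beta}^{(h)}) - R(N_h;\beta) \leq \Delta_h$ this again gives per-epoch regret at most $4\Delta_h$. Here the enlarged $3\Delta_h$ termination margin is exactly what absorbs the extra $2\Delta_h$ of caution incurred by choosing the largest admissible threshold rather than the greedy one.

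Summing over epochs, $\regret(\policy,M,T) \leq MT_1\mu_{\max} + 4\sum_{h=2}^{H(T)} MT_h\Delta_h$, precisely twice the bound obtained for Stable-Greedy. Plugging in $\Delta_h$ from \Cref{eq: termination thresholds} and simplifying exactly as in the final display of the proof of \Cref{thm:main-thm} (using $T_{h-1}\geq T_h/2$, $C_\lambda = \mu_{\min}^2/(12\mu_{\max}^2)$, and $\sigma = 1/2$) turns the leading constant $12$ into $24$, giving the claimed bound. I expect the main (though modest) obstacle to be the careful accounting that the $2\Delta_h$ consideration-set slack and the $3\Delta_h$ termination margin interlock correctly, so that $N^*$ is never eliminated and a false termination never occurs; everything else is inherited directly from \Cref{thm:main-thm}.
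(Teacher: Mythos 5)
Your proposal is correct and follows essentially the same route as the paper's own proof: the same good event $\mathcal{E}$ inherited from \Cref{lem:bounded-reward-diff} (with the same justification that the lemma concerns only MLE quality, not how $N_h$ is chosen), the same induction showing $N^*$ is never eliminated from the consideration set, the same $4\Delta_h$ per-epoch bound in both optimality regimes, and the same contradiction argument ruling out false termination. The final summation and constant accounting (doubling $12$ to $24$ via the factor-$4$ per-epoch loss) also match the paper exactly.
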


At a high level, one would not expect \Cref{alg:non-increasing} to be order-wise worse than \Cref{alg:greedy}. Intuitively, this follows from the dependence of the consideration set on $\Delta_h$, which enforces that the thresholds included in $\mathcal{N}_h$ generate closer revenue to the greedy threshold for epochs later on in the horizon. We provide a proof sketch of \Cref{thm:extension} below, deferring its formal proof to Appendix \ref{apx:extension}.

\subsubsection*{Proof sketch.} We show that the regret incurred by our algorithm in each epoch $h \in \{2,\ldots,H(T)\}$ is upper bounded by $4T_h\gaph$. The final bound then follows from plugging in the definitions of $T_h$ and $\gaph$.

Suppose first that implementing a \bogo\ program is optimal, i.e., $R(N^*) > R(+\infty)$. As in the proof of \Cref{thm:main-thm}, we establish that, with high probability, \Cref{alg:non-increasing} does not mistakenly terminate. Therefore, it suffices to bound the loss incurred by choosing the largest threshold contained in the consideration set $\mathcal{N}_h$, instead of choosing the optimal $N^*$. In \Cref{thm:main-thm}, we were able to bound this loss since, in each epoch, our algorithm greedily chose the best threshold over {\it all} possible thresholds $N \in [N_{\max}]$, thus guaranteeing high revenue relative to $R(N^*;\hat{\beta}^{(h)})$. The result then followed from the fact that, with high probability, the revenue under the MLE $\hat{\beta}^{(h)}$ was close enough to the revenue under the true parameter $\beta$; therefore, optimizing with respect to the incorrect parameters did not introduce too much regret. Under \Cref{alg:non-increasing}, however, relating the algorithm's choice of threshold $N_h$ to $N^*$ is not as straightforward. This is because the algorithm chooses from a non-increasing consideration set of thresholds, which a priori need not include $N^*$. We however show that, with high probability, $N^*$ is {never} eliminated from the algorithm's consideration set. As a result, the estimated revenues under $N^*$ and $N_h$, respectively, are within 2$\gaph$ of each other in each period, by \Cref{eq:consideration-set}. Our previous result bounding the revenue loss due to the MLE's estimation error (see \Cref{lem:bounded-reward-diff}) then gives us our final per-epoch regret bound of 4$T_h\gaph$. 

In the case where the no-loyalty option is optimal (i.e., $R(+\infty) \geq R(N^*)$), \Cref{alg:non-increasing} incurs regret in all epochs for which it hasn't satisfied the termination condition. By construction, however, it must have been that the estimated revenue under $N_h$ was at least within $3\gaph$ of the no-loyalty revenue. The additional additive gap of $\gaph$ follows from the estimated loss under the MLE $\hat{\beta}^{(h)}$, again by \Cref{lem:bounded-reward-diff}.
$\square$

\section{Computational Experiments}\label{sec:numerics}

In this section we conduct extensive numerical experiments to gain additional insights into the impact of fairness considerations on the design of \bogo\ programs. In particular, we study the price of individual fairness over a large set of randomly generated (as opposed to worst-case) instances. We moreover demonstrate the practical efficacy of our temporally fair learning algorithms.

Except when specified, we let $N_{\max} = 20$, with purchase probabilities given by:
\[\phi_k(\tau) = \min\left\{\bar{\phi}_k + \exp\left(\alpha_k - \beta_k\tau\right), 1\right\} \quad \forall \ k \in [K], \]
with $\alpha_k > 0$, $\beta_k > 0$. We moreover let $K = 2$, with $\rho_1 = \rho_2 = 1/2$.

\subsection{On the Limited Value of Personalization}\label{ssec:pof-numerics}

\subsubsection{Distributional analysis of price of fairness.}\label{ssec:pof} 

For this set of experiments, we build upon the empirical findings described in \Cref{ssec:related-lit} and define the two types of customers as follows. We assume one type of customer is a frequent customer with high baseline purchase probability under the no-loyalty option; the other type is an infrequent customer who has a low baseline purchase probability but is very sensitive to the presence of a rewards program. We model such settings by randomly generating the parameters $(\bar{\phi}_k, \alpha_k, \beta_k), k \in [K]$, over 10,000 replications, as follows:
\begin{align}\label{eq:two-type-setup}
\begin{cases}
\bar{\phi}_1 \sim \text{Unif}[0.05,0.25]\\
\alpha_1 \sim \text{Unif}[1,1.5]\\
\beta_1 \sim \text{Unif}[1,1.5]
\end{cases} \hspace{2cm}\text{and} \hspace{2cm}  
\begin{cases}
\bar{\phi}_2 \sim \text{Unif}[0.5,0.75]\\
\alpha_2 \sim \text{Unif}[0,0.5]\\
\beta_2 \sim \text{Unif}[0,0.5]
\end{cases}
\end{align}

Note that any such randomly generated instance still represents a pessimistic (albeit no longer worst) case. The reason for this is that, when $K = 2$, we know that a worst-case instance is a ``frequent versus infrequent'' setting, pushed to the extreme of $\bar{\phi}_1 = 0$ and $\bar{\phi}_2 = 1$ (see proof of \Cref{thm:price-of-fairness}).

\begin{figure}
\begin{subfigure}[b]{0.45\textwidth}
\centering
\includegraphics[width=\textwidth]{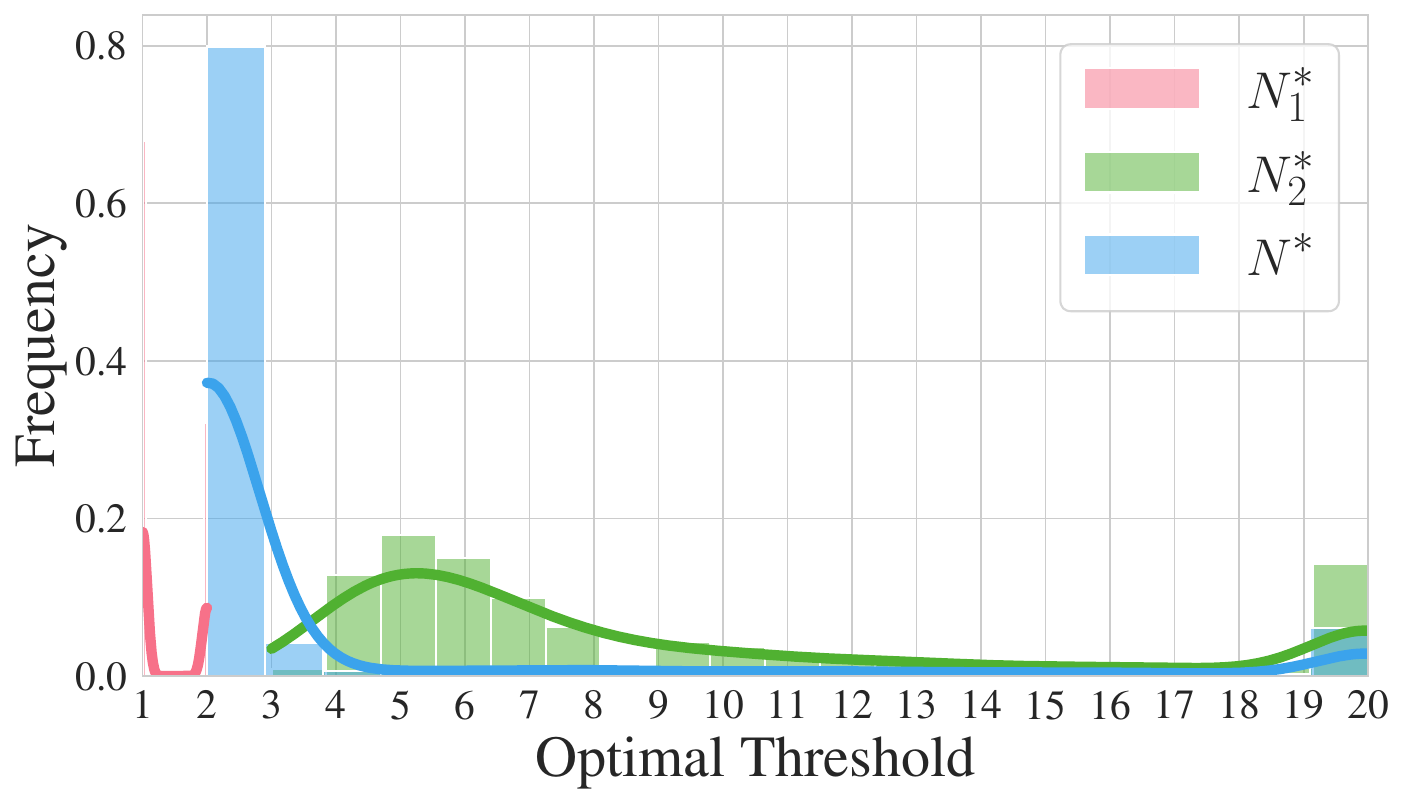}
\caption{\small Histogram of optimal thresholds \label{fig:opt_hist}}
\end{subfigure}
\qquad
\begin{subfigure}[b]{0.45\textwidth}
\centering
\includegraphics[width=\textwidth]{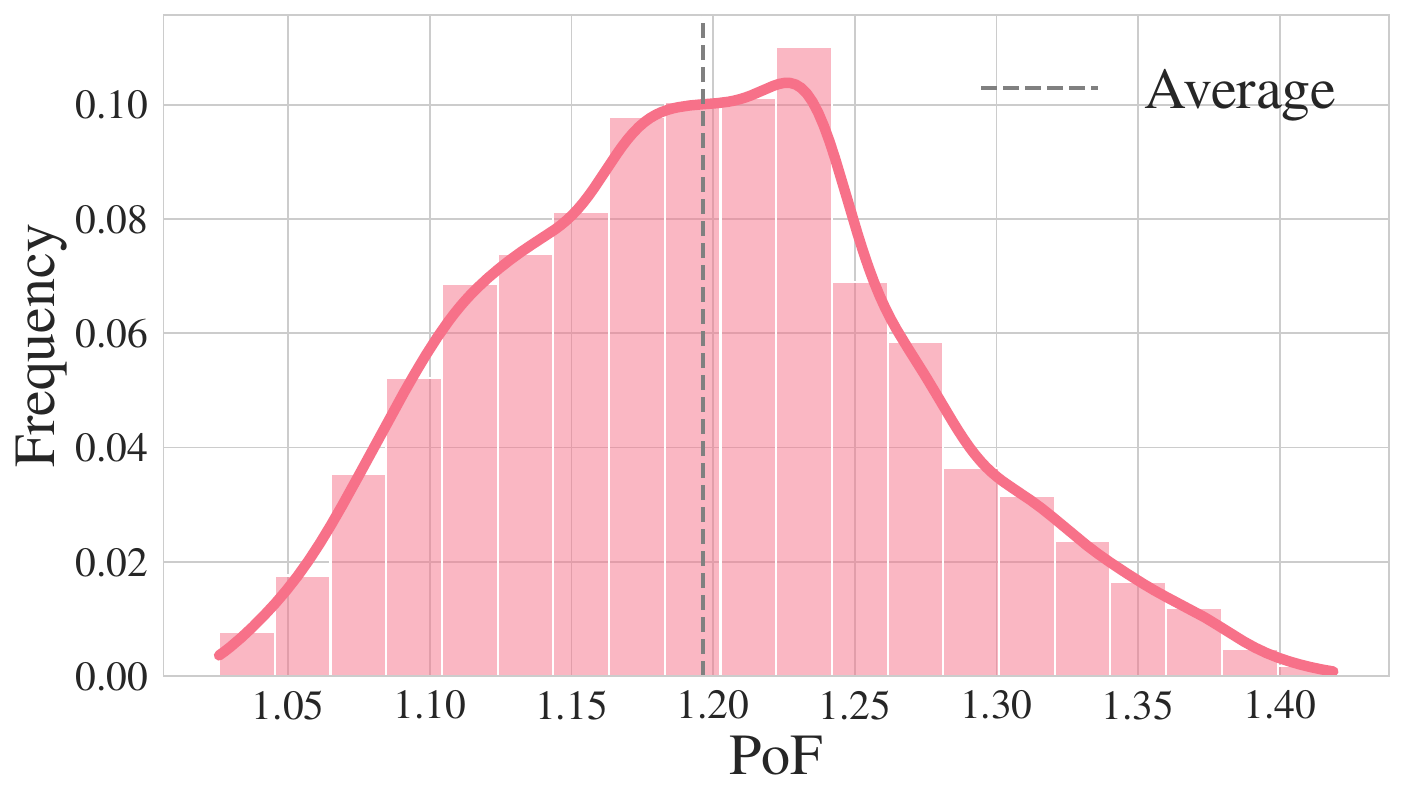}
\caption{\small {Histogram of PoFs
 \label{fig:pof_hist}}}
\end{subfigure}
\caption{{Distribution of the price of fairness and optimal thresholds across all 10,000 randomly generated instances described in \Cref{eq:two-type-setup}. 
In \Cref{fig:opt_hist}, $N_1^*, N_2^*$ respectively correspond to the optimal personalized threshold for type-1 and type-2 customers; $N^*$ corresponds to the optimal non-personalized threshold.
The dashed grey line in \Cref{fig:pof_hist} corresponds to the average PoF of 1.1957 across all instances.}
\label{fig:pof_opt_hist}}
\end{figure}

\Cref{fig:opt_hist}, which shows a histogram of the optimal personalized and non-personalized thresholds across all randomly generated instances, numerically illustrates the challenge presented by the ``frequent versus infrequent'' setup. In 100\% of instances, the optimal personalized threshold is at most two for type-1 customers; on the other hand, the optimal personalized threshold is always at least three for type-2 customers. The distribution of the type-2 optimal threshold $N_2^*$ moreover has a long tail: over 32\% of replications are such that $N_2^* \geq 10$. The optimal non-personalized threshold $N^*$ hedges between these conflicting incentives: $N^* \in \{2,3\}$ in 84\% of replications, and $N^* \geq 10$ in 11\% of replications. 

Despite the fact that $N^*$ is potentially far from both $N_1^*$ and $N_2^*$ in many instances, \Cref{fig:pof_hist} shows that the price of fairness is frequently much lower than the worst-case upper bound of 1.5 derived in \Cref{thm:price-of-fairness}. In particular, the average price of fairness is strictly less than 1.2, with 95\% of instances yielding a price of fairness of at most 1.33, and the maximum price of fairness across all replications being 1.45. These results suggest that, in practice, when customers preferences are more closely aligned (i.e., less extreme baseline purchase probabilities and rewards program sensitivities), the price of fairness is expected to be significantly lower than 1.2.

\subsubsection{Impact of heterogeneity.}\label{ssec:heterog} We next investigate the impact of heterogeneity on the price of fairness, as it relates to (i) the proportion of type-1 customers in the population, for the same setup as the one used in \Cref{ssec:pof}, and (ii) the number of types $K$. 

When $K = 2$, the setting where $\rho_1 = 0.5$ can be interpreted as one in which the population is very heterogeneous. As we approach the extremes of $\rho_1 = 0$ and $\rho_1 = 1$, however, the population becomes more homogeneous, with one type dominating the other. \Cref{tab:rho1}, which reports the average and maximum PoF across all randomly generated instances for  \mbox{$\rho_1 \in \{0.1,0.2,\ldots,0.8,0.9\}$}, demonstrates the impact of this type of heterogeneity.  We observe that both the average and maximum PoF increase for \mbox{$\rho_1 \in [0.1, 0.5]$}, and decrease thereafter. This numerically validates the intuition that imposing individual fairness is the most costly when types are equally likely, since the seller needs to simultaneously satisfy conflicting preferences (as shown in \Cref{fig:opt_hist}). 
\Cref{fig:rho-thresholds}, which shows the distribution of non-personalized thresholds for \mbox{$\rho_1 \in \{0.1, 0.5, 0.9\}$},  further illustrates this. We find that at the extreme of \mbox{$\rho_1 = 0.1$} in which most individuals are of type 2, the distribution of the optimal non-personalized threshold closely resembles the distribution of $N_2^*$ observed in \Cref{fig:opt_hist}. Similarly, when $\rho_1 = 0.9$, the distribution of the optimal non-personalized thresholds resembles that of $N_1^*$. Intuitively, since the seller only loses out on revenue from 10\% of customers in these cases, it should be optimal for the seller to optimize over the dominant type. This explains why we observe a price of fairness of less than 1.11 in the worst case, across both extremes.

\begin{table}
\centering
\begin{tabular}{@{}cccccccccc@{}}
\toprule
$\rho_1$         & {0.1} & {0.2} & {0.3} & {0.4} & {0.5} & {0.6} & {0.7} & {0.8} & {0.9} \\ \midrule
{Average PoF} & 1.0366       & 1.0788       & 1.1264       & 1.1737       & 1.1951       & 1.1813       & 1.1521       & 1.1091       & 1.0573       \\
{Max PoF}     & 1.0568       & 1.1278       & 1.2103       & 1.3074       & 1.4244       & 1.4035       & 1.3156       & 1.2130       & 1.1087       \\ \bottomrule
\end{tabular}
\caption{Dependence of price of fairness  on fraction of type-1 customers across randomly generated instances.}
\label{tab:rho1}
\end{table}

\begin{figure}[t]
\begin{subfigure}[b]{0.45\textwidth}
\centering
\includegraphics[width=\textwidth]{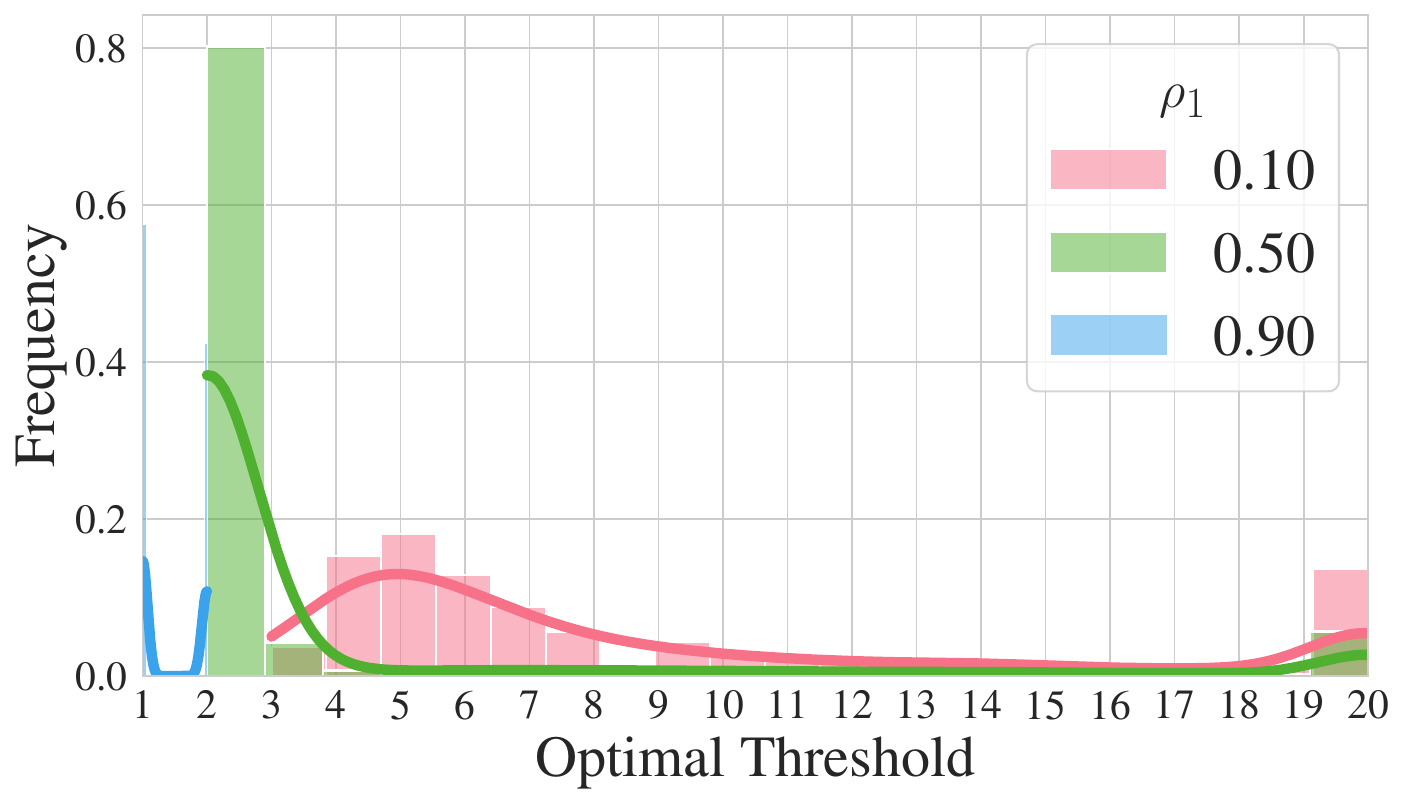}
\caption{\small Histogram of optimal thresholds \label{fig:rho-thresholds}}
\end{subfigure}
\qquad
\begin{subfigure}[b]{0.45\textwidth}
\centering
\includegraphics[width=1.2\textwidth]{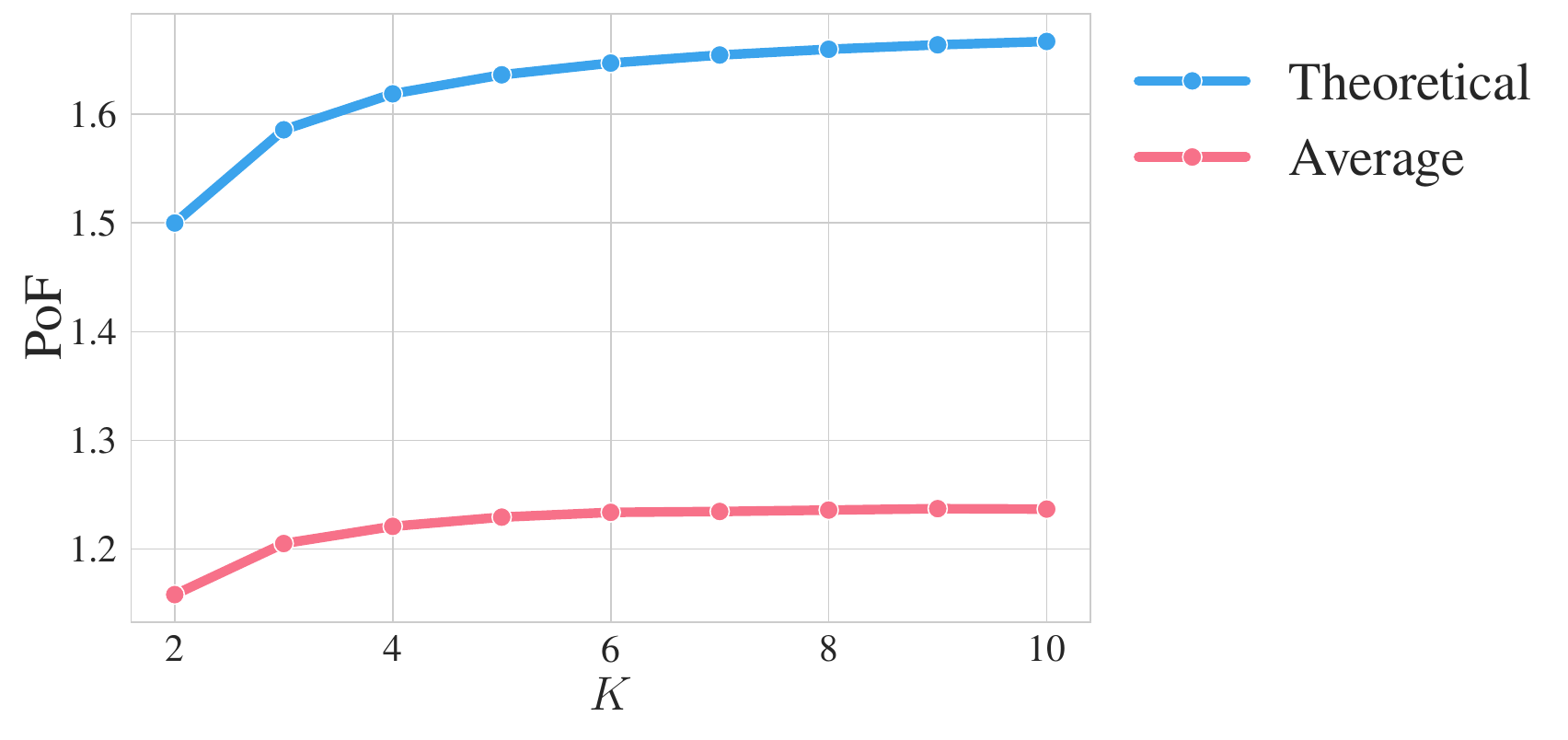}
\caption{\small {PoF vs. $K$
 \label{fig:many-types}}}
\end{subfigure}
\caption{{Impact of heterogeneity on price of fairness. \cref{fig:rho-thresholds} shows the distribution of optimal non-personalized thresholds for $\rho_1 \in \{0.1,0.5,0.9\}$, across all 10,000 randomly generated instances. \cref{fig:many-types} illustrates the dependence on the average and theoretical price of fairness on the number of types $K$.}
\label{fig:heterogeneity}}
\end{figure}

We next investigate the impact of heterogeneity induced by an increasing number of types $K$. For every $K$ that we test, we let $\rho_k = \frac1K$, for $k \in [K]$, with $(\bar{\phi}_k, \alpha_k, \beta_k)$ generated as follows:
\begin{align*}
\begin{cases}
\bar{\phi}_k \sim \text{Unif}[i/K, (i+1)/K]\\
\alpha_k \sim \text{Unif}[3(1-i/K),3(1-(i-1)/K)]\\
\beta_k \sim \text{Unif}[3(1-i/K), 3(1-(i-1)/K)].
\end{cases}
\end{align*}
This instantiation creates $K$ customer ``tiers,'' ordered according to baseline purchase probability and sensitivity to the rewards program (i.e., a type-1 customer is the least-frequent / most-sensitive, and a type-$K$ customer is the most-frequent / least-sensitive).

We report the average PoF across all randomly generated instances for \mbox{$K \in \{2,3,\ldots,10\}$} in \Cref{fig:many-types}, comparing it to the worst-case upper bound of \mbox{$K-(K-1)2^{-1/(K-1)} \leq 1+\ln 2$} derived in \Cref{thm:price-of-fairness}. We observe that both the average and theoretical PoFs are concave and increasing in $K$. This numerically validates the intuition discussed in \Cref{sec:complete-info} that, as the number of types increases, so does the value of personalization. However, our results show that on average, this benefit quickly plateaus, remaining between 1.23 and 1.24 for all $K \geq 6$ (much lower than the theoretical upper bound of approximately 1.63, for these values of $K$). Otherwise said, {\it the seller stands to gain {less than 25\% in revenue} by personalizing, even under high levels of heterogeneity.}

\subsection{Learning Experiments}\label{ssec:learning-numerics}

We conclude the section by evaluating the numerical performance of our two algorithms on synthetic data. For both Stable-Greedy and Fair-Greedy, we use a doubling epoch schedule with $T_1=1$, $T_h = 2^{h-1}T_1$, and set the termination thresholds to be $\Delta_h = \frac{0.15}{\sqrt{M(\sum_{i=1}^{h-1}T_i)}}$ for all $h \geq 1$. Additionally, we implement a practical modification of the algorithm that estimates the MLE using all data points collected up to the start of the current epoch, rather than only the data from the previous epoch.

\subsubsection{Regret comparison and learning behavior.}\label{ssec:num-regret} 

{We first compare the regret of our two algorithms in a setting where the decision-maker experiments with $M = 2$ customers, each of whom is of different type. In line with the ``frequent versus infrequent'' setting studied in \Cref{ssec:pof-numerics}, we consider an instance for which $\bar{\phi}_1 = 0.25$, $\bar{\phi}_2 = 0.5$, $\alpha_1 = 1.5$, $\alpha_2 = 0.05$, $\beta_1 = -1.5$, $\beta_2 = -0.05$. We run 100 replications for each experiment. 
}

\begin{figure}[t]
\begin{subfigure}[b]{0.45\textwidth}
\centering
\includegraphics[width=\textwidth]{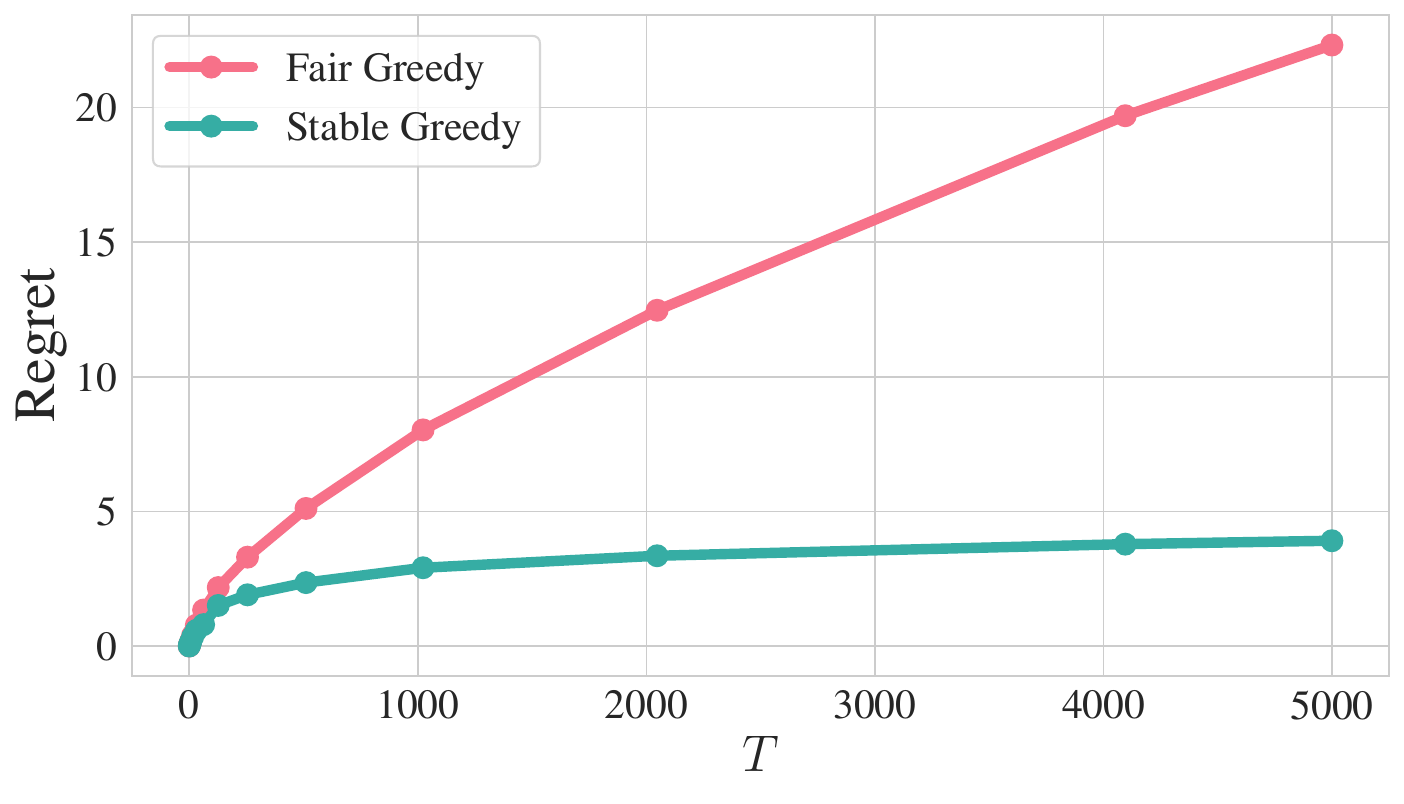}
\caption{\small Cumulative regret vs. $T$ \label{fig:regret-cumulative}}
\end{subfigure}
\hfill
\begin{subfigure}[b]{0.45\textwidth}
\centering
\includegraphics[width=\textwidth]{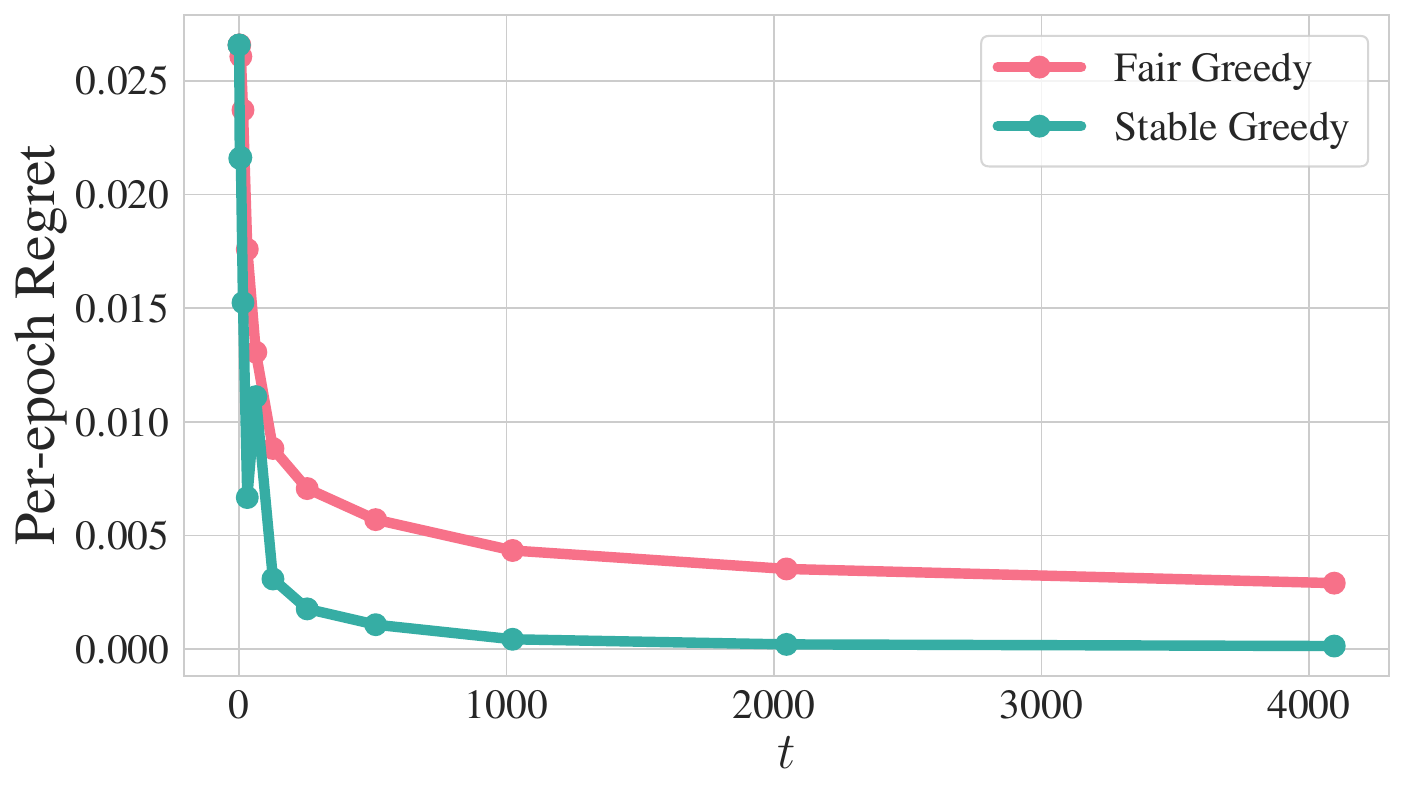}
\caption{\small {Per-epoch regret vs. $t$
 \label{fig:per-epoch-regret}}}
\end{subfigure}
\centering
\vspace{1mm}
\begin{subfigure}[b]{0.45\textwidth}
\includegraphics[width=\textwidth]{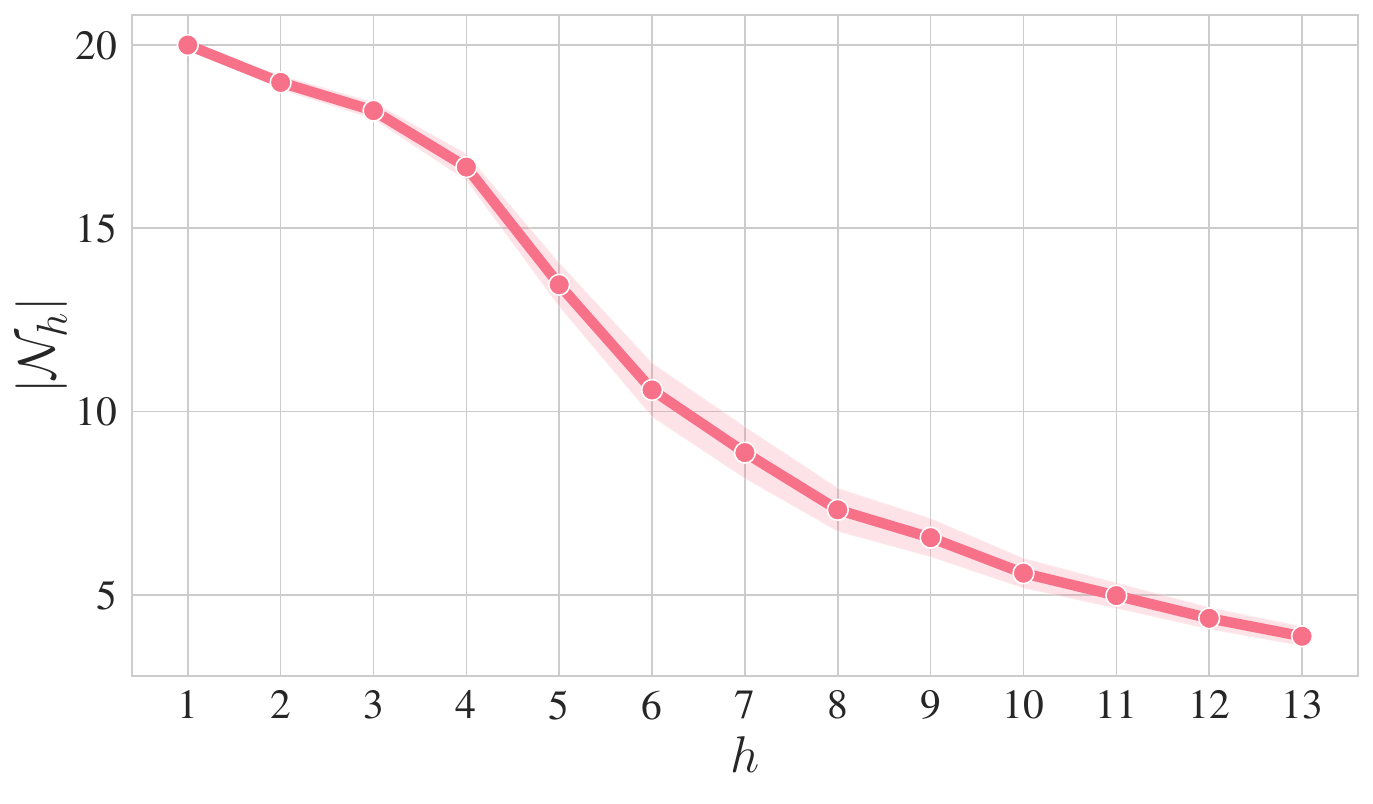}
\caption{\small {$|\mathcal{N}_h|$ vs. $h$ under Fair-Greedy. 
 \label{fig:avg-consideration-set-size}}}
\end{subfigure}
\caption{{Performance of Stable-Greedy (\Cref{alg:greedy}) and Fair-Greedy (\Cref{alg:non-increasing}). \cref{fig:regret-cumulative} plots the cumulative regret versus the horizon $T \in \{1, 2, 4, \ldots, 5000\}$. \cref{fig:per-epoch-regret} plots the average regret per period in each epoch, versus the start of each epoch on the $x$-axis. \cref{fig:avg-consideration-set-size} shows the average size of the consideration set $\mathcal{N}_h$ in each epoch $h$ under Fair-Greedy, for $T = 5,000$. All results are averaged across 100 replications.}
\label{fig:regret}}
\end{figure}

\Cref{fig:regret-cumulative} shows the cumulative regret of Stable-Greedy and Fair-Greedy as the horizon $T$ grows large. These results numerically validate our theoretical findings: namely, that the regret of both algorithms is sublinear in $T$, with Fair-Greedy exhibiting worse performance due to its more restrictive fairness constraints. Additionally, \Cref{fig:per-epoch-regret} plots the average regret of each algorithm within each epoch for a fixed $T = 5,000$. We observe the same trend for both algorithms: the per-period regret in each epoch steeply decreases in the first few epochs, then gradually converges to $0$. These results illustrate the fast convergence of Stable-Greedy to the optimal threshold; Fair-Greedy naturally exhibits a slower rate of convergence due to the fact that it constrains itself to choose a sub-optimal threshold within a larger consideration set, with on average three thresholds remaining within the consideration set, for $T = 5,000$ (see \Cref{fig:avg-consideration-set-size}). As a result, this algorithm naturally requires more samples in order to eliminate high thresholds (to which it is constrained to never return) with confidence.

Our experiments illustrate the intuitive fact that temporal fairness constraints have an impact on the seller's revenue throughout the learning horizon. On the flip side of this, \Cref{tab:fairness-learning-num} shows the major gains in stability that arise from these constraints. We first observe that Stable-Greedy exhibits the desideratum of limited adaptivity by only changing the threshold less than 6.5 times on average, over a horizon of \mbox{$T = 5,000$} (and in less than half of the number of epochs $H(T) = 13$). However, Fair-Greedy is able to obtain its strong guarantees while only changing the threshold 3.4 times, on average. Moreover, while Stable-Greedy only increases the threshold 2.3 times, on average, the average relative increase of these changes is 46\%, representing a significant devaluation of earned points. Fair-Greedy, on the other hand, never increases the threshold by construction, and benefits from an average relative decrease of thresholds of 20\%.
\begin{table}
    \centering
    \begin{tabular}{ccccc}
        \toprule
         & Number of changes & Relative change (\%) &Number of increases & Relative increase (\%) \\
        \midrule
        Stable-Greedy & 6.43 & 37 & 2.25 & 46 \\
        Fair-Greedy & 3.39 & 20 & 0 & 0 \\
        \bottomrule
    \end{tabular}
    \caption{Adaptivity statistics of Stable-Greedy and Fair-Greedy for $T = 5000$, with $H(T) = 13$. We report the number of threshold changes, absolute relative change in threshold, number of threshold increases, and relative increase in threshold, averaged across 100 replications.}
    \label{tab:fairness-learning-num}
\end{table}

\subsubsection{Robustness to misspecification.}
{Recall, our theoretical guarantees rely on the knowledge of the specific form of the link function $\mu_k(\cdot)$.} In this section we numerically investigate the impact of {a misspecified purchase probability model on our algorithms' performance}. To better isolate the impact of misspecification, we consider {the setting where $M = 1$}, omitting the subscript $k$ throughout.

We consider three true underlying models for the customer's purchase probability $\phi(\tau)$ {(also referred to as the {\it ground truth})}:
\begin{enumerate}
\item Linear: $\phi(\tau) = \min\left\{\bar{\phi} + (\alpha-\beta\tau)^+, 1\right\}$
\item Exponential: $\phi(\tau) = \min\left\{\bar{\phi} + \exp\left(\alpha - \beta\tau\right), 1\right\}$
\item Logit: $\phi(\tau) = \min\left\{\bar{\phi} + \frac{\exp\left(\alpha - \beta\tau\right)}{1+\exp\left(\alpha - \beta\tau\right)} , 1\right\}$.
\end{enumerate}
{We assume that our algorithms do not have access to this ground truth. Rather, they use a linear model in the maximum likelihood estimation step, i.e.,
\[\phi(\tau) = \min\left\{\bar{\phi} + \left(\alpha - \beta\tau\right)^+, 1\right\}.\]}
{In this case, the linear ground-truth model corresponds to a well-specified setting, whereas the exponential and logit ground-truth models correspond to misspecified settings. Including the linear ground-truth model provides us with a benchmark to detangle the statistical error due to noisy purchase observations and the misspecification error due to estimating an incorrect model.}

Following \cite{besbes2015surprising}, we measure the performance of the algorithms by computing the fraction of the optimal long-run revenue achieved on each sample path{, referred to as the {\it performance ratio} $\gamma$. Formally:}
\begin{align*}
    \gamma = \frac{\sum_{t=1}^T R(N_t)}{TR(N^*)}.
\end{align*}
A higher value of $\gamma$ indicates better algorithm performance.

For each ground-truth model, we test three values of the base probability $\bar{\phi} \in \{0.05, 0.15, 0.25\}$, and run our algorithms with $T \in \{10^3,2\cdot 10^3, 5\cdot 10^3\}$ for each value of $\bar{\phi}$. For each such instance we conduct 500 replications, {independently generating the ground truth parameters} \mbox{$\alpha \sim \text{Unif}[1,1.5]$} and $\beta \sim \text{Unif}[1,1.5]$ {in each replication}. We report the average performance ratio of each algorithm on each tested instance in \cref{tab:misspecification}.

\begin{table}
\centering
\renewcommand{\arraystretch}{1.2}
\setlength{\tabcolsep}{6.5pt} 
\begin{tabular}{lc*{9}{c}} 
\toprule
\multicolumn{2}{l}{} & \multicolumn{3}{c}{Linear} & \multicolumn{3}{c}{Exponential} & \multicolumn{3}{c}{Logit} \\ \midrule
&   & \multicolumn{3}{c}{$T$} & \multicolumn{3}{c}{$T$} & \multicolumn{3}{c}{$T$} \\
&   & $10^3$ & $2\cdot 10^3$ & $5\cdot 10^3$ & $10^3$ & $2\cdot 10^3$ & $5\cdot 10^3$ & $10^3$ & $2\cdot 10^3$ & $5\cdot 10^3$ \\
\midrule
\multirow{3}{*}{Stable-Greedy} & $\bar{\phi}=0.05$ & 0.83 & 0.91 & 0.95 & 0.52 & 0.74 & 0.89 & 0.58 & 0.79 & 0.91 \\
& $\bar{\phi}=0.15$  & 0.94 & 0.97 & 0.98 & 0.86 & 0.93 & 0.95 & 0.89 & 0.94 & 0.98 \\
& $\bar{\phi}=0.25$ & 0.97 & 0.98 & 0.99 & 0.93 & 0.96 & 0.98 & 0.94 & 0.96 & 0.98 \\
\midrule
\multirow{3}{*}{Fair-Greedy} & $\bar{\phi}=0.05$ & 0.81 & 0.89 & 0.94 & 0.50 & 0.69 & 0.80 & 0.57 & 0.76 & 0.88 \\
& $\bar{\phi}=0.15$  & 0.93 & 0.95 & 0.97 & 0.82 & 0.87 & 0.90 & 0.87 & 0.92 & 0.95 \\
& $\bar{\phi}=0.25$ & 0.93 & 0.94 & 0.95 & 0.90 & 0.93 & 0.94 & 0.92 & 0.95 & 0.97 \\
\bottomrule
\end{tabular}
\vspace{0.5em}
\caption{{Performance ratio of Stable-Greedy and Fair-Greedy over a variety of randomly generated instances}.} \label{tab:misspecification}
\end{table}

{A number of observations emerge. First of all, these results echo the numerical findings of \Cref{ssec:num-regret}. In particular, we naturally observe that the performance ratio is increasing in $T$, for both the well-specified and misspecified cases. This follows from the fact that learning becomes easier over longer decision-making horizons (as observed in \Cref{fig:regret}).  We moreover observe that the performance ratio of Fair-Greedy slightly underperforms that of Stable-Greedy across all instances. This again reflects our previous numerical finding that the decision-maker's revenue suffers from the devaluation-free constraint.

Most importantly, our results illustrate the robustness of our algorithms' performance to a misspecified purchase probability model as $T$ grows large. In particular, for $T = 5\cdot 10^3$, $\gamma = 0.8$ in the worst case (achieved by Fair-Greedy when $\bar{\phi} = 0.05$ and the ground-truth model is exponential). In all other instances, $\gamma \geq 0.88$ for this value of $T$. (These performance ratios are of the same magnitude as those computed for the problem of pricing; see Table 1 in \citet{besbes2015surprising}.)

Finally, we note the significant dependence of the price of misspecification on the base purchase probability $\bar{\phi}$. Namely, both algorithms' performance takes a significant hit when $\bar{\phi}=0.05$ as compared to when $\bar{\phi} = 0.25$. This is most notable for $T = 10^3$; for instance, the performance ratio of Stable-Greedy goes from 0.52 when $\bar{\phi} = 0.05$ to 0.93 when $\bar{\phi} = 0.25$ under the exponential ground-truth model. This phenomenon is due to the fact that our algorithms set the initial threshold to $N = 20$, with $\phi(20) \approx\bar{\phi}$ across all ground-truth models. When $\bar{\phi}$ is small, customers purchase extremely infrequently, and therefore take a much longer amount of time to progress through their redemption cycle. As a result, the number of samples required to receive informative signals for the maximum likelihood estimation step is much higher; for \mbox{$T = 10^3$}, the algorithms often start to update the threshold only towards the end of the horizon, resulting in a low performance ratio. While this effect is also present when the ground-truth model is linear, over the tested range of $(\alpha, \beta)$, the gap between the optimal revenue and $\bar{\phi}$ is much smaller, which is why we observe higher values of $\gamma$ for $T = 10^3$. These results highlight the practical importance of choosing a tight enough upper bound $N_{\max}$ with which to initialize Stable-Greedy and Fair-Greedy. Alternatively, one may also choose a random initialization for Stable-Greedy.
}

\section{Conclusion}\label{sec:conclusion}
Motivated by real-world concerns surrounding unfair practices in loyalty programs, this paper studies the impact of fairness considerations on the design of points-based rewards. Our results provide a number of important managerial insights. In particular, the uniform upper bound on the price of fairness in our model shows that, while there does exist value to personalization, a decision-maker cannot make arbitrary gains from exploiting heterogeneity between types. Additionally, the optimality of a devaluation-free learning algorithm that changes (but never increases) the redemption threshold $O(\log T)$ times highlights that temporal fairness similarly is not a costly endeavor for decision-makers. On a technical level, our results provide insights into the analysis of greedy algorithms for contextual bandit problems. In particular, we show that it is sufficient for contexts to be {\it Markovian}, rather than i.i.d., for greedy strategies to be optimal. This finding is of independent interest, and likely has implications in related problems. 

From a technical perspective, an interesting open question is whether our lower bound can be extended to exhibit the same dependence on the proportions of each type of customer, as seen in \Cref{thm:main-thm}. Another interesting technical question is whether the dependence on the mixing time in our upper bound is optimal. One would expect that such dependence is unavoidable since the mixing time relates to the variability of the Markov chain, which in turn determines whether there is enough diversity in the contexts to forgo any forced exploration. However, it would be interesting to quantify this effect through a lower bound involving the mixing time. We expect the proof of such a lower bound to require novel ideas, potentially involving anti-concentration of non-reversible Markov chains. {Finally, we assume that types are observable in our setting, a common assumption in the revenue management literature, in addition to being practically motivated. It would however be interesting to study the case where types are a priori unknown, though to the best of our knowledge this question remains open even for the basic problem of pricing under demand uncertainty.}

While our model captures the core components of points-based rewards programs, there are many possible modeling extensions that would make interesting directions for future study. Firstly, while the assumption that the price is fixed is practically motivated, it would be interesting to see how our conclusions change if the decision-maker can jointly optimize over price and redemption thresholds. We would expect this to require significant innovation to handle the dependencies between decision variables. Additionally, a reasonable practical extension of our work would be to consider a multi-product setting. {We conjecture that our analysis and main insights extend relatively easily to this setting, as long as there exists a closed-form expression for the expected revenue as a function of $\tau$.} {In this case, however, strategic considerations may become important when there are multiple products, since customers may prefer to wait until they need a high-value product before redeeming. Modeling this phenomenon would require the development of a complex behavioral model, which while interesting is beyond the scope of the present work.} 


%
%
%

\medskip 

\noindent\textbf{Acknowledgments.} {The authors would like to thank Yeganeh Alimohammadi, Michael Choi, and Vishal Gupta for their helpful comments on a preliminary version of our manuscript.}


\SingleSpacedXI
\bibliographystyle{informs2014}
\bibliography{literature}

\newpage
\begin{APPENDICES}
\OneAndAHalfSpacedXI

\section{\Cref{sec:preliminaries} Omitted Proofs}

\subsection{Proof of \Cref{prop:closed-form-lr-avg-per-cust}}\label{apx:closed-form-lr-avg-per-cust}
\begin{proof}{Proof.}
The number of points to redemption for each customer $j$ forms a Markov chain over states $\{0,1,\ldots,N\}$, which evolves as: 
\[\tau_{j,t+1} = (\tau_{jt} - X_{jt}) \mod (N+1), \quad \forall \ t \in \mathbb{N}^+.\]

Let $P$ be the transition matrix for this Markov chain. For all $\tau \in \{0,\ldots,N\}$:
\begin{align}
\begin{cases}
P_{\tau,\tau} = 1-\purchaseprob_{k(j)}(\tau) \\
P_{\tau,(\tau-1) \text{ mod } (N+1)} = \purchaseprob_{k(j)}(\tau) \\
P_{\tau,\tau'} = 0 \quad \forall \ \tau' \not\in \{\tau,\tau-1\}.
\end{cases}
\end{align}
The associated Markov chain is finite and irreducible; hence, a stationary distribution exists and is unique. We abuse notation and let $p = (p_0,\ldots,p_N)$ denote this steady-state distribution, which satisfies:
\begin{align*}
&\begin{cases}
p_{\tau} = p_{\tau}(1-\phi_{k(j)}(\tau)) + p_{(\tau+1)\text{ mod } (N+1)} \cdot \phi_{k(j)}\big((\tau+1)\text{ mod } (N+1)\big)\\
\sum_{\tau = 0}^\threshold p_{\tau} = 1, \quad p \geq 0.
\end{cases}
\end{align*}
Simplifying, we obtain:
\begin{align*}
\begin{cases}
p_\tau = p_0\cdot \frac{\phi_{k(j)}(0)}{\phi_{k(j)}(\tau)} \quad \forall \ \tau \in [N] \\
\sum_{\tau=0}^Np_\tau = 1, \quad p\geq 0
\end{cases}
\implies \begin{cases}
p_0 = \frac{1}{\sum_{\tau'=0}^N\frac{\phi_{k(j)}(0)}{\phi_{k(j)(\tau')}}}\\
p_{\tau} = \frac{1}{\sum_{\tau'=0}^N\frac{\phi_{k(j)}(\tau)}{\phi_{k(j)(\tau')}}} \quad \forall \ \tau \in [N].
\end{cases}
\end{align*}

 Note that the steady-state probability $p_\tau$ depends only on the customer through her type $k(j)$.  Re-defining this probability as $p_{k(j)}(\tau; N)$, we obtain the second part of the proposition.

The first part of the proposition then follows by noting that the long-run average purchase probability for customer $j$ is given by:
\begin{align*}
\sum_{\tau = 0}^N p_{k(j)}(\tau; N)\cdot \phi_{k(j)}(\tau)\mathds{1}\{\tau > 0\} = \sum_{\tau=1}^N \frac{1}{\sum_{\tau'=0}^N\frac{\phi_{k(j)}(\tau)}{\phi_{k(j)}(\tau')}}\cdot \phi_{k(j)}(\tau) = \frac{N}{\sum_{\tau'=0}^N\frac{1}{\phi_{k(j)}(\tau')}}.
\end{align*}
\hfill\Halmos
\end{proof}

\subsection{Proof of \Cref{prop:special-case-noloyalty}}\label{apx:no-loyalty-cvg}

\begin{proof}{Proof.}
By the Stolz-Ces\`aro theorem, 
\begin{align*}
\lim_{\threshold\to\infty}R_k(\threshold) = \lim_{\threshold\to\infty} \frac{(N+1)-N}{\sum_{\tau=0}^{N+1}\frac{1}{\phi_k(\tau)}-\sum_{\tau=0}^{N}\frac{1}{\phi_k(\tau)}} = \lim_{\threshold\to\infty} \frac{1}{\frac{1}{\phi_k(N+1)}} = \noloyaltyprob_k,
\end{align*}
where the final equality follows from the assumption that $\lim_{N\to\infty}\phi_k(N) = \noloyaltyprob_k$.
\hfill\Halmos
\end{proof}

\section{\Cref{sec:complete-info} Omitted Proofs}

\subsection{Proof of \Cref{thm:price-of-fairness}}\label{apx:price-of-fairness}
\begin{proof}{Proof.}
To prove the result, we bound the inverse of PoF, which we denote by $\gamma = \frac{\mathcal{R}^{\text{non-pers}}}{\mathcal{R}^{\text{pers}}}$. Let \mbox{$N_k^* \in \argmax_{N\in[\maxthreshold]\cup\{+\infty\}} R_k(N)$}, and $N^* \in \argmax_{N\in[\maxthreshold]\cup\{+\infty\}}\sum_{k\in[K]}\rho_kR_k(N)$ respectively be the optimal personalized and non-personalized thresholds, breaking ties arbitrarily. We index the types in increasing order of $N_k^*$, similarly breaking ties arbitrarily. Finally, for ease of notation we let \mbox{$\mathcal{R}_k^{\text{pers}}=\rho_k\cdot\frac{N_k^*}{\sum_{\tau=0}^{N_k^*}\frac{1}{\phi_k(\tau)}}$} be the long-run average revenue associated with type $k$ under their optimal personalized threshold, weighted by the fraction of type $k$ individuals $\rho_k$. Note that \mbox{$\mathcal{R}^{\text{pers}} = \sum_{k\in[K]}\mathcal{R}_k^{\text{pers}}$}.

Fix $k \in [K]$, and let $\mathbf{N} = (N_k^*, N_k^*,\ldots,N_k^*)$ be the vector that sets the same threshold $N_k^*$ for each type $j\in[K]$. By optimality of $N^*$, we have:
\begin{align}
\mathcal{R}^{\text{non-pers}} \geq R(\mathbf{N}) &= \sum_{j\in[K]}\rho_j\cdot \frac{N_k^*}{\sum_{\tau=0}^{N_k^*}\frac{1}{\phi_j(\tau)}} \notag \\
&= \left(\sum_{j < k}\rho_j\cdot \frac{N_k^*}{\sum_{\tau=0}^{N_k^*}\frac{1}{\phi_j(\tau)}}\right) + \mathcal{R}_k^{\text{pers}} + \left(\sum_{j > k}\rho_j\cdot \frac{N_k^*}{\sum_{\tau=0}^{N_k^*}\frac{1}{\phi_j(\tau)}}\right)  \label{eq:pers-bound-00}\\
&\geq \mathcal{R}_k^{\text{pers}} + \left(\sum_{j > k}\rho_j\cdot \frac{N_k^*}{\sum_{\tau=0}^{N_k^*}\frac{1}{\phi_j(\tau)}}\right) \notag\\ 
&=  \mathcal{R}_k^{\text{pers}} + \left(\sum_{j > k}\mathcal{R}_j^{\text{pers}}\cdot\frac{N_k^*}{N_j^*}\cdot\frac{\sum_{\tau=0}^{N_j^*}\frac{1}{\phi_j(\tau)}}{\sum_{\tau=0}^{N_k^*}\frac{1}{\phi_j(\tau)}}\right), \label{eq:pers-bound-1}
\end{align}
where the first equality applies \Cref{prop:closed-form-lr-avg-per-cust} to $R(\mathbf{N})$, the first inequality follows from trivially lower bounding the first term in \eqref{eq:pers-bound-00} by 0, and the final equality multiplies and divides each term in the summand by $\mathcal{R}_j^{\text{pers}} = \rho_j\cdot\frac{N_j^*}{\sum_{\tau=0}^{N_j^*}\frac{1}{\phi_j(\tau)}}$.

We first focus on bounding the ratio $\frac{\sum_{\tau=0}^{N_j^*}\frac{1}{\phi_j(\tau)}}{\sum_{\tau=0}^{N_k^*}\frac{1}{\phi_j(\tau)}}$, for all $j > k$. Since $\phi_j(\tau)$ is non-increasing, we have that $\phi_j(\tau)\leq \phi_j(N_k^*+1)$ for all $\tau \geq N_k^*+1$. Therefore:
\begin{align*}
\sum_{\tau=0}^{N_j^*}\frac{1}{\phi_j(\tau)} \geq \sum_{\tau=0}^{N_k^*}\frac{1}{\phi_j(\tau)} + (N_j^*-N_k^*)\cdot\frac{1}{\phi_j(N_k^*+1)}.
\end{align*}
Similarly, since $\phi_j(\tau) \geq \phi_j(N_k^*)$ for all $\tau \leq N_k^*$:
\begin{align*}
\sum_{\tau=0}^{N_k^*}\frac{1}{\phi_j(\tau)} \leq (N_k^*+1)\cdot\frac{1}{\phi_j(N_k^*)}.
\end{align*}
Putting these two bounds together yields:
\begin{align*}
\frac{\sum_{\tau=0}^{N_j^*}\frac{1}{\phi_j(\tau)}}{\sum_{\tau=0}^{N_k^*}\frac{1}{\phi_j(\tau)}} &\geq  \frac{\sum_{\tau=0}^{N_k^*}\frac{1}{\phi_j(\tau)} + (N_j^*-N_k^*)\cdot\frac{1}{\phi_j(N_k^*+1)}}{\sum_{\tau=0}^{N_k^*}\frac{1}{\phi_j(\tau)}} \\
&\geq 1 + \frac{(N_j^*-N_k^*)\cdot\frac{1}{\phi_j(N_k^*+1)}}{(N_k^*+1)\cdot\frac{1}{\phi_j(N_k^*)}}\\
&= 1 + \frac{(N_j^*-N_k^*)\cdot\phi_j(N_k^*)}{(N_k^*+1)\cdot\phi_j(N_k^*+1)}\\
&\geq 1 + \frac{N_j^*-N_k^*}{N_k^*+1}\\
&= \frac{N_j^*+1}{N_k^*+1},
\end{align*}
where the second inequality follows the fact that $\phi_j(\tau) \geq \phi_j(N_k^*)$ for all $\tau \leq N_k^*$, and the final inequality similarly uses $\phi_j(N_k^*) \geq \phi_j(N_k^*+1)$.

Plugging this into \eqref{eq:pers-bound-1}, we obtain that, for all $k \in [K]$
\begin{align*}
&\mathcal{R}^{\text{non-pers}} \geq \mathcal{R}_k^\text{pers} + \sum_{j > k} \mathcal{R}_j^{\text{pers}}\cdot\frac{N_k^*}{N_k^*+1}\cdot\frac{N_j^*+1}{N_j^*}.
\end{align*}
Dividing both sides by $\mathcal{R}^{\text{pers}} = \sum_{j\in[K]}\mathcal{R}_j^{\text{pers}}$ and taking the maximum over all $k \in [K]$, we obtain:
\begin{align}\label{eq:non-pers-better-than-max}
\gamma \geq \max_{k\in[K]}\left\{\frac{\mathcal{R}_k^{\text{pers}}+\frac{N_k^*}{N_k^*+1}\sum_{j > k}\frac{N_j^*+1}{N_j^*}\mathcal{R}_j^{\text{pers}}}{\sum_{j\in[K]}\mathcal{R}_j^{\text{pers}}}\right\}.
\end{align}

\Cref{lem:bounding-max} lower bounds the right-hand side of \eqref{eq:non-pers-better-than-max}, exclusively as a function of the optimal personalized thresholds $N_k^*$. We defer its proof to Appendix \ref{apx:non-pers-better-than-max}.
\begin{lemma}\label{lem:bounding-max}
\[\max_{k\in[K]}\left\{\frac{\mathcal{R}_k^{\text{pers}}+\frac{N_k^*}{N_k^*+1}\sum_{j > k}\frac{N_j^*+1}{N_j^*}\mathcal{R}_j^{\text{pers}}}{\sum_{j\in[K]}\mathcal{R}_j^{\text{pers}}}\right\} \geq \frac{1}{K-\sum_{k=1}^{K-1}\frac{N_k^*}{N_k^*+1}\cdot\frac{N_{k+1}^*+1}{N_{k+1}^*}}.\]
\end{lemma}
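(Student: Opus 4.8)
The plan is to lower bound the maximum by a single, carefully chosen convex combination of the $K$ quantities being maximized, and to pick the weights so that this combination telescopes to the target. Write $a_k = \frac{N_k^*}{N_k^*+1}$, $b_k = \frac{N_k^*+1}{N_k^*}$ (with the convention $a_k = b_k = 1$ when $N_k^* = +\infty$), $r_k = \mathcal{R}_k^{\text{pers}} \ge 0$, $R = \sum_k r_k > 0$, and $f_k = r_k + a_k\sum_{j>k} b_j r_j$, so that the left-hand side of the lemma is $\max_k f_k / R$ and the goal becomes $\max_k f_k \ge R/D$, where $D = K - \sum_{k=1}^{K-1} a_k b_{k+1}$. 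For any probability vector $\lambda = (\lambda_1,\ldots,\lambda_K)$ we have $\max_k f_k \ge \sum_k \lambda_k f_k$; interchanging the order of summation gives $\sum_k \lambda_k f_k = \sum_j r_j\, g_j$, where $g_j := \lambda_j + b_j\sum_{k<j} a_k \lambda_k$ is the coefficient of $r_j$. If I can find a $\lambda$ in the simplex for which every $g_j$ equals one common constant $c$, then $\sum_k \lambda_k f_k = c\sum_j r_j = cR$, and it only remains to identify $c = 1/D$.

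To produce such a $\lambda$, I would impose $g_j = c$ for all $j$ and solve the resulting recursion. Writing $S_j = \sum_{k<j} a_k \lambda_k$ (so $S_1 = 0$), the condition $g_j = c$ reads $\lambda_j = c - b_j S_j$, which yields $S_{j+1} = S_j + a_j\lambda_j = S_j(1 - a_j b_j) + a_j c$. The crux of the argument is the elementary identity $a_j b_j = \frac{N_j^*}{N_j^*+1}\cdot\frac{N_j^*+1}{N_j^*} = 1$, which collapses the recursion to $S_{j+1} = a_j c$. Hence $\lambda_1 = c$ and $\lambda_j = c\,(1 - a_{j-1} b_j)$ for $j \ge 2$.

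It then remains to check that this $\lambda$ is a genuine probability vector and to read off $c$. Because the types are indexed in nondecreasing order of $N_k^*$, the sequence $a_k$ is nondecreasing, so $a_{j-1} b_j \le a_j b_j = 1$, giving $\lambda_j \ge 0$; the same monotonicity shows each $a_k b_{k+1} \le 1$, whence $D \ge K - (K-1) = 1 > 0$ and $c > 0$. Summing, $\sum_j \lambda_j = c\big[1 + \sum_{j=2}^K (1 - a_{j-1} b_j)\big] = c\big[K - \sum_{k=1}^{K-1} a_k b_{k+1}\big] = cD$, where I reindexed $k = j-1$ in the last sum. Setting $\sum_j \lambda_j = 1$ forces $c = 1/D$, and combining with the previous paragraph gives $\max_k f_k \ge cR = R/D$, which is exactly the claim after dividing through by $R$. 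I expect the only genuine obstacle to be the discovery of the equalizing weights $\lambda$ (guided by the LP/minimax view that $\min_r \max_k f_k = \max_\lambda \min_j g_j$); once the identity $a_j b_j = 1$ is spotted, the recursion trivializes and everything else is bookkeeping, with the boundary case $N_k^* = +\infty$ handled uniformly by the convention $a_k = b_k = 1$.
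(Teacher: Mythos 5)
Your proof is correct, and it takes a genuinely different route from the paper's. The paper argues in the "primal": it views the left-hand side as a function of the revenue vector $(\mathcal{R}_1^{\text{pers}},\ldots,\mathcal{R}_K^{\text{pers}})$, asserts that the minimum of the maximum is attained when all $K$ numerators are equal, solves the equalization conditions by a backward induction (obtaining $\mathcal{R}_k^{\text{pers}} = \bigl(1-\tfrac{N_k^*}{N_k^*+1}\cdot\tfrac{N_{k+1}^*+1}{N_{k+1}^*}\bigr)\mathcal{R}_K^{\text{pers}}$), and evaluates the common value to get $1/D$. You instead exhibit an explicit dual certificate: a probability vector $\lambda$ for which $\sum_k \lambda_k f_k = R/D$ identically in $r$, so that $\max_k f_k \ge R/D$ follows for \emph{every} nonnegative revenue vector at once. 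The collapse of the recursion via $a_jb_j=1$ and the nonnegativity check $a_{j-1}b_j\le 1$ (using the indexing $N_1^*\le\cdots\le N_K^*$, which the paper fixes before invoking the lemma) are exactly the right ingredients, and your convention $a_k=b_k=1$ for $N_k^*=+\infty$ is consistent with the limiting values. What your approach buys is rigor and economy: the paper's step "the minimum is attained when all terms are equal" is asserted rather than proved, whereas your convex-combination bound needs no such extremality claim. What the paper's approach buys is the identification of the extremal revenue vector itself, which indicates where the bound is tight (and foreshadows the worst-case instance used for the $K=2$ tightness claim); your dual argument proves the inequality without exhibiting that worst case.
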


Therefore, it remains to lower bound $f(\npers{1},\ldots,\npers{K}):=\sum_{k=1}^{K-1}\frac{\npers{k}}{\npers{k}+1}\cdot\frac{\npers{k+1}+1}{\npers{k+1}}$. \Cref{lem:lb-telescoping-prod} provides the desired lower bound. We defer its proof to Appendix \ref{apx:lb-telescoping-prod}.
\begin{lemma}\label{lem:lb-telescoping-prod}
For all $(N_1,\ldots,N_K)$ such that $1 \leq N_1 \leq N_2 \leq \ldots \leq N_K$,
\[f(N_1,\ldots,N_K) \geq (K-1)2^{-1/(K-1)}.\]
\end{lemma}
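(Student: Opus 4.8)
The plan is to reparametrize each summand as a ratio of a monotone, bounded sequence, and then invoke the AM--GM inequality to collapse the resulting product telescopically. First I would set $a_k := \frac{N_k}{N_k+1}$ for each $k \in [K]$, so that $\frac{N_{k+1}+1}{N_{k+1}} = 1/a_{k+1}$ and hence each summand becomes $\frac{N_k}{N_k+1}\cdot\frac{N_{k+1}+1}{N_{k+1}} = \frac{a_k}{a_{k+1}}$, giving $f(N_1,\ldots,N_K) = \sum_{k=1}^{K-1}\frac{a_k}{a_{k+1}}$. Two elementary facts about the sequence $(a_k)$ will drive the argument: since $x \mapsto x/(x+1)$ is increasing, the hypothesis $N_1 \le \cdots \le N_K$ transfers to $a_1 \le \cdots \le a_K$; and since $N_k \ge 1$ for all $k$, we have $a_k \in [1/2, 1)$, so in particular $a_1 \ge 1/2$ and $a_K < 1$.

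Next I would apply AM--GM to the $K-1$ terms $\frac{a_k}{a_{k+1}}$, which yields
\[
\sum_{k=1}^{K-1}\frac{a_k}{a_{k+1}} \ge (K-1)\left(\prod_{k=1}^{K-1}\frac{a_k}{a_{k+1}}\right)^{1/(K-1)}.
\]
The product telescopes to $a_1/a_K$, so the right-hand side equals $(K-1)\left(a_1/a_K\right)^{1/(K-1)}$. Finally, using $a_1 \ge 1/2$ together with $a_K < 1$ (hence $a_K \le 1$), I would bound $a_1/a_K \ge 1/2$; since $t \mapsto t^{1/(K-1)}$ is increasing, this gives $f \ge (K-1)2^{-1/(K-1)}$, as claimed.

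There is essentially no serious obstacle here once the substitution $a_k = N_k/(N_k+1)$ is made. The only points requiring care are (i) verifying that the ordering of the $N_k$ transfers to the $a_k$ and that $a_1 \ge 1/2$, which is precisely where the assumption $N_1 \ge 1$ is used, and (ii) bounding the telescoping product correctly — the estimate $a_1/a_K \ge 1/2$ is the single place where both the lower bound on $a_1$ and the upper bound on $a_K$ enter simultaneously. The AM--GM step is the one inequality in the chain that is not an equality, converting the additive sum into the multiplicative telescope that makes the boundary terms $a_1$ and $a_K$ the only relevant quantities.
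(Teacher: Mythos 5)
Your proof is correct, and it shares the paper's skeleton (the substitution $a_k = N_k/(N_k+1)$, the telescoping product $a_1/a_K$, and the endpoint bound $a_1 \ge 1/2$, $a_K \le 1$) but replaces the paper's key step with a genuinely different, more elementary one. The paper proves the intermediate bound $f \ge (K-1)(a_1/a_K)^{1/(K-1)}$ variationally: it fixes the boundary values $a_1, a_K$, sets the partial derivatives in the interior coordinates to zero, observes that the critical point makes all ratios $a_k/a_{k+1}$ equal (a geometric interpolation between $a_1$ and $a_K$), and evaluates $f$ there. You instead obtain the same bound in one line from AM--GM applied to the $K-1$ positive terms $a_k/a_{k+1}$, whose product telescopes. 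Your route is not only shorter but arguably more rigorous: the paper's argument implicitly treats the coordinate-wise first-order condition as identifying the global minimum of a joint minimization, and since $(x,y)\mapsto x/y$ is not jointly convex, fully justifying that step would require extra work (e.g., checking that no boundary point of the box does better); AM--GM sidesteps all of this. One small remark: the monotonicity $a_1 \le \cdots \le a_K$ that you carefully verify is never actually used --- only $a_1 \ge 1/2$ and $a_K \le 1$ enter the final bound, which is consistent with the paper minimizing freely over $(a_1,a_K) \in [1/2,1]^2$.
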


Applying \Cref{lem:bounding-max,lem:lb-telescoping-prod} to \Cref{eq:non-pers-better-than-max}, we obtain our final bound of:
\[\gamma \geq \frac{1}{K-(K-1)2^{-1/(K-1)}}.\]
Taking the inverse of this quantity provides the first bound in the statement of the theorem. To obtain the final bound of $1+\ln 2$, observe that:
\begin{align*}
K-(K-1)2^{-1/(K-1)} &= K-(K-1)\exp\left(\ln (2^{-1/(K-1)})\right)\\
&\leq K-(K-1)\left(1-\frac{\ln 2}{K-1}\right) \qquad \qquad \qquad \text{since } e^x \geq 1+x\\
&= K-(K-1)+\ln 2\\
&= 1 + \ln 2.
\end{align*}
We complete the proof of the theorem by establishing tightness when $K = 2$ in \Cref{lem:pers-tight} below. We defer its proof to Appendix \ref{apx:pers-tight}.
\begin{lemma}\label{lem:pers-tight}
For $K = 2$, there exists an instance such that $PoF = 3/2$.
\end{lemma}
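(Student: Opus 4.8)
The plan is to prove tightness by exhibiting a single extremal two-type instance, with $\rho_1 = \rho_2 = 1/2$, whose price of fairness equals exactly $3/2$; since the theorem's general bound specializes to $K - (K-1)2^{-1/(K-1)} = 2 - \tfrac12 = \tfrac32$ when $K=2$, producing one instance that attains this value certifies the bound is tight. The construction is the ``frequent versus infrequent'' instance anticipated in \Cref{sec:complete-info}, pushed to its extreme. I would take type $2$ to be a perfectly frequent, reward-insensitive customer with $\phi_2(\tau) = 1$ for all $\tau$ (so $\bar{\phi}_2 = 1$), and type $1$ to be a reward-sensitive customer who only purchases within reach of a reward, formalized by $\phi_1(0) = \phi_1(1) = 1$ and $\phi_1(\tau) = 0$ for all $\tau \ge 2$ (so $\bar{\phi}_1 = 0$). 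Both functions are non-increasing and satisfy $\lim_{\tau\to\infty}\phi_k(\tau) = \bar{\phi}_k$, as the model requires.

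First I would compute the two optimal personalized revenues via the closed form $R_k(N) = N / \sum_{\tau=0}^N \tfrac{1}{\phi_k(\tau)}$ of \Cref{prop:closed-form-lr-avg-per-cust}. For type $1$, $R_1(1) = (1+1)^{-1} = \tfrac12$, whereas $R_1(N) = 0$ for every $N \ge 2$ (interpreting $1/0 = +\infty$, the customer is stuck away from redemption) and $R_1(+\infty) = \bar{\phi}_1 = 0$; hence $N_1^* = 1$ with value $\tfrac12$. For type $2$, $R_2(N) = N/(N+1)$ is increasing, so $N_2^* = +\infty$ with value $R_2(+\infty) = 1$. Weighting by $\rho_1 = \rho_2 = \tfrac12$ gives $\mathcal{R}^{\text{pers}} = \tfrac12\cdot\tfrac12 + \tfrac12\cdot 1 = \tfrac34$.

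Next I would maximize the common-threshold revenue $R(N) = \tfrac12 R_1(N) + \tfrac12 R_2(N)$ over $N \in \{1,2,\dots\}\cup\{+\infty\}$. At $N = 1$ both types contribute $\tfrac12$, so $R(1) = \tfrac12$; for finite $N \ge 2$ we have $R_1(N) = 0$ and $R_2(N) = N/(N+1) < 1$, hence $R(N) < \tfrac12$; and at $N = +\infty$ the reward-sensitive type contributes nothing while type $2$ contributes $1$, so $R(+\infty) = \tfrac12$. Therefore $\mathcal{R}^{\text{non-pers}} = \tfrac12$, attained simultaneously at the smallest finite threshold and at the no-loyalty option, which is precisely the deadlock that fairness imposes. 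Dividing gives $\text{PoF} = (3/4)/(1/2) = 3/2$, matching the upper bound.

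The only delicate point is the use of the boundary values $\bar{\phi}_1 = 0$ and $\phi_1(\tau) = 0$ for $\tau \ge 2$, which make $R_1$ degenerate; I would either admit these directly or, to stay strictly inside the interior of the model, replace them by $\bar{\phi}_1 = \varepsilon$ and $\phi_1(\tau) = \varepsilon$ for $\tau \ge 2$ and check that $\text{PoF} = \tfrac{3/4}{1/2 + \varepsilon/2} \to \tfrac32$ as $\varepsilon \downarrow 0$, so that $\tfrac32$ is the supremum and is attained by the limiting instance. The genuine crux is identifying this extremal construction — the choice of equal proportions $\rho_1 = \rho_2$ and of purchase probabilities pinned to $\{0,1\}$, so that $N=1$ and $N=+\infty$ tie at the non-personalized optimum while personalization collects from both types; once the instance is fixed, the remaining arithmetic is routine.
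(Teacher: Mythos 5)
Your construction is exactly the paper's tight instance --- equal proportions $\rho_1=\rho_2=\tfrac12$, a reward-sensitive type with $\phi_1(0)=\phi_1(1)=1$, $\bar{\phi}_1=0$, and an always-purchasing type with $\phi_2\equiv\bar{\phi}_2=1$ --- and your computations $\mathcal{R}^{\text{pers}}=\tfrac34$, $\mathcal{R}^{\text{non-pers}}=\tfrac12$, hence $\text{PoF}=\tfrac32$, match the paper's. The only difference is that the paper simply sets $N_{\max}=1$, so the feasible thresholds are just $\{1,+\infty\}$ and the degenerate cases $N\ge 2$ (where you must extend $\phi_1$ by zero, deal with non-irreducible chains outside the scope of the closed-form proposition, and invoke the $1/0=+\infty$ convention or an $\varepsilon$-perturbation) never arise; with that one simplification your argument is the paper's proof verbatim.
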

\hfill\Halmos
\end{proof}

\subsubsection{Proof of \Cref{lem:bounding-max}}\label{apx:non-pers-better-than-max}

\begin{proof}{Proof.}
We seek to lower bound the following quantity:
\[\max_{k\in[K]}\left\{\frac{\mathcal{R}_k^{\text{pers}}+\frac{N_k^*}{N_k^*+1}\sum_{j > k}\frac{N_j^*+1}{N_j^*}\mathcal{R}_j^{\text{pers}}}{\sum_{j\in[K]}\mathcal{R}_j^{\text{pers}}}\right\}.\]
Observe that the minimum of the quantity of interest is attained when all $K$ terms are equal. Moreover, the denominator has no dependence on $k$, so it suffices to find the minimum-value solution such that all $K$ numerators are equal. We prove by induction that equality holds uniquely for all $(\mathcal{R}_1^{\text{pers}},\ldots,\mathcal{R}_K^{\text{pers}})$ satisfying:
\begin{align}\label{eq:equal}
\mathcal{R}_k^{\text{pers}} = \left(1-\frac{N_k^*}{N_k^*+1}\cdot\frac{N_{k+1}^*+1}{N_{k+1}^*}\right)\mathcal{R}_K^{\text{pers}} \quad \ \forall \ k \in [K-1].
\end{align}

\noindent\textbf{Base case: $k=K-1$.} In this case, we seek to solve:
\begin{align*}
&\rpers{K-1} + \frac{\npers{K-1}}{\npers{K-1}+1}\cdot\frac{\npers{K}+1}{\npers{K}}\cdot\rpers{K} = \rpers{K} \\
\iff &\rpers{K-1} = \left(1-\frac{\npers{K-1}}{\npers{K-1}+1}\cdot\frac{\npers{K}+1}{\npers{K}}\right)\rpers{K},
\end{align*}
which completes the proof of the base case.

\medskip 

\noindent\textbf{Inductive step.} Fix $k \in \{2,\ldots,K-1\}$, and suppose \eqref{eq:equal} holds for all $k' \geq k$. We prove that it also holds for $k-1$. Again, we seek to solve:
\begin{align*}
&\rpers{k-1}+\frac{\npers{k-1}}{\npers{k-1}+1}\sum_{j > k-1}\frac{N_j^*+1}{N_j^*}\mathcal{R}_j^{\text{pers}} = \rpers{k}+\frac{\npers{k}}{\npers{k}+1}\sum_{j>k}\frac{N_j^*+1}{N_j^*}\mathcal{R}_j^{\text{pers}}\\
\iff &\rpers{k-1} = \rpers{K}\Bigg[1-\frac{\npers{k}}{\npers{k}+1}\frac{\npers{k+1}+1}{\npers{k+1}} + \frac{\npers{k}}{\npers{k}+1}\sum_{j>k}\frac{\npers{j}+1}{\npers{j}}\left(1-\frac{\npers{j}}{\npers{j}+1}\cdot\frac{\npers{j+1}+1}{\npers{j+1}}\right)\\&\qquad\qquad\qquad-\frac{\npers{k-1}}{\npers{k-1}+1}\sum_{j>k-1}\frac{\npers{j}+1}{\npers{j}}\left(1-\frac{\npers{j}}{\npers{j}+1}\cdot\frac{\npers{j+1}+1}{\npers{j+1}}\right)\Bigg],
\end{align*}
where the second line follows from the inductive hypothesis. Further simplifying, we have:
\begin{align*}
\rpers{k-1} &= \rpers{K}\Bigg[1-\frac{\npers{k-1}}{\npers{k-1}+1}\sum_{j > k-1}\frac{\npers{j}+1}{\npers{j}} + \frac{\npers{k-1}}{\npers{k-1}+1}\sum_{j>k}\frac{\npers{j}+1}{\npers{j}} \Bigg]\\
&=\rpers{K}\Bigg[1-\frac{\npers{k-1}}{\npers{k-1}+1}\cdot\frac{\npers{k}+1}{\npers{k}}\Bigg],
\end{align*}
which completes the proof of the fact that the minimum of $\max_{k\in[K]}\left\{\frac{\mathcal{R}_k^{\text{pers}}+\frac{N_k^*}{N_k^*+1}\sum_{j > k}\frac{N_j^*+1}{N_j^*}\mathcal{R}_j^{\text{pers}}}{\sum_{j\in[K]}\mathcal{R}_j^{\text{pers}}}\right\}$ is achieved at $(\rpers{1},\ldots,\rpers{K})$ satisfying \eqref{eq:equal}. Using this fact, we have:
\begin{align*}
\max_{k\in[K]}\left\{\frac{\mathcal{R}_k^{\text{pers}}+\frac{N_k^*}{N_k^*+1}\sum_{j > k}\frac{N_j^*+1}{N_j^*}\mathcal{R}_j^{\text{pers}}}{\sum_{j\in[K]}\mathcal{R}_j^{\text{pers}}}\right\} &\geq \frac{\rpers{K}}{\rpers{K}\left(1+\sum_{k =1}^{K-1}\left(1-\frac{\npers{k}}{\npers{k}+1}\cdot\frac{\npers{k+1}+1}{\npers{k+1}}\right)\right)}\\
&\geq \frac{1}{K-\sum_{k=1}^{K-1}\frac{\npers{k}}{\npers{k}+1}\cdot\frac{\npers{k+1}+1}{\npers{k+1}}}.
\end{align*}
\hfill\Halmos
\end{proof}

\subsubsection{Proof of \Cref{lem:lb-telescoping-prod}}\label{apx:lb-telescoping-prod}

\begin{proof}{Proof.}
For $k \in [K]$, let $a_k = \frac{N_k}{N_k+1}$. Since $N_k \geq 1$, we have $a_k \in [1/2,1]$ for all $k \in [K]$. With this notation in hand, we can equivalently re-write $f$ as $f(a_1,\ldots,a_K) = \sum_{k=1}^{K-1}\frac{a_k}{a_{k+1}}$.

Fix $k\in\{2,\ldots,K-1\}$, and define $a_{-k} = (a_1,\ldots,a_{k-1},a_{k+1},a_K) \in [1/2,1]^{K-1}$. Given $a_{-k}$, $f(a_1,\ldots,a_K)$ is minimized at $a_k$ such that 
\begin{align*}
\frac{\partial f}{\partial a_k} = \frac{\partial}{\partial a_k} \left[\frac{a_{k-1}}{a_k}+\frac{a_k}{a_{k+1}}\right] = 0 \iff a_k = \sqrt{a_{k-1}a_{k+1}}.
\end{align*}
Note that $\frac{a_k-1}{a_{k}} = \frac{a_{k}}{a_{k+1}}$ for all $k = \{2,\ldots,K-1\}$ under this solution, which therefore satisfies:
\begin{align*}
\left(\frac{a_1}{a_2}\right)^{K-1} &= \frac{a_1}{a_2}\times\frac{a_2}{a_3}\times\ldots\frac{a_{K-1}}{a_K} = \frac{a_1}{a_K}
\implies \frac{a_{k}}{a_{k+1}} = \left(\frac{a_1}{a_K}\right)^{1/(K-1)} \quad \forall \ k = 1,\ldots, K-1.
\end{align*}
We use this to conclude that, for all $(a_1,\ldots,a_K)$:
\begin{align*}
f(a_1,\ldots,a_K) \geq \min_{(a_1,a_K)\in[1/2,1]^2}(K-1)\left(\frac{a_1}{a_K}\right)^{1/(K-1)} = (K-1)2^{-1/(K-1)},
\end{align*}
attained at $a_1 = 1/2$ and $a_K = 1$.
\hfill\Halmos
\end{proof}

\subsubsection{Proof of \Cref{lem:pers-tight}}\label{apx:pers-tight}
\begin{proof}{Proof.}
Consider the instance for which $\maxthreshold = 1$, $\rho_1 = \rho_2 = \frac12$, $\phi_1(0) = \phi_1(1) = 1$, $\bar{\phi}_1 = 0$, and \mbox{$\phi_2 (0) = \phi_2(1) = \bar{\phi}_2 = 1$}. In words, type 1 customers are highly sensitive to the \bogo\ program, purchasing the product with probability one in each period in its presence, and never purchasing the product in its absence. Type 2 customers, on the other hand, always purchase the product, whether or not the seller implements the \bogo\ program.

It is easy to see that, for this instance, $\npers{1} = 1$ and $\npers{2} = +\infty$, with 
\begin{align*}
\mathcal{R}^{\text{pers}} &= \frac12\cdot\frac{1}{\frac{1}{\phi_1(0)} + \frac{1}{\phi_1(1)}} + \frac12 \bar{\phi}_2 = \frac14 + \frac12 = \frac34.
\end{align*}
We now compute $\mathcal{R}^{\text{non-pers}}$ by comparing $R(1,1)$ to no-loyalty revenue, i.e., $\frac12\bar{\phi}_1 + \frac12\bar{\phi}_2 = \frac12$. We have:
\begin{align*}
R(1,1) = \frac12\cdot\frac{1}{\frac{1}{\phi_1(0)} + \frac{1}{\phi_1(1)}} + \frac12 \frac{1}{\frac{1}{\phi_2(0)} + \frac{1}{\phi_2(1)}} = \frac12.
\end{align*}
That is, the seller is indifferent between implementing a loyalty program or not in this setting. Taking the ratio of $\mathcal{R}^{\text{pers}}$ to $\mathcal{R}^{\text{non-pers}}$, we obtain that $\text{PoF} = (3/4)/(1/2) = 3/2$.
\hfill\Halmos
\end{proof}

\medskip

\section{\Cref{sec:learning} Omitted Proofs}

\subsection{Proof of \Cref{thm: lower bound}}\label{apx:lb}

\begin{proof}{Proof.}
We construct two instances, both of which assume the population is homogeneous (i.e., \mbox{$K = 1$}). As a result, we omit the dependence on $k$ in the remainder of the proof. 

Fix $\Delta \in (0,\frac12]$. Both of our instances have $N_{\max} = 2$, respectively defined by the following GLM:
\begin{align*}
\begin{cases}(\bar{\phi},\beta_0,\beta_2)  = \left(\sqrt{\frac{1-\Delta}{8}}-\frac14,\frac34-\sqrt{\frac{1-\Delta}{8}}, \sqrt{\frac{1-\Delta}{8}}-\frac12\right)\\
\phi(\tau) = \bar{\phi}+(\beta_0+\beta_2\tau)_+ = \frac12+\beta_2\tau \quad \forall \ \tau \in \{0,1,2\},
\end{cases}
\end{align*}
and 
\begin{align*}
\begin{cases}(\bar{\phi}', \beta_0', \beta_2') = \left(\sqrt{\frac{1+\Delta}{8}}-\frac14,\frac34-\sqrt{\frac{1+\Delta}{8}},\sqrt{\frac{1+\Delta}{8}}-\frac12\right)\\
\phi'(\tau) = \bar{\phi}'+(\beta_0'+\beta_2'\tau)_+ = \frac12 + \beta_2'\tau \quad \forall \ \tau \in \{0,1,2\}.
\end{cases}
\end{align*}
It is straightforward to verify that \Cref{asp:glm} holds for both instances, for any $\Delta \in (0,\frac12]$. For ease of notation, we define $\beta_1 = \beta_1' = \frac12$, and re-parameterize each instance by $\beta = (\beta_1, \beta_2)$ and \mbox{$\beta' = (\beta_1',\beta_2')$}, respectively. Moreover, for clarity of exposition we abuse notation and let $R(N;\beta)$ and $R(N;\beta')$ respectively denote the long-run average revenue for the instances defined by $\beta$ and $\beta'$. We rely on the following lemma to bound the regret for each instance, deferring its proof to Appendix \ref{apx:rev-diff}.
\begin{lemma}\label{lem:rev-diff}
For the two instances defined above, the following hold:
\begin{align*}
&R(1;\beta)-R(2;\beta) = \frac{\Delta  \sqrt{1-\Delta}}{2\prns{\sqrt{1-\Delta} +\sqrt{2}} \prns{ \sqrt{2-2\Delta}-\Delta} } > 0\\
&R(1;\beta')-R(2;\beta') = \frac{- \Delta  \sqrt{1+\Delta}}{2\prns{\sqrt{\Delta+1} +\sqrt{2}} \prns{\Delta + \sqrt{2+2\Delta}} } < 0.
\end{align*}
Moreover, for both instances, $R(2;\beta)-\bar{\phi} > 0$ and $R(1;\beta')-\bar{\phi}' > 0$.
\end{lemma}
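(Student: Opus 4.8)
The plan is to prove all three claims by direct evaluation of the closed-form long-run average revenue $R(N;\beta) = N/\sum_{\tau=0}^{N}\tfrac{1}{\phi(\tau)}$ established in \Cref{prop:closed-form-lr-avg-per-cust}, specialized to $N_{\max}=2$. For the instance $\beta$, the construction yields $\phi(0)=\tfrac12$, $\phi(1)=a$, and $\phi(2)=2a-\tfrac12$, where $a:=\sqrt{(1-\Delta)/8}$, so that
\[
R(1;\beta)=\frac{1}{2+1/a},\qquad R(2;\beta)=\frac{2}{\,2+1/a+1/(2a-\tfrac12)\,}.
\]
I would then substitute $s:=\sqrt{1-\Delta}$ (hence $a=s/(2\sqrt2)$, $1/a=2\sqrt2/s$, and $2a-\tfrac12=(\sqrt2 s-1)/2$), which collapses the nested radicals into a single variable. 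After clearing denominators one finds $R(1;\beta)=s/\big(2(s+\sqrt2)\big)$ and $R(2;\beta)=s(\sqrt2 s-1)/\big(\sqrt2(s^2+\sqrt2 s-1)\big)$.

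Subtracting and combining over the common denominator $2\sqrt2(s+\sqrt2)(s^2+\sqrt2 s-1)$, the numerator simplifies remarkably to $\sqrt2\,s\,(1-s^2)$; since $1-s^2=\Delta$, this gives
\[
R(1;\beta)-R(2;\beta)=\frac{s\,\Delta}{2(s+\sqrt2)(s^2+\sqrt2 s-1)},
\]
and re-expressing $s^2+\sqrt2 s-1=\sqrt2 s-(1-s^2)=\sqrt{2-2\Delta}-\Delta$ recovers the stated identity. The second identity follows verbatim from the same computation with $s':=\sqrt{1+\Delta}$ (so that $a'=\sqrt{(1+\Delta)/8}$ and $\phi'(2)=2a'-\tfrac12$) in place of $s$; the only change is that $1-s'^2=-\Delta$, which flips the sign of the numerator and produces the factor $\Delta+\sqrt{2+2\Delta}$ in the denominator. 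This unifying structure makes the two cases essentially one computation.

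For the sign assertions it suffices to check positivity of each denominator factor on $\Delta\in(0,\tfrac12)$: $s,s'>0$ trivially, while $\sqrt{2-2\Delta}-\Delta\ge 1-\Delta>0$ and $\Delta+\sqrt{2+2\Delta}>0$. Hence the first difference is strictly positive and the second strictly negative. For the final claim, I would compare against $\bar{\phi}=a-\tfrac14=(\sqrt2 s-1)/4$ and $\bar{\phi}'=(\sqrt2 s'-1)/4$: writing $R(2;\beta)-\bar{\phi}$ over a common denominator and factoring out $(\sqrt2 s-1)\ge 0$ leaves a bracket whose numerator is $-\sqrt2(s^2-\sqrt2 s-1)>0$ for $s\le 1$, so the whole expression is strictly positive; a symmetric manipulation handles $R(1;\beta')-\bar{\phi}'$. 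Together with the first two parts, these inequalities pin down $N^*=1$ for instance $\beta$ and $N^*=2$ for instance $\beta'$, which is the property the lower-bound construction needs.

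The main obstacle is purely bookkeeping: the raw expressions carry nested radicals $\sqrt{(1\mp\Delta)/8}$ buried inside $1/(2a-\tfrac12)$, and matching the simplified differences to the precise factored denominators stated in the lemma requires care. The substitution $s=\sqrt{1\mp\Delta}$ is the crucial device, turning each revenue into a rational function and revealing the clean numerator $\sqrt2\,s\,(1-s^2)$. I would also flag the boundary $\Delta=\tfrac12$, where $\phi(2)=2a-\tfrac12\to 0$ so that $R(2;\beta)\to 0$ and the first inequality of the last claim degenerates to an equality; the strict inequalities hold on the open interval, which is all that is used downstream.
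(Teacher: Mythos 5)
Your proposal is correct, and your verification of the two closed-form identities is essentially the paper's computation in different clothing: the paper subtracts the two revenues symbolically in terms of $(\beta_1,\beta_2)$, factors the resulting rational function as $-\beta_1(\beta_1+\beta_2)(\beta_1^2+4\beta_1\beta_2+2\beta_2^2)/[(2\beta_1+\beta_2)(3\beta_1^2+6\beta_1\beta_2+2\beta_2^2)]$, and then substitutes the definitions, whereas you substitute $s=\sqrt{1\mp\Delta}$ first so that both instances collapse into a single rational-function calculation with numerator $\sqrt2\,s\,(1-s^2)$; the two routes are algebraically equivalent, though yours handles $\beta$ and $\beta'$ in one pass. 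Where you genuinely depart from the paper is the last claim. The paper proves $R(2;\beta)\geq\bar\phi$ by a calculus argument: it introduces $f(x)=\tfrac{1}{2+1/x+1/(2x-1/2)}-x$, computes $f'$, locates the unique critical point on $[1/4,\sqrt{1/8})$, and checks endpoint values to get $f\geq -1/4$ (and uses a monotone auxiliary function $g$ for the $\beta'$ instance); you instead factor $(\sqrt2 s-1)$ out of $R(2;\beta)-\bar\phi$ and reduce the sign question to that of the quadratic $-(s^2-\sqrt2 s-1)$, which is more elementary and avoids derivative bookkeeping. Your flag on the boundary $\Delta=\tfrac12$ is also a sharper observation than what the paper records: there $\phi(2)=0$ and $\bar\phi=0$, so $R(2;\beta)-\bar\phi=0$ and the lemma's strict inequality degenerates to equality (consistent with the fact that the paper's own chain of inequalities only concludes $\geq 0$ at that endpoint); since the lower-bound construction takes $\Delta=\tfrac{1}{2\sqrt{(1+\sqrt2)MT}}<\tfrac12$, strictness on the open interval is all that is ever used downstream, exactly as you note.
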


\Cref{lem:rev-diff} implies that $N^* = 1$ for the first instance and $N^* = 2$ for the second, with the no-loyalty option yielding the least revenue in both cases. 

We now introduce some additional notation. Fix a policy $\pi$. {Since $\policy$ is fixed, we remove the dependence of all quantities on $\policy$ in the notation throughout this proof.} Let $J_n(t) = \sum_{s=1}^t \mathds{1}\braces{N_t^\pi = n}$ be the number of times threshold $n$ was chosen by $\policy$ by the end of round $t$.  We use  $\pr_\beta$ to denote the probability measure on the $\sigma$-algebra generated by the random trajectory $(N_t,\tau_{jt}, X_{jt}, \ \forall \ t \in [T], j \in \mathcal{M})$ induced by $\policy$ over the $T$ rounds of interaction when the true parameter is $\beta$, and $\pr_{\beta'}$ the probability measure when the true parameter is $\beta'$. Finally, we abuse notation and let $\expect_{\beta}\bracks{\cdot}$ denote the expectation when the true parameter is $\beta$, and define $\text{Reg}_T(\policy, \beta) = \expect_\beta\bracks{MT\mathcal{R}^{\text{non-pers}}-\sum_{t\in [T]} MR(N_t)}$ to be the expected regret of $\policy$ under $\beta$ (and analogously for $\beta'$). 

When the true parameter is $\beta$, \Cref{lem:rev-diff} implies that $\pi$ incurs regret in all rounds $t$ such that \mbox{$N_t \in \{2, + \infty\}$}. Since $R(2;\beta)>\bar{\phi}$, this implies: 
\begin{align}\label{eq:threshold-1-loss}
\text{Reg}_T(\policy, \beta) &\geq |R(1;\beta)-R(2;\beta)|\cdot M\cdot \expect_\beta\bracks{J_2(T)+J_{\infty}(T)} \notag \\
&\geq |R(1;\beta)-R(2;\beta)|\cdot\frac{MT}{2} \cdot \pr_\beta\prns{J_1(T)\le\frac{T}{2}},
\end{align}
where the final inequality follows from Markov's inequality.

Similarly, when the true parameter is $\beta'$, \Cref{lem:rev-diff} implies that $\pi$ incurs regret in all rounds $t$ such that $N_t \in \{1, +\infty\}$. Since $R(1;\beta') > \bar{\phi}$, we have in this case:
\begin{align}\label{eq:threshold-2-loss}
         \text{Reg}_T(\policy, \beta')&\ge |R(1;\beta')-R(2;\beta')|\cdot M\cdot \expect_{\beta'}\bracks{J_1(T)+J_{\infty}(T)} \notag \\
         &\geq |R(1;\beta')-R(2;\beta')|\cdot\frac{MT}{2} \cdot \pr_{\beta'}\prns{J_1(T)>\frac{T}{2}},
\end{align}
where the final inequality uses the loose lower bound $J_{\infty}(T) \geq 0$, and similarly uses Markov's inequality.

We bound $|R(1;\beta)-R(2;\beta)| $ and $|R(1;\beta')-R(2;\beta')|$, again using \Cref{lem:rev-diff}. Namely:
    \begin{align*}
      \abs{R(1;\beta) - R(2;\beta) } &= \abs{\frac{\Delta  \sqrt{1-\Delta}}{2\prns{\sqrt{1-\Delta} +\sqrt{2}} \prns{ \sqrt{2-2\Delta}-\Delta} }} \\
      &= \abs{\frac{\Delta}{2\prns{\sqrt{1-\Delta}+\sqrt{2}} \prns{ \sqrt{2}-\Delta/\sqrt{1-\Delta}} }} \\
      &\geq \frac{\Delta}{2(2+\sqrt{2})}\\
      &\geq \frac{\Delta}{10},
    \end{align*}
    where the first inequality uses the fact that $2\prns{\sqrt{1-\Delta} +\sqrt{2}} \prns{ \sqrt{2}-\Delta/\sqrt{1-\Delta}}$ is maximized at $\Delta = 0$ over the range $[0,1/2]$. Similarly,
    \begin{align*}
    \abs{R(1;\beta') - R(2;\beta') } 
    &= \abs{\frac{- \Delta  \sqrt{1+\Delta}}{2\prns{\sqrt{\Delta+1} +\sqrt{2}} \prns{\Delta + \sqrt{2+2\Delta}} }} \\
    &= \abs{\frac{- \Delta }{2\prns{\sqrt{\Delta+1} +\sqrt{2}} \prns{\Delta/\sqrt{1+\Delta} + \sqrt{2}} }} \\
    &\geq \frac{\Delta}{5+8/\sqrt{3}}\\
    &\geq \frac{\Delta}{10},
    \end{align*}
     where the first inequality uses the fact that $2\prns{\sqrt{\Delta+1} +\sqrt{2}} \prns{\Delta/\sqrt{1+\Delta} + \sqrt{2}} $ is maximized at $\Delta = 1/2$ over the range $[0,1/2]$.

     Plugging these two bounds into \eqref{eq:threshold-1-loss} and \eqref{eq:threshold-2-loss}, respectively, and summing, we obtain:
     \begin{align*}
\text{Reg}_T(\pi,\beta) + \text{Reg}_T(\pi,\beta') &\geq \frac{\Delta MT}{20}\left(\pr_{\beta}\prns{J_1(T)\leq\frac{T}{2}}+\pr_{\beta'}\prns{J_1(T)>\frac{T}{2}}\right).
\end{align*}
Let $D\prns{\pr_\beta, \pr_{\beta'}}$ denote the relative entropy between the measures $\pr_\beta$ and $\pr_{\beta'}$. By the Bretagnolle-Huber inequality \citep[Theorem 14.2]{lattimore2020bandit}, we have:
     \begin{align}\label{eq:to-combine}
         \text{Reg}_T(\policy, \beta) + \text{Reg}_T(\policy, \beta')
         &\geq \frac{\Delta M T}{40}\exp\prns{-D\prns{\pr_\beta, \pr_{\beta'}}}.
     \end{align}

    Hence, it remains to upper bound the relative entropy $D\prns{\pr_\beta, \pr_{\beta'}}$. Let $p_\beta$ and $p_{\beta'}$ respectively denote the probability mass functions on any realized trajectory of policy $\pi$ under parameters $\beta$ and $\beta'$. 
    Moreover, let $\mathcal{H}_t = (N_1, \tau_{11}, \dots, \tau_{M1}, X_{11}, \dots, X_{M1}, \dots, N_t, \tau_{1t}, \dots, \tau_{Mt}, X_{1t}, \dots, X_{Mt})$ be the random history of all thresholds, points to redemption, and purchase decisions of every customer up until the end of period $t$. We use the conventions that $h_t$ denotes a realization of $\mathcal{H}_t$, and  $\mathcal{H}_0$ is the empty set.
    By definition,
    \begin{align*}
        D\prns{\pr_\beta, \pr_{\beta'}} &= \sum_{h_T}p_\beta (h_T)\log\prns{\frac{p_\beta (h_T)}{p_{\beta'}(h_T)}}= \expect_\beta \bracks{\log\prns{\frac{p_\beta (\mathcal{H}_T)}{p_{\beta'}(\mathcal{H}_T)}}},
    \end{align*}
    where the summation in the first equality is over all possible trajectories, with the convention $0\log(\cdot) = 0$. 
    
    Since the $M$ customers are independent, by the chain rule, we can write $p_\beta$ as:
    \begin{align*}
         & p_\beta (h_T) \\
        &= \prod_{t=1}^T \bracks{p_\beta \prns{n_t\mid h_{t-1}} \prns{\prod_{j=1}^M p_\beta \prns{\tau_{jt} \mid n_t, \tau_{1t},\dots, \tau_{j,t-1}, h_{t-1}} }\prns{\prod_{j=1}^M p_{\beta}\prns{x_{jt}\mid n_t, \tau_{1t},\dots, \tau_{Mt}, x_{1t}, \dots, x_{j,t-1}, h_{t-1}}} }, 
    \end{align*}
    where we abuse notation slightly to also let $p_\beta$ denote the conditional probability masses. 

    Note that the points to redemption $\tau_{jt}$ depends only on the history through $N_t$, $N_{t-1},\tau_{j,t-1}$, and $X_{j,t-1}$\footnote{{Here, $\tau_{jt}$ has dependence on both $N_t$ and $N_{t-1}$ since $\pi$ may vary the threshold in the middle of a customer's redemption cycle.}}, and the decision $X_{jt}$ only depends on the points to redemption $\tau_{jt}$. Therefore: 
    \begin{align*}
     p_\beta (h_T) &= \prod_{t=1}^T \bracks{p_\beta \prns{n_t\mid h_{t-1}} \prns{\prod_{j=1}^M p_\beta \prns{\tau_{jt} \mid n_t, n_{t-1},\tau_{j,t-1},x_{j,t-1}} }\prns{\prod_{j=1}^M p_{\beta}\prns{x_{jt}\mid \tau_{jt}} } }.
    \end{align*}
 Similarly,
\begin{align}\label{eq:pbetaprime}
         p_{\beta'} (h_T) 
        &= \prod_{t=1}^T \bracks{p_{\beta'} \prns{n_t\mid h_{t-1}} \prns{\prod_{j=1}^M p_{\beta'} \prns{\tau_{jt} \mid n_t, n_{t-1},\tau_{j,t-1},x_{j,t-1}} }\prns{\prod_{j=1}^M p_{\beta'}\prns{x_{jt}\mid \tau_{jt}} } }
\end{align}
Note that, given $h_{t-1}$, the threshold $N_t$ is solely determined by the fixed policy $\policy$, and is therefore independent of $\beta'$. Similarly, $\tau_{jt}$ is a deterministic function of $n_t$, $n_{t-1},\tau_{j,t-1}$, and $x_{j,t-1}$, with no dependence on the true underlying parameter $\beta'$. We apply these two facts to \eqref{eq:pbetaprime} to conclude that:
\begin{align*}
        p_{\beta'} (h_T)  &= \prod_{t=1}^T \bracks{p_\beta \prns{n_t\mid h_{t-1}} \prns{\prod_{j=1}^M p_\beta \prns{\tau_{jt} \mid n_t, n_{t-1},\tau_{j,t-1},x_{j,t-1}} }\prns{\prod_{j=1}^M p_{\beta'}\prns{x_{jt}\mid \tau_{jt}} } },
\end{align*}
Taking the ratio of $p_\beta(h_T)$ and $p_{\beta'}(h_T)$ and taking the log:
\begin{align*}
    \log\prns{\frac{p_\beta (h_T)}{p_{\beta'}(h_T)}} &= \log\prns{\prod_{t=1}^T \prod_{j=1}^M \frac{p_\beta (x_{jt}\mid \tau_{jt})}{p_{\beta'}(x_{jt} \mid \tau_{jt})}} \\
    &= \sum_{t=1}^T \sum_{j=1}^M \log \frac{p_\beta\prns{x_{jt}\mid \tau_{jt}}}{p_{\beta'}\prns{x_{jt}\mid \tau_{jt}}}.
\end{align*}
Taking expectations on both sides, we have:
\begin{align}\label{eq:tot-rel-entropy}
     D\prns{\pr_\beta, \pr_{\beta'}} = \sum_{t=1}^T \sum_{j=1}^M \expect_\beta \bracks{\log \frac{p_\beta\prns{X_{jt}\mid \tau_{jt}}}{p_{\beta'}\prns{X_{jt}\mid \tau_{jt}}}}.
\end{align}
Since $X_{jt}$ is a Bernoulli random variable, we have:
\begin{align}\label{eq:apply-inverse-pinsk}
    \expect_\beta \bracks{\log \frac{p_\beta\prns{X_{jt}\mid \tau_{jt}}}{p_{\beta'}\prns{X_{jt}\mid \tau_{jt}}}} 
    &= \expect_\beta \bracks{D\prns{\text{Ber}(\beta_1+\beta_2\tau_{jt}), \text{Ber}(\beta_1'+\beta_2'\tau_{jt})}}.
\end{align}
The following lemma upper bounds the relative entropy of these two Bernoulli random variables.

\smallskip

\begin{lemma}[Reverse Pinsker's Inequality, Lemma 6.3 of \cite{csiszar2006context}]\label{lemma: inverse Pinsker}
     The relative entropy between Bernoulli distributions with respective parameters $p\in(0,1)$ and $q\in(0,1/2]$ satisfies:
    \begin{align*}
        D\prns{\text{Ber}(p), \text{Ber}(q)} \le \frac{2}{q}(p-q)^2.
    \end{align*}
\end{lemma}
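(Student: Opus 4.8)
The plan is to bound the relative entropy directly by expanding its definition and applying the elementary inequality $\log(1+x) \le x$ to each of its two logarithmic terms. Recall that for Bernoulli distributions,
\[
D\prns{\text{Ber}(p), \text{Ber}(q)} = p\log\frac{p}{q} + (1-p)\log\frac{1-p}{1-q}.
\]
The key idea is that, once $\log(1+x)\le x$ is applied termwise, the resulting expression collapses to exactly the chi-squared divergence between the two Bernoulli laws, which is in turn controlled using the hypothesis $q \le 1/2$.

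First I would write $\log\frac{p}{q} = \log\prns{1 + \frac{p-q}{q}} \le \frac{p-q}{q}$ and $\log\frac{1-p}{1-q} = \log\prns{1 + \frac{q-p}{1-q}} \le \frac{q-p}{1-q}$. Multiplying these by $p$ and $1-p$ respectively and summing yields
\[
D\prns{\text{Ber}(p), \text{Ber}(q)} \le \frac{p(p-q)}{q} - \frac{(1-p)(p-q)}{1-q} = (p-q)\prns{\frac{p}{q} - \frac{1-p}{1-q}}.
\]
The next step is a short algebraic simplification of the bracketed factor: placing it over the common denominator $q(1-q)$ gives $\frac{p(1-q) - q(1-p)}{q(1-q)} = \frac{p-q}{q(1-q)}$, so that
\[
D\prns{\text{Ber}(p), \text{Ber}(q)} \le \frac{(p-q)^2}{q(1-q)}.
\]

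Finally, I would invoke the assumption $q \in (0, 1/2]$, which gives $1-q \ge 1/2$ and hence $\frac{1}{1-q} \le 2$. This yields $\frac{1}{q(1-q)} \le \frac{2}{q}$ and therefore the claimed bound $D\prns{\text{Ber}(p), \text{Ber}(q)} \le \frac{2}{q}(p-q)^2$. There is no substantive obstacle in this argument; the only point requiring care is the bookkeeping in the termwise application of $\log(1+x)\le x$, ensuring the signs line up so that the two contributions combine cleanly into the chi-squared form. I would also note in passing that $\log(1+x)\le x$ is valid for all $x > -1$, and both arguments $\frac{p-q}{q}$ and $\frac{q-p}{1-q}$ exceed $-1$ precisely because $p, q \in (0,1)$, so the bound applies without further restriction.
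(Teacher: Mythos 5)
Your proof is correct, and it is worth noting that the paper itself does not prove this lemma at all: it is imported as a black box, cited as Lemma 6.3 of \citet{csiszar2006context}, and used directly in the lower-bound argument of \Cref{thm: lower bound}. What you have produced is therefore a self-contained elementary replacement for that citation. Your route is the standard one for such reverse-Pinsker bounds: the termwise application of $\log(1+x)\le x$ is exactly a proof that $D\prns{\text{Ber}(p),\text{Ber}(q)} \le \chi^2\prns{\text{Ber}(p),\text{Ber}(q)} = \frac{(p-q)^2}{q(1-q)}$, after which $q\le 1/2$ gives $\frac{1}{1-q}\le 2$ and hence the stated constant $\frac{2}{q}$. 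All the details check out: both arguments of $\log(1+x)$ exceed $-1$ because $p,q\in(0,1)$, the factorization $(p-q)\prns{\frac{p}{q}-\frac{1-p}{1-q}} = \frac{(p-q)^2}{q(1-q)}$ is valid regardless of the sign of $p-q$, and the final constant is exactly what the lemma claims. The only thing the citation buys over your argument is generality (the referenced result covers general distributions with bounded likelihood ratios, not just Bernoulli laws), but for the purposes of this paper your two-line binary computation suffices.
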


\smallskip

Noting that $0< \beta_1'+\beta_2'\tau_{jt} \leq \frac12$ for $\tau_{jt} \leq 2$, we apply \Cref{lemma: inverse Pinsker} to \eqref{eq:apply-inverse-pinsk} to obtain:
\begin{align*}
\expect_\beta \bracks{\log \frac{p_\beta\prns{X_{jt}\mid \tau_{jt}}}{p_{\beta'}\prns{X_{jt}\mid \tau_{jt}}}} 
    &\leq \expect_\beta \bracks{\frac{2}{\beta_1'+\beta_2'\tau_{jt}}\prns{\beta_2-\beta_2'}^2\tau_{jt}^2} \tag{Since $\beta_1=\beta_1'$}
\end{align*}
Note that the function $f(x) = \frac{x^2}{\beta_1'+\beta_2'x}$ is increasing in $x$ for $x \leq -2\beta_1'/\beta_2' = -\frac{1}{\sqrt{(1+\Delta)/8}-1/2}$. For \mbox{$\Delta\in(0,1/2]$, $-\frac{1}{\sqrt{(1+\Delta)/8}-1/2} \geq 2$}, which implies that $f(x)$ is increasing for $x \leq 2$. Using this to upper bound the above, we have:
\begin{align*}
  \expect_\beta \bracks{\log \frac{p_\beta\prns{X_{jt}\mid \tau_{jt}}}{p_{\beta'}\prns{X_{jt}\mid \tau_{jt}}}}
    &\leq \frac{8}{\beta_1'+2\beta_2'}\prns{\beta_2-\beta_2'}^2 
    \\
    &= \frac{2\prns{1-\sqrt{1-\Delta^2}}}{\sqrt{(1+\Delta)/2}-1/2} \tag{By definition of $\beta_2,\beta_1',\beta_2'$}\\
    &= \frac{2\Delta^2}{\prns{\sqrt{(1+\Delta)/2}-1/2}\prns{1+\sqrt{1-\Delta^2}}} \\
    &\leq 2(1+\sqrt{2})\Delta^2,
\end{align*}
where the two equalities follow from algebra, and the last inequality follows from the fact that $\prns{\sqrt{(1+\Delta)/2}-1/2}\prns{1+\sqrt{1-\Delta^2}}$ is minimized at $\Delta = 0$ over the range $[0,1/2]$.

Plugging this back into \eqref{eq:tot-rel-entropy}, we have:
\begin{align*}
     D\prns{\pr_\beta, \pr_{\beta'}} \le  2(1+\sqrt{2})\Delta^2 M T.
\end{align*}

Applying this to \eqref{eq:to-combine}:
 \begin{align*}
         \text{Reg}_T(\policy, \beta) + \text{Reg}_T(\policy, \beta')\ge \frac{\Delta M T}{40}\exp\prns{-2(1+\sqrt{2})\Delta^2 M T},
     \end{align*}
whose maximum is achieved at $\Delta = \frac{1}{2\sqrt{(1+\sqrt{2})M T}}$. For this value of $\Delta$, then:
\begin{align*}
&\text{Reg}_T(\policy, \beta) + \text{Reg}_T(\policy, \beta')\ge \frac{\sqrt{MT}}{80\sqrt{1+\sqrt{2}}}\exp\prns{-1/2}\\
\implies &\max\{\text{Reg}_T(\policy, \beta),\text{Reg}_T(\policy, \beta')\} \geq \frac{\sqrt{MT}}{160\sqrt{1+\sqrt{2}}}\exp\prns{-1/2}.
\end{align*}
\hfill\Halmos
\end{proof}

\subsubsection{Proof of \Cref{lem:rev-diff}}\label{apx:rev-diff}
\begin{proof}{Proof.}
Consider first the instance for which the true parameter is $\beta$. We have:
    \begin{align*}
       R(1;\beta) - R(2;\beta) &= \frac{1}{\frac{1}{\beta_1} + \frac{1}{\beta_1+\beta_2}} - \frac{2}{\frac{1}{\beta_1} + \frac{1}{\beta_1+\beta_2} +\frac{1}{\beta_1 + 2\beta_2}} \\
      &= -\frac{\beta_1(\beta_1+\beta_2)(\beta_1^2+4\beta_1\beta_2+2\beta_2^2)}{(2\beta_1+\beta_2)(3\beta_1^2+6\beta_1\beta_2+2\beta_2^2)}\\
     &= \frac{\Delta  \sqrt{1-\Delta}}{2\prns{\sqrt{1-\Delta} +\sqrt{2}} \prns{ \sqrt{2-2\Delta}-\Delta} } \\
     &\geq 0,
    \end{align*}
    where the third equality follows from plugging in the definitions of $\beta_1$ and $\beta_2$.

    We now argue that setting $N = 2$ (weakly) improves upon the no-loyalty option under $\beta$. We have:
    \begin{align*}
    R(2;\beta)-\bar{\phi} = \frac{2}{2+\frac{1}{\sqrt{\frac{1-\Delta}{8}}} + \frac{1}{2\sqrt{\frac{1-\Delta}{8}}-\frac12}}-\left(\sqrt{\frac{1-\Delta}{8}}-\frac14\right).
    \end{align*}
    Consider the function $f(x) = \frac{1}{2+\frac{1}{x} + \frac{1}{2x-\frac12}}-x$. Differentiating, we have:
    \begin{align*}
    f'(x) = -\frac{8x^2\left(8x^2+8x-3\right)}{(8x^2+4x-1)^2},
    \end{align*}
    whose only root in $[1/4,\sqrt{1/8})$ is at $x_0 = \frac14(\sqrt{10}-2)$. Moreover, $f'(1/4) > 0$, which implies $f(x)$ is increasing over $[1/4,x_0)$ and decreasing over $(x_0,\sqrt{1/8})$. Then:
    \begin{align*}
    f(x) \geq \min\left\{f(1/4),f(\sqrt{1/8})\right\} \geq -1/4,
    \end{align*}
and we obtain $R(2;\beta)-\bar{\phi} \geq f(x)+1/4 \geq 0$.

    Similarly, for the second instance:
    \begin{align*}
   R(1;\beta') - R(2;\beta')
   &= -\frac{\beta_1'(\beta_1'+\beta_2')\prns{(\beta_1')^2+4\beta_1'\beta_2'+2(\beta_2')^2}}{(2\beta_1'+\beta_2')\prns{3(\beta_1')^2+6\beta_1'\beta_2'+2(\beta_2')^2}}\\
    &= \frac{- \Delta  \sqrt{1+\Delta}}{2\prns{\sqrt{\Delta+1} +\sqrt{2}} \prns{\Delta + \sqrt{2+2\Delta}} } \\
    &\leq 0.
    \end{align*}
    Comparing $N = 1$ and the no-loyalty option, we have: 
    \begin{align*}
    R(1;\beta')-\bar{\phi}' = \frac{1}{2+\frac{1}{\sqrt{\frac{1+\Delta}{8}}}}-\left(\sqrt{\frac{1+\Delta}{8}}-\frac14\right).
    \end{align*}
    Let $g(x) = \frac{1}{2+\frac{1}{x}}-x$. It is easy to verify that $g(x)$ is decreasing for all $x > 0$, which implies that \mbox{$R(1;\beta')-\bar{\phi}' \geq \frac{1}{2+\frac{1}{\sqrt{\frac{1+1/2}{8}}}}-\left(\sqrt{\frac{1+1/2}{8}}-\frac14\right) > 0$}.
    \hfill\Halmos
\end{proof}

\section{\Cref{sec:ub} Omitted Proofs}

\subsection{Proof of \Cref{prop:tmix-ub}}\label{apx:tmix-ub}

{
\begin{proof}{Proof.}
Fix an individual of any type $k \in [K]$ and a redemption threshold $N \leq N_{\max}$. Since our bound holds uniformly for all $k$ and $N$, throughout the proof we suppress the dependence of all quantities on $k$ and $N$. We moreover abuse notation and let $t_{mix} = t_{mix,k}(N)$.

Our proof follows similar lines as the proof used to bound the mixing time of a random walk on the cycle (Section 5.3.2. in \citet{levin2017markov}). Specifically, let the coupling $(X_t, Y_t)_{t=0}^{\infty}$ be such that $(X_t)$ and $(Y_t)$ are Markov chains representing the number of points to redemption. Let $P$ be the corresponding transition matrix. Recall, from \Cref{prop:closed-form-lr-avg-per-cust}, $P$ is defined as:
\begin{align}\label{prop:transition-matrix}
\begin{cases}
P_{\tau,\tau} = 1-\purchaseprob(\tau) \\
P_{\tau,(\tau-1) \text{ mod } (N+1)} = \purchaseprob(\tau) \\
P_{\tau,\tau'} = 0 \quad \forall \ \tau' \not\in \{\tau,\tau-1\}.
\end{cases}
\end{align}

Let $X_0 = x \in \{0,\ldots,N\}$ and $Y_0 = y \in \{0,\ldots,N\}$, with $x > y$ without loss of generality. We assume $X_t$ and $Y_t$ move independently in each period until the two chains collide, at which point they make identical moves in all future periods. Note that this is trivially a valid coupling; we abuse notation and let $\tau_{\text{couple}} = \inf\{t \geq 0: X_t = Y_t\}$. By Corollary 5.5 in \citet{levin2017markov}, 
\begin{align}\label{eq:tmix-to-collision}
t_{mix} \leq 4\max_{x,y}\mathbb{E}_{x,y}[\tau_{\text{couple}}],
\end{align}
where $\mathbb{E}_{x,y}[\cdot]$ is used to denote the expectation of the random variable given that $X_0 = x$ and $Y_0 = y$. 

In order to bound $\mathbb{E}_{x,y}[\tau_{\text{couple}}]$, we define $D_t$ to be the clockwise distance from $X_t$ to $Y_t$ on the $(N+1)$-cycle. Then, for all $t \geq 0$:
\begin{align}\label{eq:cw-dist}
D_{t+1}-D_t = \begin{cases}
+1 \quad &\text{if } X_{t+1} = X_t \text{ and } Y_{t+1} = Y_t-1 \text{ mod } N+1 \\
-1 &\text{if } X_{t+1} = X_t-1 \text{ mod } N+1 \text{ and } Y_{t+1} = Y_t\\
0 &\text{if } X_{t+1}=X_t \text{ and } Y_{t+1} = Y_t\\
0 &\text{if } X_{t+1}=X_t-1 \text{ mod } N+1 \text{ and } Y_{t+1} = Y_t-1 \text{ mod } N+1
\end{cases}
\end{align}
\Cref{fig:coupling} illustrates this construction. 

\begin{figure}
\centering
\begin{tikzpicture}[x=0.75pt,y=0.75pt,yscale=-1,xscale=1]

\draw   (116,106) .. controls (116,92.19) and (127.19,81) .. (141,81) .. controls (154.81,81) and (166,92.19) .. (166,106) .. controls (166,119.81) and (154.81,131) .. (141,131) .. controls (127.19,131) and (116,119.81) .. (116,106) -- cycle ;
\draw   (227,106) .. controls (227,92.19) and (238.19,81) .. (252,81) .. controls (265.81,81) and (277,92.19) .. (277,106) .. controls (277,119.81) and (265.81,131) .. (252,131) .. controls (238.19,131) and (227,119.81) .. (227,106) -- cycle ;
\draw   (338,105) .. controls (338,91.19) and (349.19,80) .. (363,80) .. controls (376.81,80) and (388,91.19) .. (388,105) .. controls (388,118.81) and (376.81,130) .. (363,130) .. controls (349.19,130) and (338,118.81) .. (338,105) -- cycle ;
\draw   (449,104) .. controls (449,90.19) and (460.19,79) .. (474,79) .. controls (487.81,79) and (499,90.19) .. (499,104) .. controls (499,117.81) and (487.81,129) .. (474,129) .. controls (460.19,129) and (449,117.81) .. (449,104) -- cycle ;
\draw    (166,106) -- (225,106) ;
\draw [shift={(227,106)}, rotate = 180] [color={rgb, 255:red, 0; green, 0; blue, 0 }  ][line width=0.75]    (10.93,-3.29) .. controls (6.95,-1.4) and (3.31,-0.3) .. (0,0) .. controls (3.31,0.3) and (6.95,1.4) .. (10.93,3.29)   ;
\draw    (277,106) -- (336,106) ;
\draw [shift={(338,106)}, rotate = 180] [color={rgb, 255:red, 0; green, 0; blue, 0 }  ][line width=0.75]    (10.93,-3.29) .. controls (6.95,-1.4) and (3.31,-0.3) .. (0,0) .. controls (3.31,0.3) and (6.95,1.4) .. (10.93,3.29)   ;
\draw    (388,105) -- (447,105) ;
\draw [shift={(449,105)}, rotate = 180] [color={rgb, 255:red, 0; green, 0; blue, 0 }  ][line width=0.75]    (10.93,-3.29) .. controls (6.95,-1.4) and (3.31,-0.3) .. (0,0) .. controls (3.31,0.3) and (6.95,1.4) .. (10.93,3.29)   ;
\draw    (474,129) .. controls (430.44,227.01) and (128.13,174.08) .. (140.53,132.26) ;
\draw [shift={(141,131)}, rotate = 114.34] [color={rgb, 255:red, 0; green, 0; blue, 0 }  ][line width=0.75]    (10.93,-3.29) .. controls (6.95,-1.4) and (3.31,-0.3) .. (0,0) .. controls (3.31,0.3) and (6.95,1.4) .. (10.93,3.29)   ;

\draw (135,97.4) node [anchor=north west][inner sep=0.75pt]    {$3$};
\draw (246,96.4) node [anchor=north west][inner sep=0.75pt]    {$2$};
\draw (358,97.4) node [anchor=north west][inner sep=0.75pt]    {$1$};
\draw (469,96.4) node [anchor=north west][inner sep=0.75pt]    {$0$};
\draw (133,62.4) node [anchor=north west][inner sep=0.75pt]  [font=\small,color={rgb, 255:red, 7; green, 62; blue, 214 }  ,opacity=1 ]  {$X_{t}$};
\draw (357,61.4) node [anchor=north west][inner sep=0.75pt]  [font=\small,color={rgb, 255:red, 7; green, 62; blue, 214 }  ,opacity=1 ]  {$Y_{t}$};
\draw (128,43.4) node [anchor=north west][inner sep=0.75pt]  [font=\small,color={rgb, 255:red, 245; green, 166; blue, 35 }  ,opacity=1 ]  {$X_{t+1}$};
\draw (462,43.4) node [anchor=north west][inner sep=0.75pt]  [font=\small,color={rgb, 255:red, 245; green, 166; blue, 35 }  ,opacity=1 ]  {$Y_{t+1}$};
\draw (240,62.4) node [anchor=north west][inner sep=0.75pt]  [font=\small,color={rgb, 255:red, 110; green, 201; blue, 0 }  ,opacity=1 ]  {$X_{t+1}$};
\draw (461,61.4) node [anchor=north west][inner sep=0.75pt]  [font=\small,color={rgb, 255:red, 110; green, 201; blue, 0 }  ,opacity=1 ]  {$Y_{t+1}$};
\draw (240,43.4) node [anchor=north west][inner sep=0.75pt]  [font=\small,color={rgb, 255:red, 189; green, 16; blue, 224 }  ,opacity=1 ]  {$X_{t+1}$};
\draw (354,43.4) node [anchor=north west][inner sep=0.75pt]  [font=\small,color={rgb, 255:red, 189; green, 16; blue, 224 }  ,opacity=1 ]  {$Y_{t+1}$};
\end{tikzpicture}
\caption{Illustration of the $(X_t,Y_t)$ coupling for $N = 3$. At time $t$, shown in blue, the clockwise distance $D_t = 2$. In green, we have shown a realization where both $X_t$ and $Y_t$ have decreased by 1, in which case the clockwise distance has not changed and $D_{t+1} = 2$.  In orange, $X_{t+1} = X_t$ and $Y_{t+1} = Y_t-1$. Therefore, the clockwise distance increases by 1 and $D_{t+1} = 3$. Finally, in purple, $X_t$ has decreased by 1, but $Y_t$ has not moved. Therefore, $D_{t+1}$ decreases by 1, with $D_{t+1} = 1$.}\label{fig:coupling}
\end{figure}
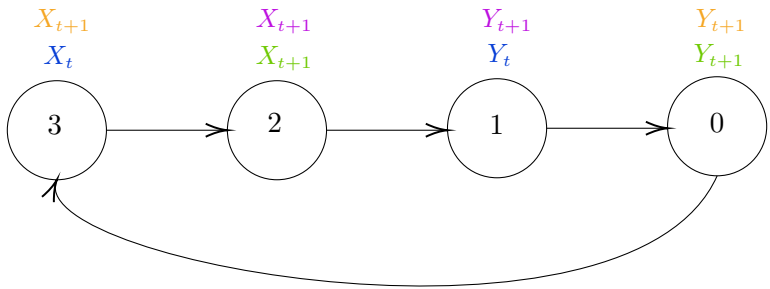

Notice that the process $(D_t)$ is a random walk on $\{0,\ldots,N+1\}$. Moreover, $X_t$ and $Y_t$ colliding is equivalent to this walk becoming absorbed at either 0 or $N+1$, as illustrated in \Cref{fig:coupling-2}.

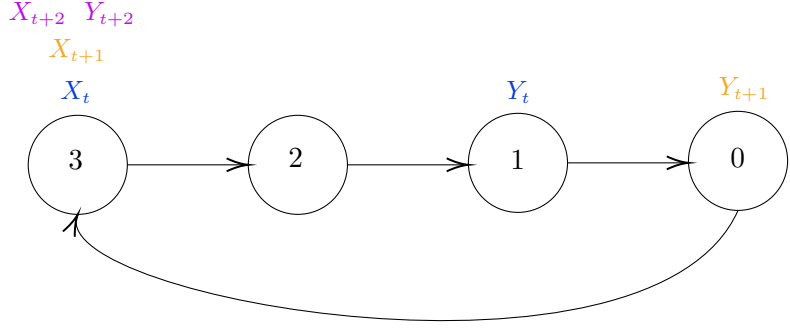
\begin{figure}
\centering
\begin{tikzpicture}[x=0.75pt,y=0.75pt,yscale=-1,xscale=1]

\draw   (116,106) .. controls (116,92.19) and (127.19,81) .. (141,81) .. controls (154.81,81) and (166,92.19) .. (166,106) .. controls (166,119.81) and (154.81,131) .. (141,131) .. controls (127.19,131) and (116,119.81) .. (116,106) -- cycle ;
\draw   (227,106) .. controls (227,92.19) and (238.19,81) .. (252,81) .. controls (265.81,81) and (277,92.19) .. (277,106) .. controls (277,119.81) and (265.81,131) .. (252,131) .. controls (238.19,131) and (227,119.81) .. (227,106) -- cycle ;
\draw   (338,105) .. controls (338,91.19) and (349.19,80) .. (363,80) .. controls (376.81,80) and (388,91.19) .. (388,105) .. controls (388,118.81) and (376.81,130) .. (363,130) .. controls (349.19,130) and (338,118.81) .. (338,105) -- cycle ;
\draw   (449,104) .. controls (449,90.19) and (460.19,79) .. (474,79) .. controls (487.81,79) and (499,90.19) .. (499,104) .. controls (499,117.81) and (487.81,129) .. (474,129) .. controls (460.19,129) and (449,117.81) .. (449,104) -- cycle ;
\draw    (166,106) -- (225,106) ;
\draw [shift={(227,106)}, rotate = 180] [color={rgb, 255:red, 0; green, 0; blue, 0 }  ][line width=0.75]    (10.93,-3.29) .. controls (6.95,-1.4) and (3.31,-0.3) .. (0,0) .. controls (3.31,0.3) and (6.95,1.4) .. (10.93,3.29)   ;
\draw    (277,106) -- (336,106) ;
\draw [shift={(338,106)}, rotate = 180] [color={rgb, 255:red, 0; green, 0; blue, 0 }  ][line width=0.75]    (10.93,-3.29) .. controls (6.95,-1.4) and (3.31,-0.3) .. (0,0) .. controls (3.31,0.3) and (6.95,1.4) .. (10.93,3.29)   ;
\draw    (388,105) -- (447,105) ;
\draw [shift={(449,105)}, rotate = 180] [color={rgb, 255:red, 0; green, 0; blue, 0 }  ][line width=0.75]    (10.93,-3.29) .. controls (6.95,-1.4) and (3.31,-0.3) .. (0,0) .. controls (3.31,0.3) and (6.95,1.4) .. (10.93,3.29)   ;
\draw    (474,129) .. controls (430.44,227.01) and (128.13,174.08) .. (140.53,132.26) ;
\draw [shift={(141,131)}, rotate = 114.34] [color={rgb, 255:red, 0; green, 0; blue, 0 }  ][line width=0.75]    (10.93,-3.29) .. controls (6.95,-1.4) and (3.31,-0.3) .. (0,0) .. controls (3.31,0.3) and (6.95,1.4) .. (10.93,3.29)   ;

\draw (135,97.4) node [anchor=north west][inner sep=0.75pt]    {$3$};
\draw (246,96.4) node [anchor=north west][inner sep=0.75pt]    {$2$};
\draw (358,97.4) node [anchor=north west][inner sep=0.75pt]    {$1$};
\draw (469,96.4) node [anchor=north west][inner sep=0.75pt]    {$0$};
\draw (131,62.4) node [anchor=north west][inner sep=0.75pt]  [font=\small,color={rgb, 255:red, 7; green, 62; blue, 214 }  ,opacity=1 ]  {$X_{t}$};
\draw (356,62.4) node [anchor=north west][inner sep=0.75pt]  [font=\small,color={rgb, 255:red, 7; green, 62; blue, 214 }  ,opacity=1 ]  {$Y_{t}$};
\draw (125,41.4) node [anchor=north west][inner sep=0.75pt]  [font=\small,color={rgb, 255:red, 7; green, 62; blue, 214 }  ,opacity=1 ]  {$\textcolor[rgb]{0.96,0.65,0.14}{X_{t+1}}$};
\draw (463,60.4) node [anchor=north west][inner sep=0.75pt]  [font=\small,color={rgb, 255:red, 7; green, 62; blue, 214 }  ,opacity=1 ]  {$\textcolor[rgb]{0.96,0.65,0.14}{Y_{t+1}}$};
\draw (105,22.4) node [anchor=north west][inner sep=0.75pt]  [font=\small,color={rgb, 255:red, 208; green, 2; blue, 27 }  ,opacity=1 ]  {$\textcolor[rgb]{0.74,0.06,0.88}{X_{t+2}}$};
\draw (143,22.4) node [anchor=north west][inner sep=0.75pt]  [font=\small,color={rgb, 255:red, 7; green, 62; blue, 214 }  ,opacity=1 ]  {$\textcolor[rgb]{0.74,0.06,0.88}{Y_{t+2}}$};
\end{tikzpicture}
\caption{Illustration of collision at $t+2$, for $N = 3$. In period $t$, $D_t = 2$. Over this sample path, $X_t$ stays fixed between $t$ and $t+2$, whereas $Y_t$ decreases by 1 in each period. By \Cref{eq:cw-dist}, then, $D_t$ increases by 1 in each period, until it reaches $D_{t+2} = 4$. Since $X_{s} = Y_s$ for all $s \geq t+2$ by construction, we also have $D_s = 4$ for all $s \geq t+2$, indicating absorption of the random walk $(D_t)$ at the state $N+1$.}\label{fig:coupling-2}
\end{figure}
Let $d$ denote the clockwise distance between the initial states $x$ and $y$. Formally, then:
\begin{align*}
\mathbb{E}_{x,y}[\tau_{\text{couple}}] = \mathbb{E}_{d}\bigg[\inf\big\{t \geq 0: D_t \in \{0,N+1\}\big\}\bigg].
\end{align*}
We upper bound this quantity by considering the absorbing random walk $\widetilde{D}_t$ with initial state $d$ such that, for all $t$:
\begin{align}\label{eq:new-rw-drift}
\widetilde{D}_{t+1}-\widetilde{D}_t = \begin{cases}
+1 \quad&\text{with probability } (1-\mu_{\max})\mu_{\min} \text{ if } \widetilde{D}_t \in \{1,\ldots,N\}\\
-1 &\text{with probability } (1-\mu_{\max})\mu_{\min} \text{ if } \widetilde{D}_t \in \{1,\ldots,N\} \\
0 &\text{otherwise}.
\end{cases}
\end{align}
Noting that this random walk maximizes the probability of staying in the same state in each period, subject to $(\phi(X_t), \phi(Y_t)) \in [\mu_{\min},\mu_{\max}]^2$ for all $t$, it follows that
\begin{align}\label{eq:ub-rws}
\mathbb{E}_{d}\bigg[\inf\big\{t \geq 0: D_t \in \{0,N+1\}\big\}\bigg] \leq \mathbb{E}_{d}\bigg[\inf\big\{t \geq 0: \widetilde{D}_t \in \{0,N+1\}\big\}\bigg] \quad \forall \ d \in \{0,\ldots,N+1\}.
\end{align}
The following lemma explicitly characterizes the expected absorption time for $\widetilde{D}_t$, given its initial state $d$. We defer its proof to Appendix \ref{apx:rw-absorp-time}.
\begin{lemma}\label{lem:rw-absorp-time}
For all $d \in \{0,\ldots,N+1\}$,
\begin{align}\label{eq:rw-absorp-time}
\mathbb{E}_{d}\bigg[\inf\big\{t \geq 0: \widetilde{D}_t \in \{0,N+1\}\big\}\bigg] = \frac{d(N-d+1)}{2(1-\mu_{\max})\mu_{\min}}.
\end{align}
\end{lemma}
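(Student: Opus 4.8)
The plan is to solve the boundary value problem for the expected absorption time directly via first-step analysis. Write $p = (1-\mu_{\max})\mu_{\min}$ for the common probability of a $\pm 1$ step, and define $h(d) = \mathbb{E}_{d}\bigl[\inf\{t \geq 0 : \widetilde{D}_t \in \{0, N+1\}\}\bigr]$ for $d \in \{0,\ldots,N+1\}$. Because $\widetilde{D}_t$ is a finite-state Markov chain whose absorbing states $\{0,N+1\}$ are reachable from every interior state, absorption occurs almost surely with finite expectation, so $h$ is well-defined; and by definition $h(0) = h(N+1) = 0$.

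The main step is to condition on the first transition out of an interior state $d \in \{1,\ldots,N\}$. Using the dynamics in \eqref{eq:new-rw-drift}, the walk moves to $d+1$ or $d-1$ each with probability $p$ and stays at $d$ with the remaining probability $1-2p$, which yields
\[
h(d) = 1 + p\,h(d+1) + p\,h(d-1) + (1-2p)\,h(d).
\]
Rearranging, the laziness term cancels and one obtains the discrete Poisson equation
\[
h(d+1) - 2h(d) + h(d-1) = -\frac{1}{p}, \qquad d \in \{1,\ldots,N\}.
\]

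The remaining step is to solve this linear recurrence subject to the two boundary conditions. Since the second difference is the constant $-1/p$, a quadratic particular solution works: $h_{\mathrm{part}}(d) = -d^2/(2p)$ satisfies the equation, and the homogeneous solutions are affine, $A + Bd$. Imposing $h(0) = 0$ forces $A = 0$, and $h(N+1) = 0$ forces $B = (N+1)/(2p)$. Substituting back gives
\[
h(d) = \frac{(N+1)d - d^2}{2p} = \frac{d(N-d+1)}{2(1-\mu_{\max})\mu_{\min}},
\]
which is exactly \eqref{eq:rw-absorp-time}.

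I do not expect a genuine obstacle here; this is a textbook Gambler's-Ruin-style hitting-time computation. The only points that merit a line of care are (i) confirming that $h$ is finite so that first-step analysis is legitimate (immediate from finiteness and irreducibility-to-the-absorbing-set), and (ii) noting that $2p \le 1$ so the stated transition law is a genuine probability distribution, which follows from $\mu_{\min} \le \mu_{\max}$ with both in $(0,1]$, giving $p = (1-\mu_{\max})\mu_{\min} \le 1/4$.
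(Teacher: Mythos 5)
Your proof is correct and takes essentially the same route as the paper's: first-step analysis yields the identical linear recurrence $h(d) = 1 + p\,h(d+1) + p\,h(d-1) + (1-2p)\,h(d)$ with boundary conditions $h(0)=h(N+1)=0$, which is then solved in closed form. The only difference is in the finishing step --- the paper guesses the expression $\frac{d(N-d+1)}{2p}$ and verifies it satisfies the recurrence, asserting uniqueness, whereas you solve the recurrence constructively (quadratic particular solution plus affine homogeneous solutions), which pins down the answer and its uniqueness in one stroke; your added remarks on finiteness of $h$ and on $2p \le 1$ are correct and, if anything, make the argument slightly more complete.
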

Note that the right-hand side of \eqref{eq:rw-absorp-time} attains its maximum at $d = \frac{N+1}{2}$. Using this in \eqref{eq:ub-rws}, we obtain:
\begin{align*}
&\mathbb{E}_{x,y}[\tau_{\text{couple}}] = \mathbb{E}_{d}\bigg[\inf\big\{t \geq 0: D_t \in \{0,N+1\}\big\}\bigg] \leq \frac{(N+1)^2}{8(1-\mu_{\max})\mu_{\min}}\\
\implies &t_{mix} \leq  \frac{(N+1)^2}{2(1-\mu_{\max})\mu_{\min}},
\end{align*}
by \eqref{eq:tmix-to-collision}. Using the upper bound $N \leq N_{\max}$, we obtain the result.
\hfill\Halmos
\end{proof}
}

\subsubsection{Proof of \Cref{lem:rw-absorp-time}}\label{apx:rw-absorp-time}

{
\begin{proof}{Proof.}
For ease of notation let $\widetilde{t}_d = \mathbb{E}_{d}\bigg[\inf\big\{t \geq 0: \widetilde{D}_t \in \{0,N+1\}\big\}\bigg]$, and $\widetilde{\mu} = (1-\mu_{\max})\mu_{\min}$. By \eqref{eq:new-rw-drift}, $\widetilde{t}_d$ is the solution to the following recurrence relation:
\begin{align}\label{eq:new-rw}
\begin{cases}
\widetilde{t}_d = 1 + \widetilde{\mu}\widetilde{t}_{d+1} + \widetilde{\mu}\widetilde{t}_{d-1} + (1-2\widetilde{\mu})\widetilde{t}_d \quad \forall \ d \in \{1,\ldots,N\}\\
\widetilde{t}_0 = \widetilde{t}_{N+1} = 0.
\end{cases}
\end{align}
Note that $\frac{d(N-d+1)}{2\widetilde{\mu}}$ trivially satisfies the boundary conditions of \eqref{eq:new-rw}. We now verify that it satisfies the recurrence relation, which simplifies to:
\begin{align}\label{eq:rec-relation}
2\widetilde{\mu}\widetilde{t}_d = 1+\widetilde{\mu}\left(\widetilde{t}_{d+1}+\widetilde{t}_{d-1}\right).
\end{align}
We have:
\begin{align*}
&2\widetilde{\mu}\frac{d(N-d+1)}{2\widetilde{\mu}} = d(N-d+1),
\end{align*}
and
\begin{align*}
&1+\widetilde{\mu}\left(\frac{(d+1)(N-(d+1)+1)}{2\widetilde{\mu}}+\frac{(d-1)(N-(d-1)+1)}{2\widetilde{\mu}}\right)\\ &= 1 + \frac12 \left((d+1)(N-d)+(d-1)(N-d+2)\right)\\
&=d(N-d+1).
\end{align*}
Therefore, $\frac{d(N-d+1)}{2\widetilde{\mu}}$ satisfies the recurrence relation \eqref{eq:rec-relation}. It is moreover not difficult to see that this solution is unique, which proves the claim.
\hfill\Halmos
\end{proof}
}

\subsection{\Cref{lem:bounded-reward-diff} Auxiliary Results}

\subsubsection{Proof of \Cref{lem:rev-diff-to-est-error}}\label{apx:rev-diff-to-est-error}

\begin{proof}{Proof.}
Fix $k \in [K]$. By \Cref{prop:closed-form-lr-avg-per-cust}, we have:
\begin{align*}
     \abs{R_k(\threshold;\hat{\beta}_k^{(h)}) - R_k(\threshold;\beta_k)}  
    & = \abs{\frac{N}{\sum_{\tau = 0}^N \frac{1}{\mu_k(\hat{\beta}_{k,1}^{(h)} + \hat{\beta}_{k,2}^{(h)}\tau)}} 
    -  \frac{N}{\sum_{\tau = 0}^N \frac{1}{\mu_k(\beta_{k,1} + \beta_{k,2}\tau)}}}\\
    & = \abs{\frac{N \sum_{\tau = 0}^N \frac{\mu_k(\hat{\beta}_{k,1}^{(h)} + \hat{\beta}_{k,2}^{(h)}\tau) - \mu_k(\beta_{k,1} + \beta_{k,2}\tau)}{\mu_k(\hat{\beta}_{k,1}^{(h)} + \hat{\beta}_{k,2}^{(h)}\tau)\mu_k(\beta_{k,1} + \beta_{k,2}\tau)}}{\prns{\sum_{\tau = 0}^N \frac{1}{\mu_k(\hat{\beta}_{k,1}^{(h)} + \hat{\beta}_{k,2}^{(h)}\tau)}}\prns{\sum_{\tau = 0}^N \frac{1}{\mu_k(\beta_{k,1} + \beta_{k,2}\tau)}}}}\\
    & \leq \frac{\mu_{\max}^2}{\mu_{\min}^2}\cdot\frac{N}{(N+1)^2}\cdot \abs{\sum_{\tau = 0}^N \mu_k(\hat{\beta}_{k,1}^{(h)} + \hat{\beta}_{k,2}^{(h)}\tau) - \mu_k(\beta_{k,1} + \beta_{k,2}\tau)}\\
    & \leq \frac{\mu_{\max}^2}{\mu_{\min}^2}\cdot\frac{1}{N+1}\cdot \abs{\sum_{\tau = 0}^N \mu_k(\hat{\beta}_{k,1}^{(h)} + \hat{\beta}_{k,2}^{(h)}\tau) - \mu_k(\beta_{k,1} + \beta_{k,2}\tau)}
\end{align*}
where the first inequality follows from $\mu_k(\hat{\beta}_{k,1}^{(h)} + \hat{\beta}_{k,2}^{(h)}\tau)\mu_k(\beta_{k,1} + \beta_{k,2}\tau) \geq \mu_{\min}^2$ for all $\tau$ in the numerator, and $\prns{\sum_{\tau = 0}^N \frac{1}{\mu_k(\hat{\beta}_{k,1}^{(h)} + \hat{\beta}_{k,2}^{(h)}\tau)}}\prns{\sum_{\tau = 0}^N \frac{1}{\mu_k(\beta_{k,1} + \beta_{k,2}\tau)}} \geq \frac{(N+1)^2}{\mu_{\max}^2}$ in the denominator. Moreover, using the fact that $\mu_k(\cdot)$ is $L_{\mu}$-Lipschitz, we obtain: 
\begin{align*}
        \abs{R_k(\threshold;\hat{\beta}_k^{(h)}) - R_k(\threshold;\beta_k)}  
 \le  \frac{\mu_{\max}^2 L_{\mu}}{\mu_{\min}^2 (N+1)} \sum_{\tau = 0}^N\abs{ (\hat{\beta}_{k,1}^{(h)}-\beta_{k,1}) + (\hat{\beta}_{k,2}^{(h)}-\beta_{k,2})\tau}. 
\end{align*}
\hfill\Halmos
\end{proof}

\subsubsection{Proof of \Cref{lem:connecting-to-min-eval}}\label{apx:connecting-to-min-eval}

\begin{proof}{Proof.}
The result follows from Theorem 1 in \citet{li2017provably}, which we state for completeness in \Cref{thm: general mle} (with a general setup provided in Appendix \ref{apx:mle}).

{For any individual $j \in \mathcal{M}_k$, $\tau_{jt}$ corresponds to a feature in our setting, and $X_{jt}$ is its corresponding observation. Given $\tau_{jt}$, $X_{jt} \sim \bern\left(\mu_{k(j)}(\beta_{k(j),1}+\beta_{k(j),2}\tau_{jt})\right)$. Therefore, it is in the exponential family, and the noise is sub-Gaussian with \mbox{$\sigma = 1/2$} \citep{blm}.}

Note that \Cref{thm: general mle} requires the features to have $\ell_2$-norm in $[0,1]$. Therefore, to apply the result we define  $Z_{jt} = \prns{1/\sqrt{1+N^2_{\max}}, \tau_{jt}/\sqrt{1+N^2_{\max}}}$ to be the normalized feature vectors, with \mbox{$\theta^* = \prns{\sqrt{1+N_{\max}^2}\cdot \beta_{k,1}, \sqrt{1+N_{\max}^2}\cdot \beta_{k,2}}$}, and $d = 2$.  Then, applying \Cref{thm: general mle}, we have that if
\begin{align*}
    \lambda_{\min}\prns{\frac{V_k^{(h)}}{1+N_{\max}^2}} \ge \frac{512 G_\mu^2 \sigma^2}{\kappa^4}\prns{4 + \log \frac{1}{\delta}},
\end{align*}
then, with probability at least $1-3\delta$, the MLE computed by our algorithm at the beginning of epoch $h+1$ satisfies, for all $\tau$,
\begin{align*}
    \abs{(\hat{\beta}_{k, 1}^{(h+1)}-\beta_{k,1}) + (\hat{\beta}_{k, 2}^{(h+1)}-\beta_{k,2})\tau}\le  \frac{3\sigma}{\kappa}\sqrt{\log(1/\delta)}\sqrt{(1, \tau) \prns{V_k^{(h)}}^{-1} \begin{pmatrix} 1 \\ \tau\end{pmatrix}}.
\end{align*}
The conclusion follows by noticing that
    \begin{align*}
     \sqrt{(1, \tau)  \prns{V_k^{(h)}}^{-1} \begin{pmatrix} 1 \\ \tau\end{pmatrix}} \le \sqrt{ \frac{1+\tau^2}{\lambda_{\min}(V_k^{(h)})}}. 
    \end{align*}
\hfill\Halmos\end{proof}

\subsubsection{Proof of \Cref{lem:eval-lb}}\label{apx:eval-lb}

The proof of \Cref{lem:eval-lb} leverages the following concentration bounds on the Markov chain representing customers' points to redemption. We defer its proof Appendix \ref{apx:yichun-bounds}.

\begin{proposition}\label{prop:yichun-bounds}
    Fix type $k\in[K]$ and epoch $h \in [h_{\infty}-1]$. Let $\alpha = 2^{-1/\tmixhat}$. The following high-probability bounds hold, for any $\epsilon > 0$:
    \begin{enumerate}
\item Average time to redemption:\[\PP\prns{\abs{\sum_{j \in \mathcal{M}_k}\sum_{t\in\tcal_h} \tau_{jt} - \rho_k M T_h \exptimeredeem{k}{\threshold_h}} \ge \rho_k M \left( T_h \epsilon + \frac{{4} N_h}{1-\alpha}\right) \mid N_h} \le 2\exp\prns{-\frac{2\rho_k M  T_h \epsilon^2}{45 N_h^2  t_{mix}}}.\]
\item Average squared time to redemption:
    \[ \PP\prns{\abs{\sum_{j \in \mathcal{M}_k}\sum_{t\in\tcal_h} \tau_{jt}^2 - \rho_k M T_h \exptimeredeemsq{k}{\threshold_h}} \ge \rho_k M \prns{T_h \epsilon + \frac{{4} N_h^2}{1-\alpha}} \mid N_h} \le 2\exp\prns{-\frac{2\rho_k M  T_h\epsilon^2}{45  N_h^4  t_{mix}}}.\]
\item Average revenue:
    \[\pr\prns{\abs{\sum_{j\in \mathcal{M}_k} \sum_{t\in \mathcal{T}_h} \bracks{R_k(\threshold_h) -\purchaseprob_k(\tau_{jt}){\mathds{1}\{\tau_{jt} > 0\}}}} \ge \rho_k M \prns{T_h \epsilon + \frac{{4} }{1-\alpha}}} \le 2\exp\prns{-\frac{2\rho_k M  T_h\epsilon^2}{45    t_{mix}}}.\]
    \end{enumerate}
\end{proposition}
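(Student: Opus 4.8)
The plan is to derive all three bounds from a single \emph{master} concentration inequality and then specialize it. Concretely, for any bounded function $f:\{0,\ldots,N_h\}\to\RR$ with $B:=\max_\tau\abs{f(\tau)}$, I would establish that, writing $\pi_k(f)=\sum_{\tau=0}^{N_h}p_k(\tau;N_h)f(\tau)$ for its stationary mean,
\[
\PP\!\left( \left| \sum_{j\in\mathcal{M}_k}\sum_{t\in\mathcal{T}_h} f(\tau_{jt}) - \rho_k M T_h\, \pi_k(f) \right| \ge \rho_k M\!\left(T_h\epsilon + \frac{4B}{1-\alpha}\right) \;\middle|\; N_h \right) \le 2\exp\!\left(-\frac{2\rho_k M T_h \epsilon^2}{45 B^2 t_{mix}}\right).
\]
The three claims then follow immediately by taking $f(\tau)=\tau$ (so $B=N_h$ and $\pi_k(f)=\exptimeredeem{k}{N_h}$), $f(\tau)=\tau^2$ (so $B=N_h^2$ and $\pi_k(f)=\exptimeredeemsq{k}{N_h}$), and $f(\tau)=R_k(N_h)-\purchaseprob_k(\tau)\mathds{1}\{\tau>0\}$ (so $B=O(1)$ and, by \Cref{prop:closed-form-lr-avg-per-cust}, $\pi_k(f)=R_k(N_h)-R_k(N_h)=0$, matching the centering in the third bound). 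For the third bound the exponent depends on $N_h$ only through the worst-case $t_{mix}$, so after conditioning I can integrate out $N_h$ to obtain the stated unconditional inequality.

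Throughout I would condition on $N_h$ and the history through the end of epoch $h-1$; this fixes the per-type kernel, the stationary law $p_k(\cdot;N_h)$, and each customer's initial state for the epoch. Because customers act independently, the $\rho_k M$ type-$k$ trajectories are \emph{independent} Markov chains, so the moment generating function of $\sum_{j}\sum_{t}f(\tau_{jt})$ factorizes over $j$. It therefore suffices to (i) bound the MGF of a single chain's additive functional $W_j:=\sum_{t\in\mathcal{T}_h}\bigl(f(\tau_{jt})-\pi_k(f)\bigr)$, and (ii) recombine through a Chernoff argument over the independent copies.

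For a single chain I would split $W_j$ into a \emph{bias} and a \emph{fluctuation}. The bias is deterministic: $\abs{\EE[W_j]}\le\sum_{t}\abs{\EE[f(\tau_{jt})]-\pi_k(f)}\le 2B\sum_{t\ge 1}d_k(t;N_h)$, and by submultiplicativity of total-variation distance \citep{levin2017markov}, $d_k(t;N_h)\le 2^{-\lfloor t/t_{mix}\rfloor}\le 2\alpha^t$ (using $\alpha=2^{-1/\tmixhat}\ge 2^{-1/t_{mix}}$), so $\abs{\EE[W_j]}\le 4B\sum_{t\ge1}\alpha^t\le \tfrac{4B}{1-\alpha}$ --- precisely the burn-in term. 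The fluctuation requires a sub-Gaussian MGF bound of the form $\log\EE\bigl[e^{\lambda(W_j-\EE W_j)}\bigr]\le \tfrac{45}{8}\lambda^2 B^2 T_h\, t_{mix}$ for small $\abs{\lambda}$. I would obtain this by a blocking argument: partition $\mathcal{T}_h$ into consecutive blocks of length $\asymp t_{mix}$, over which the chain essentially re-mixes, and tensorize the conditional MGFs across blocks, paying a controlled correction for the residual inter-block dependence (equivalently, via a martingale-difference / Marton-coupling decomposition of the block sums). The mixing time enters twice here: it sets the block length, hence the effective number $\asymp T_h/t_{mix}$ of near-independent summands each of range $\asymp B\,t_{mix}$ (yielding a variance proxy $\asymp B^2 T_h\, t_{mix}$), and it bounds the correlation decay that the tensorization must absorb.

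Tensorizing over the $\rho_k M$ independent chains gives $\log\EE\bigl[e^{\lambda\sum_j(W_j-\EE W_j)}\bigr]\le \tfrac{45}{8}\lambda^2 B^2 T_h\, t_{mix}\,\rho_k M$; optimizing the Chernoff bound over $\lambda$ at deviation $\rho_k M T_h\epsilon$ produces exactly $\exp\!\bigl(-\tfrac{2\rho_k M T_h\epsilon^2}{45 B^2 t_{mix}}\bigr)$, while the summed per-chain biases contribute the $\rho_k M\cdot\tfrac{4B}{1-\alpha}$ term and a two-sided union bound supplies the factor $2$. The main obstacle is the single-chain fluctuation bound: the points-to-redemption chain is a \emph{non-reversible}, state-dependent directed walk on an $(N_h+1)$-cycle, so reversible-chain spectral inequalities do not apply, and I must control its MGF purely through the TV mixing time. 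The delicate part is tracking how the approximate independence across blocks degrades the tensorized MGF without losing the clean $t_{mix}$-linear variance proxy, which is where the explicit Chernoff-type bound (and the constant $45$) is earned.
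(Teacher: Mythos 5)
Your proposal is correct in outline and its architecture coincides with the paper's: the paper also proves a single master inequality (its Lemma following the proposition) for a generic function with $0\le f\le F$, conditional on $N_h$, and then specializes to $f(\tau)=\tau$, $f(\tau)=\tau^2$, and $f(\tau)=\phi_k(\tau)\mathds{1}\{\tau>0\}$ (whose stationary mean is $R_k(N_h)$, matching your zero-centered choice); it uses the same bias/fluctuation split, and the bias term $4F/(1-\alpha)$ is obtained exactly as you do, from $d_k(t)\le 2^{-\lfloor t/t_{mix}\rfloor}\le 2\alpha^t$ and monotonicity of the TV distance. The one genuine difference is how the single-chain fluctuation is controlled. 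You propose to prove a sub-Gaussian MGF bound for $W_j$ from scratch via blocking/Marton couplings and then tensorize and optimize a Chernoff bound; the paper instead cites an off-the-shelf result (Paulin 2015, Corollary 2.10) giving the per-chain tail $2\exp\bigl(-2\epsilon^2/(9T_hF^2t_{mix})\bigr)$ for uniformly ergodic chains, converts that tail into a sub-Gaussian parameter via a standard lemma (costing a factor $\sqrt{5}$), and then applies Hoeffding's inequality across the $\rho_kM$ independent customers --- which is precisely where the constant $45=9\times 5$ comes from. Your route is feasible (it is essentially how Paulin-type inequalities are proved, and you correctly identify that non-reversibility rules out spectral arguments), but it is the heaviest step of your plan and remains a sketch: you would need the per-block tensorization to hold for all $\lambda$ (not just small $\lambda$) to recover the pure sub-Gaussian tail at the Chernoff optimizer, and you would have to engineer the inter-block dependence correction to land on the stated constant. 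Two small points in your favor relative to the paper: you explicitly condition on the full history (not just $N_h$), which is the cleaner way to get independence of the epoch-$h$ trajectories and fixed initial states, and you explicitly note that Part 3 is unconditional and handle it by integrating out $N_h$, a step the paper glosses over. One caution: your master inequality is parameterized by $B=\max_\tau|f(\tau)|$, but blocking arguments naturally produce constants in terms of the oscillation $\max f-\min f$ (which is why the paper assumes $0\le f\le F$); in your three applications the two coincide, so nothing breaks, but the master statement should be phrased in terms of the range to be provable as written.
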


\begin{proof}{Proof of \cref{lem:eval-lb}.}
Fix $h \in [h_{\infty}-1]$ and $k\in [K]$. 
Solving the characteristic equation of $V_k^{(h)}$, i.e., \mbox{$\abs{\lambda I - V_k^{(h)}} = 0$}, the two eigenvalues of $V_k^{(h)}$ are
\begin{align*}
   \frac{\rho_k M T_h + \sum_{j \in \mathcal{M}_k}\sum_{t\in\tcal_h} \tau_{jt}^2 }{2} \pm \sqrt{\frac{\prns{ \rho_k M T_h-\sum_{j \in \mathcal{M}_k}\sum_{t\in\tcal_h} \tau_{jt}^2 }^2}{4} + \prns{\sum_{j \in \mathcal{M}_k}\sum_{t\in\tcal_h} \tau_{jt}}^2}. 
\end{align*}
Therefore,
\begin{align}\label{eq:mix-0-a}
    \lambda_{\min}(V_k^{(h)}) 
    & = \frac{\rho_k M T_h + \sum_{j \in \mathcal{M}_k}\sum_{t\in\tcal_h} \tau_{jt}^2 }{2} - \sqrt{\frac{\prns{ \rho_k M T_h-\sum_{j \in \mathcal{M}_k}\sum_{t\in\tcal_h} \tau_{jt}^2 }^2}{4} + \prns{\sum_{j \in \mathcal{M}_k}\sum_{t\in\tcal_h} \tau_{jt}}^2}.
\end{align}
Multiplying and dividing the right-hand side by \[\frac{\rho_k M T_h + \sum_{j \in \mathcal{M}_k}\sum_{t\in\tcal_h} \tau_{jt}^2 }{2} + \sqrt{\frac{\prns{ \rho_k M T_h-\sum_{j \in \mathcal{M}_k}\sum_{t\in\tcal_h} \tau_{jt}^2 }^2}{4} + \prns{\sum_{j \in \mathcal{M}_k}\sum_{t\in\tcal_h} \tau_{jt}}^2} \ \ ,\]
we obtain:
\begin{align}
    \lambda_{\min}(V_k^{(h)})& = \frac{\rho_k M T_h\prns{\sum_{j \in \mathcal{M}_k}\sum_{t\in\tcal_h} \tau_{jt}^2} - \prns{\sum_{j \in \mathcal{M}_k}\sum_{t\in\tcal_h} \tau_{jt}}^2 }{\frac{\rho_k M T_h + \sum_{j \in \mathcal{M}_k}\sum_{t\in\tcal_h} \tau_{jt}^2 }{2} + \sqrt{\frac{\prns{ \rho_k M T_h-\sum_{j \in \mathcal{M}_k}\sum_{t\in\tcal_h} \tau_{jt}^2 }^2}{4} + \prns{\sum_{j \in \mathcal{M}_k}\sum_{t\in\tcal_h} \tau_{jt}}^2}} \notag.
\end{align}
By Jensen's inequality, $\prns{\sum_{j \in \mathcal{M}_k}\sum_{t\in\tcal_h} \tau_{jt}}^2 \le \rho_k M T_h \prns{\sum_{j \in \mathcal{M}_k}\sum_{t\in\tcal_h} \tau_{jt}^2}$. We upper bound the final term in the denominator using this fact and simplify, obtaining:
\begin{align}\label{eq:mix-0}
    \lambda_{\min}(V_k^{(h)}) &\geq \frac{\rho_k M T_h\prns{\sum_{j \in \mathcal{M}_k}\sum_{t\in\tcal_h} \tau_{jt}^2} - \prns{\sum_{j \in \mathcal{M}_k}\sum_{t\in\tcal_h} \tau_{jt}}^2}{\rho_k M T_h  + \sum_{j \in \mathcal{M}_k}\sum_{t\in\tcal_h} \tau_{jt}^2} \notag \\
    &\geq \frac{\prns{\sum_{j \in \mathcal{M}_k}\sum_{t\in\tcal_h} \tau_{jt}^2}  - \frac{1}{\rho_k M T_h}\prns{\sum_{j \in \mathcal{M}_k}\sum_{t\in\tcal_h} \tau_{jt}}^2}{1 + \threshold_h^2},
\end{align}
where the last inequality follows from the trivial upper bound $\tau_{jt} \leq N_h$, and by dividing numerator and denominator by $\rho_kMT_h$.

{
We introduce some additional notation. Recall, for $N \in [N_{\max}]$, $\statprob{k}{\tau}{N}$ is used to denote the steady-state probability that a type-$k$ customer has $\tau$ points to redemption remaining, given threshold $N$. Let \mbox{$\exptimeredeem{k}{\threshold}  = \sum_{\tau=0}^{\threshold} \tau \statprob{k}{\tau}{N}$} be the expected points to redemption for this chain in steady state, with \mbox{$\exptimeredeemsq{k}{\threshold} = \sum_{\tau = 0}^{\threshold} \tau^2 \statprob{k}{\tau}{N}$}. We will show that the numerator is ``close'' to its steady-state expectation, $\rho_kMT_h\left(\exptimeredeemsq{k}{\threshold}-\left(\exptimeredeem{k}{\threshold}\right)^2\right)$. It will then suffice to derive a constant lower bound on the steady-state variance of the underlying Markov chain.
}

Having outlined our approach, observe that \eqref{eq:mix-0} implies that, for all $\epsilon > 0$:
\begin{align}\label{eq:eval-prob-bound}
 & \PP\prns{\lambda_{\min}(V_k^{(h)}) \le \frac{\rho_k M T_h}{1+N_h^2}\prns{\exptimeredeemsq{k}{\threshold_h} -\prns{\exptimeredeem{k}{\threshold_h}}^2 - \frac{{12}N_h^2}{(1-\alpha)T_h} - (2N_h+1)\epsilon} \mid N_h} \notag  \\
 &\leq \PP\Bigg(\Big(\sum_{j \in \mathcal{M}_k}\sum_{t\in\tcal_h} \tau_{jt}^2\Big)  - \frac{1}{\rho_k M T_h}\Big(\sum_{j \in \mathcal{M}_k}\sum_{t\in\tcal_h} \tau_{jt}\Big)^2 \notag \\
 &\qquad \qquad  \le {\rho_k M T_h}\prns{\exptimeredeemsq{k}{\threshold_h} -\prns{\exptimeredeem{k}{\threshold_h}}^2 - \frac{{12}N_h^2}{(1-\alpha)T_h} - (2N_h+1)\epsilon} \mid N_h\Bigg) \notag \\
 &= \PP\Bigg(\left(\sum_{j \in \mathcal{M}_k}\sum_{t\in\tcal_h} \tau_{jt}^2-{\rho_k M T_h}\exptimeredeemsq{k}{\threshold_h}\right)  -\left( \frac{1}{\rho_k M T_h}\Big(\sum_{j \in \mathcal{M}_k}\sum_{t\in\tcal_h} \tau_{jt}\Big)^2 - \rho_kMT_h \prns{\exptimeredeem{k}{\threshold_h}}^2 \right) \notag \\
 &\qquad \qquad  \le -{\rho_k M T_h}\prns{\frac{{12}N_h^2}{(1-\alpha)T_h}+ (2N_h+1)\epsilon} \mid N_h\Bigg) \notag \\
 &\leq \PP\Bigg(\sum_{j \in \mathcal{M}_k}\sum_{t\in\tcal_h} \tau_{jt}^2-{\rho_k M T_h}\exptimeredeemsq{k}{\threshold_h} \leq -\rho_kM\left(\frac{{4}N_h^2}{1-\alpha}+T_h\epsilon\right) \mid N_h\Bigg) \notag \\
 & \qquad + \PP\left(\rho_kMT_h \prns{\exptimeredeem{k}{\threshold_h}}^2-\frac{1}{\rho_k M T_h}\Big(\sum_{j \in \mathcal{M}_k}\sum_{t\in\tcal_h} \tau_{jt}\Big)^2 \leq -2N_h\rho_kM\left(\frac{{4}N_h}{1-\alpha}+T_h\epsilon\right) \mid N_h\right),
\end{align}
where the equality follows from some re-arranging, and the final inequality follows from a union bound. Upper bounding \eqref{eq:eval-prob-bound} further, it suffices to bound the distance of the points to redemption and squared points to redemption from their respective means. Namely, to bound:
\begin{align}\label{eq:bounding-two-terms}
&\PP\left(\abs{\sum_{j \in \mathcal{M}_k}\sum_{t\in\tcal_h} \tau_{jt}^2-{\rho_k M T_h}\exptimeredeemsq{k}{\threshold_h}} \geq \rho_kM\left(\frac{{4}N_h^2}{1-\alpha}+T_h\epsilon\right) \mid N_h\right) \notag \\
 & \qquad + \PP\left(\abs{\rho_kMT_h \prns{\exptimeredeem{k}{\threshold_h}}^2-\frac{1}{\rho_k M T_h}\Big(\sum_{j \in \mathcal{M}_k}\sum_{t\in\tcal_h} \tau_{jt}\Big)^2}  \geq 2N_h\rho_kM\left(\frac{{4}N_h}{1-\alpha}+T_h\epsilon\right) \mid N_h\right).
\end{align}

By Part 2 of \Cref{prop:yichun-bounds}, the first term is upper bounded by: 
\begin{align}\label{eq:mix-2}
\PP\prns{\abs{\sum_{j \in \mathcal{M}_k}\sum_{t\in\tcal_h} \tau_{jt}^2 - \rho_k M T_h \exptimeredeemsq{k}{\threshold_h}} \ge \rho_k M \prns{T_h \epsilon + \frac{{4} N_h^2}{1-\alpha}} \mid N_h} \le 2\exp\prns{-\frac{2\rho_k M  T_h\epsilon^2}{45  N_h^4  t_{mix}}}.
\end{align}

We now bound the second term in \eqref{eq:bounding-two-terms}.
Note that
\begin{align}\label{eq:diff-of-square}
 & \abs{\frac{1}{\rho_k M T_h}\prns{\sum_{j \in \mathcal{M}_k}\sum_{t\in\tcal_h} \tau_{jt}}^2  - \rho_k M T_h \prns{\exptimeredeem{k}{\threshold_h}}^2} \notag \\
 & =\frac{1}{\rho_k M T_h}\abs{\prns{\sum_{j \in \mathcal{M}_k}\sum_{t\in\tcal_h} \tau_{jt}}^2  -  \bigg({\rho_k M T_h\exptimeredeem{k}{\threshold_h}}\bigg)^2} \notag \\ 
 & =\frac{1}{\rho_k M T_h} \abs{\sum_{j \in \mathcal{M}_k}\sum_{t\in\tcal_h} \tau_{jt} - \rho_k M T_h \exptimeredeem{k}{\threshold_h}} \prns{\sum_{j \in \mathcal{M}_k}\sum_{t\in\tcal_h} \tau_{jt} + \rho_kMT_h\exptimeredeem{k}{\threshold_h}} \notag \\
 & \leq  \frac{1}{\rho_k M T_h} \abs{\sum_{j \in \mathcal{M}_k}\sum_{t\in\tcal_h} \tau_{jt} - \rho_k M T_h \exptimeredeem{k}{\threshold_h}}\cdot 2\rho_kMT_hN_h  \notag \\
 & = 2 \threshold_h  \abs{\sum_{j \in \mathcal{M}_k}\sum_{t\in\tcal_h} \tau_{jt} - \rho_k M T_h \exptimeredeem{k}{\threshold_h}},
\end{align}
where the inequality follows from loosely upper bounding $\tau_{jt}$ by $N_h$, for all $t$. This implies:
\begin{align}\label{eq:mix-1}
   &\pr\prns{\abs{\frac{1}{\rho_k M T_h}\prns{\sum_{j \in \mathcal{M}_k}\sum_{t\in\tcal_h} \tau_{jt}}^2  - \rho_k M T_h \prns{\exptimeredeem{k}{\threshold_h}}^2}\ge 2 N_h \rho_k M \prns{T_h \epsilon + \frac{{4} N_h}{1-\alpha}} \mid N_h} \notag \\
   &\leq \pr\prns{ \abs{\sum_{j \in \mathcal{M}_k}\sum_{t\in\tcal_h} \tau_{jt} - \rho_k M T_h \exptimeredeem{k}{\threshold_h}}\ge  \rho_k M \prns{T_h \epsilon + \frac{{4} N_h}{1-\alpha}} \mid N_h} \notag \\
   &\leq 2\exp\prns{-\frac{2\rho_k M  T_h\epsilon^2}{45  N_h^2  t_{mix}}},
\end{align}
where the final inequality follows from Part 1 of \Cref{prop:yichun-bounds}.

\medskip

\noindent\textbf{Putting it all together.} Applying \eqref{eq:mix-1} and \eqref{eq:mix-2} to \eqref{eq:bounding-two-terms}, we obtain:
\begin{align}\label{eq:high-prob-ub-lam}
    & \PP\prns{\lambda_{\min}(V_k^{(h)}) \le \frac{\rho_k M T_h}{1+N_h^2}\prns{\exptimeredeemsq{k}{\threshold_h} -\prns{\exptimeredeem{k}{\threshold_h}}^2 - \frac{{12}N_h^2}{(1-\alpha)T_h} - (2N_h+1)\epsilon} \mid N_h} \notag \\
    & \leq2\exp\prns{-\frac{2\rho_k M  T_h\epsilon^2}{45  N_h^2  t_{mix}}} + 2\exp\prns{-\frac{2\rho_k M  T_h\epsilon^2}{45 N_h^4  t_{mix}}} \notag \\
    & \leq 4\exp\prns{-\frac{2\rho_k M  T_h\epsilon^2}{45  N_{\max}^4  t_{mix}}},
\end{align}
where the last inequality uses the fact that $N_h \leq N_{\max}$.

To complete the proof of the lemma, it suffices to show that this high-probability lower bound on $\lambda_{\min}(V_k^{(h)})$ is indeed linear in $\rho_k M T_h$. Equivalently, it suffices to show that, given our definition of $T_h$, there exists $\epsilon > 0$ such that the expression
\begin{align}\label{eq:to-lower-bound}
\exptimeredeemsq{k}{N} -\prns{\exptimeredeem{k}{N}}^2 - \frac{{12}N^2}{(1-\alpha)T_h} - (2N+1)\epsilon
\end{align}
is lower bounded by a constant, for all $N \in [N_{\max}]$.

{
Note that the first two terms in \eqref{eq:to-lower-bound} correspond to the steady-state variance of the underlying Markov chain, as alluded to above. This re-enforces the intuition that it is the variability in customers' natural redemption cycles that allows for effective learning. \Cref{lem:sample-variance-bound} below provides a uniform lower bound on this variance. We defer its proof to Appendix \ref{apx:sample-variance-bounds}.
}

\begin{lemma}\label{lem:sample-variance-bound}
For all $k \in [K], N\in[N_{\max}]$,
\[\exptimeredeemsq{k}{\threshold} -\prns{\exptimeredeem{k}{\threshold}}^2 \geq \frac{\mu_{\min}^2}{12 \mu_{\max}^2}\cdot {N(N+2)}.\]
\end{lemma}

We use \Cref{lem:sample-variance-bound} in the high-probability lower bound on $\lambda_{\min}(V_k^{(h)})$, as follows:
\begin{align}\label{eq:getting-to-linear}
 &\frac{\rho_k M T_h}{1+N_h^2}\prns{\exptimeredeemsq{k}{\threshold_h} -\prns{\exptimeredeem{k}{\threshold_h}}^2 - \frac{{12}N_h^2}{(1-\alpha)T_h} - (2N_h+1)\epsilon} \notag \\ &\geq \rho_k M T_h \cdot \min_{N\in[N_{\max}]} \bigg\{\frac{\mu_{\min}^2}{12 \mu_{\max}^2}\cdot {\frac{N(N+2)}{1+N^2}} - \frac{{12}N^2}{(1-\alpha)T_h(1+N^2)} - \frac{(2N+1)\epsilon}{1+N^2}\bigg\}\notag \\
&\geq \rho_k M T_h \cdot \bigg(C_{\lambda} - \frac{{12}}{(1-\alpha)T_h} - \frac{3\epsilon}{2}\bigg),
\end{align}
where the second inequality uses the fact that {$\frac{N(N+2)}{1+N^2} \geq 1$, and $C_{\lambda} = \frac{\mu_{\min}^2}{12\mu_{\max}^2}$}. It moreover uses the upper bounds $\frac{N^2}{N^2+1}\le 1$ and \mbox{$\frac{2N+1}{1+N^2}\le 3/2$}, for all $N \in [N_{\max}]$. 

Letting $\epsilon=C_{\lambda}/6$, and noting that $T_h \geq T_1 \geq \frac{{48}}{(1-\alpha)C_\lambda}$ by construction, we have:
\begin{align*}
\eqref{eq:getting-to-linear} &\geq \rho_k M T_h \cdot \bigg(C_{\lambda} - \frac{C_{\lambda}}{4} - \frac{C_{\lambda}}{4}\bigg) = \frac{C_{\lambda}\rho_kMT_h}{2}.
\end{align*}

Applying this to \eqref{eq:high-prob-ub-lam}, we obtain:
\begin{align*}
&\PP\left(\lambda_{\min}(V_k^{(h)}) \le   \frac{C_\lambda \rho_k M T_h}{2} {\mid N_h} \right) \\ &\leq \PP\prns{\lambda_{\min}(V_k^{(h)}) \le \frac{\rho_k M T_h}{1+N_h^2}\prns{\exptimeredeemsq{k}{\threshold_h} -\prns{\exptimeredeem{k}{\threshold_h}}^2 - \frac{{12}N_h^2}{(1-\alpha)T_h} - (2N_h+1)\epsilon} {\mid N_h}} \\& \le 4\exp\prns{-\frac{2\rho_k M  T_h(C_{\lambda}/6)^2}{45  N_{\max}^4  t_{mix}}}\\
&=4\exp\prns{-\frac{2\rho_k M  T_h(C_{\lambda}/6)^2}{45  N_{\max}^4  t_{mix}}}\\
&=4\exp\prns{-\frac{\rho_k M  T_hC_{\lambda}^2}{810  N_{\max}^4  t_{mix}}}.
\end{align*}

{
By the law of total probability, we then have that
\begin{align*}
\PP\left(\lambda_{\min}(V_k^{(h)}) \le   \frac{C_\lambda \rho_k M T_h}{2} \right) \leq 4\exp\prns{-\frac{\rho_k M  T_hC_{\lambda}^2}{810  N_{\max}^4  t_{mix}}}.
\end{align*}
}

Finally, by definition of the epoch schedule (see \Cref{eq: epoch}), we have
\begin{align*}
  \frac{C_\lambda \rho_k M T_h}{2} \ge  &   \frac{C_\lambda \rho_k M T_1}{2}
  \ge C_0(4+\log(1/\delta)).
\end{align*}
\hfill\Halmos\end{proof}

\subsubsection{Proof of \Cref{lem:sample-variance-bound}}\label{apx:sample-variance-bounds}

\begin{proof}{Proof.}
By definition,
\begin{align*}
\exptimeredeem{k}{\threshold} &= \sum_{\tau=0}^N\tau \statprob{k}{\tau}{N} = \sum_{\tau=0}^N \tau \cdot \frac{\frac{1}{\phi_k(\tau)}}{\sum_{\tau'=0}^N \frac{1}{\phi_k(\tau')}},
\end{align*}
where the second equality follows from \Cref{prop:closed-form-lr-avg-per-cust}, and
\begin{align*}
\exptimeredeemsq{k}{\threshold} &= \sum_{\tau=0}^N\tau^2 \statprob{k}{\tau}{N} = \sum_{\tau=0}^N \tau^2 \cdot \frac{\frac{1}{\phi_k(\tau)}}{\sum_{\tau'=0}^N \frac{1}{\phi_k(\tau')}}.
\end{align*}
We then have
\begin{align*}
 \exptimeredeemsq{k}{\threshold} -  \prns{\exptimeredeem{k}{\threshold}}^2 & =\prns{\frac{1}{\sum_{\tau=0}^N \frac{1}{\phi_k(\tau)}}}^2 \bracks{\prns{\sum_{\tau=0}^N \frac{1}{\phi_k(\tau)}}\prns{\sum_{\tau=0}^N \frac{\tau^2}{\phi_k(\tau)}} - \prns{\sum_{\tau=0}^N \frac{\tau}{\phi_k(\tau)}}^2} \\
 &\geq \frac{\mu_{\min}^2}{(N+1)^2}\bracks{\prns{\sum_{\tau=0}^N \frac{1}{\phi_k(\tau)}}\prns{\sum_{\tau=0}^N \frac{\tau^2}{\phi_k(\tau)}} - \prns{\sum_{\tau=0}^N \frac{\tau}{\phi_k(\tau)}}^2}.
\end{align*}
Writing out the summations explicitly, we get
\begin{align*}
 \prns{\sum_{\tau=0}^N \frac{1}{\phi_k(\tau)}}\prns{\sum_{\tau=0}^N \frac{\tau^2}{\phi_k(\tau)}} = \sum_{\tau_1 = 0}^{N-1} \sum_{\tau_2 = \tau_1 + 1}^N \frac{\tau_1^2 + \tau_2^2}{\phi_k(\tau_1)\phi_k(\tau_2)} + \sum_{\tau=0}^N \frac{\tau^2}{\phi_k^2(\tau)},
\end{align*}
and 
\begin{align*}
\prns{\sum_{\tau=0}^N \frac{\tau}{\phi_k(\tau)}}^2 =     \sum_{\tau_1 = 0}^{N-1} \sum_{\tau_2 = \tau_1 + 1}^N \frac{2\tau_1\tau_2}{\phi_k(\tau_1)\phi_k(\tau_2)} + \sum_{\tau=0}^N \frac{\tau^2}{\phi_k^2(\tau)}.
\end{align*}
Putting everything together,
\begin{align*}
 \exptimeredeemsq{k}{\threshold} -  \prns{\exptimeredeem{k}{\threshold}}^2 &\geq\frac{\mu_{\min}^2}{(N+1)^2} \prns{ \sum_{\tau_1 = 0}^{N-1} \sum_{\tau_2 = \tau_1 + 1}^N \frac{(\tau_1 - \tau_2)^2}{\phi_k(\tau_1)\phi_k(\tau_2)}} \\
 &\geq \frac{\mu_{\min}^2}{(N+1)^2 \mu_{\max}^2} \prns{ \sum_{\tau_1 = 0}^{N-1} \sum_{\tau_2 = \tau_1 + 1}^N (\tau_1 - \tau_2)^2} \\
 & =  \frac{\mu_{\min}^2}{(N+1)^2 \mu_{\max}^2} \prns{ \sum_{\tau_1 = 0}^{N-1} \sum_{j = 1}^{N-\tau_1} j^2} \\
 & = \frac{\mu_{\min}^2}{(N+1)^2 \mu_{\max}^2} \prns{ \sum_{\tau = 1}^{N} (N+1-\tau)\tau^2}\\
 & = \frac{\mu_{\min}^2}{(N+1)^2 \mu_{\max}^2} \prns{\frac{1}{12}N(N+1)^2(N+2)}\\
 & = \frac{\mu_{\min}^2}{12 \mu_{\max}^2}\cdot N(N+2).
\end{align*}
\hfill\Halmos\end{proof}

\subsubsection{Proof of \Cref{cor:getting-there}}\label{apx:getting-there}
\begin{proof}{Proof.}
Applying \Cref{eq:final-eval-lb} in \Cref{lem:eval-lb} to \Cref{eq:est-error-to-min-eval}, we obtain 
\begin{align}\label{eq:getting-there}
        \abs{R_k(\threshold;\hat{\beta}_k^{(h)}) - R_k(\threshold;\beta_k)}  
 &\le  \frac{\mu_{\max}^2 L_{\mu}}{\mu_{\min}^2}\cdot\frac{3\sigma}{\kappa}\cdot\sqrt{\frac{\log(1/\delta)(1+N_{\max}^2)}{C_{\lambda}\rho_kMT_{h-1}/2}} \notag\\
 &=  \frac{\mu_{\max}^2 L_{\mu}}{\mu_{\min}^2}\cdot\frac{3\sigma}{\kappa}\cdot\sqrt{\frac{2\log(1/\delta)(1+N_{\max}^2)}{C_{\lambda}\rho_kMT_{h-1}}},
\end{align}
{with probability $\zeta_k = 1-3\delta-4\exp\prns{-\frac{\rho_k M  T_hC_{\lambda}^2}{810  N_{\max}^4 t_{mix}}}$}.

Taking a union bound over all $k \in [K]$, \Cref{eq:getting-there} implies that, for all $N \leq N_{\max}$:
\begin{align*}
\abs{R(N;\beta)-R(N;\hat{\beta}^{(h)})} &\leq \sum_{k\in[K]}\rho_k\cdot \frac{\mu_{\max}^2 L_{\mu}}{\mu_{\min}^2}\cdot\frac{3\sigma}{\kappa}\cdot\sqrt{\frac{2\log(1/\delta)(1+N_{\max}^2)}{C_{\lambda}\rho_kMT_{h-1}}} \\
&=\sum_{k\in[K]}\frac{\mu_{\max}^2 L_{\mu}}{\mu_{\min}^2}\cdot\frac{3\sigma}{\kappa}\cdot\sqrt{\frac{2\log(1/\delta)(1+N_{\max}^2)\rho_k}{C_{\lambda}MT_{h-1}}}\\
&=\Delta_h,
\end{align*}
with probability at least 
\begin{align*}
\sum_{k\in[K]}\zeta_k &= 1-3\delta K - 4 \sum_{k\in[K]}\exp\prns{-\frac{\rho_k M  T_hC_{\lambda}^2}{810  N_{\max}^4 t_{mix}}}\\
&\geq 1-3\delta K - 4 \sum_{k\in[K]}\exp\prns{-\frac{\rho_k M  T_1C_{\lambda}^2}{810  N_{\max}^4 \tmixhat}},
\end{align*}
where the inequality follows from the fact that $T_h \geq T_1$ and $t_{mix} \leq \tmixhat$.
\hfill\Halmos\end{proof}

\subsection{Bound on the Mixing Loss of Stable-Greedy}\label{apx:mixing-loss}

In this section we analyze the mixing loss of \Cref{alg:greedy}.

\begin{theorem}\label{thm:mixing-loss}
Fix $\delta \in (0,1)$, and {let $\tmixhat$ be any known upper bound on $t_{mix}$.} Under the epoch schedule defined in \Cref{eq: epoch}, with probability at least $1-KH(T)\delta$, \Cref{alg:greedy} guarantees:
\begin{align}
&\mixingloss \leq \frac{{4} M H(T)}{1-2^{-1/\tmixhat}} + \sqrt{\frac{45t_{mix}}{2}} \prns{\sum_{k=1}^K \sqrt{\rho_k}}\prns{\sum_{h=1}^{H(T)}  \sqrt{T_h }}\sqrt{M\log(2/\delta)}.
\end{align}
\end{theorem}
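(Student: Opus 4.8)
The plan is to exploit the fact that Stable-Greedy holds the threshold fixed at $N_h$ throughout epoch $h$, so that within each epoch every type-$k$ customer runs the same points-to-redemption Markov chain; the mixing loss then decomposes cleanly into a per-epoch, per-type sum to which Part 3 of \Cref{prop:yichun-bounds} applies directly. Concretely, since $N_t^\pi = N_h$ for all $t \in \mathcal{T}_h$, I would first regroup the time sum by epochs to write
\begin{align*}
\mixingloss = \sum_{h=1}^{H(T)}\sum_{k\in[K]}\bigg[\rho_k M T_h R_k(N_h) - \sum_{j\in\mathcal{M}_k}\sum_{t\in\mathcal{T}_h}\phi_k(\tau_{jt}^\pi)\mathds{1}\{\tau_{jt}^\pi>0\}\bigg],
\end{align*}
so that each summand is precisely the signed quantity controlled by the average-revenue concentration bound in Part 3 of \Cref{prop:yichun-bounds}.

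Next, for each fixed pair $(h,k)$ I would invoke that concentration bound with the choice $\epsilon = \sqrt{45 t_{mix}\log(2/\delta)/(2\rho_k M T_h)}$, which makes the failure probability $2\exp(-2\rho_k M T_h \epsilon^2/(45 t_{mix}))$ equal to exactly $\delta$. With this choice, on the complementary event,
\begin{align*}
\bigg|\rho_k M T_h R_k(N_h) - \sum_{j\in\mathcal{M}_k}\sum_{t\in\mathcal{T}_h}\phi_k(\tau_{jt}^\pi)\mathds{1}\{\tau_{jt}^\pi>0\}\bigg| \le \frac{4\rho_k M}{1-\alpha} + \sqrt{\tfrac{45 t_{mix}}{2}}\sqrt{\rho_k M T_h \log(2/\delta)},
\end{align*}
where $\alpha = 2^{-1/\tmixhat}$. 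A union bound over the $K H(T)$ epoch-type pairs then shows that all of these inequalities hold simultaneously with probability at least $1 - K H(T)\delta$.

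Summing the per-epoch, per-type bounds and applying the triangle inequality, the additive terms collapse via $\sum_{k}\rho_k = 1$ into $\frac{4 M H(T)}{1-2^{-1/\tmixhat}}$, while the statistical terms factor as $\sqrt{45 t_{mix}/2}\,\sqrt{M\log(2/\delta)}\prns{\sum_{k}\sqrt{\rho_k}}\prns{\sum_{h}\sqrt{T_h}}$, yielding exactly the claimed bound. The only genuinely delicate point is that $N_h$ is itself a random variable determined by the MLE computed from epoch $h-1$; I would handle this exactly as in the closing step of the proof of \Cref{lem:eval-lb}, by applying \Cref{prop:yichun-bounds} conditionally on the realized threshold and then removing the conditioning through the law of total probability, using that the resulting bound is uniform over all $N_h \in [N_{\max}]$. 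All the substantive probabilistic work — the Chernoff-type concentration for the non-i.i.d., Markov-modulated samples — is already encapsulated in \Cref{prop:yichun-bounds}, so what remains is a routine decomposition, a single parameter choice, a union bound, and the collapsing of the two sums.
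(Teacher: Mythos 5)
Your proposal is correct and follows essentially the same route as the paper's proof: the same regrouping of the mixing loss by epoch and type, the same choice $\epsilon = \sqrt{45\,t_{mix}\log(2/\delta)/(2\rho_k M T_h)}$ in Part 3 of \Cref{prop:yichun-bounds}, and the same union bound over the $K H(T)$ epoch--type pairs, with the sums collapsing exactly as you describe. The one detail the paper makes explicit that you gloss over is the post-termination epochs $h \ge h_{\infty}$: there \Cref{prop:yichun-bounds} does not apply (it is stated only for $h \in [h_{\infty}-1]$), but the corresponding mixing-loss summands are identically zero since $R_k(+\infty) = \bar{\phi}_k$ coincides with the realized purchase probability under the paused program, so the union bound need only run over pre-termination epochs and your final bound still holds.
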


\begin{proof}{Proof.}
We similarly define $\alpha = 2^{-1/\tmixhat}$, and omit the dependence of all quantities on $\pi$ throughout. 

As in the proof of \Cref{thm:main-thm}, let $h_{\infty} = \inf\{h \geq 2: R(+\infty) > R(N_h;\hat{\beta}^{(h)})+\Delta_h\}$ be the epoch in which the termination condition was satisfied, with $h_{\infty} = H(T)+1$ if  $R(+\infty) \leq R(N_h;\hat{\beta}^{(h)})+\Delta_h$ for all $h \in \{2,\ldots,H(T)\}$.

Under \Cref{alg:greedy}, the mixing loss is given by:
\begin{align*}
    \mixingloss &=\sum_{h\in[H(T)]} \sum_{t\in\tcal_h} \sum_{k\in[K]}  \sum_{j\in \mathcal{M}_k}  \big[R_k(\threshold_h) -\purchaseprob_k(\tau_{jt}){\mathds{1}\{\tau_{jt} > 0\}}\big].
\end{align*}
Fix $k \in [K], h \in[h_{\infty}-1]$, and let $\epsilon = \sqrt{\frac{45t_{mix}\log(2/\delta)}{2\rho_k M T_h}}$. By Part 3 of \cref{prop:yichun-bounds}, we have:
\begin{align*}
     \pr\prns{\abs{\sum_{j\in \mathcal{M}_k} \sum_{t\in \mathcal{T}_h} \bracks{R_k(\threshold_h) -\purchaseprob_k(\tau_{jt}){\mathds{1}\{\tau_{jt} > 0\}}}} \ge \sqrt{\frac{45t_{mix}\log(2/\delta)\rho_k M T_h}{2}} + \frac{{4} \rho_k M }{1-\alpha}} \le \delta.
\end{align*}
Consider now $h \in \{h_{\infty},\ldots,H(T)\}$. Since our algorithm sets $N_h = +\infty$ in this case, we have \mbox{$R_k(N_h) = \phi_k(\tau_{jt})\mathds{1}\{\tau_{jt}>0\} = \bar{\phi}_k$}, which implies zero mixing loss over these epochs.  

Therefore, union bounding over all $k \in [K]$ and $h \in [h_{\infty}-1]$, we have that, with probability at least $1-KH(T)\delta$,
\begin{align*}
    \mixingloss \le \frac{{4} M H(T)}{1-\alpha} + \sqrt{\frac{45t_{mix}\log(2/\delta) M }{2}} \prns{\sum_{k=1}^K \sqrt{\rho_k}}\prns{\sum_{h=1}^{H(T)}  \sqrt{T_h }}.
\end{align*}
\hfill\Halmos\end{proof}

\section{\Cref{sec:extension} Omitted Proofs}

\subsection{Proof of \Cref{thm:extension}}\label{apx:extension}

\begin{proof}{Proof.}
As in the proof of \Cref{thm:main-thm}, we let \mbox{$h_{\infty} = \inf\{h \geq 2: R(+\infty) > R(N_h;\hat{\beta}^{(h)})+3\Delta_h\}$} be the epoch in which the termination condition was satisfied, with $h_{\infty} = H(T)+1$ if  \mbox{$R(+\infty) \leq R(N_h;\hat{\beta}^{(h)})+3\Delta_h$} for all $h \in \{2,\ldots,H(T)\}$. We moreover restrict our analysis to the ``good event'' $\mathcal{E}$, defined as:
 \[\mathcal{E} = \bigg\{\max_{N\in[N_{\max}]}\abs{R(N;\beta)-R(N;\hat{\beta}^{(h)})} \leq \Delta_h \ \forall \ h \leq h_{\infty} \wedge H(T) \bigg\}.\]
Note that $\mathcal{E}$ still holds with probability at least $1-7K\delta H(T)$ under \Cref{alg:non-increasing}, as a corollary of \Cref{lem:bounded-reward-diff}. This follows from the fact that \Cref{lem:bounded-reward-diff} is a statement about the quality of the MLE $\hat{\beta}^{(h)}$, and is not specific to the greedy decisions made in \Cref{alg:greedy}. 

\medskip 

\noindent\textbf{Case 1: $R(+\infty) < R(N^*)$.} 
As before, we have:
\begin{align}\label{eq:ext-regret-decomp}
  \regret(\pi, M, T)  &= MTR(N^*) - M\sum_{h\in[H(T)]}T_hR(N_h) \notag \\
    &\leq MT_1\mu_{\max} + M\left(\sum_{h=2}^{h_{\infty}-1}T_h\left(R(N^*)-R(N_h)\right)\right) + M\big(R(N^*)-R(+\infty)\big)\left(\sum_{h=h_{\infty}}^{H(T)}T_h\right).
\end{align}

Fix $h \leq h_{\infty}-1$. By definition of the good event $\mathcal{E}$: 
\begin{align}\label{eq:ext-regret-step-1}
    R(N_h) \geq  R(N_h; \hat{\beta}^{(h)}) -\gaph \geq  \max_{N\in\mathcal{N}_{h-1}} R(N; \hat{\beta}^{(h)}) -3\gaph,
\end{align}
where the second inequality follows from the definition of the consideration set (see \eqref{eq:consideration-set}).  We relate \eqref{eq:ext-regret-step-1} to the revenue under $N^*$ by first establishing that $N^*$ is never eliminated before termination. We defer its proof to Appendix \ref{apx:never-eliminate-opt}.
\begin{lemma}\label{lem:never-eliminate-opt}
Under event $\mathcal{E}$, $N^*\in\mathcal{N}_h$ for all $h \leq h_{\infty}\wedge H(T)$.
\end{lemma}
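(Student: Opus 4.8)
The plan is to prove \Cref{lem:never-eliminate-opt} by induction on the epoch index $h$, showing that $N^* \in \mathcal{N}_h$ is preserved as long as the algorithm has not terminated, under the good event $\mathcal{E}$. The consideration set update \eqref{eq:consideration-set} only ever removes thresholds, so the key is to verify that $N^*$ always clears the elimination bar, i.e. that its estimated revenue $R(N^*;\hat{\beta}^{(h)})$ stays within $2\gaph$ of the best estimated revenue over the current consideration set. Since $\mathcal{N}_1 = [N_{\max}]$ and $N^* \in [N_{\max}]$ by definition, the base case is immediate.

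For the inductive step, fix $h \le h_\infty \wedge H(T)$ and assume $N^* \in \mathcal{N}_{h-1}$. I would bound the gap between $\max_{N\in\mathcal{N}_{h-1}} R(N;\hat{\beta}^{(h)})$ and $R(N^*;\hat{\beta}^{(h)})$ by passing through the true revenues and using the good event $\mathcal{E}$ twice. Let $\widetilde{N} \in \arg\max_{N\in\mathcal{N}_{h-1}} R(N;\hat{\beta}^{(h)})$. Then I would write
\begin{align*}
R(\widetilde{N};\hat{\beta}^{(h)}) - R(N^*;\hat{\beta}^{(h)}) &\le \big(R(\widetilde{N};\beta) + \Delta_h\big) - \big(R(N^*;\beta) - \Delta_h\big)\\
&= \big(R(\widetilde{N};\beta) - R(N^*;\beta)\big) + 2\Delta_h\\
&\le 2\Delta_h,
\end{align*}
where the first inequality applies the bound $\abs{R(N;\beta) - R(N;\hat{\beta}^{(h)})} \le \Delta_h$ from $\mathcal{E}$ to both $\widetilde{N}$ and $N^*$, and the final inequality uses the optimality of $N^*$ for the true revenue, i.e. $R(\widetilde{N};\beta) \le R(N^*;\beta) = \mathcal{R}^{\text{non-pers}}$ (recall $R(N^*) \ge R(N)$ for all $N \in [N_{\max}]$). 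This shows $R(N^*;\hat{\beta}^{(h)}) \ge \max_{N\in\mathcal{N}_{h-1}} R(N;\hat{\beta}^{(h)}) - 2\gaph$, so by \eqref{eq:consideration-set} the threshold $N^*$ satisfies the inclusion criterion and, since $N^* \in \mathcal{N}_{h-1}$ by the inductive hypothesis, we conclude $N^* \in \mathcal{N}_h$.

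The argument is short and the only real subtlety is bookkeeping: I must ensure the good event as defined quantifies over all $h \le h_\infty \wedge H(T)$ so that the estimation bound $\Delta_h$ is available at every epoch invoked in the induction, and that $N^*$ lies in the initial consideration set $\mathcal{N}_1 = [N_{\max}]$ (which holds since $N^* \in \arg\max_{N\in[N_{\max}]}\sum_k \rho_k R_k(N)$). I anticipate no serious obstacle here; the main point is simply that the slack $2\gaph$ built into the consideration-set definition is exactly twice the per-epoch estimation error guaranteed by $\mathcal{E}$, which is what keeps $N^*$ safe from elimination. The nestedness of the consideration sets then does the rest, guaranteeing $N^*$ remains in $\mathcal{N}_h$ for all subsequent epochs up to termination.
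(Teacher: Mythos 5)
Your proof is correct and is essentially identical to the paper's argument: both proceed by induction from $\mathcal{N}_1 = [N_{\max}]$, and the inductive step chains the same three inequalities (apply $\mathcal{E}$ to $N^*$, use optimality of $N^*$ under the true parameter $\beta$, apply $\mathcal{E}$ to the maximizer over $\mathcal{N}_{h-1}$) to conclude that $N^*$ clears the $2\Delta_h$ elimination bar in \eqref{eq:consideration-set}. The only difference is cosmetic—you write the chain starting from the maximizer $\widetilde{N}$ rather than from $R(N^*;\hat{\beta}^{(h)})$—so there is nothing further to add.
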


By \Cref{lem:never-eliminate-opt}, then, the estimated revenue under $N^*$ in epoch $h$ must be dominated by the greedy optimal decision. We formalize this below:
\begin{align*}
&\max_{N \in \mathcal{N}_{h-1}} R(N;\hat{\beta}^{(h)}) \geq R(N^*;\hat{\beta}^{(h)}) \geq R(N^*;\beta)-\gaph\\
\implies &R(N_h) \geq R(N^*;\beta)-4\gaph,
\end{align*}
where the second inequality follows from $\mathcal{E}$, and implication follows from \eqref{eq:ext-regret-step-1}. We plug this lower bound into the regret decomposition shown in \eqref{eq:ext-regret-decomp}, and obtain:
\begin{align}\label{eq:ext-almost-there}
  \regret(\pi, M, T) 
    &\leq MT_1\mu_{\max} + M\left(\sum_{h=2}^{h_{\infty}-1}T_h\cdot 4\gaph\right) + M\big(R(N^*)-R(+\infty)\big)\left(\sum_{h=h_{\infty}}^{H(T)}T_h\right).
\end{align}
We conclude the proof of the regret bound by arguing that, under event $\mathcal{E}$, the no-loyalty termination condition is never satisfied (i.e., $R(+\infty) \leq R(N_h;\hat{\beta}^{(h)}) + 3\gaph$ for all $h \in [H(T)]$). This follows from a similar argument as the one used in the proof of \Cref{thm:main-thm}. Namely, suppose for contradiction that the termination condition was satisfied for some $h_{\infty} \leq H(T)$. Then, at $h = h_{\infty}$ we would have:
\begin{align*}
R(+\infty) &> R(N_{h};\hat{\beta}^{(h)})+{3}\gaph \\
&\geq \left(\max_{N\in\mathcal{N}_{h-1}}R(N;\hat{\beta}^{(h)})-2\gaph\right)+{3}\gaph \\
&\geq R(N^*;\hat{\beta}^{(h)}){+\gaph} \\
&\geq R(N^*),
\end{align*}
a contradiction.

Using this in \eqref{eq:ext-almost-there}, we obtain:
\begin{align*}
\regret(\pi,M,T) &\leq MT_1\mu_{\max}+4M\sum_{h=2}^{H(T)}T_h\gaph.
\end{align*}

\medskip

\noindent\textbf{Case 2: $R(+\infty) \geq R(N^*)$.} In this case, we have:
\begin{align*}
\regret(\pi,M,T) &= MTR(+\infty)-M\sum_{h\in[H(T)]}T_hR(N_h)\\
&\leq MT_1\mu_{\max}+M\sum_{h=2}^{h_{\infty}-1}T_h\left(R(+\infty)-R(N_h)\right) \\
&= MT_1\mu_{\max}+M\sum_{h=2}^{h_{\infty}-1}T_h\left(R(+\infty)-R(N_h;\hat{\beta}^{(h)})+R(N_h;\hat{\beta}^{(h)})-R(N_h)\right) \\
&\leq MT_1\mu_{\max}+M\sum_{h=2}^{h_{\infty}-1}T_h\cdot 4\gaph,
\end{align*}
where the final inequality follows from the fact that, for all $h \leq h_{\infty}-1$, $R(+\infty)-R(N_h;\hat{\beta}^{(h)}) \leq 3\gaph$, and moreover under $\mathcal{E}$, $R(N_h;\hat{\beta}^{(h)})-R(N_h)\leq \gaph$.

\medskip

Thus, we have established that, in both cases:
\begin{align*}
\regret(\pi,M,T) &\leq MT_1\mu_{\max} + 4M\sum_{h=2}^{H(T)}T_h\gaph \\
&= MT_1\mu_{\max}+4M\sum_{h=2}^{H(T)}T_h\cdot   \left(\sum_{k\in[K]}\frac{3\mu_{\max}^2L_\mu\sigma}{\mu_{\min}^2\kappa}\sqrt{\frac{2\rho_k\log(1/\delta)(1+N_{\max}^2)}{C_{\lambda}MT_{h-1}}}\right)\\
&\leq MT_1\mu_{\max}+\frac{48\mu_{\max}^3L_\mu\sigma\sqrt{3\log(1/\delta)(1+N_{\max}^2)}}{\mu_{\min}^3\kappa}\left(\sum_{k\in[K]}\sqrt{\rho_k}\right)\left(\sum_{h=2}^{H(T)}\sqrt{T_h}\right)\sqrt{M},
\end{align*}
where the final equality follows from $T_{h-1}\geq T_h/2$. \hfill\Halmos
\end{proof}

\subsubsection{Proof of \Cref{lem:never-eliminate-opt}}\label{apx:never-eliminate-opt}
\begin{proof}{Proof.}
We prove this by induction. Note that $N^* \in \mathcal{N}_1$ by definition, since $\mathcal{N}_1 = [N_{\max}]$. Suppose now that $N^* \in \mathcal{N}_{h'}$ for all $h' \leq h_0$, for some $h_0 < h_{\infty}\wedge H(T)$. We show that $N^* \in \mathcal{N}_{h_0+1}$. To see this, note that under $\mathcal{E}$:
\begin{align*}
R(N^*;\hat{\beta}^{(h_0+1)}) &\geq R(N^*;\beta)-\Delta_{h_0+1}\\
&\geq \max_{N\in\mathcal{N}_{h_0}} R(N;\beta) - \Delta_{h_0+1} \\
&\geq \max_{N\in\mathcal{N}_{h_0}} R(N;\hat{\beta}^{(h_0+1)}) - 2\Delta_{h_0+1},
\end{align*}
where the second inequality follows from optimality of $N^*$ under the true parameters $\beta$, and the second inequality again follows from the conditioning on $\mathcal{E}$. As a result, $N^*$ is necessarily included in $\mathcal{N}_{h_0+1}$, by \Cref{eq:consideration-set}. 
\hfill\Halmos
\end{proof}

\medskip

\section{Results on Markov Chain Concentration}

\subsection{Known Results}

We rely on the following theorems for many of our results.
\begin{theorem}[Corollary 2.10 and Remark 2.11 of \citet{paulin2015concentration}] \label{thm: markov concentration}
Consider a uniformly ergodic Markov chain $X_1, \dots, X_n$ with state space $\Omega$ and mixing time $t_{mix}$.
Let $f$ be a non-negative, bounded function on $\Omega$ such that $0\le f(x)\le F$ for any $x\in\Omega$.
Then, for any $\epsilon>0$,
\begin{align*}
    \pr\prns{\abs{\sum_{i=1}^n f(X_i) - \expect\bracks{\sum_{i=1}^n f(X_i)}}\ge \epsilon} \le 2\exp\prns{-\frac{2\epsilon^2}{9 nF^2  t_{mix}}}.
\end{align*}
\end{theorem}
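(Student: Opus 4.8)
The plan is to obtain the bound by matching our hypotheses to those of \citet{paulin2015concentration} and invoking the cited corollary directly, rather than reproducing its coupling argument. Writing $S_n = \sum_{i=1}^n f(X_i)$, the first step is to verify that $S_n$ is a separately Lipschitz (bounded-differences) function of the trajectory $(X_1,\dots,X_n)$ under the Hamming metric: replacing a single coordinate $X_i$ by any other state changes $S_n$ by at most $\sup_x f(x)-\inf_x f(x)\le F$, since $0\le f\le F$. This yields per-coordinate Lipschitz constants $c_i = F$ and hence $\sum_{i=1}^n c_i^2 = nF^2$. Together with uniform ergodicity and the mixing time $t_{mix}$, these are exactly the inputs required by the cited result.

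Second, I would appeal to the McDiarmid-type inequality for Markov chains that underlies Corollary 2.10. In its abstract form it states that, for a bounded-differences function with constants $c_i$,
\[
\pr\prns{\abs{S_n - \expect[S_n]} \ge \epsilon} \le 2\exp\prns{-\frac{\epsilon^2}{2\,\norm{\Gamma}^2\,\sum_{i=1}^n c_i^2}},
\]
where $\Gamma$ is the upper-triangular mixing matrix whose entries measure how a perturbation of coordinate $i$ propagates to the conditional law of a later coordinate $j$. The content of Remark 2.11 is the bound $\norm{\Gamma}^2 \le \tfrac{9}{4}\,t_{mix}$ (up to the stated constants), which converts these abstract coupling coefficients into the concrete mixing time. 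Substituting $\sum_i c_i^2 = nF^2$ and this norm bound reproduces precisely the exponent $-\tfrac{2\epsilon^2}{9nF^2 t_{mix}}$.

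The step doing all the real work---and hence the main obstacle in any self-contained proof---is the control of $\norm{\Gamma}$ by $t_{mix}$. Concretely, one couples two copies of the chain that agree on every coordinate except a single perturbed one and shows that the total-variation discrepancy of their future trajectories decays geometrically at a rate governed by $t_{mix}$; summing the resulting geometric series gives the $O(t_{mix})$ bound on $\norm{\Gamma}^2$. Since this is exactly what \citet{paulin2015concentration} establishes, for our purposes the cleanest route is to cite the result; the only verification left on our side is the elementary bounded-differences estimate $c_i=F$ from $0 \le f \le F$, which is immediate.
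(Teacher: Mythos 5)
The paper treats this statement as an imported known result: it appears in the ``Known Results'' appendix with only the citation to Corollary 2.10 and Remark 2.11 of \citet{paulin2015concentration} and no proof of its own, and your proposal takes essentially the same route, reducing everything to that citation after the (correct) verification that $S_n=\sum_{i=1}^n f(X_i)$ is separately Lipschitz in the Hamming metric with per-coordinate constants $c_i=F$, so that $\sum_{i=1}^n c_i^2=nF^2$ produces exactly the stated exponent $-2\epsilon^2/(9nF^2t_{mix})$. Your sketch of the machinery inside Paulin's proof (the Marton-coupling McDiarmid inequality together with the bound of the coupling-matrix norm by the mixing time) is consistent with that reference, so nothing further is needed.
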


\medskip

\begin{proposition}[Hoeffding bound, Proposition 2.5 of \citet{wainwright2019high}]\label{prop: hoeffding}
Suppose that variables $Z_i, i=1, \dots, n,$ are independent, and $Z_i$ has mean $\mu_i$ and sub-Gaussian parameter $\sigma_i$. Then for any $\epsilon >  0$, we have
\begin{align*}
    \pr\prns{ \abs{\sum_{i=1}^n (Z_i-\mu_i)}\ge \epsilon} \le 2\exp\prns{-\frac{\epsilon^2}{2\sum_{i=1}^n \sigma_i^2}}.
\end{align*}
\end{proposition}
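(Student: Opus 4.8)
The final statement is the standard sub-Gaussian form of Hoeffding's inequality, so the plan is to run the Cram\'er--Chernoff (exponential-moment) method. First I would pass to the centered variables $W_i := Z_i - \mu_i$, which are independent, mean-zero, and inherit the sub-Gaussian parameter $\sigma_i$ of $Z_i$. It then suffices to bound the two one-sided tails $\pr\prns{\sum_{i=1}^n W_i \ge \epsilon}$ and $\pr\prns{\sum_{i=1}^n W_i \le -\epsilon}$ separately and combine them.

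For the upper tail I would fix $\lambda > 0$ and apply Markov's inequality to the nonnegative random variable $\exp\prns{\lambda \sum_{i=1}^n W_i}$, giving
\[
\pr\prns{\sum_{i=1}^n W_i \ge \epsilon} \le e^{-\lambda \epsilon}\, \EE\bracks{e^{\lambda \sum_{i=1}^n W_i}} = e^{-\lambda\epsilon}\prod_{i=1}^n \EE\bracks{e^{\lambda W_i}},
\]
where the factorization of the moment generating function uses independence of the $W_i$. The defining property of a mean-zero sub-Gaussian random variable with parameter $\sigma_i$ supplies $\EE\bracks{e^{\lambda W_i}} \le \exp\prns{\lambda^2 \sigma_i^2 / 2}$ for every $\lambda \in \RR$, so the right-hand side is at most $\exp\prns{-\lambda\epsilon + \frac{\lambda^2}{2}\sum_{i=1}^n \sigma_i^2}$. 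Minimizing this quadratic in $\lambda$ at $\lambda^\star = \epsilon / \sum_{i=1}^n \sigma_i^2$ yields the bound $\exp\prns{-\epsilon^2 / \prns{2\sum_{i=1}^n \sigma_i^2}}$.

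Since $-W_i$ is also mean-zero sub-Gaussian with the same parameter $\sigma_i$, the identical argument controls the lower tail by the same quantity, and a union bound over the two one-sided events produces the factor of $2$ in the claimed two-sided inequality. The argument is entirely standard, so I do not anticipate a genuine obstacle; the only points requiring care are that the sub-Gaussian moment generating function bound holds for \emph{all} real $\lambda$ (so that the optimization over $\lambda$ is unconstrained and the minimizer $\lambda^\star$ is admissible), and that the sub-Gaussian parameter is preserved under both centering and negation. This tail bound, together with the Markov-chain concentration result of \Cref{thm: markov concentration}, is exactly the toolkit one would then deploy to establish the deferred per-type concentration bounds of \Cref{prop:yichun-bounds} by first concentrating each customer's Markov-chain trajectory and then aggregating across the $\rho_k M$ independent customers.
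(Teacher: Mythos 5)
Your proof is correct: it is the standard Cram\'er--Chernoff argument (centering, MGF factorization by independence, the sub-Gaussian MGF bound, optimization over $\lambda$, and a union bound over the two tails), which is precisely the proof of Proposition 2.5 in \citet{wainwright2019high}. The paper does not prove this statement itself --- it imports it verbatim as a known result --- so there is nothing to compare beyond noting that your argument matches the cited source's.
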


\medskip 

\begin{proposition}[Note 2, Chapter 5.4 of \citet{lattimore2020bandit}]\label{prop: tail bound to subgaussian}
    Let $Z$ be a zero-mean random variable. Moreover, suppose there exists $\sigma > 0$ such that, for any $\epsilon>0$,
    \begin{align*}
        \pr\prns{\abs{Z}\ge \epsilon}\le 2\exp\prns{-\frac{\epsilon^2}{2\sigma^2}}.
    \end{align*}
    Then, $Z$ is $\sqrt{5}\sigma$-sub-Gaussian.
\end{proposition}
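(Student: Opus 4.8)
The goal is to establish the moment generating function (MGF) bound $\expect\bracks{\exp(\lambda Z)}\le\exp(5\sigma^2\lambda^2/2)$ for all $\lambda\in\RR$, which is precisely the definition of $\sqrt5\sigma$-sub-Gaussianity. The plan is to pass from the stated tail bound to control on the moments of $Z$, and then sum the Taylor series of the MGF. Since the target depends on $\lambda$ only through $\lambda^2$, and since $-Z$ satisfies the same hypotheses (zero mean and identical tail bound), I would assume $\lambda>0$ without loss of generality.

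First I would bound the absolute moments of $Z$ via the layer-cake formula: writing $\expect\bracks{\abs{Z}^k}=\int_0^\infty k\epsilon^{k-1}\pr\prns{\abs{Z}\ge\epsilon}\,d\epsilon$ and inserting the tail bound, the substitution $v=\epsilon^2/(2\sigma^2)$ reduces the integral to a Gamma function, giving $\expect\bracks{\abs{Z}^k}\le k\,2^{k/2}\sigma^k\,\Gamma(k/2)$. Specializing to even and odd indices (and using $\Gamma(m+\tfrac12)=\sqrt\pi\,(2m)!/(4^m m!)$) then yields the two clean forms $\expect\bracks{Z^{2m}}\le 2^{m+1}\sigma^{2m}m!$ and $\expect\bracks{\abs{Z}^{2m+1}}\le \sqrt{2\pi}\,(2m+1)!\,2^m\sigma^{2m+1}/(4^m m!)$, which are exactly what is needed to telescope the series.

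Next I would expand the MGF. Because $\expect\bracks{Z}=0$ the linear term drops, leaving an even part $\sum_{m\ge1}\lambda^{2m}\expect\bracks{Z^{2m}}/(2m)!$ and an odd part $\sum_{m\ge1}\lambda^{2m+1}\expect\bracks{Z^{2m+1}}/(2m+1)!$. For the even part, combining the moment bound with the elementary inequality $\binom{2m}{m}\ge 2^m$ (equivalently $m!/(2m)!\le 1/(2^m m!)$) collapses the sum to $2\prns{\exp(\lambda^2\sigma^2)-1}$. For the odd part—whose terms carry no definite sign—I would bound $\abs{\expect\bracks{Z^{2m+1}}}\le\expect\bracks{\abs{Z}^{2m+1}}$ and exploit the cancellation $(2m+1)(2m)!=(2m+1)!$, collapsing it to $\sqrt{2\pi}\,\abs{\lambda}\sigma\prns{\exp(\lambda^2\sigma^2/2)-1}$.

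Combining these and writing $w=\lambda^2\sigma^2\ge0$, the whole argument reduces to the single scalar inequality $1+2(e^{w}-1)+\sqrt{2\pi w}\,(e^{w/2}-1)\le e^{5w/2}$. I expect this final step to be the main obstacle, since it is where the specific constant $5$ (hence $\sqrt5$) is pinned down, and it must hold for \emph{all} $w\ge0$, not merely asymptotically. Near $w=0$ both sides equal $1$, and a second-order expansion shows the leading discrepancy is $-w/2<0$: the even part contributes coefficient $2<5/2$ to the $w$-term (consistent with the crude bound $\expect\bracks{Z^2}\le4\sigma^2$), while the odd part is only $O(w^{3/2})$. For large $w$ the right-hand side dominates, growing like $e^{5w/2}$ against $2e^{w}$. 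The remaining intermediate range I would dispatch by a direct monotonicity argument on $\phi(w)=e^{5w/2}-1-2(e^w-1)-\sqrt{2\pi w}(e^{w/2}-1)$, a routine one-dimensional calculus check. A secondary subtlety worth flagging is the sign-indefinite odd moments: the tail bound only controls $\pr(\abs{Z}\ge\epsilon)$ rather than the one-sided $\pr(Z\ge\epsilon)$, so an inherent factor-of-two looseness is absorbed into the constant $\sqrt5$ instead of producing the sharper $2\sigma$.
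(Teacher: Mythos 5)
The paper offers no proof of this proposition at all: it is imported as a known result, with the argument deferred entirely to the cited source (Note 2, Chapter 5.4 of \citet{lattimore2020bandit}). There is therefore no in-paper proof to compare against, and your proposal must stand on its own --- which it essentially does. Your route (tail bound $\to$ absolute-moment bounds via the layer-cake formula $\to$ term-by-term control of the Taylor series of the MGF) is the canonical argument for statements of this type, and the steps check out: the moment bounds $\expect\bracks{Z^{2m}}\le 2^{m+1}\sigma^{2m}m!$ and $\expect\bracks{\abs{Z}^{2m+1}}\le \sqrt{2\pi}\,(2m+1)!\,\sigma^{2m+1}/(2^m m!)$ are exactly what the Gamma-function computation gives, the even part collapses via $\binom{2m}{m}\ge 2^m$ as you say, and passing to absolute moments is the correct way to handle the sign-indefinite odd terms given that the hypothesis is two-sided.

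Two refinements are needed before this counts as complete. First, the exchange of expectation and infinite sum deserves one line of justification (Tonelli applied to $\expect\bracks{e^{\abs{\lambda Z}}}$, which is finite under the tail bound). Second --- the real issue --- you leave the scalar inequality $1+2(e^w-1)+\sqrt{2\pi w}\prns{e^{w/2}-1}\le e^{5w/2}$ for $w\ge 0$ as a sketch, and since the constant $\sqrt5$ lives or dies with it, it must be carried out. Reassuringly, it is true and your monotonicity plan does go through: setting $\phi(w)=e^{5w/2}-1-2(e^w-1)-\sqrt{2\pi w}(e^{w/2}-1)$, the $w^{-1/2}$ singularity in $\phi'$ is removed by the elementary bound $e^{w/2}-1\le(w/2)e^{w/2}$, which yields $\phi'(w)\ge e^{w/2}\prns{\tfrac52 e^{2w}-2e^{w/2}-\tfrac34\sqrt{2\pi w}}$. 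The bracketed function equals $1/2$ at $w=0$ and stays positive: for $w\le 1/4$ one can use $e^{2w}\ge 1+2w$ and $e^{w/2}\le 1+(w/2)e^{1/8}$ to reduce positivity to a quadratic in $\sqrt{w}$ with negative discriminant, and for $w\ge 1/4$ the bound $e^{2w}\ge e^{3/8}e^{w/2}$ makes the first term dominate by a similar negative-discriminant check. Hence $\phi'\ge 0$ and $\phi(0)=0$, so the inequality holds for all $w\ge 0$, completing your proof.
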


\subsection{Proof of \Cref{prop:yichun-bounds}}\label{apx:yichun-bounds}

The proof of \Cref{prop:yichun-bounds} relies on the following closed-form convergence theorem for the Markov chain representing a type-$k$ customer's points to redemption, in any epoch $h$. 

\begin{proposition}\label{cor:convergence-thm}
Fix type $k\in[K]$ and epoch $h \in [h_{\infty}-1]$. For any $\tmixhat \geq t_{mix}$:
\begin{align*}
\max_{\tau_0\in\{0,\ldots,N_h\}}\sum_{\tau=0}^{N_h}\abs{P_k^t(\tau_0, \tau; N_h)-\statprob{k}{\tau}{N_h}} \leq 4\cdot\left(2^{-1/\tmixhat}\right)^t.
\end{align*}
\end{proposition}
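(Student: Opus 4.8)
The plan is to reduce the claim to the classical geometric decay of the total variation distance to stationarity, and then convert the resulting floor-based exponent into the clean form stated in terms of $\tmixhat$. First I would observe that for any initial state $\tau_0$ the $\ell_1$ distance between the $t$-step distribution and the stationary distribution equals twice the total variation distance, i.e. $\sum_{\tau=0}^{N_h}\abs{P_k^t(\tau_0,\tau;N_h)-\statprob{k}{\tau}{N_h}} = 2\norm{P_k^t(\tau_0,\cdot;N_h)-\statprob{k}{\tau}{N_h}}_{\text{TV}}$. Taking the maximum over $\tau_0$, the left-hand side of the proposition is exactly $2d_k(t;N_h)$, so it suffices to show that $d_k(t;N_h) \le 2\cdot(2^{-1/\tmixhat})^t$.

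Next I would invoke the submultiplicativity of the total variation distance from stationarity (Lemma 4.11 and the surrounding discussion in \citet{levin2017markov}). Writing $t_{mix,k}(N_h)$ for the mixing time of this particular chain, the defining inequality $d_k(t_{mix,k}(N_h);N_h)\le 1/4$ together with submultiplicativity of the worst-case distance $\bar d$ (which satisfies $d_k \le \bar d \le 2 d_k$) yields the standard bound $d_k(t;N_h) \le 2^{-\floor{t/t_{mix,k}(N_h)}}$ for all $t\in\mathbb{N}^+$. Since $t_{mix,k}(N_h)\le t_{mix}\le \tmixhat$ by definition of $t_{mix}$ and the hypothesis $\tmixhat \ge t_{mix}$, enlarging the mixing time only increases the exponent, giving $d_k(t;N_h) \le 2^{-\floor{t/t_{mix}}}$.

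Finally I would clean up the floor. Using $\floor{x} > x-1$ with $x = t/t_{mix}$, followed by $t/t_{mix}\ge t/\tmixhat$, we obtain $2^{-\floor{t/t_{mix}}} \le 2^{-(t/t_{mix}-1)} = 2\cdot 2^{-t/t_{mix}} \le 2\cdot 2^{-t/\tmixhat}$. Combining the three bounds, the left-hand side is at most $2\,d_k(t;N_h) \le 2\cdot 2^{-\floor{t/t_{mix}}} \le 4\cdot 2^{-t/\tmixhat} = 4\cdot(2^{-1/\tmixhat})^t$, as claimed. There is no genuine obstacle here: the only substantive input is the classical convergence bound $d_k(t;N_h)\le 2^{-\floor{t/t_{mix}}}$, and the rest is mild bookkeeping to replace both the floor and the chain-specific mixing time by the uniform upper bound $\tmixhat$ while only paying the harmless constant factor $4$. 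The most delicate point to get right is the direction of the inequalities when passing from $t_{mix,k}(N_h)$ to $\tmixhat$ and when lower-bounding the floor, so I would be careful to verify that enlarging the mixing time and dropping the floor each only weaken the exponent.
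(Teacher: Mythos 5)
Your proposal is correct and follows essentially the same route as the paper's proof: both reduce the claim to the classical Levin--Peres bound $d_k(t;N_h)\le 2^{-\lfloor t/t_{mix,k}(N_h)\rfloor}$ (the paper cites Eq.~(4.33) and realizes the floor via the quantity $\ell(t)$, which is identical to your $\lfloor t/t_{mix,k}\rfloor$), then drop the floor at a cost of a factor $2$, pass from $t_{mix,k}$ to $\tmixhat$ by monotonicity, and convert total variation to the $\ell_1$ sum for another factor $2$. The only difference is cosmetic ordering of these steps, and your care about the inequality directions matches the paper's argument exactly.
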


\begin{proof}{Proof.}
The proof of this result is adapted from \citet{levin2017markov}.  For ease of notation, throughout the proof we omit the dependence of all quantities on $N_h$. Since the Markov chain governing a customer's points to redemption is finite, irreducible and aperiodic, by Equation (4.33) in Section 4.5 of \citet{levin2017markov}, for any positive integer $\ell$, $d_k(\ell t_{mix,k}) \leq 2^{-\ell}.$ 

For any $t \geq 0$, let $\ell(t) = \sup\left\{\ell \in \mathbb{N}: t \geq \ell t_{mix,k}\right\}$. Since $d_k(\cdot)$ is non-increasing (see Exercise 4.2. in \citet{levin2017markov}), we have that, for any $t \geq 0$:
\begin{align*}
d_k(t) \leq d_k(\ell(t) t_{mix,k}) \leq 2^{-\ell(t)} \leq 2^{-(t/t_{mix,k}-1)},
\end{align*}
where the third inequality follows from the fact that $\ell(t) + 1 > \frac{t}{t_{mix,k}}$ by definition. Using the $L_1$-characterization of the TV distance, this implies:
\begin{align*}
&\frac12\max_{\tau\in\{0,\ldots,N_h\}}\sum_{\tau=0}^{N_h}|P_k^t(\tau_0,\tau)-p_k(\tau)| \leq 2\cdot\left(2^{-1/t_{mix,k}}\right)^t\\
\implies &\max_{\tau\in\{0,\ldots,N_h\}}\sum_{\tau=0}^{N_h}|P_k^t(\tau_0,\tau)-p_k(\tau)|  \leq 4\cdot\left(2^{-1/t_{mix,k}}\right)^t \leq 4\cdot\left(2^{-1/\tmixhat}\right)^t,
\end{align*}
where the final inequality follows from the fact that $\tmixhat \geq t_{mix} \geq t_{mix,k}$ by definition.
\hfill\Halmos
\end{proof}

We use this explicit convergence theorem to derive  \Cref{prop:yichun-bounds}.

\begin{proof}{Proof of \Cref{prop:yichun-bounds}.}
Each of these three facts is a corollary of the following general result, whose proof we defer to the end of the section.

\begin{lemma}\label{prop: general concentration}
 Let $f(\tau)$ be any function such that $0\le f(\tau)\le F$ for all $\tau\le N_{\max}$. For any $\epsilon>0$,
 \begin{align*}
     \PP\prns{\abs{\sum_{j \in \mathcal{M}_k}\sum_{t\in\tcal_h} f(\tau_{jt}) - \rho_k M T_h \sum_{\tau=0}^{N_h}\statprob{k}{\tau}{N_h}f(\tau)} \ge \rho_k M T_h \epsilon + \frac{{4}F}{1-\alpha}\rho_k M \mid N_h} \\ \le  2\exp\prns{-\frac{2\rho_k M  T_h\epsilon^2}{45  F^2  t_{mix}}}.
 \end{align*}
\end{lemma}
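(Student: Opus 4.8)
The plan is to condition on $N_h$ throughout and exploit the fact that the $\rho_k M$ type-$k$ customers evolve, during epoch $h$, as independent copies of the same Markov chain (with transition kernel $P_k(\cdot;N_h)$ on $\{0,\ldots,N_h\}$). For each customer $j \in \mathcal{M}_k$ write $Z_j = \sum_{t\in\mathcal{T}_h} f(\tau_{jt})$ and let $\bar f = \sum_{\tau=0}^{N_h}\statprob{k}{\tau}{N_h}f(\tau)$ denote the stationary mean. I would split the total deviation into a \emph{fluctuation} term and a \emph{bias} term,
\[
\sum_{j\in\mathcal{M}_k} Z_j - \rho_k M T_h \bar f = \underbrace{\sum_{j\in\mathcal{M}_k}\bigl(Z_j - \EE[Z_j\mid N_h]\bigr)}_{\text{fluctuation}} + \underbrace{\sum_{j\in\mathcal{M}_k}\bigl(\EE[Z_j\mid N_h] - T_h\bar f\bigr)}_{\text{bias}},
\]
controlling the bias deterministically and the fluctuation in probability.

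For the bias, the chain need not be stationary at the start of epoch $h$, but it converges geometrically. Conditioning on the state at the first period of the epoch and writing $s$ for the number of transitions elapsed since then, $0 \le f \le F$ and \Cref{cor:convergence-thm} give $\abs{\EE[f(\tau_{jt})\mid N_h] - \bar f} \le F\sum_{\tau}\abs{P_k^s(\tau_0,\tau;N_h)-\statprob{k}{\tau}{N_h}} \le 4F\alpha^{s}$. Summing over the $T_h$ periods of the epoch and bounding the geometric series by its infinite sum yields $\abs{\EE[Z_j\mid N_h]-T_h\bar f} \le \sum_{s\ge 0}4F\alpha^s = \frac{4F}{1-\alpha}$, uniformly over the initial state; multiplying by the $\rho_k M$ customers gives the deterministic bias bound $\frac{4F}{1-\alpha}\rho_k M$, which is exactly the additive slack appearing in the statement.

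For the fluctuation, I would first apply the single-chain Markov concentration bound (\Cref{thm: markov concentration}) to each $Z_j$: since $0 \le f \le F$ and the epoch has $T_h$ steps, $\PP(\abs{Z_j - \EE[Z_j\mid N_h]}\ge x\mid N_h)\le 2\exp(-2x^2/(9T_hF^2 t_{mix}))$, using $t_{mix}\ge t_{mix,k}(N_h)$ to replace the chain-specific mixing time. This tail has the form $2\exp(-x^2/(2\sigma^2))$ with $\sigma^2 = 9T_hF^2t_{mix}/4$, so by \Cref{prop: tail bound to subgaussian} each centered variable $Z_j-\EE[Z_j\mid N_h]$ is $\sqrt5\,\sigma$-sub-Gaussian, i.e.\ sub-Gaussian with parameter squared $5\sigma^2 = 45 T_h F^2 t_{mix}/4$. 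Since the chains are independent across customers, Hoeffding's bound for independent sub-Gaussians (\Cref{prop: hoeffding}) gives $\PP(\abs{\sum_j (Z_j - \EE[Z_j\mid N_h])}\ge \eta\mid N_h)\le 2\exp(-\eta^2/(2\rho_k M\cdot 45T_hF^2t_{mix}/4))$; taking $\eta = \rho_k M T_h\epsilon$ collapses this to $2\exp(-2\rho_k M T_h\epsilon^2/(45 F^2 t_{mix}))$, precisely the bound claimed. The constant $45$ is produced by the $9/2$ from Paulin, the factor $5$ from the tail-to-sub-Gaussian conversion, and the $1/2$ from Hoeffding.

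Finally I would combine the two pieces: on the event that the full deviation exceeds $\rho_k M T_h\epsilon + \frac{4F}{1-\alpha}\rho_k M$, the triangle inequality together with the deterministic bias bound forces the fluctuation term to exceed $\rho_k M T_h\epsilon$, so the probability of the former is at most that of the latter, which yields the result. I expect the main obstacle to be the bias step rather than the fluctuation step: one must correctly handle the non-stationarity of the chain at the epoch boundary (the threshold $N_h$ has just changed, so the chain is not in steady state) and convert the geometric mixing estimate of \Cref{cor:convergence-thm}, stated uniformly over initial states, into the clean $\frac{4F}{1-\alpha}$ slack. The fluctuation step is essentially bookkeeping once the three cited concentration results are invoked in the right order.
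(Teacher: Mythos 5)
Your proposal is correct and follows essentially the same route as the paper's proof: the identical bias/fluctuation decomposition, the deterministic $\frac{4F}{1-\alpha}$ bias bound via \Cref{cor:convergence-thm}, and the fluctuation chain of per-customer Paulin concentration (\Cref{thm: markov concentration}) $\to$ sub-Gaussian conversion (\Cref{prop: tail bound to subgaussian}) $\to$ Hoeffding across independent customers (\Cref{prop: hoeffding}), with the same constants. No gaps to report.
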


Part 1 applies \Cref{prop: general concentration} to $f(\tau) = \tau$, with $F = N_h$. Part 2 applies \Cref{prop: general concentration} to $f(\tau) = \tau^2$, with $F = N_h^2$. Part 3 follows from the fact that $R_k(\threshold_h) = \sum_{\tau = 1}^{N_h} \statprob{k}{\tau}{N_h}\phi_k(\tau)$ by definition, and applies \Cref{prop: general concentration} to $f(\tau) = \phi_k(\tau)$, with $F = 1$.
\hfill\Halmos
\end{proof}

\medskip 

\begin{proof}{Proof of \Cref{prop: general concentration}.}
We have:
\begin{align}\label{eq:markov-concent}
&\PP\prns{\abs{\sum_{j \in \mathcal{M}_k}\sum_{t\in\tcal_h} f(\tau_{jt}) - \rho_k M T_h \sum_{\tau=0}^{N_h}\statprob{k}{\tau}{N_h}f(\tau)} \ge \rho_k M T_h \epsilon + \frac{{4}F}{1-\alpha}\rho_k M \mid N_h}\notag \\
&\leq \PP\Bigg(\abs{\sum_{j \in \mathcal{M}_k}\sum_{t\in\tcal_h}\Big(f(\tau_{jt}) - \expect\bracks{f(\tau_{jt})\mid N_h}\Big)}+\abs{\sum_{j\in\mathcal{M}_k}\sum_{t\in\mathcal{T}_h}\expect\bracks{f(\tau_{jt})\mid N_h}-\rho_k M T_h \sum_{\tau=0}^{N_h}\statprob{k}{\tau}{N_h}f(\tau)} \notag \\
&\hspace{10cm} \ge \rho_k M T_h \epsilon + \frac{{4}F}{1-\alpha}\rho_k M \mid N_h\Bigg).
\end{align}

For all $j \in \mathcal{M}_k$,
 \begin{align*}
     \abs{\expect\bracks{\sum_{t\in\tcal_h} f(\tau_{jt}) \mid N_h} - T_h \sum_{\tau=0}^{N_h}\statprob{k}{\tau}{N_h}f(\tau)}
     =& \abs{\expect\bracks{\sum_{t\in\tcal_h} \sum_{\tau=0}^{N_h} f(\tau)\mathds{1}\{\tau_{jt} = \tau\} \mid N_h} - T_h \sum_{\tau=0}^{N_h}\statprob{k}{\tau}{N_h}f(\tau)} \\
     \le & F\sum_{t\in \mathcal{T}_h}\max_{\tau_0 \in \{0,\ldots,N_h\}}\sum_{\tau=0}^{N_h} \abs{P_k^t(\tau_0, \tau; N_h) - \statprob{k}{\tau}{N_h}}\\
     \le & F\sum_{t\in \mathcal{T}_h} {4}\alpha^t\\
     \le & \frac{{4}F}{1-\alpha},
 \end{align*}
 where the first inequality uses linearity of expectation and the assumption that $f(\tau) \leq F$, and the second inequality follows from \Cref{cor:convergence-thm}. Plugging this into \eqref{eq:markov-concent}, we obtain:
 \begin{align}\label{eq:markov-concent-2}
&\PP\prns{\abs{\sum_{j \in \mathcal{M}_k}\sum_{t\in\tcal_h} f(\tau_{jt}) - \rho_k M T_h \sum_{\tau=0}^{N_h}\statprob{k}{\tau}{N_h}f(\tau)} \ge \rho_k M T_h \epsilon + \frac{{4}F}{1-\alpha}\rho_k M \mid N_h}\notag \\
&\leq\PP\Bigg(\abs{\sum_{j \in \mathcal{M}_k}\sum_{t\in\tcal_h}\Big(f(\tau_{jt}) - \expect\bracks{f(\tau_{jt})\mid N_h}\Big)} \ge \rho_k M T_h \epsilon \mid N_h\Bigg).
 \end{align}

Note that the Markov chain has a single irreducible class of states, and is thus uniformly ergodic. By \cref{thm: markov concentration}, then, for all $j \in \mathcal{M}_k$:
 \begin{align*}
     \pr\prns{\abs{\sum_{t\in\tcal_h} f(\tau_{jt}) - \expect\bracks{\sum_{t\in\tcal_h} f(\tau_{jt})\mid N_h}}\ge \epsilon \mid N_h} &\le 2\exp\prns{-\frac{2\epsilon^2}{9 T_h F^2  t_{mix,k}(N_h)}}\\
     &\leq 2\exp\prns{-\frac{2\epsilon^2}{9 T_h F^2  t_{mix}}}.
 \end{align*}
 Define $Z_j = \sum_{t\in\tcal_h} f(\tau_{jt}) - \expect\bracks{\sum_{t\in\tcal_h} f(\tau_{jt}) \mid N_h}$. Since customers are independent, by \cref{prop: tail bound to subgaussian}, $Z_j$'s are independent $3F \sqrt{5 T_h  t_{mix}}/2$-sub-Gaussian random variables. Then, applying Hoeffding's inequality (\cref{prop: hoeffding}) to \eqref{eq:markov-concent-2}, we have:
 \begin{align*}
     \pr\prns{\abs{\sum_{j \in \mathcal{M}_k} Z_j} \ge \rho_k M T_h \epsilon \mid N_h} &\leq 2\exp\prns{-\frac{(\rho_k M T_h \epsilon)^2}{2\cdot \rho_k M \cdot 45F^2T_ht_{mix}/4}}\\
     &= 2\exp\prns{-\frac{2\rho_k M T_h\epsilon^2}{45F^2  t_{mix}}},
 \end{align*}
 which concludes the proof of the claim.
 \hfill\Halmos
\end{proof}

\medskip 

\section{Maximum Likelihood Estimator for Generalized Linear Models}\label{apx:mle}
In this section, we briefly review some existing results on the likelihood theory of generalized linear models.

Consider a fixed, unknown $\theta^* \in \mathbb{R}^d$ and a fixed, strictly increasing, known link function $\mu: \mathbb{R}\rightarrow \mathbb{R}$.
For $i=1, 2, \dots$, assume the following model holds:
\begin{align*}
    Y_i = \mu\prns{Z_i^\intercal \theta^*} +\epsilon_i,
\end{align*}
where $Z_i$'s are features satisfying $\|{Z_i}\|\le 1$, and $\epsilon_i$'s are independent zero-mean noise.
Moreover, the conditional distribution of $Y$ given $Z$ is from the exponential family, and its density, parameterized by $\theta\in \Theta$, can be written as
\begin{align*}
    \PP(Y\mid Z) = \exp\braces{\frac{YZ^\intercal \theta^* - m(Z^\intercal \theta^*)}{g(\eta)} + h(Y, \eta)}.
\end{align*}
Here, $\eta\in \mathbb{R}^+$ is a known scale parameter; $m,g$ and $h$ are three normalization functions mapping from $\mathbb{R}$ to $\mathbb{R}$. The Gaussian, binomial, Poisson, gamma, and the inverse-Gaussian distributions are all examples of the exponential family.
It follows from standard properties of exponential families \citep{brown1986fundamentals} that $m$ is infinitely differentiable satisfying $\dot{m}(Z^\intercal \theta^*) = \EE[Y\mid Z]$ and $\ddot{m}(Z^\intercal \theta^*) = \VV(Y\mid Z)$.

Suppose we have independent samples of $Y_1, Y_2, \dots, Y_n$, each respectively conditioned on $Z_1, Z_2, \dots, Z_n$. The maximum likelihood estimator $ \hat{\theta}_n $ can be written as the solution to the following equation (\citet{li2017provably}, Eq. (15)):
\begin{align*}
    \sum_{i=1}^n \prns{Y_i - \mu\prns{Z_i^\intercal \theta}}Z_i = 0.
\end{align*}

Consider the following assumptions on the data generating process.
\begin{assumption} \label{assumption: general mle}
    The data generating process satisfies the following conditions:
    \begin{enumerate}
    \item $\mu$ is twice differentiable. Its first- and second-order derivatives are upper-bounded by $L_\mu$ and $G_\mu$, respectively.
    \item $\kappa :=\inf_{\norm{z}\le 1, \norm{\theta - \theta^*}\le 1} \dot{\mu} \prns{z^\intercal \theta} >0$.
    \item The noise $\epsilon_i$ is sub-Gaussian with parameter $\sigma$, where $\sigma$ is some positive, universal constant.
    \end{enumerate}
\end{assumption}

The following theorem gives a non-asymptotic concentration bound for the MLE estimation.
\begin{theorem}[\citet{li2017provably}, Theorem 1]\label{thm: general mle}
Suppose \cref{assumption: general mle} holds.
    Define $V_n = \sum_{i=1}^n Z_i Z_i^\intercal$, and let $\delta>0$ be given. Furthermore, assume that
    \begin{align*}
        \lambda_{\min}(V_n)\ge \frac{512 G_\mu^2 \sigma^2}{\kappa^4}\prns{d^2 + \log \frac{1}{\delta}}.
    \end{align*}
    Then, with probability at least $1-3\delta$, the maximum likelihood estimator satisfies, for all $z\in \mathbb{R}^d$:
    \begin{align*}
        \abs{z^\intercal\prns{\hat{\theta}_n - \theta^*}}\le \frac{3\sigma}{\kappa}\sqrt{\log(1/\delta) }\sqrt{z^\intercal V_n^{-1} z}.
    \end{align*}
\end{theorem}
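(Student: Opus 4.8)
The plan is to prove this through the standard two-stage analysis of the generalized linear maximum likelihood estimator: first show that the minimum-eigenvalue hypothesis forces $\hat\theta_n$ into a neighborhood of $\theta^*$ on which the curvature lower bound $\kappa$ is valid, and then linearize the score equation at $\theta^*$ to extract a \emph{dimension-free} directional error bound. First I would set up the score decomposition. The MLE solves $\sum_{i=1}^n \prns{Y_i - \mu\prns{Z_i^\intercal\hat\theta_n}}Z_i = 0$; substituting $Y_i = \mu\prns{Z_i^\intercal\theta^*}+\epsilon_i$ and rearranging gives
\begin{align*}
G_n(\hat\theta_n) := \sum_{i=1}^n \prns{\mu\prns{Z_i^\intercal\hat\theta_n}-\mu\prns{Z_i^\intercal\theta^*}}Z_i = \sum_{i=1}^n \epsilon_i Z_i =: S_n.
\end{align*}
By the fundamental theorem of calculus, $G_n(\hat\theta_n) = H_n\prns{\hat\theta_n-\theta^*}$, where $H_n = \int_0^1 \sum_i \dot\mu\prns{Z_i^\intercal(\theta^*+s(\hat\theta_n-\theta^*))}\, Z_i Z_i^\intercal\, ds$. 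As long as $\norm{\hat\theta_n-\theta^*}\le 1$, condition (2) of \Cref{assumption: general mle} gives $H_n \succeq \kappa V_n$.

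Next comes the crude localization step. Pairing the score identity with $\hat\theta_n - \theta^*$ and using $H_n \succeq \kappa V_n$ yields $\kappa\norm{\hat\theta_n-\theta^*}_{V_n}^2 \le \prns{\hat\theta_n-\theta^*}^\intercal S_n \le \norm{\hat\theta_n-\theta^*}_{V_n}\norm{S_n}_{V_n^{-1}}$, hence $\norm{\hat\theta_n-\theta^*}_{V_n}\le \tfrac1\kappa\norm{S_n}_{V_n^{-1}}$. A standard self-normalized concentration inequality for the vector-valued martingale $S_n$ bounds $\norm{S_n}_{V_n^{-1}}$ by $O\prns{\sigma\sqrt{d+\log(1/\delta)}}$ with probability at least $1-\delta$; combined with the hypothesis $\lambda_{\min}(V_n)\ge \tfrac{512 G_\mu^2\sigma^2}{\kappa^4}\prns{d^2+\log\tfrac1\delta}$, this certifies $\norm{\hat\theta_n-\theta^*}\le 1$ and closes the self-consistency loop, so that $H_n \succeq \kappa V_n$ indeed holds. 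This is exactly where the $d^2$ term and the large universal constant in the eigenvalue condition are consumed.

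Finally I would carry out the refined directional bound. Taylor-expanding the score about $\theta^*$ gives $S_n = G'(\theta^*)\prns{\hat\theta_n-\theta^*} + r_n$, with the \emph{deterministic} Fisher matrix $G'(\theta^*)=\sum_i \dot\mu\prns{Z_i^\intercal\theta^*}\, Z_i Z_i^\intercal \succeq \kappa V_n$ and a second-order remainder $r_n$ controlled by $G_\mu$ and $\norm{\hat\theta_n-\theta^*}^2$. Inverting, $z^\intercal\prns{\hat\theta_n-\theta^*} = z^\intercal G'(\theta^*)^{-1}S_n - z^\intercal G'(\theta^*)^{-1}r_n$. The crucial observation is that the leading term $z^\intercal G'(\theta^*)^{-1}S_n = \sum_i \epsilon_i\prns{z^\intercal G'(\theta^*)^{-1}Z_i}$ is a \emph{fixed} linear functional of the independent sub-Gaussian noise, since $G'(\theta^*)$ does not depend on $\hat\theta_n$. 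The Hoeffding bound (\Cref{prop: hoeffding}) therefore gives $\abs{z^\intercal G'(\theta^*)^{-1}S_n}\le \sigma\sqrt{2\log(2/\delta)}\,\sqrt{z^\intercal G'(\theta^*)^{-1}V_n G'(\theta^*)^{-1}z}\le \tfrac{\sigma}{\kappa}\sqrt{2\log(2/\delta)}\,\norm{z}_{V_n^{-1}}$, with \emph{no} dimension dependence. Using the localization bound together with the eigenvalue hypothesis to show that $\abs{z^\intercal G'(\theta^*)^{-1}r_n}$ is of lower order (absorbed into the constant, upgrading the prefactor from $1$ to $3$), and a union bound over the two high-probability events, produces the stated $\tfrac{3\sigma}{\kappa}\sqrt{\log(1/\delta)}\,\norm{z}_{V_n^{-1}}$ bound.

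The main obstacle is the entanglement of the random estimate $\hat\theta_n$ with the noise through the intermediate matrix $H_n$: one cannot directly treat $z^\intercal H_n^{-1}S_n$ as a fixed linear combination of the $\epsilon_i$, so a naive Cauchy--Schwarz argument leaks a spurious $\sqrt{d}$ factor into the directional bound. The two-stage structure — crude self-normalized localization to activate the $\kappa$-curvature bound, followed by linearization at the \emph{deterministic} $G'(\theta^*)$ so that the main term becomes a genuine fixed functional of the noise — is precisely what disentangles them and yields a dimension-free bound. Carefully bounding the Taylor remainder $r_n$ and verifying that the constant in the eigenvalue hypothesis is large enough to keep it lower-order is the most delicate part of the bookkeeping.
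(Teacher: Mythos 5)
The first thing to note is that the paper contains no proof of this statement: it is imported verbatim, with notation adapted, from Theorem 1 of \citet{li2017provably}, and is stated in the appendix only so that it can be invoked in the proof of \Cref{lem:connecting-to-min-eval}. The relevant comparison is therefore with the proof in that reference, and your reconstruction follows essentially the same route: the score identity $G_n(\hat\theta_n)=S_n$, a localization step placing $\hat\theta_n$ in the unit ball where the curvature constant $\kappa$ applies, and then linearization of the score at the \emph{deterministic} weighted Gram matrix $G'(\theta^*)=\sum_{i}\dot\mu(Z_i^\intercal\theta^*)Z_iZ_i^\intercal$, so that the main term $z^\intercal G'(\theta^*)^{-1}S_n$ is a fixed linear functional of the independent noise, Hoeffding gives the dimension-free bound $\frac{\sigma}{\kappa}\sqrt{2\log(2/\delta)}\,\|z\|_{V_n^{-1}}$, and the second-order remainder is absorbed into the constant $3$ using the $d^2$ term of the eigenvalue hypothesis. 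This is exactly the architecture of the original argument, including the observation that treating $z^\intercal H_n^{-1}S_n$ directly leaks a $\sqrt{d}$ factor, and your order-of-magnitude bookkeeping for the remainder is consistent with it.

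The one genuine gap is that your localization step is circular as written. The inequality $\kappa\|\hat\theta_n-\theta^*\|_{V_n}^2\le(\hat\theta_n-\theta^*)^\intercal S_n$ relies on $H_n\succeq\kappa V_n$, which in turn requires $\|\hat\theta_n-\theta^*\|\le 1$ --- the very fact being proved. What your argument establishes is the implication ``if $\|\hat\theta_n-\theta^*\|\le 1$, then $\|\hat\theta_n-\theta^*\|\ll 1$ on the concentration event,'' and no such implication can by itself exclude the event $\|\hat\theta_n-\theta^*\|>1$; the phrase ``closes the self-consistency loop'' names the problem without solving it. The fix used by \citet{li2017provably} (via Lemma A of Chen, Hu and Ying, 1999) is topological: $G_n$ is a continuous injection (strict monotonicity of $\mu$ plus $V_n\succ0$), on the sphere $\|\theta-\theta^*\|=1$ one has $\|G_n(\theta)\|_{V_n^{-1}}\ge\kappa\sqrt{\lambda_{\min}(V_n)}$ (there all intermediate points do lie in the ball), hence $G_n$ maps the unit ball onto a set containing the $V_n^{-1}$-ball of radius $\kappa\sqrt{\lambda_{\min}(V_n)}$ around $0$; on the event $\|S_n\|_{V_n^{-1}}<\kappa\sqrt{\lambda_{\min}(V_n)}$ the solution of $G_n(\theta)=S_n$ must therefore lie in the unit ball. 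An equivalent elementary fix: supposing $\|\hat\theta_n-\theta^*\|>1$, take the point $\theta_1$ on the segment $[\theta^*,\hat\theta_n]$ with $\|\theta_1-\theta^*\|=1$; the curvature inequality is legitimate at $\theta_1$, and global monotonicity of $G_n$ (valid since $\dot\mu\ge0$ everywhere) gives $(\theta_1-\theta^*)^\intercal S_n=(\theta_1-\theta^*)^\intercal G_n(\hat\theta_n)\ge(\theta_1-\theta^*)^\intercal G_n(\theta_1)\ge\kappa\|\theta_1-\theta^*\|_{V_n}^2$, whence $\kappa\sqrt{\lambda_{\min}(V_n)}\le\|S_n\|_{V_n^{-1}}$, contradicting the concentration event under the stated eigenvalue hypothesis. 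With either of these inserted, your outline goes through.
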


\end{APPENDICES}







\end{document}